\documentclass[11pt]{article}

\usepackage[utf8]{inputenc}
\usepackage[T1]{fontenc}
\usepackage{lmodern}
\usepackage{microtype}
\usepackage{setspace}
\usepackage{xcolor}

\newcommand{\op}{\mathrm{op}}      
\newcommand{\R}{\mathbb{R}}

\usepackage[margin=1in]{geometry}
\usepackage{amsmath,amssymb,amsthm,mathtools,bm}
\usepackage{bbm}
\usepackage{enumitem}
\usepackage{hyperref}
\usepackage{comment}
\usepackage{algorithm}
\usepackage{algpseudocode}
\algrenewcommand\algorithmicensure{\textbf{Output:}}

\newcommand{\sign}{\operatorname{sign}}
\newcommand{\One}{\mathbbm{1}}
\usepackage{thmtools}
\usepackage{booktabs}
\usepackage{natbib}
\usepackage{bbm}

\hypersetup{
  colorlinks=true,
  linkcolor=blue!60!black,
  citecolor=blue!60!black,
  urlcolor=blue!60!black
}

\newcommand{\PP}{\mathbb{P}}
\newcommand{\E}{\mathbb{E}}
\providecommand{\cL}{\mathcal{L}}
\newcommand{\Var}{\mathrm{Var}}

\newcommand{\logit}{\mathrm{logit}}
\newcommand{\Logistic}{\mathrm{logistic}}

\newtheorem{theorem}{Theorem}
\newtheorem{lemma}{Lemma}
\newtheorem{assumption}{Assumption}
\newtheorem{proposition}{Proposition}
\newtheorem{corollary}{Corollary}

\title{Calibrating an Imperfect Auxiliary Predictor for Unobserved No-Purchase Choice}
\author{Jiangkai Xiong$^\dagger$, \  Kalyan Talluri$^\ddagger$, \ Hanzhao Wang$^\diamond$}
\date{\small
$^\dagger$
Peking University,\\
$^\ddagger$
Imperial College Business School,\\
$^\diamond$ The University of Sydney Business School
}

\begin{document}

\onehalfspacing
\maketitle
\begingroup
\renewcommand{\thefootnote}{}
\footnotetext{Correspondence to xiongjk15@stu.pku.edu.cn}
\endgroup

\begin{abstract}
Firms typically are unable to observe and record some consumer actions---whether they purchase from a competitor or decide not to purchase, or even whether they considered the firm's product for purchase.   This missing data on consideration and the choice of the outside-option makes  identification and estimation of market size and choice parameters, even under simple models such as the Multinomial Logit (MNL), very challenging, and is one of the central problems in the estimation of choice models in an observed-data setting.  A number of methods have been proposed recently to estimate the choice model with unobserved no-purchases, often making use of auxiliary, related, market-share, or aggregated data.  We study a setting in which such  a black-box auxiliary predictor returns an estimate for the outside-option probability but may be biased or miscalibrated for various reasons:  it might have been trained on a different channel/market; or on market-share data from previous periods that have lost relevance; or by an externally trained machine-learning system or a foundational AI model.   In this paper we leverage a structural identity that links the outside-option log-odds to the utility of observed choices and  develop calibration methods that convert such imperfect auxiliary predictions into statistically valid estimates of unobserved no-purchase probabilities under two scenarios:   First, under an affine miscalibration model in logit space, we show that a simple regression  identifies the outside-option utility parameters and yields consistent recovery of no-purchase probabilities (without the help of any new data).  Second,  assuming the auxiliary method approximately preserves the order, we give a  rank-based calibration approach and  finite-sample error bounds that separate auxiliary-predictor quality from the  utility learning error based on the items in the choice set.  Numerical experiments illustrate the gains for no-purchase estimation and downstream assortment decisions, and we discuss extensions to aggregating multiple auxiliary predictors.
\end{abstract}

\section{Introduction}
Discrete-choice models are widely used in operations research, transportation, economics, marketing and management science for analyzing and predicting customer behavior.  Complicating their estimation and usage in practice is the fact that in many industries such as retail, hospitality, transportation and e-commerce, firms observe only purchases and not the events where customers arrive and choose not to buy (either forgoing the purchase entirely or buying from a competitor). This missing ``no-purchase''  or outside-option data makes the estimation of its weight and the market-size difficult for even simple models such as the Multinomial Logit (MNL): a low observed sales volume may reflect either weak appeal among many potential customers or just a small market size.  Overcoming this identification issue is critical in practice and has motivated a considerable body of research recently that tries to infer lost sales or unobserved choice probabilities from sales data \citep{talluri_vr_2004_ms,vanryzin_vulcano_2015_mnsc,subramanian_harsha_2021_mnsc,li_talluri_tekin_2025_msom}.

Almost all these estimation ideas, however, rely on some external data or market-share signals, such as aggregated marginal competitor information or  market-share data, or data from some other market with better observations on no-purchases, or managerial inputs and manual calibration.   Such external aggregated data, however, usually has a time-lag---market-share data for instance is usually provided by information brokers quarterly or even yearly (for instance market-share data at an airport  by origin-destination);  the competitor aggregated sales data used in \cite{li_talluri_tekin_2025_msom} is supplied with a month lag.   Other sources of data have similar relevancy concerns:  recent advances in machine learning and generative AI have made it  possible to obtain  predictions for unobserved outcomes (e.g., no-purchase events or competitors' market shares) where data from a large number of external sources is used \citep{gallegos2024survey,chen2025_msom_bias,park2023generative}. 
As a result, an estimator trained or calibrated on lagged aggregates or different data sets can create a bias in estimations of the outside-option probability even if they preserve  useful structure (e.g., correlation or approximate ranking agreement with the truth).  In this paper we assume the availability of such an \emph{auxiliary predictor} for the outside option, which may be informative yet miscalibrated in the focal environment (the context of the current period where decisions are made) and give methods for correcting certain types of bias that may be present in them.


Our proposed approach is based on a structural identity in choice models that links the outside-option log-odds to the utility of observed choices. This identity implies that if the auxiliary predictor is miscalibrated by a shift-and-scale transformation in logit space across assortments and covariates, then a simple regression on the observed information identifies the unobserved outside-utility coefficients and yields consistent recovery of outside-option probabilities. We then extend our approach beyond affine miscalibration to \emph{nearly monotone} predictors: when the auxiliary predictor preserves the ordering of outside-option logits for most samples, we can estimate the outside-option utility parameters via a maximum rank correlation (MRC) criterion \citep{han1987mrc}. We provide finite-sample error bounds for both situations, separating (i) the quality of the auxiliary predictor from (ii) the error from learning inside utilities of observed choices using purchase-only data.

In broad terms, the method requires:
\begin{itemize}
\item \textbf{Learnable utilities from purchase-only data:} there must be within-assortment variation (and implicitly, assortments of size greater than one) so that utilities of the observable choices in the assortment can be learned from the conditional likelihood.
\item \textbf{Non-trivial outside features:} the outside-option utility must depend on some observed covariates with sufficient variation. 
\item \textbf{An informative auxiliary predictor:} the auxiliary predictor must be aligned with the true outside logit---either via an affine relation in logit space (Section~\ref{sec:lin_pred}) or, more generally, by being approximately order-preserving on most samples (Section~\ref{sec:nmbs}).
\end{itemize}

\paragraph{Contributions.}
Leveraging a key structural identity of general choice models, we (i) identify the outside-utility parameters under an affine miscalibration model in logit space via a simple regression calibration, (ii) handle broadly misspecified but approximately order-preserving predictors via maximum rank correlation estimation with finite-sample guarantees, and (iii) aggregate multiple predictors in a robust, scale-free manner. Our approach is model-agnostic and requires no further data beyond purchase data; we assume (approximate) inside-option utilities from observed choices as in the literature but allow the auxiliary outside-option predictor to leverage any external market-share data or grouped data that they may possess  \citep{li_talluri_tekin_2025_msom,subramanian_harsha_2021_mnsc,abdallah2021demand}.

Methodologically, our contribution is as follows:   (i) we give a structural identification strategy that uses an auxiliary outside-option predictor together with the observed information  and provide bounds that cleanly separate auxiliary-predictor quality from utility-learning error with downstream assortment-suboptimality guarantees.
(ii) calibration estimators that remain valid under affine and nearly-monotone miscalibration, and (iii) aggregation across multiple auxiliary predictors in a scale-free way. 

On the technical side, we build on standard statistical machinery (e.g., concentration bounds and empirical-process arguments for rank-based objectives) but tailor these methods to our setting to establish our results. Operationally, we translate calibration error into decision error: we show how to plug the calibrated choice probabilities into a standard assortment optimization problem and bound the resulting optimality gap relative to using the ground-truth choice probabilities.

\section{Related Literature}

\subsection{Synthetic Data for Operations Management}
\label{subsec:synthetic-om}

Using synthetic/predicted data is becoming common in operations management to address data scarcity, costly labeling, and constraints on data sharing, while still enabling rigorous model development, benchmarking, and counterfactual experimentation. In manufacturing, \citet{buggineni2024enhancing} provide a comprehensive review and propose a four-stage ``collect, preprocess, generate, and evaluate'' framework. They show that carefully constructed observations can improve downstream tasks, and they emphasize the importance of exploring how synthetic and real data can be combined in future works. Examples of applications in operations include (i) \emph{simulation-driven} time series and event logs that instantiate production systems to create public and reusable datasets \citep[e.g.,][]{chan2022generation,pendyala2024semiconductor}, and (ii) \emph{digital twin} generators that parameterize factories and emit synthetic telemetry for bottleneck analysis and policy testing at scale \citep[e.g.,][]{lopes2024digitaltwins}.

Synthetic data is also used in supply chain management and retail analytics. \citet{long2025ijpr} synthesize various scenarios of demand, risk, inventory, and perform forecasting and network optimization. Empirically, time series generators can help demand prediction when real observations are scarce or imbalanced. For instance, \citet{chatterjee2025ev} show that sequence synthesis based on GANs strengthens electric vehicle related demand forecasting. Methodologically, a line of research studies how to build general-purpose generators for tabular and sequential data such as Conditional Tabular GANs (\textsc{CTGAN}) and related models that are widely used \citep{xu2019ctgan,patki2016sdv}; For sequential data, time-series GANs (\textsc{TimeGAN}) and their variants have been shown to perform well empirically \citep{yoon2019timegan,esteban2017rgan}.

These model families are attractive because they preserve joint dependencies and rare patterns, yet their usefulness depends on application-grounded evaluation; thus, evaluation and governance are also important. Performance is evaluated on downstream tasks, for example, with the Synthetic Train, Test on Real (TSTR) protocol, alongside fidelity checks and privacy risk assessments \citep{esteban2017rgan,giomi2023anonymeter}. 

\subsection{Unobservable Choice Prediction and Calibration}

In the current literature, there are three lines of work to tackle the unobservable  no-purchase issue for choice-model estimation:

\begin{itemize}
    \item \emph{Likelihood/EM approaches} treat no-purchases as latent counts and maximize an incomplete-data likelihood, often via EM, to jointly recover arrivals and preference parameters from purchase-only transaction streams \citep{talluri_vr_2004_ms} or use an external market-share data to calibrate and resolve the identification problem \citep{vulcano_vr_chaar_2010_ms, vanryzin_vulcano_2015_mnsc}.  
\item \emph{Two-step and moment-based estimators} to exploit identifiability structure in the multinomial-logit model with assortment sizes strictly greater than 1. Specifically, first estimate relative utilities from within-purchase shares and then \emph{calibrate} the outside-option per market size, e.g., using risk-ratio or related moment conditions \citep{talluri_2009_upf}; or decompose the likelihood into marginal and conditional components to consistently estimate with a \emph{completely censored} alternative using single-firm sales data \citep{newman_ferguson_garrow_jacobs_2014_msom}.  Relatedly, \citet{cho_ferguson_im_pekgun_2024_pom} develop a transaction-level two-step estimator that avoids time-window aggregation (accommodating frequent price/attribute changes) by first estimating normalized utilities from the conditional (purchase-only) likelihood and then identifying lost sales/outside-option parameters using either external market-share information or mild utility restrictions. They further propose a robust procedure that uses a range of plausible market shares. In addition,  \cite{li_talluri_tekin_2025_msom} propose GMM procedures when the observed purchase data have been shaped by a firm's own revenue-management controls (assortment or pricing decisions), thereby addressing optimization-induced endogeneity; they do not require external market-share data. 
    \item \emph{Optimization-based calibration/estimation} directly imputes lost/no-purchase demand and parameters by solving integrated loss-minimization or mixed-integer programs. These methods can be asymptotically consistent and perform well at operational scales to calibrate predictions of the outside-option with grouped data and a  linear MNL model \citep{subramanian_harsha_2021_mnsc}. When partial competitor aggregates are available, specialized identification and estimation procedures leverage such marginal information to calibrate preferences and market size in the presence of unobserved no-purchases \citep{talluri_tekin_2025_joom}. 
\end{itemize}

Our work departs from the above  literature as it is based on an existing (biased) black-box \emph{auxiliary predictor} of the outside option and turns it into a calibrated and statistically consistent estimator of no-purchase probabilities using only purchase-only data. In other words, our approach can be used as a plug-in calibration layer on top of existing methods, improving the estimation of no-purchase probabilities (both theoretically and empirically in our experiments).

\section{Problem Setup}\label{sec:setup}

\paragraph{Choice model.}
Let $\mathcal{I}$ denote a finite set of items. At each interaction we observe an offered subset (assortment) $\mathcal{S}\subseteq\mathcal{I}$ and an observed context $X$, and a customer (randomly) chooses one option $I\in \mathcal{S}\cup\{0\}$. Here $0$ indexes an outside-option, capturing no-purchase or purchase from a competitor.

We model choice probabilities via context-dependent utilities $u_i(X)$:
\begin{equation}\label{eq:prob}
\mathbb{P}(I=i\mid X,\mathcal{S})
=\frac{\exp(u_i(X))}{\sum_{i'\in \mathcal{S}}\exp(u_{i'}(X))+\exp(u_0(X))},
\qquad i\in \mathcal{S}\cup\{0\}.
\end{equation}

The role of $X$ is to let the model adapt across settings. For example, if we take $X=\mathcal{S}$ (the context encodes the offered assortment), then any choice rule can be represented by \eqref{eq:prob} with suitable utilities (i.e., utilities are allowed to change with $\mathcal{S}$). Indeed, if $p_i(S)$ denotes the target probability of choosing $i$ given $S$, set $u_i(S)=\log p_i(S)$ for $i\in S$ and $u_0(S)=\log p_0(S)$. The denominator then equals $\sum_{i'\in S}p_{i'}(S)+p_0(S)=1$, giving $\mathbb{P}(I=i\mid X,\mathcal{S})=p_i(S)$ \citep{manski1977structure}. More generally, $X$ may concatenate item-level features,
\[
X=\big[x_0^\top, x_1^\top,\ldots, x_{|\mathcal{S}|}^\top\big]^\top,
\]
where $x_0$ may capture market features (or be the zero vector). A concrete instance is the featurized MNL model \citep{tomlinson2021learning} with $u_i(X)=\beta^{*\top} x_i$, i.e., a linear specification with coefficients $\beta^*$.

Beyond information from the offered items, $X$ may also contain information about a competitor, such as its assortment $\mathcal{S}^c$. One simple specification takes $X=\mathcal{S}^c$, sets $u_i(X)\equiv u'_i$ for $i\neq 0$, and defines
\[
u_0(X)=\log\!\left(\sum_{i\in \mathcal{S}^c}\exp(u_i^c)+1\right).
\]
Under \eqref{eq:prob} this yields, for $i\in \mathcal{S}$,
\[
\mathbb{P}(I=i\mid X,\mathcal{S})
=\frac{\exp(u'_i)}{\sum_{i'\in \mathcal{S}}\exp(u'_{i'})+\sum_{i'\in \mathcal{S}^c}\exp(u^c_{i'})+1},
\]
and
\[
\mathbb{P}(I=0\mid X,\mathcal{S})
=\frac{\sum_{i'\in \mathcal{S}^c}\exp(u^c_{i'})+1}{\sum_{i'\in \mathcal{S}}\exp(u'_{i'})+\sum_{i'\in \mathcal{S}^c}\exp(u^c_{i'})+1}.
\]

We let $
p_0(X,\mathcal{S})=\mathbb{P}(I=0 \mid X,\mathcal{S})
$
denote the true outside-option probability.

Thus the outside-option aggregates (i) not purchasing and (ii) purchasing from the competitor; the overall rule retains an MNL form. As above, one may also incorporate feature vectors for items in $\mathcal{S}^c$ into $X$.

\paragraph{Missing data and predictor.}
In many applications we observe choices for $i\in \mathcal{I}$, but the outside-option $i=0$ is rarely recorded.  Brick-and-mortar retailers, for instance, typically log only transactions from customers who buy something and do not record customers who do not purchase, nor are they able to track the overall market share of no-purchase.   Manufacturers sell through distributors and third-party channels and do not have access to competitor sales or non-purchasers.

In this paper, we only observe transactions in which an inside item is purchased ($I\in\mathcal{S}$); equivalently, the observed samples are drawn from the conditional distribution of $(X,\mathcal{S},I)$ given $I\neq 0$. In addition, we can query an auxiliary predictor on the same inputs $(X,\mathcal{S})$ to obtain a prediction of the (unobservable) outside-option probability $\tilde p_0(X,\mathcal{S})$. We treat this predictor as a black box: we do not require access to (or knowledge of) its training data or proprietary feature engineering, only that we can evaluate its output on the observed $(X_k,\mathcal{S}_k)$ pairs.

We assume that the outside-option utility is linear in observed features $z(X)\in \mathbb{R}^d$:
\[
u_0(X)=\gamma^{*\top} z(X),
\]
for an unknown coefficient vector $\gamma^*\in \mathbb{R}^d$. In \cite{tomlinson2021learning} they take $z(X)=X$ (where $X$ can be market features or a competitor-context summary). More generally, $z(X)$ may collect embeddings from kernel basis functions \citep{yang2025reproducing} or from a neural network choice model trained on another dataset (e.g., the representation before the output layer \citep{zhang2025deep,li2025small,wang2023neural,wang2023transformer}). 

We observe \emph{purchase-only transaction data} $\{(X_k, \mathcal{S}_k, i_k)\}_{k=1}^n$ with $i_k\in \mathcal{S}_k$ denoting the purchased item, and we can query a predictor that returns an estimate  $\tilde{p}_0(X,\mathcal{S})$  for the outside-option $p_0(X,\mathcal{S})$.
The accuracy of $\tilde{p}_0(X,\mathcal{S})$ is unknown and may be poor. Our aim is to recover $p_0(X,\mathcal{S})$ without direct observations of $I=0$, using the potentially biased predictor as auxiliary information.

The predictor may be provided by a generative AI system, a consulting firm, or another external source; however, it can be biased or inaccurate because it is not trained on our data or tailored to our model. A natural source of a predictor is an \emph{auxiliary} dataset or system that records the \emph{full funnel} (arrivals/sessions) so that non-purchases are observable.
Examples include (i) a limited time window in which a platform collected detailed browsing/search logs before reverting to transaction-only storage, (ii) a partner channel (or a different market) where session outcomes, including ``no-purchase,'' are logged, or (iii) third-party tools (consultants, survey-based choice predictors, or foundation-model-based systems) trained on external data.
A predictor trained in such environments can then be queried on the $(X,\mathcal S)$ pairs observed in the focal transactions, but it may be systematically biased due to domain mismatch or covariate shift.

\section{Key Structural Insight for Bias Correction}
This section highlights a structural identity implied by the choice model \eqref{eq:prob}.  
It shows that the (unobserved) outside-option probability can be summarized by a \emph{single index} that decomposes into (i) an $X$-only term coming from the outside utility and (ii) an ``inclusive value'' term coming from the attractiveness of the inside options in the offered assortment.  
This identity is the key reason we can \emph{debias} predictor outputs without ever observing no-purchase events.

Define the \emph{logit} (log-odds) and the \emph{log-sum-exp} (inclusive value) of the offered inside utilities as follows
\[
\eta(X,\mathcal{S})
:=\logit\!\big(p_0(X,\mathcal{S})\big)
=\log\!\left(\frac{p_0(X,\mathcal{S})}{1-p_0(X,\mathcal{S})}\right),
\qquad
s(X,\mathcal{S})
:=\log\!\left(\sum_{i\in \mathcal{S}}\exp\big(u_i(X)\big)\right).
\]
The next lemma states the identity that drives all our calibration procedures.  Recall that based on our own sales, assuming we offer assortments of size larger than one and cover all the products, we can estimate $s(X,\mathcal{S})$ based on our own sales.

\begin{lemma}\label{lemma:real_index}
For any $X$ and any $\mathcal{S}\subseteq \mathcal{I}$,
\begin{equation}
\eta(X,\mathcal{S})=\gamma^{*\top} z(X)\;-\;s(X,\mathcal{S}).
\end{equation}
\end{lemma}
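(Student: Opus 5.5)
The identity follows by computing the odds ratio $p_0/(1-p_0)$ directly from the choice model \eqref{eq:prob}: the common normalizing denominator cancels, and what remains is the ratio of the outside weight to the inside weight. First, fix $X$ and $\mathcal{S}$ and write
\[
D(X,\mathcal{S}) := \sum_{i'\in\mathcal{S}}\exp\big(u_{i'}(X)\big)+\exp\big(u_0(X)\big)
\]
for the denominator in \eqref{eq:prob}. Then $p_0(X,\mathcal{S})=\exp(u_0(X))/D(X,\mathcal{S})$.

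Second, compute the complementary probability. Summing \eqref{eq:prob} over $i\in\mathcal{S}$ gives
\[
1-p_0(X,\mathcal{S})=\frac{\sum_{i\in\mathcal{S}}\exp(u_i(X))}{D(X,\mathcal{S})}=\frac{\exp\big(s(X,\mathcal{S})\big)}{D(X,\mathcal{S})}.
\]
One can read this either as the complement of $p_0$ or directly as the sum of the inside choice probabilities.

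Third, take the ratio. The factor $D(X,\mathcal{S})$ cancels, so $p_0/(1-p_0)=\exp\big(u_0(X)-s(X,\mathcal{S})\big)$. Taking logarithms gives
\[
\eta(X,\mathcal{S})=u_0(X)-s(X,\mathcal{S}).
\]
Substituting the linear specification $u_0(X)=\gamma^{*\top}z(X)$ then yields the claimed identity.

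There is no real obstacle; the only points needing care are well-definedness. The logit must be finite, which requires $0<p_0<1$. This holds because exponentials are strictly positive, provided $\mathcal{S}$ is nonempty; if $\mathcal{S}=\emptyset$, then $s=\log 0=-\infty$ and $p_0=1$, so the identity holds in the extended sense $\eta=+\infty$. I would note this edge case explicitly and otherwise assume $\mathcal{S}\neq\emptyset$, which is the case for the observed transactions since $i_k\in\mathcal{S}_k$.
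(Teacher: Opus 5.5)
Your proof is correct and matches the paper's argument essentially step for step: express $p_0$ and $1-p_0$ over the common denominator from \eqref{eq:prob}, cancel it in the odds ratio, take logarithms, and substitute $u_0(X)=\gamma^{*\top}z(X)$. Your remark on $\mathcal{S}=\emptyset$ (where $p_0=1$ and the identity holds only in the extended sense $\eta=+\infty$) is a small refinement that the paper's ``for any $\mathcal{S}\subseteq\mathcal{I}$'' leaves unstated.
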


The proof can be found in  Appendix~\ref{appx:proof_real_index}. Lemma~\ref{lemma:real_index} has two important implications.
First, it gives an intuitive decomposition: holding $X$ fixed, a more attractive offered set (larger $s(X,\mathcal{S})$) lowers the outside log-odds, while a higher outside utility $\gamma^{*\top}z(X)$ raises it.
Second, and more crucially for identification, $\eta(X,\mathcal{S})$ is \emph{linear} in the two regressors $\big(z(X),\,s(X,\mathcal{S})\big)$ with coefficient vector $\big(\gamma^*,-1\big)$:
\[
\eta(X,\mathcal{S})
=\underbrace{\gamma^{*\top}z(X)}_{\text{$X$-only term}}
\;+\;
\underbrace{(-1)\cdot s(X,\mathcal{S})}_{\text{known coefficient}}.
\]
Because the coefficient on $s(X,\mathcal{S})$ is fixed at $-1$, any method that (nearly) recovers the coefficients on $z(X)$ and $s(X,\mathcal{S})$ up to a \emph{common scale} immediately pins down $\gamma^*$ by taking a ratio. This is exactly what happens when a predictor is miscalibrated by an affine transformation in logit space.

We next consider the two main cases of bias we tackle in this paper:
\begin{enumerate}
\item A linearly biased predictor with fixed bias in scale and a constant shift
\item A predictor that is biased but preserves the order of the logits. This setting includes the first case as a special case.
\end{enumerate}
We give the precise assumptions and the results of the two cases in the next two sections.



\section{Linearly Biased Predictor}\label{sec:lin_pred}
We begin with a simple linear-regression calibration procedure for a linearly-biased-in-logits auxiliary predictor and show it corrects for the bias and identifies the outside-option utility parameters and yields consistent recovery of the unobserved outside-option probability. Essentially Lemma~\ref{lemma:real_index} turns a miscalibrated outside-option score into a valid estimator once we exploit the shared-scale structure in logit space.  

Let $y(X,\mathcal{S})=\logit(\tilde p_0(X,\mathcal{S}))$ be the predictor's logit.  Our assumption in the section is the following:
\begin{assumption}[Affine miscalibration in logit space (linear bias)]\label{as:linear}
The auxiliary predictor's outside-option logit satisfies
\begin{equation}\label{eq:linear}
y(X,\mathcal{S}) \;=\; a^* \;+\; b^*\,\eta(X,\mathcal{S}) \;+\; \epsilon,
\end{equation}
with unknown $a^*,b^*\in\mathbb{R}$ and a noise term $\epsilon\in\mathbb{R}$. The slope $b^*$ and intercept $a^*$ do not vary with context $X$ or assortment $\mathcal{S}$.
\end{assumption}

The noise term $\epsilon$ captures idiosyncratic predictor  randomness and residual misspecification (e.g., finite-Monte-Carlo/sampling error or unmodeled heterogeneity) not explained by $\eta(X,\mathcal{S})$. In an extreme case, we can have $\epsilon\equiv0$, which means the predictor is deterministic and misspecified only through a fixed affine transformation of the true index. Working in logits (log-odds), $\log\!\big(p_0/(1-p_0)\big)$, maps common misspecifications into an intercept and/or a slope term, both captured by Assumption~\ref{as:linear}:

\begin{enumerate}[itemsep=2pt,topsep=2pt,leftmargin=16pt]
\item \emph{Intercept shifts from base-rate mismatch (class-prior shift).}
A predictor is often trained in a different environment (another market, channel, or time period) where the \emph{baseline} no-purchase propensity differs, or it is trained with reweighting/negative-sampling for computational reasons. These changes can induce an approximately constant shift in log-odds.
Concretely, if the predictor’s odds are distorted by a constant factor $\tilde a^*>0$ relative to the true odds,
\[
\frac{\tilde p_0}{1-\tilde p_0} \;=\; \tilde a^*\,\frac{p_0}{1-p_0},
\]
then taking logs yields an additive shift
\[
\logit(\tilde p_0)\;=\;\log \tilde a^*\;+\;\logit(p_0),
\]
which corresponds to the intercept parameter $a^*=\log\tilde a^*$ in \eqref{eq:linear}.

\item \emph{Uniform scaling of confidence (temperature scaling).} A predictor may produce probabilities that are uniformly too extreme or too mild. In log-odds this corresponds to multiplying every signal by the same factor, appearing as the slope $b^*$ (typically $b^*>0$):
\[
\tilde p_0(X,\mathcal{S}) \;=\; \frac{\exp\!\big(b^*\, u_0(X)\big)}{\exp(b^*s(X,\mathcal{S}))+\exp\!\big(b^*\,u_0(X)\big)}.
\]
Intuitively, $b^*$ amplifies (when $b^*>1$)/reduces (when $b^*<1$) the sensitivity of the predicted probability to the utilities.  In machine learning this uniform rescaling, i.e., learning $b^*$ and applying it to the predicted logit $\tilde u_0$, is known as \emph{temperature scaling} and is widely used to calibrate overconfident predictors with strong empirical performance.
\end{enumerate}
More broadly, many post-hoc calibration pipelines apply an affine correction in logit space to a pre-trained model’s score (e.g., \cite{guo2017calibration}); Assumption~\ref{as:linear} formalizes exactly this common ``shift-and-scale'' miscalibration.

\subsection{Linear Regression Calibration}
Combining Assumption~\ref{as:linear} with Lemma~\ref{lemma:real_index} gives
\[
y(X,\mathcal{S})
= a^* \;+\; b^*\,\gamma^{*\top} z(X) \;-\; b^*\,s(X,\mathcal{S}) \;+\; \epsilon.
\]
Thus, regressing $y$ on $[1,\,z,\,s]$ estimates coefficients $(a,\theta_z,\theta_s)$ that, at the population level, satisfy
\[
\theta_z=b^*\,\gamma^*
\qquad\text{and}\qquad
\theta_s=-\,b^*.
\]
The key observation is that the nuisance scale $b^*$ cancels in the ratio
\[
\gamma^* \;=\; \frac{\theta_z}{-\theta_s},
\]
so we recover $\gamma^*$ without knowing $a^*$ or $b^*$. Based on the estimation $\hat \gamma$, mapping the recovered index  $\hat\eta(X,\mathcal{S})=\hat\gamma^\top z(X) - s(X,\mathcal{S})$ through the logistic function then yields a calibrated $\hat p_0(X,\mathcal{S})$ aligned with the model structure.

\paragraph{Oracle and practical implementations.}
If the utilities $\{u_i(X)\}_{i\in\mathcal{I}}$ are known, we regress $y$ on $\big[1,\,z(X),\,s(X,\mathcal{S})\big]$ to estimate $(a,\theta_z,\theta_s)$ and set $\hat\gamma=\hat\theta_z/(-\hat\theta_s)$ based on our estimations $(\hat \theta_z, \hat\theta_s)$. 
In practice, we only have estimated utilities $\{\hat u_i(X)\}_{i\in\mathcal{I}}$ learned from historical data, so we replace
\[
s(X,\mathcal{S}) \quad \text{with} \quad
\hat s(X,\mathcal{S}) \;:=\; \log\!\left(\sum_{i\in \mathcal{S}}\exp\big(\hat u_i(X)\big)\right),
\]
and run the regression of $y$ on $\big[1,\,z(X),\,\hat s(X,\mathcal{S})\big]$, and again take $\hat\gamma=\hat\theta_z/(-\hat\theta_s)$. Algorithm~\ref{alg:linear-calib} summarizes the procedure. In the subsequent analysis we quantify how the deviation between $\hat u_i$ and $u_i$ propagates to the final error in estimating $p_0(X,\mathcal{S})$.

\begin{algorithm}[t]
\caption{Linear Regression Calibration}
\label{alg:linear-calib}
\begin{algorithmic}[1]
\Require Data $\{(X_k,\mathcal{S}_k)\}_{k=1}^n$; predictor $\tilde p_0$; feature map $z(\cdot)$; learned utilities $\{\hat u_i(\cdot)\}$.
\Ensure Calibrated predictor $\hat p_0(\cdot,\cdot)$.

\For{$k=1,\ldots,n$}
  \State $y_k \gets \operatorname{logit}\!\big(\tilde p_0(X_k,\mathcal{S}_k)\big)$,\quad
         $\hat{s}_k \gets \log\!\left(\sum_{i\in \mathcal{S}_k}\exp\big(\hat u_i(X_k)\big)\right)$,\quad
         $z_k \gets z(X_k)$.
\EndFor
\State Fit by least squares the linear model 
\[
y_k \;=\; a \;+\;\theta_z^\top z_k \;+\; \theta_s\, \hat{s}_k \;+\; \epsilon_k,
\quad\text{and obtain } (\hat a,\hat\theta_z,\hat\theta_s).
\]
\State Recover the outside-utility coefficient: \(\hat\gamma \gets \hat\theta_z/(-\hat\theta_s)\) \ (provided $\hat\theta_s\neq 0$).
\State For any $(X,\mathcal{S})$, set
\[
\hat s(X,\mathcal{S})=\log\!\left(\sum_{i\in\mathcal{S}}\exp\big(\hat u_i(X)\big)\right),\quad
\hat\eta(X,\mathcal{S})=\hat\gamma^\top z(X)-\hat s(X,\mathcal{S}),\quad
\hat p_0(X,\mathcal{S})=\Logistic\!\big(\hat\eta(X,\mathcal{S})\big).
\]
\end{algorithmic}
\end{algorithm}

\paragraph{Learning $\{u_i(X)\}$ without outside-option labels.}
Even if $I=0$ is never observed, purchase data can identify the utilities of inside options up to a common additive constant (equivalently, they identify utility differences and within-assortment choice shares, a fact exploited in  \cite{talluri_2009_upf, newman_ferguson_garrow_jacobs_2014_msom,li_talluri_tekin_2025_msom}). 
Conditional on $I\in\mathcal{S}$, the likelihood is
\[
\mathbb{P}(I=i\mid I\in\mathcal{S},X,\mathcal{S}) \;=\;
\frac{\exp(u_i(X))}{\sum_{i'\in\mathcal{S}}\exp(u_{i'}(X))},
\]
which depends only on observable items and not on $u_0$. Maximizing the conditional log-likelihood over a parametric or flexible model for $u_i(X)$ (e.g., linear features, embeddings, neural nets) consistently learns the \emph{shape} of $u_i(X)$ up to a common shift fixed by a standard normalization (e.g., designate a baseline item or impose $\sum_{i\in\mathcal{S}}u_i(X)=0$). Once such a normalization is chosen, the inclusive value $s(X,\mathcal{S})=\log\sum_{i\in\mathcal{S}}\exp(u_i(X))$ is well-defined and directly usable in Algorithm~\ref{alg:linear-calib}. The unknown mapping from that normalization to the true outside utility is then learned in the calibration step via the ratio $\hat\theta_z/(-\hat\theta_s)$, without requiring outside-option observations. In Appendix \ref{appx: obs_utility_learn} we provide an analysis for a linear MNL model's estimation as an example.

\subsection{Analysis of the estimation errors}
We analyze the estimation error of Algorithm~\ref{alg:linear-calib}. All proofs can be found in Appendix \ref{appx:proofs-lin-pred}. Using the notation in the algorithm, without loss of generality, for data $\{(X_k,\mathcal{S}_k)\}_{k=1}^n$ drawn from an unknown joint distribution, feature map $z(\cdot)$, and learned utilities $\{\hat u_i(\cdot)\}$, define for each $k$
\[
\hat{s}_k \;=\; \log\!\left(\sum_{i\in \mathcal{S}_k}\exp\big(\hat u_i(X_k)\big)\right),
\qquad
z_k \;=\; z(X_k),
\qquad
s_k \;=\; s(X_k,\mathcal{S}_k).
\]
Let
\[
w_k \;=\; \begin{bmatrix} z_k \\[2pt] \hat{s}_k \end{bmatrix}\in\R^{d+1}
\]
denote the regressor used in Algorithm~\ref{alg:linear-calib}.

\begin{assumption}\label{as:w_k}
\begin{enumerate}[itemsep=2pt,topsep=2pt,leftmargin=16pt,label=(\roman*)]
\item (Bounded regressors) There exists $B_w>0$ such that $\|w_k\|_2\le B_w$ almost surely for all $k$.
\item (Nondegenerate covariance) Writing $\bar w=\tfrac{1}{n}\sum_{k=1}^n w_k$ and
\[
\Sigma \;=\; \frac{1}{n}\sum_{k=1}^n (w_k-\bar w)(w_k-\bar w)^\top,
\]
there exists $\lambda_0>0$ such that $\lambda_{\min}(\Sigma)\ge \lambda_0$.
\end{enumerate}
\end{assumption}

Assumption~\ref{as:w_k}(i) rules out pathological, unbounded covariates. Assumption~\ref{as:w_k}(ii) requires sufficient variation and excludes exact collinearity among the components of $w_k$; it fails, e.g., if $z_k$ or $\mathcal{S}_k$ is constant across $k$. In practice this requirement is relatively mild. Competitors and customers respond to offered assortments, firms update assortments over time, and context shifts occur routinely, all of which create variability in both $X_k$ and $\mathcal{S}_k$.

Under Assumption~\ref{as:linear}, the predictor satisfies
\begin{equation}\label{eq:linear_decompose}
y_k \;=\; a^* \;+\; b^*\,\eta(X_k,\mathcal{S}_k) \;+\; \epsilon_k,
\end{equation}
where $y_k=\logit\!\big(\tilde p_0(X_k,\mathcal{S}_k)\big)$. We further make the following assumptions.

\begin{assumption}[Informative predictor]\label{as:non_zero}
The slope $b^*\neq 0$.
\end{assumption}

Assumption~\ref{as:non_zero} is necessary for identification: if $b^*=0$, then $y_k=a^*+\epsilon_k$ carries no information about the true logit $\eta(X_k,\mathcal{S}_k)$. In that case no method can recover $\gamma^*$ from $y$.

\begin{assumption}[Noise]\label{as:noise_sub}
Given $\{w_k\}_{k=1}^n$, the errors $\{\epsilon_k\}_{k=1}^n$ are independent, mean-zero and conditionally sub-Gaussian with proxy variance $\sigma^2$, i.e., $\E[\epsilon_k\mid w_k]=0$ and $\E[\exp(t\epsilon_k)\mid w_k]\le \exp(\sigma^2 t^2/2)$ for all $t\in\R$.
\end{assumption}

Assumption~\ref{as:noise_sub} imposes a light-tailed condition on the predictor noise; equivalently, the logit of the predictor probability $\tilde p_0$ (i.e., $y_k$) is sub-Gaussian. Such assumptions are standard in operations and statistical learning \citep[e.g.,][]{wainwright2019high,vershynin2018,keskin2014dynamic,ban2021personalized}. We do not require independence between $\epsilon_k$ and $w_k$ beyond mean-independence (heteroskedasticity is allowed): we permit the predictor's bias to correlate with its inputs, as is common for practical predictors.

\subsubsection{Identification and consistency}
\begin{theorem}[Identification and consistency]\label{thm:ident-consistency}
Suppose Assumptions~\ref{as:linear}, \ref{as:w_k}, \ref{as:non_zero}, and \ref{as:noise_sub} hold.
\begin{enumerate}[itemsep=2pt,topsep=2pt,leftmargin=16pt,label=(\roman*)]
\item \textbf{Identification (oracle case).} If $s_k$ is observed (i.e., the regression uses $[z_k,s_k]$), then the unique population least-squares minimizer has slope coefficients $(\theta_z^*,\theta_s^*)=(b^*\gamma^*,-\,b^*)$. Hence
\[
\gamma^* \;=\; -\frac{\theta_z^*}{\theta_s^*}.
\]
\item \textbf{Consistency (plug-in case).} If $\hat s_k\to s_k$ in probability (e.g., when $\hat u_i(X)\to u_i(X)$ so that the inclusive value is learned consistently), then the OLS estimator based on $[z_k,\hat s_k]$ satisfies $\hat\gamma \stackrel{p}{\longrightarrow}\gamma^*$.
\end{enumerate}
\end{theorem}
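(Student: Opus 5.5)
For part (i) I will substitute Lemma~\ref{lemma:real_index} into Assumption~\ref{as:linear}. This gives the exact linear representation
\[
y_k = a^* + (b^*\gamma^*)^\top z_k + (-b^*)\, s_k + \epsilon_k .
\]
Let $w_k^\circ=[z_k^\top,s_k]^\top$ denote the oracle regressor. By Assumption~\ref{as:noise_sub}, $\E[\epsilon_k\mid w_k^\circ]=0$. Hence the population least-squares objective $\E[(y-a-\theta_z^\top z-\theta_s s)^2]$ splits into $\E[\epsilon^2]$ plus the quadratic form
\[
\E\bigl[((a^*-a)+(b^*\gamma^*-\theta_z)^\top z+(-b^*-\theta_s)s)^2\bigr],
\]
because the cross term vanishes by mean-independence. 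The quadratic form is minimized at zero by $(a^*,b^*\gamma^*,-b^*)$. For uniqueness I will use the nondegenerate covariance of $(z,s)$, which is Assumption~\ref{as:w_k}(ii) read for the oracle regressors or its population analogue. If the quadratic form vanishes, then $(b^*\gamma^*-\theta_z,-b^*-\theta_s)$ must annihilate the centered regressor a.s., so it is zero; the intercept then matches as well. Since $b^*\neq 0$ by Assumption~\ref{as:non_zero}, we have $\theta_s^*=-b^*\neq0$, and $\gamma^*=-\theta_z^*/\theta_s^*$ follows.

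For part (ii) I will write the OLS slope in centered form as $\hat\theta=\hat\Sigma^{-1}\frac1n\sum_k (w_k-\bar w)(y_k-\bar y)$, with $\hat\Sigma$ the matrix $\Sigma$ of Assumption~\ref{as:w_k}. I then decompose
\[
y_k = a^* + \theta^{*\top} w_k + \epsilon_k + r_k,\qquad r_k := -b^*(s_k-\hat s_k).
\]
This yields $\hat\theta-\theta^* = \hat\Sigma^{-1}\frac1n\sum_k (w_k-\bar w)(\epsilon_k+r_k)$. The two resulting terms will be handled separately. For the noise term, conditionally on $\{w_k\}$ the vector $\frac1n\sum_k(w_k-\bar w)\epsilon_k$ is sub-Gaussian with proxy of order $\sigma^2B_w^2/n$, using boundedness and independence. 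It is therefore $O_p(\sigma B_w\sqrt{d/n})$, and $\|\hat\Sigma^{-1}\|\le 1/\lambda_0$ then gives $o_p(1)$. For the misspecification term, Cauchy--Schwarz and $\|w_k-\bar w\|\le 2B_w$ bound it by $(2B_w|b^*|/\lambda_0)\cdot\frac1n\sum_k|\hat s_k-s_k|$.

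The main obstacle is to turn ``$\hat s_k\to s_k$ in probability'' into $\frac1n\sum_k|\hat s_k-s_k|\to_p 0$. Pointwise convergence in probability does not by itself control an average over a growing sample. I will therefore interpret the hypothesis as uniform or average convergence, $\max_k|\hat s_k-s_k|\to_p0$. This is natural when $\hat u_i\to u_i$ uniformly over a bounded feature set, because log-sum-exp is 1-Lipschitz in sup norm, so $|\hat s-s|\le\max_{i\in\mathcal S}|\hat u_i-u_i|$. Alternatively, I can use boundedness: the $|\hat s_k-s_k|$ are uniformly bounded by $2B_w$-type constants, so if the $\hat s_k$ come from a fitted $\hat u$ independent of the calibration sample, Markov's inequality and dominated convergence give $\E|\hat s_k-s_k|\to0$. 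Either route gives $\hat\theta\to_p\theta^*=(b^*\gamma^*,-b^*)$.

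Finally, since $\theta_s^*=-b^*\neq0$, the map $(\theta_z,\theta_s)\mapsto-\theta_z/\theta_s$ is continuous at $\theta^*$. The continuous mapping theorem then yields $\hat\gamma\to_p\gamma^*$, and $\hat\theta_s\neq0$ holds with probability tending to one. A subsidiary point is that the eigenvalue condition on the plug-in $\hat\Sigma$ is assumed directly, so no perturbation argument from the oracle covariance is needed. If instead only the oracle condition were assumed, Weyl's inequality with $\|\hat\Sigma-\Sigma^\circ\|\lesssim B_w\max_k|\hat s_k-s_k|$ would transfer it.
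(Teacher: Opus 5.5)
Your proposal is correct, but part (ii) follows a different route from the paper's. Part (i) is essentially the paper's argument: population normal equations, plus positive definiteness of the population covariance, which the paper also gets by reading Assumption~\ref{as:w_k}(ii) ``in the population limit.'' Your split into $\E[\epsilon^2]$ plus a quadratic form handles the intercept more cleanly.

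\textbf{How the paper does (ii).} It works with the uncentered design $[1,z_k,\hat s_k]$. It asserts, by the continuous mapping theorem, that $\frac1n\hat{\mathbf X}^\top\hat{\mathbf X}\to_p\mathbf Q$ and $\frac1n\hat{\mathbf X}^\top\mathbf y\to_p\E[\mathbf x_k y_k]$. Slutsky then gives convergence to $\mathbf Q^{-1}\E[\mathbf x_k y_k]$, which is the true coefficient vector.

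\textbf{How you do (ii).} You use the exact centered identity $\hat\theta-\theta^*=\Sigma^{-1}\frac1n\sum_k(w_k-\bar w)(\epsilon_k+r_k)$ with $r_k=\theta_s^*(s_k-\hat s_k)$. This is precisely the $\Sigma^{-1}(A+\theta_s^*B)$ decomposition the paper uses later for Theorem~\ref{thm:finite-sample}. You bound the noise part conditionally via Assumption~\ref{as:noise_sub} and the plug-in part pathwise, so consistency comes out as a corollary of a non-asymptotic bound.

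\textbf{What each route buys.}
\begin{itemize}
\item The paper's route is shorter. However, it implicitly needs a law of large numbers for the true design, and i.i.d.\ sampling is not among the Section~\ref{sec:lin_pred} assumptions. It also needs invertibility of the population $\mathbf Q$.
\item The paper silently passes from per-observation convergence $\hat s_k\to_p s_k$ to convergence of sample averages. That is exactly the step you correctly flag as not automatic for a triangular array.
\item Your route uses only the stated finite-sample hypotheses. The eigenvalue bound is imposed on the plug-in covariance itself; you need to read $\lambda_0$ as uniform in $n$, which you should say explicitly.
\item Your route also yields a rate, and it isolates the condition actually needed: $\frac1n\sum_k|\hat s_k-s_k|\to_p0$. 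This is supplied by either of your readings (uniform convergence, or exchangeability with boundedness). If you bound $B$ by Cauchy--Schwarz as the paper does, rather than the triangle inequality you actually used, the condition becomes $\bar\tau\to_p0$.
\end{itemize}

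Finally, you correctly base the ratio step on $\theta_s^*=-b^*\neq0$ and note that $\hat\theta_s\neq0$ with probability tending to one. The paper's write-up leaves this implicit and states the continuity condition as $\theta_z\neq0$, which is a slip.
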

Theorem~\ref{thm:ident-consistency} shows that we can \emph{identify} and \emph{consistently estimate} the outside-option choice probability without directly observing $I=0$, using only purchase data and a biased auxiliary predictor. Part (i) establishes that, at the population level, linear regression on $[z_k,\; s_k]$ recovers the scaled parameters $b^*\gamma^*$ and $-b^*$; taking their ratio identifies $\gamma^*$ and removes the unknown scaling $b^*$. Part (ii) shows robustness to learned utilities: if the plug-in estimate $\hat u_i$ approaches $u_i$ (e.g., with abundant data on observable choices), then the sample estimator converges in probability to the population target, so the outside-option utility parameters are recovered in large samples.

\subsubsection{Finite-sample error analysis}
We now quantify how errors in learning $\{u_i\}$ and finite-sample noise propagate to the calibration stage. Let
\[
\bar{\tau} \;:=\; \frac{1}{n}\sum_{k=1}^n \big(\hat s_k - s_k\big)^2
\]
measure the average squared error in the learned inclusive value. In many parametric settings, $\bar{\tau}$ converges to $0$ (see Appendix~\ref{appx: obs_utility_learn} for a linear MNL example where $\bar{\tau}=O(1/n)$).

\begin{theorem}[Finite-sample calibration error]\label{thm:finite-sample}
Under Assumptions~\ref{as:linear}, \ref{as:w_k}, \ref{as:non_zero}, and \ref{as:noise_sub}, for any $\delta\in(0,1)$, if
\begin{equation}\label{eqn:lin_finite_condition}
\sqrt{\bar{\tau}}\;\le\;\frac{\lambda_0}{8B_w}
\qquad\text{and}\qquad
n\;\ge\;\frac{128\,B_w^2\,\sigma^2}{\lambda_0^2\,{b^*}^2}\,
\log\!\Big(\frac{2(d+1)}{\delta}\Big),
\end{equation}
then with probability at least $1-\delta$,
\begin{equation}\label{eq:gamma-fs-Bw}
\|\hat\gamma-\gamma^*\|_2
\;\le\;
\frac{2\sqrt{2}\,\big(1+\|\gamma^*\|_2\big)\,B_w}{|b^*|\,\lambda_0}\,
\sigma\sqrt{\frac{\log\!\big(2(d+1)/\delta\big)}{n}}
\;+\;
\frac{2\big(1+\|\gamma^*\|_2\big)\,B_w}{\lambda_0}\,\sqrt{\bar{\tau}}.
\end{equation}
\end{theorem}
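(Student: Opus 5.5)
We combine Lemma~\ref{lemma:real_index} with Assumption~\ref{as:linear} to write the response in terms of the regressor actually used, then do a standard OLS perturbation analysis.

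\textbf{Step 1: decomposition.} Substituting $\eta(X_k,\mathcal{S}_k)=\gamma^{*\top}z_k-s_k$ gives
\[
y_k=a^*+\theta^{*\top}w_k+r_k+\epsilon_k,\qquad \theta^*=(b^*\gamma^*,-b^*),\qquad r_k:=b^*(\hat s_k-s_k).
\]
The quantity $r_k$ is a misspecification term caused by the plug-in inclusive value.

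\textbf{Step 2: centered OLS.} The intercept absorbs $a^*$, so the slope estimate is
\[
\hat\theta=\Sigma^{-1}\tfrac1n\sum_k (w_k-\bar w)(y_k-\bar y).
\]
Therefore
\[
\hat\theta-\theta^*=\Sigma^{-1}\tfrac1n\sum_k (w_k-\bar w)(\epsilon_k+r_k).
\]
We bound the two pieces separately.

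\emph{Noise piece.} Conditionally on $\{w_k\}$, each coordinate of $\frac1n\sum_k (w_k-\bar w)\epsilon_k$ is sub-Gaussian with proxy variance at most $\frac{\sigma^2}{n^2}\sum_k(w_k-\bar w)_j^2$. This is at most $\sigma^2 (2B_w)^2/n$, and can be taken as $\sigma^2B_w^2/n$ using $\frac1n\sum_k\|w_k-\bar w\|^2\le \frac1n\sum_k\|w_k\|^2\le B_w^2$. Applying a union bound over the $d+1$ coordinates and converting to the $\ell_2$ norm (or a vector Hoeffding/Bernstein bound) yields, with probability at least $1-\delta$, a bound of order $\sqrt2\,B_w\sigma\sqrt{\log(2(d+1)/\delta)/n}$. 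I will tune constants to match the stated $2\sqrt2$ after the ratio step.

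\emph{Bias piece.} Cauchy--Schwarz gives $\|\frac1n\sum_k(w_k-\bar w)r_k\|\le B_w|b^*|\sqrt{\bar\tau}$.

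Multiplying both pieces by $\|\Sigma^{-1}\|\le 1/\lambda_0$ gives
\[
\|\hat\theta-\theta^*\|\le \frac{B_w}{\lambda_0}\Big(\sqrt2\,\sigma\sqrt{\log(2(d+1)/\delta)/n}+|b^*|\sqrt{\bar\tau}\Big)=:\Delta.
\]
Assumption~\ref{as:w_k}(ii) is stated for the $\hat s$-based $\Sigma$, so no eigenvalue perturbation is needed here.

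\textbf{Step 3: the ratio map.} This is the main obstacle. Here $\hat\gamma=\hat\theta_z/(-\hat\theta_s)$ and $\gamma^*=\theta_z^*/(-\theta_s^*)$ with $|\theta^*_s|=|b^*|$. The identity
\[
\hat\gamma-\gamma^*=\frac{(\hat\theta_z-\theta_z^*)+\gamma^*(\hat\theta_s-\theta_s^*)}{-\hat\theta_s}
\]
gives $\|\hat\gamma-\gamma^*\|\le (1+\|\gamma^*\|)\Delta/|\hat\theta_s|$. The two conditions in \eqref{eqn:lin_finite_condition} are designed to force $\Delta\le |b^*|/2$, which guarantees $|\hat\theta_s|\ge|b^*|/2$ and in particular $\hat\theta_s\ne0$. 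Concretely, the first condition makes the bias part at most $|b^*|/8$. The sample-size condition makes the noise part at most $\sqrt2\,\sigma B_w\lambda_0^{-1}\cdot\lambda_0|b^*|/(8\sqrt2\,B_w\sigma)=|b^*|/8$. Hence $\Delta\le|b^*|/4$.

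Plugging $|\hat\theta_s|\ge|b^*|/2$ into the ratio bound produces the factor $2(1+\|\gamma^*\|)/|b^*|$. The noise term then becomes $2\sqrt2(1+\|\gamma^*\|)B_w\sigma\sqrt{\log(\cdot)/n}/(|b^*|\lambda_0)$. In the bias term the $|b^*|$ cancels, leaving $2(1+\|\gamma^*\|)B_w\sqrt{\bar\tau}/\lambda_0$. This is exactly \eqref{eq:gamma-fs-Bw}.

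The delicate points are keeping the sub-Gaussian constant consistent with $\sqrt2$ and verifying the denominator lower bound. The remaining steps are routine.
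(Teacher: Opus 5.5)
Your proposal is correct and follows essentially the same route as the paper. The centered normal equations give $\hat\theta-\theta^*=\Sigma^{-1}(A+\theta_s^*B)$, and your $r_k$-term is exactly the paper's $\theta_s^*B$. From there you use the same noise and bias bounds, the same check that \eqref{eqn:lin_finite_condition} forces $|\hat\theta_s|\ge|b^*|/2$, and the same ratio bound.

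The one place to be precise is the $\ell_2$ conversion of the noise term. To get $\sqrt2\,\sigma B_w\sqrt{\log(2(d+1)/\delta)/n}$ without an extra $\sqrt{d+1}$ factor, sum the coordinate-wise squared tail bounds rather than bounding each coordinate by the common maximum. Their proxies total $\frac{\sigma^2}{n^2}\sum_k\|w_k-\bar w\|_2^2\le\sigma^2B_w^2/n$.
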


The bounds above quantify the estimation error for the calibration parameter $\gamma$. Condition~\eqref{eqn:lin_finite_condition} requires that the learned-utility error $\bar{\tau}$ be sufficiently small (which generally holds when $n$ is large; see Appendix~\ref{appx: obs_utility_learn} for an example) and that the sample size $n$ be large enough. The next corollary controls the error in the unobserved choice probability:

\begin{corollary}[No-purchase probability error]\label{cor:p0_error}
For any $(X,\mathcal{S})$,
\begin{equation}\label{eq:p0-Lip-explicit}
\big|\hat p_0(X,\mathcal{S})-p_0(X,\mathcal{S})\big|
\;\le\;\frac{1}{4}\,\Big(\,\|\hat\gamma-\gamma^*\|_2\,\|z(X)\|_2+\big|\hat s(X,\mathcal{S})-s(X,\mathcal{S})\big|\,\Big).
\end{equation}
Consequently, under the conditions of Theorem~\ref{thm:finite-sample}, with probability at least $1-\delta$,
\begin{align}\label{eq:p0-final-bound}
\big|\hat p_0(X,\mathcal{S})-p_0(X,\mathcal{S})\big|
\;\le\;&\frac{1}{4}\Bigg[\frac{2\big(1+\|\gamma^*\|_2\big)\,B_w}{\lambda_0}
\left(
\frac{\sqrt{2}\,\sigma}{|b^*|}\,
\sqrt{\frac{\log\!\big(2(d+1)/\delta\big)}{n}}
\;+\;
\,\sqrt{\bar{\tau}}
\right)\|z(X)\|_2 \nonumber\\[4pt]
&\qquad+\big|\hat s(X,\mathcal{S})-s(X,\mathcal{S})\big|
\Bigg].
\end{align}
\end{corollary}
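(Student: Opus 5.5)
The plan is to exploit that the logistic function is $\tfrac14$-Lipschitz and then substitute the bound of Theorem~\ref{thm:finite-sample}. By Lemma~\ref{lemma:real_index}, the true no-purchase probability is $p_0(X,\mathcal{S})=\Logistic(\eta(X,\mathcal{S}))$ with $\eta=\gamma^{*\top}z(X)-s(X,\mathcal{S})$. By construction in Algorithm~\ref{alg:linear-calib}, $\hat p_0(X,\mathcal{S})=\Logistic(\hat\eta(X,\mathcal{S}))$ with $\hat\eta=\hat\gamma^\top z(X)-\hat s(X,\mathcal{S})$. So both quantities are images of scalar indices under the same map, and the task reduces to bounding $|\hat\eta-\eta|$.

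\textbf{Step 1 (Lipschitz property).} The derivative of $\Logistic(t)=1/(1+e^{-t})$ is $\Logistic(t)(1-\Logistic(t))$. This is at most $\tfrac14$, since $q(1-q)\le\tfrac14$ for $q\in[0,1]$. The mean value theorem then gives
\[
|\hat p_0-p_0|\le \tfrac14|\hat\eta-\eta|.
\]

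\textbf{Step 2 (bounding the index gap).} Write
\[
\hat\eta-\eta=(\hat\gamma-\gamma^*)^\top z(X)-\big(\hat s(X,\mathcal{S})-s(X,\mathcal{S})\big).
\]
The triangle inequality and Cauchy--Schwarz give
\[
|\hat\eta-\eta|\le\|\hat\gamma-\gamma^*\|_2\|z(X)\|_2+|\hat s-s|.
\]
This proves \eqref{eq:p0-Lip-explicit}, and it holds deterministically for any $(X,\mathcal{S})$.

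\textbf{Step 3 (high-probability bound).} On the event of probability at least $1-\delta$ from Theorem~\ref{thm:finite-sample}, substitute \eqref{eq:gamma-fs-Bw} for $\|\hat\gamma-\gamma^*\|_2$. Then factor out $2(1+\|\gamma^*\|_2)B_w/\lambda_0$, using $2\sqrt2=2\cdot\sqrt2$. This yields exactly \eqref{eq:p0-final-bound}.

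There is no real obstacle. The only point needing care is that \eqref{eq:p0-Lip-explicit} is deterministic, so the probability statement is inherited solely from the event in Theorem~\ref{thm:finite-sample}. Also, if $(X,\mathcal{S})$ is a fresh point, the term $|\hat s-s|$ is left unbounded, as in the statement.
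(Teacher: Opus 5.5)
Your proposal is correct and follows the paper's proof essentially step for step: the $\tfrac14$-Lipschitz property of the logistic map (from $\sigma'=\sigma(1-\sigma)\le\tfrac14$ and the mean-value theorem), then the triangle inequality and Cauchy--Schwarz on the index gap, then substitution of \eqref{eq:gamma-fs-Bw} on the probability-$(1-\delta)$ event of Theorem~\ref{thm:finite-sample}. Your factoring of $2(1+\|\gamma^*\|_2)B_w/\lambda_0$ is right and reproduces \eqref{eq:p0-final-bound} exactly.
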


Bounds \eqref{eq:gamma-fs-Bw} and \eqref{eq:p0-final-bound} cleanly separate two sources of error. The first term, with its order at $O\!\left(\frac{1}{|b^*|}\sqrt{\log(d/\delta)/n}\right)$, is the usual finite-sample term for regression \citep[e.g.,][]{wainwright2019high}, scaled by $1/|b^*|$. Here, $|b^*|$ measures the alignment between the synthetic logit $y$ and the true (unobserved) logit $\eta$: recall in the discussion after Assumption~\ref{as:linear}, larger $|b^*|$ means the predictor amplifies the sensitivity of the predictor to the true utilities and thus is more aligned (i.e., higher correlation), which reduces both the constant $1/|b^*|$ in the error bound and the sample-size requirement (which scales as $1/(b^*)^2$ in \eqref{eqn:lin_finite_condition}). Conversely, when $|b^*|$ is small (a weakly aligned predictor), recovering $\gamma$ from synthetic unobserved-choice data is intrinsically harder. The second term, $O(\sqrt{\bar{\tau}})$, captures how in-set utility learning errors propagate to calibration: poorer utility learning (larger $\bar{\tau}$) directly inflates the calibration error, while improvements in learning $s(\cdot)$ (smaller $\bar{\tau}$) tighten the bound. Thus, both a more relevant predictor (larger $|b^*|$) and better in-set utility learning (smaller $\bar{\tau}$) yield sharper calibration.

\textbf{Interpreting the probability error bound.} 
The bound in Corollary~\ref{cor:p0_error} is stated in terms of the index error and therefore depends on the scale of $z(X)$ and the inclusive value through $\|z(X)\|_2$ and $|\hat s-s|$. In applications it is natural to standardize features (e.g., rescale $z$ so that $\|z(X)\|_2\le 1$ on the relevant domain), in which case the bound becomes directly comparable across datasets and reads as an $O(\|\hat\gamma-\gamma^*\|_2 + |\hat s-s|)$ control on probability error (up to the logistic Lipschitz constant). Likewise, normalizations used when learning inside utilities implicitly control the scale of $s(X,\mathcal S)=\log\sum_{i\in\mathcal S}\exp(u_i(X))$, improving interpretability of probability-level guarantees.

\textbf{Dimensional scaling.} Beyond the logarithmic factor $\sqrt{\log(d+1)}$, the ratio $B_w/\lambda_0$ may grow with the feature dimension $d$. For example, if $\Sigma\approx I_{d+1}$ so that $\lambda_0\approx 1$, and the coordinates of $w_k$ are uniformly bounded, then typically $B_w=O(\sqrt{d+1})$.

\section{Nearly Monotone Biased Predictor}\label{sec:nmbs}

Assumption~\ref{as:linear} requires a shift-and-scale relationship in logit space that holds across all contexts and assortments. In particular, the slope $b^*$ and intercept $a^*$ do not vary with $(X,\mathcal S)$.  In practice, real prediction systems often deviate from such a simple global pattern: logits may be passed through nonlinear transforms, or different regimes/models may be used in different parts of the feature space. These effects can break an affine relationship in logit space even when the predictor still contains useful information, which motivates the rank-based, nearly-monotone approach developed in this section.

We relax Assumption~\ref{as:linear} (linear bias) and consider a more general condition: the predictor is \emph{nearly monotone} with respect to the unobserved probability. Recall given data $\{(X_k,\mathcal{S}_k)\}_{k=1}^n$, we define
\[
y_k=\operatorname{logit}\!\big(\tilde p_0(X_k,\mathcal{S}_k)\big),\qquad
\eta_k=\operatorname{logit}\!\big(p_0(X_k,\mathcal{S}_k)\big),
\]
as the predicted logit and the true logit of the unobservable outside choice, respectively. We then assume

\begingroup
\renewcommand{\theassumption}{5*}
\begin{assumption}[(Informal) Nearly monotone bias]\label{as:mnbs-informal}
There exists $\delta_0\in[0,1)$ and an index set $\mathcal{G}\subseteq\{1,\ldots,n\}$ with $|\mathcal{G}|\ge (1-\delta_0)n$ such that for all $k,l\in\mathcal{G}$ with $\eta_k\neq \eta_l$,
\[
\operatorname{sign}(y_k-y_l)=\operatorname{sign}(\eta_k-\eta_l).
\]
\end{assumption}
\endgroup
\addtocounter{assumption}{-1}
Assumption~\ref{as:mnbs-informal} only requires the predictor to preserve the pairwise ranking of logits (equivalently, the outside-option probability) on a large subset of samples. It imposes no parametric structure on bias in level or scale, does not require knowledge of which samples constitute $\mathcal{G}$ and places no restrictions on observations outside $\mathcal{G}$: the term $1-\delta_0$ isolates a subset of difficult pairs (e.g., ties, clipping, or local non-monotonic artifacts) on which we impose no structure, which makes the condition robust to systematic edge cases. This assumption also subsumes the linear-bias model (Assumption~\ref{as:linear}) when $b^*>0$: then $y=a^*+b^*\,\eta+\epsilon$ is increasing in $\eta$, so rankings are preserved on any subset where noise does not invert pairwise orders (e.g., when $\epsilon\equiv 0$ we have a perfectly monotone regime with $\delta_0=0$). We postpone a formal statement to Section~\ref{sec:MRC_analysis} (with the analysis on how the performance of our approach depends on the parameters, e.g., $\delta_0$, defined in the assumption) and first introduce our algorithm.

\paragraph{Why the nearly monotone condition is non-vacuous.}
Assumption~\ref{as:mnbs-informal} is satisfied by a broad class of score-based predictors whose outputs are \emph{monotone transformations} of the true outside logit, up to bounded local distortions from clipping, numerical approximation, or domain shift.
We formalize this in Section~\ref{sec:nmbs-perturb}: if
\(
Y(X,\mathcal S)=(h_0+\Delta h)\big(\eta(X,\mathcal S)\big)
\)
with $h_0$ strictly increasing and $\Delta h$ uniformly bounded (Assumption~\ref{as:perturb}),
then the predictor preserves pairwise orderings except on near-tie pairs, implying the formal nearly-monotone condition (Proposition~\ref{prop:perturb-epsmassart}).
This captures common practice where a model produces a real-valued risk score that is passed through a (possibly miscalibrated) monotone link: rankings are typically much more stable than absolute probability levels, especially under covariate shift.

\subsection{Maximum Rank Correlation Calibration}
We estimate $\hat p_0(X,\mathcal{S})$ by Algorithm~\ref{alg:mrc-calib} via Maximum Rank Correlation (MRC) \citep{han1987mrc}. For data $\{(X_k,\mathcal{S}_k)\}_{k=1}^n$, recall
\[
\hat{s}_k=\log\!\Big(\sum_{i\in\mathcal{S}_k}\exp(\hat u_i(X_k))\Big),
\qquad
w_k=\big[\,z(X_k)^\top,\ \hat{s}_k\,\big]^\top,
\]
where as in the linearly-biased case (Algorithm~\ref{alg:linear-calib}), the observed option utilities $\{\hat u_i\}_{i\in\mathcal{I}}$ can be learned from purchase-only data. The key ingredient of the method (and its main difference from Algorithm~\ref{alg:linear-calib}) is the empirical rank-correlation \citep{han1987mrc} objective for $\theta$:
\[
\widehat{\mathrm{RC}}_n(\theta)
=\frac{2}{n(n-1)}\sum_{1\le k<\ell\le n}
\One\!\Big\{(y_k-y_\ell)\,\big(\theta^\top w_k-\theta^\top w_\ell\big)>0\Big\}.
\]
Intuitively, under Assumption~\ref{as:mnbs-informal}, a larger synthetic logit $y_k$ tends to correspond to a larger true logit $\eta(X_k,\mathcal{S}_k)$ (when $s_k$ is computed from the true utilities). We therefore choose $\hat\theta=[\hat{\theta}_z^\top,\hat{\theta}_s]^\top$ to maximize the agreement between the pairwise rankings induced by $\{y_k\}$ and by $\{\theta^\top w_k\}$, exactly as captured by $\widehat{\mathrm{RC}}_n(\theta)$. After solving over the unit sphere, we fix scale and orientation by rescaling so that the last coordinate equals $-1$ and report $\hat\gamma=-\,\hat\theta_z/\hat\theta_s$, as in Algorithm~\ref{alg:linear-calib}. For example, in the extreme case where the predictor perfectly preserves the unobserved preference ordering ($\delta_0=0$ in Assumption~\ref{as:mnbs-informal}) and $\hat s_k=s_k$, one maximizer satisfies
\[
\hat{\theta}=\frac{1}{\sqrt{\|\gamma^*\|_2^2+1}}\,[\,\gamma^{*\top},\ -1\,]^\top,
\]
so the rescaling step yields $\hat{\gamma}=\gamma^*$. 

\begin{algorithm}[t]
\caption{Maximum Rank Correlation (MRC) Calibration}
\label{alg:mrc-calib}
\begin{algorithmic}[1]
\Require Data $\{(X_k,\mathcal{S}_k)\}_{k=1}^n$; predictor $\tilde p_0$; feature map $z(\cdot)$; learned utilities $\{\hat u_i(\cdot)\}$.
\Ensure Calibrated predictor $\hat p_0(\cdot,\cdot)$.
\For{$k=1,\ldots,n$}
  \State $y_k \gets \logit\!\big(\tilde p_0(X_k,\mathcal{S}_k)\big)$;
  \State $\hat{s}_k \gets \log\!\big(\sum_{i\in\mathcal{S}_k}\exp(\hat u_i(X_k))\big)$; \quad
        $w_k \gets [\,z(X_k)^\top,\ \hat{s}_k\,]^\top$.
\EndFor
\State Compute $\hat\theta=[\hat{\theta}_z^\top,\hat{\theta}_s]^\top$ by
\begin{equation}
\label{eq:RM_solve}
    \hat\theta \in \arg\max_{\|\theta\|_2=1}\ \frac{2}{n(n-1)}\sum_{k<\ell}
\One\!\big\{ (y_k-y_\ell)\,(\theta^\top w_k-\theta^\top w_\ell) > 0\big\}.
\end{equation}
\State Recover the outside-utility coefficient: \(\hat\gamma \gets -\,\hat\theta_z/\hat\theta_s\).
\State For any $(X,\mathcal{S})$, set
\[
\hat s(X,\mathcal{S})=\log\!\Big(\sum_{i\in\mathcal{S}}\exp(\hat u_i(X))\Big),\quad
\hat\eta(X,\mathcal{S})=\hat\gamma^\top z(X)-\hat s(X,\mathcal{S}),\quad
\hat p_0(X,\mathcal{S})=\Logistic\!\big(\hat\eta(X,\mathcal{S})\big).
\]
\end{algorithmic}
\end{algorithm}

\textbf{Remark.} Although the optimization step \eqref{eq:RM_solve} is non-smooth and non-concave \citep{sherman1993mrc}, it can be handled efficiently in practice. For small to medium $n$, an exact mixed-integer reformulation with pairwise ordering constraints delivers a certified global optimum together with a solver-reported MIP gap \citep{shin2021exact}. For larger $n$, an iterative marginal optimization method monotonically increases the objective and scales well \citep{wang2007imo}. One useful alternative is the linearized MRC estimator, which has a closed-form solution and can be used either as a fast stand-alone surrogate or as a strong initializer for the non-smooth MRC solve \citep{shen2023lmrc}.

\subsection{Analysis}
\label{sec:MRC_analysis}
We first introduce statistical assumptions that capture the ``quality'' of the predictor and of the data, then the following implications show how these assumptions impact the estimation performance. All proofs can be found in Appendix \ref{app:nmbs}.

Assume the samples $\{(X_k,\mathcal{S}_k)\}_{k=1}^n$ are drawn i.i.d.\ from an unknown joint distribution. For a generic draw $(X,\mathcal{S})$, define the covariate vector with the true log-sum-exp score
\[
W^*(X,\mathcal{S}) \;:=\; \big[\,z(X)^\top,\ s(X,\mathcal{S})\,\big]^\top \in \R^{d+1}.
\]
Let $(Y,W^*)$ and $(Y',W^{*\prime})$ denote two i.i.d.\ copies, where $Y=\logit\!\big(\tilde p_0(X,\mathcal{S})\big)$ is the predictor logit  (we omit the dependence on $(X,\mathcal{S})$ and $(X',\mathcal{S}')$ for simplicity of notation). Write the target direction (defined up to scale)
\[
\theta^* \;:=\; \big[\,\gamma^{*\top},\ -1\,\big]^\top,
\quad\text{so that}\quad
\eta(X,\mathcal{S})=\theta^{*\top}W^*(X,\mathcal{S}).
\]
Let
\[
\mathcal{O}\;:=\;\big\{(w,w')\in\R^{d+1}\times\R^{d+1}:\ \theta^{*\top}(w-w')\neq 0\big\}
\]
collect pairs with no tie in the true index.

We now formalize Assumption~\ref{as:mnbs-informal}.

\begin{assumption}[Nearly monotone bias]\label{as:mnbs-formal}
There exist $\rho_0\in[0,\tfrac12)$ and $\delta_0\in[0,1)$ and a set of ``good'' pairs $\mathcal G\subseteq\mathcal O$ with $\Pr\big((W^*,W^{*\prime})\in\mathcal G\big)\ge 1-\delta_0$ such that, for every $(W^*,W^{*\prime})\in\mathcal G$,
\[
\Pr\!\Big(\operatorname{sign}(Y-Y')\neq \operatorname{sign}\big(\theta^{*\top}(W^*-W^{*\prime})\big)\ \Big|\ W^*,W^{*\prime}\Big)\ \le\ \rho_0.
\]
\end{assumption}

Assumption~\ref{as:mnbs-formal} formalizes the intended \emph{nearly monotone} behavior of the predictor: for most context/assortment pairs, the ordering of no-purchase logits (equivalently, probabilities) induced by $Y$ agrees with the ordering of the true index $\eta$, up to a flip probability $\rho_0$. Specifically: (i) the \emph{pair fraction} $1-\delta_0$ allows an unrestricted set of difficult pairs (e.g., saturation/clipping, or local non-monotonic artifacts), which makes the condition robust to edge cases; and (ii) the \emph{flip rate} $\rho_0$ captures bounded stochastic or algorithmic noise in the predictor. In a perfectly monotone regime we have $\rho_0=0$ and $\delta_0=0$ (but the predictor may still be biased in level/scale).

The next theorem justifies the choice of the objective (loss) function and how this assumption captures the ``quality'' of the predictor. Let $D^*:=W^*-W^{*\prime}$ as the difference of the pair $(W^*, W^{*\prime})$. Consider the population counterpart of the MRC objective,
\[
\mathrm{RC}(\theta):=\E\Big[\One\!\big\{(Y-Y')\,\big(\theta^\top D^*\big)>0\big\}\Big].
\]

\begin{theorem}[Population approximate optimality]\label{thm:pop-approx}
Under Assumption~\ref{as:mnbs-formal} and assume $\Pr((W^*,W^{*'})\in \mathcal{O})=1$, for any $\theta$,
\[
\mathrm{RC}(\theta^*)-\mathrm{RC}(\theta)\ \ge\
(1-2\rho_0)\,\Pr\!\Big(\operatorname{sign}(\theta^{*\top}D^*)\neq \operatorname{sign}(\theta^\top D^*) \mid (W^*,W^{*'})\in \mathcal G\Big)\ -\ \delta_0.
\]
In particular, when $\delta_0=0$ we have $\theta^*\in \arg\max_{\theta}\mathrm{RC}(\theta)$.
\end{theorem}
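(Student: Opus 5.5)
The approach is to condition on the pair $(W^*,W^{*\prime})$ and compare the two indicators pointwise. For a fixed pair, define the conditional agreement probability
\[
g(\theta;W^*,W^{*\prime}):=\Pr\big((Y-Y')\,\theta^\top D^*>0\mid W^*,W^{*\prime}\big),
\]
so that $\mathrm{RC}(\theta)=\E[g(\theta;W^*,W^{*\prime})]$. We then split the expectation of $g(\theta^*)-g(\theta)$ according to whether $(W^*,W^{*\prime})\in\mathcal G$ or not.

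\textbf{Off $\mathcal G$.} We use only $0\le g\le 1$, so $g(\theta^*)-g(\theta)\ge -1$. Since $\Pr(\mathcal G^c)\le\delta_0$, the contribution of this piece is at least $-\delta_0$.

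\textbf{On $\mathcal G$.} Since $\mathcal G\subseteq\mathcal O$, the sign $\sigma^*:=\operatorname{sign}(\theta^{*\top}D^*)\in\{\pm1\}$ is nonzero. Let $q:=\Pr(\operatorname{sign}(Y-Y')=\sigma^*\mid W^*,W^{*\prime})$. Ties $Y=Y'$ count as disagreements under the assumption's convention, so $q\ge 1-\rho_0$. Then $g(\theta^*)=q$, and there are three cases for $\theta$:
\begin{itemize}
\item If $\operatorname{sign}(\theta^\top D^*)=\sigma^*$, then $g(\theta)=q$ and the difference is $0$.
\item If $\operatorname{sign}(\theta^\top D^*)=-\sigma^*$, then $g(\theta)=\Pr(\operatorname{sign}(Y-Y')=-\sigma^*\mid\cdot)\le 1-q$, so the difference is at least $2q-1\ge 1-2\rho_0$.
\item If $\theta^\top D^*=0$, the indicator is $0$, so $g(\theta)=0$ and the difference is $q\ge 1-\rho_0\ge 1-2\rho_0$.
\end{itemize}
In every case, the difference is at least $(1-2\rho_0)\One\{\operatorname{sign}(\theta^{*\top}D^*)\ne\operatorname{sign}(\theta^\top D^*)\}$.

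\textbf{Combining.} Taking expectations, the $\mathcal G$ piece gives
\[
(1-2\rho_0)\Pr\big(\text{sign mismatch},\,\mathcal G\big)=(1-2\rho_0)\Pr(\text{mismatch}\mid\mathcal G)\Pr(\mathcal G).
\]
Adding the $-\delta_0$ from off $\mathcal G$ yields the bound up to the factor $\Pr(\mathcal G)\ge 1-\delta_0$. This factor is the main obstacle, because the stated bound has no $\Pr(\mathcal G)$ multiplier. We will absorb it as follows:
\[
(1-2\rho_0)P_c\Pr(\mathcal G)\ge(1-2\rho_0)P_c-(1-2\rho_0)P_c\,\delta_0,
\]
where $P_c$ denotes the conditional mismatch probability. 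Tracking this extra loss carefully, we will sharpen the off-$\mathcal G$ estimate so that the total loss off $\mathcal G$ plus this correction is at most $\delta_0$.

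If this bookkeeping does not close, there is a fallback. We read the conditional probability in the statement as the joint probability on $\mathcal G$, which is the natural form, or equivalently we state the bound with $(1-\delta_0)$ in front. This changes nothing qualitatively.

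\textbf{The case $\delta_0=0$.} Here $\Pr(\mathcal G)=1$ and no correction is needed. The right-hand side is nonnegative because $\rho_0<1/2$. Hence $\mathrm{RC}(\theta^*)\ge\mathrm{RC}(\theta)$ for all $\theta$, so $\theta^*$ is a maximizer. Since $\mathrm{RC}$ is scale-invariant, the normalized $\theta^*/\|\theta^*\|_2$ is also a maximizer.
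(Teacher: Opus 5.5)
Your conditioning argument is the paper's proof: compare $\Pr(\cdot\mid W^*,W^{*\prime})$ pair by pair, gain at least $1-2\rho_0$ on mismatched pairs in $\mathcal G$, and lose at most $1$ off $\mathcal G$. Your three-case analysis is more careful than the paper's, which writes $\mathrm{RC}(\theta)=\Pr(L=S(\theta))$ and so mishandles the ties $Y=Y'$ and $\theta^\top D^*=0$. You are also right that the split only gives $(1-2\rho_0)\Pr(\text{mismatch},\,\mathcal G)-\delta_0$. The paper's proof switches to the conditional probability at its ``Hence'' step without justification.

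\textbf{The gap.} Your plan to absorb the factor $\Pr(\mathcal G)$ by sharper bookkeeping off $\mathcal G$ cannot work, because the inequality with $\Pr(\cdot\mid\mathcal G)$ is false. Take any design in which $\eta=\theta^{*\top}W^*$ has an atomless distribution, so $\Pr(\mathcal O)=1$. Let $Y=h(\eta)$, where $h(u)=u$ outside an interval $[a,b]$ and $h(u)=a+b-u$ on $[a,b]$.
\begin{itemize}
\item A pair is misordered by $Y$ exactly when both $\eta,\eta'\in[a,b]$. Let $\mathcal G$ be the remaining pairs in $\mathcal O$; then Assumption~\ref{as:mnbs-formal} holds with $\rho_0=0$ and $\delta_0=\Pr(\eta\in[a,b])^2>0$.
\item For $\theta=-\theta^*$ every pair is mismatched. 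Hence $\mathrm{RC}(\theta^*)=1-\delta_0$ and $\mathrm{RC}(-\theta^*)=\delta_0$.
\item The left side is therefore $1-2\delta_0$, while the stated right side is $1-\delta_0$.
\end{itemize}
In your bookkeeping, the off-$\mathcal G$ loss and the correction $(1-2\rho_0)P_c\Pr(\mathcal G^c)$ each equal $\delta_0$ here. Their sum is $2\delta_0$, not $\delta_0$.

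\textbf{The fix.} Your fallback is therefore the necessary correction, not an optional rereading. What is actually true is
\[
\mathrm{RC}(\theta^*)-\mathrm{RC}(\theta)\ \ge\ (1-2\rho_0)\Pr(\text{mismatch},\,\mathcal G)-\delta_0 .
\]
This implies the conditional form with $2\delta_0$ in place of $\delta_0$, and the example shows that constant is sharp. It also implies $(1-2\rho_0)\Pr(\text{mismatch}\mid\mathcal O)-2\delta_0$, which is the form needed to combine with Proposition~\ref{prop:margin} in Theorem~\ref{thm:finite}. Downstream, this only replaces $\delta_0$ by $2\delta_0$. Your treatment of the case $\delta_0=0$ is correct as written.
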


Theorem~\ref{thm:pop-approx} justifies maximizing the empirical rank-correlation $\widehat{\mathrm{RC}}_n(\theta)$: when the predictor is monotone over all samples ($\delta_0=0$) and $s_k$ is correctly computed, the population maximizer coincides with the target direction $\theta^*$ (up to scale) even $\rho_0>0$. More generally, the gap $(1-2\rho_0)$ quantifies how strongly the correct order is preferred on the good set $\mathcal G$: smaller $\rho_0$ yields larger separation. The term $\delta_0$ is an unavoidable bias due to arbitrarily adversarial pairs. Both terms determine the estimation performance at a population level.

We now impose regularity on the data.  The next two assumptions govern how often pairs are ``close'' along the true parameter $\theta^*$ and how dispersed pairwise differences are. 

\begin{assumption}[Difference and near-tie regularity]\label{as:design-grouped}
There exist constants $\tau_{\mathrm{nd}},p_{\mathrm{nd}},c_{\mathrm{lt}},L_{\mathrm{ac}},t_0>0$ and exponents $\alpha_{\mathrm{down}},\alpha_{\mathrm{up}}\in(0,1]$ such that:
\begin{enumerate}[leftmargin=18pt,itemsep=3pt,label=(\roman*)]
\item \textbf{Non-degenerate design.} For every unit $v\in \R^{d+1}$:\quad
$\Pr\big(|v^\top D^*|\ge \tau_{\mathrm{nd}}\mid \mathcal O\big)\ge p_{\mathrm{nd}}$.
\item \textbf{Lower tail along the truth.} For all $t\in(0,t_0]$:\quad
$\Pr\big(|\theta^{*\top}D^*|\le t\mid \mathcal O\big)\ge c_{\mathrm{lt}}\,t^{\alpha_{\mathrm{down}}}$.
\item \textbf{Uniform anti-concentration.} For all unit $v\in \R^{d+1}$ and all $t\in(0,1]$:\quad
$\Pr\big(|v^\top D^*|\le t\mid \mathcal O\big)\le L_{\mathrm{ac}}\,t^{\alpha_{\mathrm{up}}}$.
\item \textbf{Association between design and truth.} 
There exists a constant $c_{\mathrm{assoc}}\in(0,1]$ such that for every unit vector 
$v=(v_1,\dots,v_{d+1})\in\mathbb R^{d+1}$ with $v_{d+1}=0$ and every $t\in(0,t_0]$,
\[
\Pr\big(A(v)\cap B(t)\mid\mathcal O\big)
\;\ge\;
c_{\mathrm{assoc}}\,
\Pr\big(A(v)\mid\mathcal O\big)\,
\Pr\big(B(t)\mid\mathcal O\big),
\]
where $A(v)=\{v^\top D^*\le -\tau_{\mathrm{nd}}\}$ and 
$B(t)=\{0<\theta^{*\top}D^*\le t\}$ denote one-sided version of the events
appearing in Assumptions~\ref{as:design-grouped} (i) and (ii), respectively.
\end{enumerate}
\end{assumption}

These assumptions impose mild geometric regularity on pairwise differences. (i) Non-degenerate design (difference) rules out collapse by requiring a nontrivial fraction of differences to be bounded away from zero in every direction. Intuitively, any possible boundary $v$ can non-trivially separate some differences. (ii) The lower-tail along the truth ensures enough probability mass arbitrarily close to the separating hyperplane ${\theta^{*\top}D^*=0}$ to deliver curvature (and hence identification) of the population rank-correlation criterion at $\theta^*$. (iii) Uniform anti-concentration prevents pairs from piling up on any hyperplane, stabilizing the concentration of the estimation. These conditions are satisfied by many design classes, e.g., sub-Gaussian designs with nondegenerate covariance, isotropic log-concave distributions, and bounded-density product measures. From these classes we typically obtain linear lower- and upper-tail behavior (i.e., $\alpha_{\mathrm{down}}=\alpha_{\mathrm{up}}=1$).  Condition (iv) is a mild dependence restriction ruling out the pathological situation in which the pairs
that are near ties along the true direction $\theta^*$ (as
quantified by (ii)) are almost perfectly negatively associated with
other directions.  It is satisfied whenever the conditional design 
$D^*\mid\mathcal O$ does not exhibit strong negative dependence between 
$v^\top D^*$ and $\theta^{*\top}D^*$. 
It holds, for instance, under elliptical designs, 
log-concave designs, or any distribution with a jointly bounded density. Further discussion and examples are deferred to the Appendix \ref{appx:assp_discuss}.

The next two propositions show how Assumption~\ref{as:design-grouped} governs the local sensitivity of the criterion and the plug-in stability of Algorithm~\ref{alg:mrc-calib}.

\begin{proposition}[Local margin from design]\label{prop:margin}
Under Assumption~\ref{as:design-grouped}(i), (ii), (iv), there exist constants
\[
\kappa \ :=\ \frac{c_{assoc}}{2}p_{\mathrm{nd}}\,c_{\mathrm{lt}}\,\tau_{\mathrm{nd}}^{\alpha_{\mathrm{down}}},
\qquad
r_0 \ :=\ \frac{t_0}{\tau_{\mathrm{nd}}},
\]
such that for all $\theta=[\gamma^\top,-1]^\top$ with $\|\gamma-\gamma^*\|_2\le r_0$,
\[
\Pr\!\Big(\operatorname{sign}(\theta^{*\top}D^*)\neq \operatorname{sign}(\theta^\top D^*)\mid \mathcal O\Big)
\ \ge\ \kappa\,\|\gamma-\gamma^*\|_2^{\,\alpha_{\mathrm{down}}}.
\]
\end{proposition}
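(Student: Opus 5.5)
The plan is to reduce the disagreement event to the intersection $A(v)\cap B(t)$ from Assumption~\ref{as:design-grouped}(iv), for a suitable direction $v$ and threshold $t$. Then items (i), (ii) and (iv) supply the lower bound.

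Fix $\theta=[\gamma^\top,-1]^\top$ with $\Delta:=\gamma-\gamma^*\neq 0$; the case $\Delta=0$ is trivial. Since the last coordinates agree,
\[
\theta^\top D^*=\theta^{*\top}D^*+\Delta^\top D^*_z=\theta^{*\top}D^*+\|\Delta\|_2\,v^\top D^*,
\]
where $v:=[\Delta^\top/\|\Delta\|_2,\,0]^\top$ is a unit vector with $v_{d+1}=0$. This is exactly the class of directions allowed in (iv).

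Set $t:=\tau_{\mathrm{nd}}\|\Delta\|_2$. Then $t\le t_0$ whenever $\|\Delta\|_2\le r_0=t_0/\tau_{\mathrm{nd}}$. On $A(v)\cap B(t)$ we have $0<\theta^{*\top}D^*\le \tau_{\mathrm{nd}}\|\Delta\|_2$ and $\|\Delta\|_2 v^\top D^*\le -\tau_{\mathrm{nd}}\|\Delta\|_2$. Hence $\theta^\top D^*\le 0<\theta^{*\top}D^*$, so the signs disagree. (If $\operatorname{sign}(0)=0$ this still counts as disagreement; to be safe one can use strict inequality in $B$, or note the boundary has measure zero under (iii).)

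This gives
\[
\Pr(\operatorname{sign}(\theta^{*\top}D^*)\neq\operatorname{sign}(\theta^\top D^*)\mid\mathcal O)\ \ge\ \Pr(A(v)\cap B(t)\mid\mathcal O)\ \ge\ c_{\mathrm{assoc}}\Pr(A(v)\mid\mathcal O)\Pr(B(t)\mid\mathcal O).
\]
It then remains to lower-bound the two factors.

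For $\Pr(A(v)\mid\mathcal O)$, apply (i) to $v$: $\Pr(|v^\top D^*|\ge\tau_{\mathrm{nd}}\mid\mathcal O)\ge p_{\mathrm{nd}}$. Because $(W^*,W^{*\prime})$ are i.i.d., $D^*$ is symmetric in distribution, and $\mathcal O$ is symmetric under swapping the pair. So $\Pr(v^\top D^*\le-\tau_{\mathrm{nd}}\mid\mathcal O)\ge p_{\mathrm{nd}}/2$.

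For $\Pr(B(t)\mid\mathcal O)$, use (ii): $\Pr(|\theta^{*\top}D^*|\le t\mid\mathcal O)\ge c_{\mathrm{lt}}t^{\alpha_{\mathrm{down}}}$. On $\mathcal O$ the value $0$ is excluded, and symmetry again gives $\Pr(0<\theta^{*\top}D^*\le t\mid\mathcal O)\ge c_{\mathrm{lt}}t^{\alpha_{\mathrm{down}}}/2$.

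Multiplying the factors gives $c_{\mathrm{assoc}}\,p_{\mathrm{nd}}c_{\mathrm{lt}}\tau_{\mathrm{nd}}^{\alpha_{\mathrm{down}}}\|\Delta\|_2^{\alpha_{\mathrm{down}}}/4$. This is half of the stated $\kappa$.

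The main obstacle is therefore the constant bookkeeping from the two symmetry halvings. To recover $\kappa=\tfrac{c_{\mathrm{assoc}}}{2}p_{\mathrm{nd}}c_{\mathrm{lt}}\tau_{\mathrm{nd}}^{\alpha_{\mathrm{down}}}$, I would use symmetry jointly rather than factor by factor. The disagreement event also contains the mirror event $\{v^\top D^*\ge\tau_{\mathrm{nd}}\}\cap\{-t\le\theta^{*\top}D^*<0\}$, which is disjoint from $A(v)\cap B(t)$. By the swap symmetry $D^*\mapsto -D^*$ (which preserves $\mathcal O$), the mirror event has the same probability as $A(v)\cap B(t)$. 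Summing the two doubles the bound, giving exactly $\kappa\|\gamma-\gamma^*\|_2^{\alpha_{\mathrm{down}}}$.

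The points to check carefully are:
\begin{itemize}
\item the exchangeability argument that $D^*\stackrel{d}{=}-D^*$ conditional on $\mathcal O$;
\item that (iv) is stated with the one-sided events, so it can be applied as above.
\end{itemize}
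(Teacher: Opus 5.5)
Your proposal is correct and matches the paper's argument. You reduce sign disagreement to $A(v)\cap B(t)$, with $v$ the normalized $[(\gamma-\gamma^*)^\top,0]^\top$ and $t=\tau_{\mathrm{nd}}\|\gamma-\gamma^*\|_2\le t_0$, apply (iv), halve the bounds from (i) and (ii) by the symmetry of $D^*$ given $\mathcal O$, and recover the factor $2$ from the mirror event. The paper simply does this doubling first, via $\Pr(E_{\text{change}}\mid\mathcal O)=2\Pr(U>0,\,V_h<-U\mid\mathcal O)$.

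Your handling of the boundary case $\theta^\top D^*=0$ through $\operatorname{sign}(0)=0$ is slightly more careful than the paper's strict inequality $V_h<-U$. It also makes the fallback appeal to (iii) unnecessary, which is good, since (iii) is not among this proposition's hypotheses.
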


The term \(\Pr(\operatorname{sign}(\theta^{*\top}D^*)\neq \operatorname{sign}(\theta^\top D^*)\mid \mathcal O)\) is the pairwise \emph{misordering rate}: among pairs with no true tie, the chance that the candidate index \(\theta^\top W^*\) ranks the pair differently from the true index \(\theta^{*\top}W^*\). Assumption~\ref{as:design-grouped} implies the lower bound \(\text{Left Hand Side}\ge \kappa\,\|\gamma-\gamma^*\|_2^{\alpha_{\mathrm{down}}}\) for \(\|\gamma-\gamma^*\|_2\le r_0\). Thus, as \(\gamma\) departs from \(\gamma^*\), the misordering rate grows at least polynomially in the error; smaller \(\alpha_{\mathrm{down}}\) (more mass near the true boundary) yields greater local sensitivity of the error from candidate $\theta$ (equally, $\gamma$), whereas larger \(\alpha_{\mathrm{down}}\) makes the criterion flatter. The constant \(\kappa\) sets the scale, and \(r_0\) specifies the neighborhood where this curvature holds.

The next proposition gives the stability of Algorithm \ref{alg:mrc-calib} when facing learned utilities $\hat u_i$. Recall we use $\hat s_k=\log\!\big(\sum_{i\in\mathcal{S}_k}e^{\hat u_i(X_k)}\big)$ in Algorithm \ref{alg:mrc-calib}. Here we introduce the maximum estimation error:
\[
\tau_s\ :=\ \max_{1\le k\le n}\,|\hat s_k-s_k|.
\]
And we denote the objective function when we have perfectly learned utilities $\hat{u}_i=u_i$ as $$\widehat{\mathrm{RC}}^{*}_n(\theta):=\frac{2}{n(n-1)}\sum_{k<\ell}
\One\!\big\{ (y_k-y_\ell)\,(\theta^\top w_k^*-\theta^\top w_\ell^*) > 0\big\}$$
 with       $w^*_k = [\,z(X_k)^\top,\ s_k\,]^\top$.

\begin{proposition}[Plug-in stability]\label{prop:plugin}
Under Assumption~\ref{as:design-grouped}(iii), for any $\delta \in (0,1)$, with probability at least $1-\delta$, we have
\[
\sup_{\|\theta\|_2=1}\ \big|\widehat{\mathrm{RC}}_n(\theta)-\widehat{\mathrm{RC}}^{*}_n(\theta)\big|
\ \le\ L_{\mathrm{ac}}\,(2\tau_s)^{\alpha_{\mathrm{up}}}+C_U\sqrt{\frac{d+d\log(n/d)+\log(2/\delta)}{n}},
\]
where $C_U$ is the universal constant from Theorem \ref{thm:uprocess} in Appendix \ref{app:nmbs}.
\end{proposition}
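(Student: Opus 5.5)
The plan is to bound the difference termwise: the empirical $U$-statistic difference is dominated by the fraction of pairs whose sign flips under the perturbation. Once that reduction is made, the population fraction is controlled by Assumption~\ref{as:design-grouped}(iii), and the deviation of the empirical fraction from it by a uniform $U$-process bound (Theorem~\ref{thm:uprocess}).

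\textbf{Step 1 (termwise reduction).} Write $D_{k\ell}^*=w_k^*-w_\ell^*$ and $\hat D_{k\ell}=w_k-w_\ell$. Only the last coordinate differs, so
\[
\theta^\top\hat D_{k\ell}-\theta^\top D^*_{k\ell}=\theta_s\big[(\hat s_k-s_k)-(\hat s_\ell-s_\ell)\big].
\]
For $\|\theta\|_2=1$ this has absolute value at most $2\tau_s$. The two indicators $\One\{(y_k-y_\ell)\theta^\top\hat D_{k\ell}>0\}$ and $\One\{(y_k-y_\ell)\theta^\top D^*_{k\ell}>0\}$ can differ only if $\theta^\top D^*_{k\ell}$ and $\theta^\top\hat D_{k\ell}$ have different signs, or one of them is zero. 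That forces $|\theta^\top D^*_{k\ell}|\le 2\tau_s$. Hence
\[
\sup_{\|\theta\|_2=1}|\widehat{\mathrm{RC}}_n(\theta)-\widehat{\mathrm{RC}}^*_n(\theta)|\le \sup_{\|\theta\|_2=1}\frac{2}{n(n-1)}\sum_{k<\ell}\One\{|\theta^\top D^*_{k\ell}|\le 2\tau_s\}.
\]

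\textbf{Step 2 (decouple the random threshold).} The threshold $\tau_s$ depends on the data, since $\hat u$ may have been learned from the same sample. To avoid this, bound the right-hand side by the supremum over both $\theta$ and $t\ge 0$ of the class $\mathcal F=\{(w,w')\mapsto \One\{|\theta^\top(w-w')|\le t\}\}$, evaluated at $t=2\tau_s$. Each member of $\mathcal F$ is an intersection of two half-space indicators in the difference $w-w'$, that is, a slab. The class therefore has VC dimension $O(d)$, and its shatter coefficient on $n$ points is $(en/d)^{O(d)}$.

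\textbf{Step 3 (concentration and population bound).} Apply Theorem~\ref{thm:uprocess} to the centered $U$-process indexed by $\mathcal F$. With probability at least $1-\delta$, uniformly in $(\theta,t)$,
\[
\frac{2}{n(n-1)}\sum_{k<\ell}\One\{|\theta^\top D^*_{k\ell}|\le t\}\le \Pr(|\theta^\top D^*|\le t)+C_U\sqrt{\frac{d+d\log(n/d)+\log(2/\delta)}{n}}.
\]
The population term is handled by Assumption~\ref{as:design-grouped}(iii), which gives $\Pr(|\theta^\top D^*|\le t\mid\mathcal O)\le L_{\mathrm{ac}}t^{\alpha_{\mathrm{up}}}$ for $t\le 1$. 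For $t>1$ the claimed bound already exceeds $1$, so it is trivial whenever $L_{\mathrm{ac}}\ge 1$; otherwise a cap at $1$ is needed. Ties outside $\mathcal O$ are handled either by the standing assumption $\Pr(\mathcal O)=1$ or by noting that the statement is conditional on $\mathcal O$. Substituting $t=2\tau_s$ gives the claim.

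\textbf{Main obstacle.} The delicate point is Step 2/3: obtaining a uniform $U$-process bound over the slab class with exactly the stated complexity term. This can be done by Hoeffding's decomposition of the $U$-statistic into averages over $\lfloor n/2\rfloor$ disjoint pairs, followed by a standard VC/Sauer–Shelah argument. Care is also needed to keep the supremum over $t$ inside, so that the data-dependence of $\tau_s$ causes no issue.
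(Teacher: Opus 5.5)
Your proposal is correct and follows the paper's proof. It uses the same termwise reduction to the slab indicators $\One\{|\theta^\top D^*_{k\ell}|\le 2\tau_s\}$, the anti-concentration bound of Assumption~\ref{as:design-grouped}(iii) on their mean, and the VC/Hoeffding $U$-process bound of Theorem~\ref{thm:uprocess} on the centered fluctuation. Your added care fixes points the paper passes over, since it treats $2\tau_s$ as a fixed threshold and replaces unconditional probabilities by conditional ones without comment: you index the slab class by the width $t$ so the data-dependent $\tau_s$ can be substituted afterwards, and you note that (iii) only covers $t\le 1$ and is conditional on $\mathcal O$.
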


The quantity \(\sup_{\|\theta\|_2=1}\ \big|\widehat{\mathrm{RC}}_n(\theta)-\widehat{\mathrm{RC}}^{*}_n(\theta)\big|
\ \) is the \emph{worst-case fraction of pairwise comparisons} whose contribution to the empirical rank-correlation changes when replacing the true scores \(s_k\) by the learned scores \(\hat s_k\). For any pair \((k,\ell)\) and unit \(\theta\), the margin perturbation satisfies \(|(\hat s_k-s_k)-(\hat s_\ell-s_\ell)|\le 2\tau_s\). Hence, a sign flip can occur only for pairs whose unperturbed margin lies within a \(2\tau_s\) tube of some separating hyperplane. Assumption~\ref{as:design-grouped}(iii) (uniform anti-concentration) bounds the prevalence of such near-hyperplane pairs, giving the H{\"o}lder stability \(\text{Left Hand Side}\le L_{\mathrm{ac}}(2\tau_s)^{\alpha_{\mathrm{up}}}\) when $n\rightarrow \infty$. Thus stronger anti-concentration (smaller \(L_{\mathrm{ac}}\)) and more accurate utilities (smaller \(\tau_s\)) directly improve the robustness of the rank-correlation objective.

\subsubsection{Finite-sample estimation error.}
\begin{theorem}[Finite-sample estimation error]\label{thm:finite}
Under Assumptions~\ref{as:mnbs-formal} and \ref{as:design-grouped}, 
let $\kappa$ and $r_0$ be as in Proposition~\ref{prop:margin} 
and assume $\rho_0<\frac12$.
For any $\delta\in(0,1)$, suppose that the following two conditions hold:

\begin{equation}\label{eq:feasibility-gap}
\begin{aligned}
&(1-2\rho_0)\,\kappa\, r_0^{\alpha_{\mathrm{down}}}
    > 2L_{\mathrm{ac}}\,(2\tau_s)^{\alpha_{\mathrm{up}}} + \delta_0,
\\[6pt]
&n \ge
\frac{
4\,C_U^2 \big(d + d\log(\tfrac{n}{d}) + \log \tfrac{2}{\delta}\big)
}{
\Big((1-2\rho_0)\,\kappa\, r_0^{\alpha_{\mathrm{down}}}
      - 2L_{\mathrm{ac}}(2\tau_s)^{\alpha_{\mathrm{up}}}
      - \delta_0
\Big)^2
}.
\end{aligned}
\end{equation}

Then with probability at least $1-\delta$,
\begin{equation}\label{eq:gamma-fs}
\|\hat\gamma - \gamma^*\|_2
\;\le\;
\left(
\frac{
2C_U \sqrt{\tfrac{d+d\log(n/d)+\log(2/\delta)}{n}}
\;+\;
2L_{\mathrm{ac}}(2\tau_s)^{\alpha_{\mathrm{up}}}
\;+\;
\delta_0
}{
(1-2\rho_0)\,\kappa
}
\right)^{\frac{1}{\alpha_{\mathrm{down}}}}
\;\le\; r_0.
\end{equation}

Moreover, even without the feasibility condition~\eqref{eq:feasibility-gap}, 
the bound \eqref{eq:gamma-fs} holds with probability at least $1-\delta$ 
on the event $\{\|\hat\gamma-\gamma^*\|_2\le r_0\}$.
\end{theorem}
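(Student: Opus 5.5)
The argument combines three ingredients: the population separation of Theorem~\ref{thm:pop-approx}, the plug-in stability of Proposition~\ref{prop:plugin}, and the local margin of Proposition~\ref{prop:margin}. Let $\hat\theta$ be the empirical maximizer and $\theta^\ast$ the target direction. Since the indicator objective is scale invariant, we may rescale both to have last coordinate $-1$ (we need $\hat\theta_s<0$, which we handle at the end). Then $\hat\theta=[\hat\gamma^\top,-1]^\top$ and $\theta^*=[\gamma^{*\top},-1]^\top$. Write $\epsilon_n:=C_U\sqrt{(d+d\log(n/d)+\log(2/\delta))/n}$ and $\Delta_s:=L_{\mathrm{ac}}(2\tau_s)^{\alpha_{\mathrm{up}}}$.

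\textbf{Step 1: a uniform deviation event.} We work on a single event of probability at least $1-\delta$ on which two bounds hold uniformly over unit $\theta$. The first is $|\widehat{\mathrm{RC}}^*_n(\theta)-\mathrm{RC}(\theta)|\le \epsilon_n$, which follows from the $U$-process bound, Theorem~\ref{thm:uprocess}, applied to the VC class of half-space indicators of pair differences. The second is $|\widehat{\mathrm{RC}}_n-\widehat{\mathrm{RC}}^*_n|\le \Delta_s+\epsilon_n$, from Proposition~\ref{prop:plugin}. We will combine the two failure probabilities, or reuse the same concentration event, so that the total failure probability is $\delta$; this bookkeeping produces the factor $2$ on $\epsilon_n$ and on $\Delta_s$ in \eqref{eq:gamma-fs}. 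Together these give $\sup_\theta|\widehat{\mathrm{RC}}_n(\theta)-\mathrm{RC}(\theta)|\le \epsilon_n+\Delta_s$, up to the constant arrangement. By optimality of $\hat\theta$ for $\widehat{\mathrm{RC}}_n$,
\[
\mathrm{RC}(\theta^*)-\mathrm{RC}(\hat\theta)\le 2C_U\sqrt{\tfrac{d+d\log(n/d)+\log(2/\delta)}{n}}+2L_{\mathrm{ac}}(2\tau_s)^{\alpha_{\mathrm{up}}}.
\]

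\textbf{Step 2: a lower bound on the excess.} Theorem~\ref{thm:pop-approx} gives $\mathrm{RC}(\theta^*)-\mathrm{RC}(\hat\theta)\ge (1-2\rho_0)\Pr(\text{misorder}\mid\mathcal G)-\delta_0$. We then need to pass from the misordering rate conditional on $\mathcal G$ to the rate conditional on $\mathcal O$, so that Proposition~\ref{prop:margin} applies. Since $\mathcal G\subseteq\mathcal O$ has mass at least $1-\delta_0$, the misordering probability on $\mathcal O$ exceeds that on $\mathcal G$ by at most $\delta_0$, and this loss is absorbed into the single $\delta_0$ term after regrouping. I expect this to be the delicate accounting step. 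The fallback is to prove Theorem~\ref{thm:pop-approx} again directly, in the form $\mathrm{RC}(\theta^*)-\mathrm{RC}(\theta)\ge(1-2\rho_0)\Pr(\text{misorder}\cap\mathcal G)-\delta_0$, which is $\ge (1-2\rho_0)\Pr(\text{misorder}\mid\mathcal O)-\delta_0$ up to constants. Once $\|\hat\gamma-\gamma^*\|\le r_0$, Proposition~\ref{prop:margin} gives misorder $\ge\kappa\|\hat\gamma-\gamma^*\|^{\alpha_{\mathrm{down}}}$. Rearranging yields \eqref{eq:gamma-fs}, and this proves the ``moreover'' clause directly.

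\textbf{Step 3: localization, the main obstacle.} We must show $\|\hat\gamma-\gamma^*\|\le r_0$ under \eqref{eq:feasibility-gap}. The plan is a contradiction argument using the monotonicity of the misordering rate along rays. Suppose $\|\hat\gamma-\gamma^*\|>r_0$. Let $\bar\gamma$ be the point on the segment from $\gamma^*$ to $\hat\gamma$ at distance exactly $r_0$. We would argue, perhaps via Assumption~\ref{as:design-grouped}(iv), that the misordering rate of $\hat\theta$ is at least that of $\bar\theta=[\bar\gamma^\top,-1]^\top$, which is at least $\kappa r_0^{\alpha_{\mathrm{down}}}$. If this monotonicity is hard to establish, the alternative is to rerun the proof of Proposition~\ref{prop:margin} with $t=t_0$ capped. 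For $\|\gamma-\gamma^*\|\ge r_0$, the events $A(v)\cap B(t_0)$ still force misordering, because $\theta^\top D^*=\theta^{*\top}D^*+(\gamma-\gamma^*)^\top D_z^*\le t_0-\|\gamma-\gamma^*\|\tau_{\mathrm{nd}}\le 0$. This gives the bound $\kappa r_0^{\alpha_{\mathrm{down}}}$ for all far $\gamma$.

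Combining with Steps 1 and 2 then gives $(1-2\rho_0)\kappa r_0^{\alpha_{\mathrm{down}}}-\delta_0\le 2\epsilon_n+2\Delta_s$, which contradicts the second condition in \eqref{eq:feasibility-gap}. The same contradiction applies when $\hat\theta_s\ge0$: there the misordering rate is bounded below by the same constant, for example via Assumption~\ref{as:design-grouped}(i)--(ii), so normalization to $-1$ is legitimate. The final inequality $\le r_0$ in \eqref{eq:gamma-fs} then follows because the sample-size condition makes the numerator at most $(1-2\rho_0)\kappa r_0^{\alpha_{\mathrm{down}}}$.
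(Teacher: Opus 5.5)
Your Steps~1--2 follow the paper's argument: a uniform deviation bound assembled from Theorem~\ref{thm:uprocess} and Proposition~\ref{prop:plugin}, the basic inequality from optimality of $\hat\theta$, then Theorem~\ref{thm:pop-approx} combined with Proposition~\ref{prop:margin}. The genuine difference is localization.

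The paper's Step~4 only checks that \eqref{eq:feasibility-gap} makes the right-hand side of \eqref{eq:gamma-fs} at most $r_0$. That is circular, because Proposition~\ref{prop:margin} was already applied under the hypothesis $\|\hat\gamma-\gamma^*\|_2\le r_0$, and nothing excludes a maximizer outside the ball or with $\hat\theta_s\ge 0$. Your Step~3 supplies the missing global fact, a misordering floor of $\kappa r_0^{\alpha_{\mathrm{down}}}$ off the ball, and both of your routes to it work:
\begin{itemize}
\item The capped rerun gives $\theta^\top D^*\le t_0-r_0\tau_{\mathrm{nd}}=0$ on $A(v)\cap B(t_0)$. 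After the symmetry doubling this has probability at least $\tfrac{c_{\mathrm{assoc}}}{2}p_{\mathrm{nd}}c_{\mathrm{lt}}t_0^{\alpha_{\mathrm{down}}}=\kappa r_0^{\alpha_{\mathrm{down}}}$.
\item The ray monotonicity you were unsure of is immediate and pointwise, with no appeal to (iv). On $\{\theta^{*\top}D^*>0\}$, misordering by $[(\gamma^*+Ru)^\top,-1]^\top$ is the event $\theta^{*\top}D^*+R\,[u^\top,0]\,D^*\le 0$, which is nondecreasing in $R\ge 0$.
\item For $\hat\theta_s\ge 0$, write $\theta^\top D^*=[g^\top,0]\,D^*-\theta_s\,\theta^{*\top}D^*$ with $g=\theta_z+\theta_s\gamma^*$. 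Every pair with $\theta^{*\top}D^*>0$ that is misordered by $[g^\top,0]^\top$ is then misordered by $\theta$, and $g=0$ is exact reversal. So the floor again comes through (iv), not from (i)--(ii) alone.
\end{itemize}
What your route buys is an actual proof that $\|\hat\gamma-\gamma^*\|_2\le r_0$ and $\hat\theta_s<0$ under \eqref{eq:feasibility-gap}. The paper's route only shows the bound is self-consistent.

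Some of your bookkeeping does not deliver the stated constants, though the paper's proof is equally loose at exactly these points.
\begin{itemize}
\item In your notation, Theorem~\ref{thm:uprocess} and Proposition~\ref{prop:plugin} together give a uniform deviation $2\epsilon_n+\Delta_s$, not $\epsilon_n+\Delta_s$, on an event of probability $1-2\delta$. The optimality step then yields $4\epsilon_n+2\Delta_s$, so matching the stated $2C_U$ requires re-choosing the absolute constant.
\item More substantively, the $\mathcal G\to\mathcal O$ loss is not absorbed into a single $\delta_0$. With $M$ the misordering event, the proof of Theorem~\ref{thm:pop-approx} gives $(1-2\rho_0)\Pr(M\cap\mathcal G)-\Pr(M\cap\mathcal G^c)\ge(1-2\rho_0)\Pr(M\mid\mathcal O)-2(1-\rho_0)\delta_0$. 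This is sharp: take $\mathcal G^c$ of mass $\delta_0$, entirely misordered by $\theta$, with the predictor agreeing with $\theta$ there. So your argument proves \eqref{eq:gamma-fs} and \eqref{eq:feasibility-gap} with $2(1-\rho_0)\delta_0$ in place of $\delta_0$. The paper reaches $\delta_0$ only by silently identifying the $\mathcal G$-conditional rate of Theorem~\ref{thm:pop-approx} with the $\mathcal O$-conditional rate of Proposition~\ref{prop:margin}.
\item Because the off-ball floor is only $\ge\kappa r_0^{\alpha_{\mathrm{down}}}$, your contradiction needs the sample-size condition to hold strictly.
\end{itemize}
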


Similar to Corollary \ref{cor:p0_error}, we have the following estimation bound for the unobserved probability:
\begin{corollary}[No-purchase probability error]\label{cor:p0_massart}
Fix any $(X,\mathcal{S})$ and assume the conditions of Theorem~\ref{thm:finite}.
Then, with probability at least $1-\delta$, we have
\begin{equation}\label{eq:p0-massart-final}
\begin{aligned}
\big|\hat p_0(X,\mathcal{S})-p_0(X,\mathcal{S})\big|
\;\le\;
&\frac{1}{4}\Bigg[
\|z(X)\|_2
\Bigg(
\frac{
2C_U\sqrt{\dfrac{d + d\log(n/d) + \log(2/\delta)}{n}}
+ 2L_{\mathrm{ac}}(2\tau_s)^{\alpha_{\mathrm{up}}}
+ \delta_0
}{
(1-2\rho_0)\,\kappa
}
\Bigg)^{\!1/\alpha_{\mathrm{down}}}
\\
&\qquad
+\; \big|\hat s(X,\mathcal{S})-s(X,\mathcal{S})\big|
\Bigg].
\end{aligned}
\end{equation}

Moreover, without the feasibility conditions, the same bound holds on the event
$\{\|\hat\gamma-\gamma^*\|_2\le r_0\}$ with probability at least $1-\delta$.
\end{corollary}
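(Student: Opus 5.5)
The plan is to reduce Corollary~\ref{cor:p0_massart} to Theorem~\ref{thm:finite} through a deterministic Lipschitz step, exactly as Corollary~\ref{cor:p0_error} relates to Theorem~\ref{thm:finite-sample}. By Lemma~\ref{lemma:real_index}, $p_0(X,\mathcal S)=\Logistic(\eta(X,\mathcal S))$ with $\eta=\gamma^{*\top}z(X)-s(X,\mathcal S)$. Algorithm~\ref{alg:mrc-calib} outputs $\hat p_0=\Logistic(\hat\eta)$ with $\hat\eta=\hat\gamma^\top z(X)-\hat s(X,\mathcal S)$. The logistic function has derivative $\Logistic(u)(1-\Logistic(u))\le 1/4$, so the mean value theorem gives $|\hat p_0-p_0|\le \tfrac14|\hat\eta-\eta|$.

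Next I would split the index error by the triangle inequality:
\[
|\hat\eta-\eta|\le |(\hat\gamma-\gamma^*)^\top z(X)|+|\hat s(X,\mathcal S)-s(X,\mathcal S)|\le \|\hat\gamma-\gamma^*\|_2\|z(X)\|_2+|\hat s(X,\mathcal S)-s(X,\mathcal S)|,
\]
using Cauchy--Schwarz for the first term. This is the deterministic inequality~\eqref{eq:p0-Lip-explicit}, and it holds for any $\hat\gamma$.

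Finally, I would work on the probability-$(1-\delta)$ event from Theorem~\ref{thm:finite}. Under the feasibility conditions~\eqref{eq:feasibility-gap}, that event gives~\eqref{eq:gamma-fs}. Substituting this bound on $\|\hat\gamma-\gamma^*\|_2$ into the display above yields~\eqref{eq:p0-massart-final} on the same event, so the probability level is unchanged. For the ``moreover'' clause, I would invoke the last sentence of Theorem~\ref{thm:finite}: without feasibility, \eqref{eq:gamma-fs} still holds with probability at least $1-\delta$ on $\{\|\hat\gamma-\gamma^*\|_2\le r_0\}$. Since the Lipschitz step is deterministic, the same substitution gives the probability bound on that event.

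There is no real obstacle. The only points needing care are that the bound is pointwise for the fixed $(X,\mathcal S)$, so no union bound is needed, and that the $\hat s$ term is left as is rather than controlled by $\tau_s$. The latter is necessary because $(X,\mathcal S)$ need not be one of the sample points.
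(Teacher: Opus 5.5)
Your proposal is correct and follows essentially the same route as the paper. The paper also invokes the deterministic $1/4$-Lipschitz bound from Corollary~\ref{cor:p0_error} and substitutes the bound on $\|\hat\gamma-\gamma^*\|_2$ from Theorem~\ref{thm:finite}, both under the feasibility conditions and on the event $\{\|\hat\gamma-\gamma^*\|_2\le r_0\}$.
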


Condition~\eqref{eq:feasibility-gap} plays the role of a self-consistency (or localization) requirement: the combined approximation and plug-in errors in the numerator must be smaller than the design margin $(1-2\rho_0)\kappa r_0^{\alpha_{\mathrm{down}}}$. This margin increases with the design strength encoded by $\kappa=\frac{c_{assoc}}{2}p_{\mathrm{nd}}c_{\mathrm{lt}}\tau_{\mathrm{nd}}^{\alpha_{\mathrm{down}}}$ and with the neighborhood radius $r_0=t_0/\tau_{\mathrm{nd}}$, and it shrinks with predictor noise through $(1-2\rho_0)$. The sample-size threshold in \eqref{eq:feasibility-gap} then ensures that random fluctuations, governed by $C_U\sqrt{[d+d\log(n/d)+\log(2/\delta)]/n}$, are also dominated by the same margin. As $n$ grows, the stochastic term decays like $n^{-1/2}$ and the overall error scales as that term raised to $1/\alpha_{\mathrm{down}}$. For example, when $\alpha_{\mathrm{down}}=1$ we recover a linear transfer from stochastic error to parameter error. Better utility learning (smaller $\tau_s$) and a better aligned predictor (smaller $\delta_0$) both tighten the numerator and relax the conditions in \eqref{eq:feasibility-gap}. Conversely, if $\rho_0$ approaches $1/2$ or if $\delta_0$ and $\tau_s$ are large, the feasibility gap can close and no sample size suffices to make the unconditional guarantee hold. In that regime, the conditional version of \eqref{eq:gamma-fs} (on $\{\|\hat\gamma-\gamma^*\|_2\le r_0\}$) remains valid. 

For the bound in \eqref{eq:gamma-fs}, in an idealized limit $\tau_s=\delta_0=0$, the rate in the number of samples $n$ becomes
\[
\|\hat\gamma-\gamma^*\|_2
\;\lesssim\;
\Big(\tfrac{2C_U}{(1-2\rho_0)\kappa}\Big)^{\!1/\alpha_{\mathrm{down}}}
\Big(\tfrac{d+d\log(n/d)+\log(2/\delta)}{n}\Big)^{\!\frac{1}{2\alpha_{\mathrm{down}}}},
\]
so at $\alpha_{\mathrm{down}}=1$ we recover the $n^{-1/2}$ behavior like the linearly-biased case, while smaller $\alpha_{\mathrm{down}}$ slows the rate to $n^{-1/(2\alpha_{\mathrm{down}})}$. When either $\tau_s$ or $\delta_0$ is nonzero, there is an \emph{irreducible floor} as $n\to\infty$:
\[\
\left(\frac{2L_{\mathrm{ac}}(2\tau_s)^{\alpha_{\mathrm{up}}}+\delta_0}{(1-2\rho_0)\,\kappa}\right)^{\!1/\alpha_{\mathrm{down}}},
\]
so improvements in utility learning ($\tau_s\downarrow$) and cleaner prediction ($\delta_0\downarrow$, $\rho_0\downarrow$) reduce this asymptotic floor. 

\subsection{Example: Perturbative Monotone Link}\label{sec:nmbs-perturb}

This subsection illustrates the estimation bound above in a setting where the predicted logit $Y$ equals a strictly monotone baseline link plus a bounded, possibly non-monotone perturbation.

\begin{assumption}[Perturbative link]\label{as:perturb}
The predictor's (synthetic) outside-option logit satisfies
\[
Y(X,\mathcal{S})=\big(h_0+\Delta h\big)\!\left(\eta(X,\mathcal{S})\right),
\]
where $h_0:\R\to\R$ is strictly increasing and $\Delta h:\R\to\R$ is a bounded perturbation with $\|\Delta h\|_\infty\le \Delta$.
\end{assumption}

Assumption~\ref{as:perturb} captures a predictor with a monotone baseline $h_0$ plus a bounded deviation $\Delta h$. For instance, $\Delta h$ may represent residual noise depending on $(X,\mathcal{S})$ that $h_0$ cannot capture, or a distortion induced by training the predictor on a different data source. The scalar $\Delta$ measures the worst-case magnitude of this non-monotone distortion.

To measure the quality of the predictor, define the modulus of increase of $h_0$ as
\[
\omega_{h_0}(t):=\inf_{|u-v|\ge t}\big(h_0(u)-h_0(v)\big)\,\mathrm{sign}(u-v).
\]
The modulus $\omega_{h_0}$ quantifies how strongly $h_0$ separates inputs: larger $\omega_{h_0}(t)$ means greater separation for inputs at least $t$ apart. If $h_0'(\cdot)\ge m_0>0$ almost everywhere, then $\omega_{h_0}(t)\ge m_0 t$, i.e., $h_0$ has a linear lower slope bound $m_0$. The next result links this modulus to Assumption~\ref{as:mnbs-formal}.

\begin{proposition}[Perturbation $\Rightarrow$ Assumption \ref{as:mnbs-formal}]\label{prop:perturb-epsmassart}
Let $t_\Delta=\inf\{t>0:\ \omega_{h_0}(t)\ge 2\Delta\}$. Then for any pair with $|\theta^{*\top}D^*|>t_\Delta$,
\[
\Pr\!\Big(\operatorname{sign}(Y-Y')\neq \operatorname{sign}\big(\theta^{*\top}(W^*-W^{*\prime})\big)\ \Big|\ W^*,W^{*\prime}\Big)=0.
\]
Hence, if Assumption~\ref{as:design-grouped} and ~\ref{as:perturb} hold, Assumption~\ref{as:mnbs-formal} holds with $\rho_0=0$ and
\[
\delta_0\ =\ \Pr\big(|\theta^{*\top}D^*|\le t_\Delta\mid \mathcal O\big)\ \le\ L_{\mathrm{ac}}\,t_\Delta^{\alpha_{\mathrm{up}}}.
\]
In particular, if $h_0'(\cdot)\ge m_0>0$ almost everywhere, then $t_\Delta\le 2\Delta/m_0$.
\end{proposition}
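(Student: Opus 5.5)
The argument is deterministic: under Assumption~\ref{as:perturb}, $Y$ is a fixed function of $\eta=\theta^{*\top}W^*$. First we show that the perturbation cannot reverse any pair that is sufficiently separated along the true index. Then we read off the constants $\rho_0=0$ and $\delta_0$ from the definition of the good set.

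\textbf{Step 1 (no flips beyond $t_\Delta$).} Take a pair with $\eta=\theta^{*\top}W^*$, $\eta'=\theta^{*\top}W^{*\prime}$, and suppose without loss of generality that $\eta-\eta'=t>t_\Delta$.

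We first check that $\omega_{h_0}(t)\ge 2\Delta$. By definition $\omega_{h_0}$ is nondecreasing in $t$, since the infimum is taken over a shrinking set. The definition of $t_\Delta$ as an infimum then gives some $t'\in[t_\Delta,t)$ with $\omega_{h_0}(t')\ge 2\Delta$, and monotonicity yields $\omega_{h_0}(t)\ge 2\Delta$.

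Next we expand the difference of logits:
\[
Y-Y'=h_0(\eta)-h_0(\eta')+\Delta h(\eta)-\Delta h(\eta')\ \ge\ \omega_{h_0}(t)-2\Delta\ \ge 0 .
\]
The case of equality needs care. It is cleanest to take a strictly larger $t''\in(t_\Delta,t)$ and use that $h_0$ is strictly increasing, so $h_0(\eta)-h_0(\eta')>h_0(\eta'+t'')-h_0(\eta')\ge\omega_{h_0}(t'')\ge 2\Delta$. This gives strict positivity of $Y-Y'$. If a tie $Y=Y'$ cannot be excluded at the boundary, we will instead adopt the convention that $\operatorname{sign}(0)$ counts as agreement, or note that the boundary set is null.

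Since $Y$ is a deterministic function of $\eta$, the conditional flip probability is exactly $0$.

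\textbf{Step 2 (verifying Assumption~\ref{as:mnbs-formal}).} Define the good set
\[
\mathcal G=\{(w,w'):|\theta^{*\top}(w-w')|>t_\Delta\}\subseteq\mathcal O .
\]
By Step 1, the flip rate on $\mathcal G$ is $0$, so we may take $\rho_0=0$. Working conditionally on $\mathcal O$, as in the theorems, the mass of the complement is
\[
\delta_0=\Pr\big(|\theta^{*\top}D^*|\le t_\Delta\mid\mathcal O\big).
\]
Here $\theta^*$ is not a unit vector. Applying Assumption~\ref{as:design-grouped}(iii) to $v=\theta^*/\|\theta^*\|_2$ gives
\[
\delta_0\le L_{\mathrm{ac}}\big(t_\Delta/\|\theta^*\|_2\big)^{\alpha_{\mathrm{up}}}\le L_{\mathrm{ac}}t_\Delta^{\alpha_{\mathrm{up}}},
\]
because $\|\theta^*\|_2\ge1$ (its last coordinate is $-1$). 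This requires $t_\Delta/\|\theta^*\|_2\le 1$. If $t_\Delta>1$, we note that the bound $L_{\mathrm{ac}}t_\Delta^{\alpha_{\mathrm{up}}}$ exceeds $1$ whenever $L_{\mathrm{ac}}\ge1$ (which (iii) at $t=1$ essentially forces unless the trivial bound applies), so the claimed bound is trivial in that case.

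\textbf{Step 3 (linear slope).} If $h_0'\ge m_0$ almost everywhere, then $\omega_{h_0}(t)\ge m_0t$. At $t=2\Delta/m_0$ this gives $\omega_{h_0}(t)\ge2\Delta$, hence $t_\Delta\le 2\Delta/m_0$.

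The main obstacle is the boundary and tie bookkeeping in Step 1, namely strict versus non-strict inequality at exactly $t_\Delta$ and the handling of $\operatorname{sign}(0)$. Everything else is direct.
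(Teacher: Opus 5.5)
Your route matches the paper's. You bound $|\Delta h(\eta)-\Delta h(\eta')|\le 2\Delta$, show that $h_0(\eta)-h_0(\eta')$ dominates it once $|\theta^{*\top}D^*|>t_\Delta$, take $\mathcal G$ to be the separated pairs (so $\rho_0=0$), bound $\delta_0$ by anti-concentration, and use $\omega_{h_0}(t)\ge m_0t$.

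Your added care is correct and tightens the paper's write-up:
\begin{itemize}
\item Taking $t'$ from the infimum and using monotonicity of $\omega_{h_0}$ avoids the paper's assertion $\omega_{h_0}(t_\Delta)\ge 2\Delta$. The infimum defining $t_\Delta$ need not be attained, so that assertion can fail.
\item The intermediate $t''\in(t_\Delta,t)$, together with strict monotonicity of $h_0$, gives $h_0(\eta)-h_0(\eta')>2\Delta$ and hence $Y-Y'>0$. This settles ties completely, so drop the fallback of counting $\operatorname{sign}(0)$ as agreement. Under Assumption~\ref{as:mnbs-formal} a tie $Y=Y'$ on a pair in $\mathcal O$ counts as a flip, so that convention would prove a weaker statement.
\item Normalizing to $v=\theta^*/\|\theta^*\|_2$ is genuinely needed, since (iii) is stated for unit vectors only. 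The paper applies (iii) to $\theta^*$ directly.
\end{itemize}

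The step that fails is your treatment of $t_\Delta>\|\theta^*\|_2$. Assumption~\ref{as:design-grouped}(iii) at $t=1$ only gives the upper bound $\Pr(|v^\top D^*|\le 1\mid\mathcal O)\le L_{\mathrm{ac}}$. Nothing forces $L_{\mathrm{ac}}\ge 1$, and (iii) says nothing at all for $t>1$. In that regime the bound $\delta_0\le L_{\mathrm{ac}}t_\Delta^{\alpha_{\mathrm{up}}}$ cannot be derived, and it can be false.

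Here is a concrete instance. Take $d=1$, $W^*$ uniform on $[0,10]^2$, $\gamma^*=0$, $h_0(u)=u$ and $\Delta=3$, so $t_\Delta=6$. Every projection $v^\top D^*$ has density at most $\sqrt{2}/10$, so (iii) holds with $\alpha_{\mathrm{up}}=1/2$ and $L_{\mathrm{ac}}=0.3$. The other parts of Assumption~\ref{as:design-grouped} are easily checked, with $c_{\mathrm{assoc}}=1$ because the coordinates are independent. Yet
\[
\Pr\big(|\theta^{*\top}D^*|\le 6\mid\mathcal O\big)=1-0.4^2=0.84>0.3\sqrt{6}\approx 0.73 .
\]

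So the fix is an extra hypothesis, not a triviality argument. State the $\delta_0$ bound for $t_\Delta\le\|\theta^*\|_2$ (for example $t_\Delta\le 1$). There your normalized bound $L_{\mathrm{ac}}(t_\Delta/\|\theta^*\|_2)^{\alpha_{\mathrm{up}}}\le L_{\mathrm{ac}}t_\Delta^{\alpha_{\mathrm{up}}}$ is valid. The paper's proof needs the same restriction and uses it without saying so.
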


The condition $\omega_{h_0}(t_\Delta)\ge 2\Delta$ implies that once two true logits differ by more than $t_\Delta$, the monotone component $h_0$ creates an output gap large enough to dominate any opposing perturbation $\Delta h$ of size at most $\Delta$. Signs are therefore preserved and no label flips occur ($\rho_0=0$). Specifically, the threshold $t_\Delta$ marks the smallest gap between two true logits beyond which the monotone part $h_0$ grows fast enough to dominate any perturbation $\Delta h$, ensuring that the predictor's output preserves the correct order.  Only pairs with $|\theta^{*\top}D^*|\le t_\Delta$ can be ambiguous; anti-concentration then converts this geometric window into the mass bound $\delta_0\le L_{\mathrm{ac}}\,t_\Delta^{\alpha_{\mathrm{up}}}$. A larger monotone slope $m_0$ reduces $t_\Delta$ linearly ($t_\Delta\le 2\Delta/m_0$), shrinking the ambiguous region and, when plugged into Theorem~\ref{thm:finite} and Corollary~\ref{cor:p0_massart}, tightening the resulting bounds.

\begin{corollary}[Finite-sample error under Assumption~\ref{as:perturb}]
\label{cor:perturb-fs}
Under Assumptions~\ref{as:design-grouped} and \ref{as:perturb},
Theorem~\ref{thm:finite} applies with $\rho_0=0$ and
$\delta_0\le L_{\mathrm{ac}}\,t_\Delta^{\alpha_{\mathrm{up}}}$.
Thus, for any $\delta\in(0,1)$, with probability at least $1-\delta$,
on the event $\{\|\hat\gamma-\gamma^*\|_2\le r_0\}$,

\begin{equation}
\|\hat\gamma-\gamma^*\|_2
\;\le\;
\Bigg(
\frac{
  2C_U\sqrt{\frac{d+d\log(n/d)+\log(2/\delta)}{n}}
  \;+\;
  2L_{\mathrm{ac}}\,(2\tau_s)^{\alpha_{\mathrm{up}}}
  \;+\;
  L_{\mathrm{ac}}\,t_\Delta^{\alpha_{\mathrm{up}}}
}{
  \kappa
}
\Bigg)^{\!\frac{1}{\alpha_{\mathrm{down}}}}
\label{eq:perturb-gamma-bound}
\end{equation}

\begin{equation}\label{eq:perturb-p0-bound}
\begin{aligned}
\big|\hat p_0(X,\mathcal{S})-p_0(X,\mathcal{S})\big|
\;\le\;
&\frac{1}{4}\Bigg[
\|z(X)\|_2\,
\Bigg(
\frac{
  2C_U\sqrt{\frac{d+d\log(n/d)+\log(2/\delta)}{n}}
  \;+\;
  2L_{\mathrm{ac}}\,(2\tau_s)^{\alpha_{\mathrm{up}}}
  \;+\;
  L_{\mathrm{ac}}\,t_\Delta^{\alpha_{\mathrm{up}}}
}{
  \kappa
}
\Bigg)^{\!\frac{1}{\alpha_{\mathrm{down}}}}
\\
&\qquad
+\;\big|\hat s(X,\mathcal{S})-s(X,\mathcal{S})\big|
\Bigg].
\end{aligned}
\end{equation}

Moreover, define the perturbative feasibility gap
\[
\Delta_{\mathrm{pert}}
\;:=\;
\kappa\,r_0^{\alpha_{\mathrm{down}}}
\;-\;
\Big(
   2L_{\mathrm{ac}}\,(2\tau_s)^{\alpha_{\mathrm{up}}}
   + L_{\mathrm{ac}}\,t_\Delta^{\alpha_{\mathrm{up}}}
 \Big).
\]

If $\Delta_{\mathrm{pert}}>0$ and
\begin{equation}\label{eq:nstar-perturb}
n
\;\ge\;
n_*^{\mathrm{pert}}(\delta)
\;:=\;
\frac{4\,C_U^2}{\Delta_{\mathrm{pert}}^2}\,
\Big(d +d\log(n/d)+ \log\tfrac{2}{\delta}\Big),
\end{equation}
then the event $\{\|\hat\gamma-\gamma^*\|_2\le r_0\}$ occurs automatically.
If, in addition, $h_0'(\cdot)\ge m_0>0$ almost everywhere,
replace $t_\Delta$ by $2\Delta/m_0$.
\end{corollary}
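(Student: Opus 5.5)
This corollary is a direct specialization, so I will assemble it from Proposition~\ref{prop:perturb-epsmassart}, Theorem~\ref{thm:finite}, and Corollary~\ref{cor:p0_massart} rather than argue from scratch.

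\textbf{Step 1: verify Assumption~\ref{as:mnbs-formal}.} Under Assumptions~\ref{as:design-grouped} and~\ref{as:perturb}, Proposition~\ref{prop:perturb-epsmassart} gives Assumption~\ref{as:mnbs-formal} with $\rho_0=0$. The good set is $\mathcal G=\{(w,w')\in\mathcal O:\ |\theta^{*\top}(w-w')|>t_\Delta\}$, and
\[
\delta_0=\Pr\big(|\theta^{*\top}D^*|\le t_\Delta\mid\mathcal O\big)\le L_{\mathrm{ac}}t_\Delta^{\alpha_{\mathrm{up}}}.
\]
The last inequality is Assumption~\ref{as:design-grouped}(iii) applied with the unit vector $v=\theta^*/\|\theta^*\|_2$. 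Note that $\|\theta^*\|_2\ge 1$, so $|\theta^{*\top}D^*|\le t$ implies $|v^\top D^*|\le t$. This gives the stated bound directly when $t_\Delta\le 1$; if $t_\Delta>1$ the bound exceeds $1$ whenever $L_{\mathrm{ac}}\ge1$ and is then trivial. I would flag this edge case rather than dwell on it.

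\textbf{Step 2: invoke the conditional form of Theorem~\ref{thm:finite}.} Set $1-2\rho_0=1$. The conditional statement of Theorem~\ref{thm:finite} says that, with probability at least $1-\delta$ and on $\{\|\hat\gamma-\gamma^*\|_2\le r_0\}$, \eqref{eq:gamma-fs} holds. Substituting $\delta_0\le L_{\mathrm{ac}}t_\Delta^{\alpha_{\mathrm{up}}}$ and using that the right-hand side of \eqref{eq:gamma-fs} is monotone increasing in $\delta_0$ gives \eqref{eq:perturb-gamma-bound}.

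One subtlety: the theorem is stated with the true $\delta_0$, so I use its value directly and only then upper bound it. Monotonicity in $\delta_0$ makes this valid.

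\textbf{Step 3: the probability bound.} Apply \eqref{eq:p0-Lip-explicit} from Corollary~\ref{cor:p0_error}. It is a deterministic Lipschitz bound for the logistic map (constant $1/4$) with $\hat\eta-\eta=(\hat\gamma-\gamma^*)^\top z-(\hat s-s)$, and it holds for any $\hat\gamma$. Plugging \eqref{eq:perturb-gamma-bound} into it via Cauchy--Schwarz yields \eqref{eq:perturb-p0-bound} on the same event. Equivalently, this is Corollary~\ref{cor:p0_massart} with the substitution.

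\textbf{Step 4: drop the conditioning.} With $\rho_0=0$, the feasibility gap of \eqref{eq:feasibility-gap} is
\[
\kappa r_0^{\alpha_{\mathrm{down}}}-2L_{\mathrm{ac}}(2\tau_s)^{\alpha_{\mathrm{up}}}-\delta_0\ \ge\ \Delta_{\mathrm{pert}},
\]
again because $\delta_0\le L_{\mathrm{ac}}t_\Delta^{\alpha_{\mathrm{up}}}$. Hence $\Delta_{\mathrm{pert}}>0$ implies the first condition, and $n\ge n_*^{\mathrm{pert}}(\delta)$ implies the sample-size condition, since replacing the gap by the smaller $\Delta_{\mathrm{pert}}$ only enlarges the required $n$. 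Theorem~\ref{thm:finite} then gives $\|\hat\gamma-\gamma^*\|_2\le r_0$ automatically, with probability at least $1-\delta$.

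\textbf{Step 5: the linear-slope case.} If $h_0'\ge m_0$ a.e., then $t_\Delta\le 2\Delta/m_0$ by the last clause of Proposition~\ref{prop:perturb-epsmassart}. Every bound above is monotone in $t_\Delta$, so this substitution is valid.

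The only real obstacle is the bookkeeping in Step 1, namely the normalization of $\theta^*$ inside the anti-concentration condition, which holds only for $t\le1$. Everything else is monotone substitution into earlier results.
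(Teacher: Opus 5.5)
Your proposal is correct and matches the paper's proof, which is exactly the one-line combination of Proposition~\ref{prop:perturb-epsmassart} (giving $\rho_0=0$ and $\delta_0\le L_{\mathrm{ac}}t_\Delta^{\alpha_{\mathrm{up}}}$) with Theorem~\ref{thm:finite}, followed by the $1/4$-Lipschitz step of Corollary~\ref{cor:p0_massart}; your monotonicity-in-$\delta_0$ argument and the comparison of the feasibility gap with $\Delta_{\mathrm{pert}}$ just spell out what the paper leaves implicit. The normalization caveat you raise for Assumption~\ref{as:design-grouped}(iii) (the bound $\delta_0\le L_{\mathrm{ac}}t_\Delta^{\alpha_{\mathrm{up}}}$ follows immediately only when $t_\Delta\le\|\theta^*\|_2$) is a real looseness, but it is inherited from the paper's proof of that proposition rather than a defect of your route.
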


In this perturbation model, the flip probability is eliminated ($\rho_0=0$) and $\delta_0$ is replaced by the near-tie mass driven by $t_\Delta$. Feasibility requires that the combined approximation and plug-in errors, $2L_{\mathrm{ac}}(2\tau_s)^{\alpha_{\mathrm{up}}}+L_{\mathrm{ac}}t_\Delta^{\alpha_{\mathrm{up}}}$, be strictly smaller than the design margin $\kappa r_0^{\alpha_{\mathrm{down}}}$. This becomes easier as the perturbation weakens (smaller $\Delta$, or larger $m_0$ so that $t_\Delta\le 2\Delta/m_0$), which in turn relaxes the sample-size requirement in~\eqref{eq:nstar-perturb} and tightens the estimation bounds.

\section{Discussion and Extension}
\subsection{Assortment Optimization}\label{sec:assortment}

In this subsection we study how estimation error from Algorithm~\ref{alg:mrc-calib} affects the downstream decision problem of assortment optimization. Let each item $i\in\mathcal{I}$ yield revenue $r_i>0$. Fix a context $X$ and an assortment $\mathcal{S}\subseteq\mathcal{I}$. The \emph{true} and \emph{plug-in} (based on estimated choice probabilities $\hat p_i$) expected revenues are
\[
R(X,\mathcal{S}) \;=\; \sum_{i\in\mathcal{S}} r_i\,p_i(X,\mathcal{S}),
\qquad
\hat R(X,\mathcal{S}) \;=\; \sum_{i\in\mathcal{S}} r_i\,\hat p_i(X,\mathcal{S}).
\]
Let
\[
\mathcal{S}^*(X)\in\arg\max_{\mathcal{S}} R(X,\mathcal{S}),
\qquad
\hat{\mathcal{S}}(X)\in\arg\max_{\mathcal{S}} \hat R(X,\mathcal{S})
\]
denote, respectively, an optimal assortment under the true model and the assortment obtained by optimizing the plug-in objective. For notational simplicity we treat $X$ as fixed and independent of $\mathcal{S}$; the same analysis extends to the case $X=X(\mathcal{S})$ by replacing $R(X,\mathcal{S})$ with $R\!\big(X(\mathcal{S}),\mathcal{S}\big)$ throughout.

Define
\[
q_i(X,\mathcal{S}) :=\mathbb{P}(I=i\mid I\in \mathcal{S},\,X, \mathcal{S})=\frac{\exp(u_i(X))}{\sum_{j\in\mathcal{S}}\exp(u_j(X))}
\]
as the within-assortment choice share (i.e., conditional on not selecting the outside-option $0$) and $\hat q_i$ analogously from $(\hat u_0,\hat u)$. Then we have

\begin{theorem}[Assortment optimality gap]\label{thm:assortment-regret}
Assume the conditions of Corollary~\ref{cor:p0_massart} hold. For any $\delta\in(0,1)$, with probability at least $1-\delta$,
\begin{equation}\label{eq:rev-gap-bound-pert}
R\!\left(X,\mathcal{S}^*(X)\right)-R\!\left(X,\hat{\mathcal{S}}(X)\right)
\;\le\; 2R_{\max}\,\Big(C^{\mathrm{pert}}_{n}(\delta)\,\|z(X)\|_2+\varepsilon_s(X)+\varepsilon_q(X)\Big),
\end{equation}
where $R_{\max}:=\max_{i} r_i$ and
\[
\varepsilon_s(X):=\max_{\mathcal{S}}\big|\hat s(X,\mathcal{S})-s(X,\mathcal{S})\big|,
\qquad
\varepsilon_q(X):=\max_{\mathcal{S}}\sum_{i\in\mathcal{S}}\big|\hat q_i(X,\mathcal{S})-q_i(X,\mathcal{S})\big|,
\]
and $C^{\mathrm{pert}}_{n}(\delta)$ is the calibration coefficient guaranteed by Corollary~\ref{cor:p0_massart}, i.e.,\  the smallest quantity for which, uniformly over $\mathcal{S}$,
\[
\big|\hat p_0(X,\mathcal{S})-p_0(X,\mathcal{S})\big|
\ \le\ \frac{1}{4}\Big(C^{\mathrm{pert}}_{n}(\delta)\,\|z(X)\|_2+\big|\hat s(X,\mathcal{S})-s(X,\mathcal{S})\big|\Big).
\]
\end{theorem}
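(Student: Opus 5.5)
The plan is the standard plug-in regret decomposition. By optimality of $\hat{\mathcal S}(X)$ for $\hat R$,
\[
R(X,\mathcal S^*)-R(X,\hat{\mathcal S})
=\big[R(X,\mathcal S^*)-\hat R(X,\mathcal S^*)\big]+\big[\hat R(X,\mathcal S^*)-\hat R(X,\hat{\mathcal S})\big]+\big[\hat R(X,\hat{\mathcal S})-R(X,\hat{\mathcal S})\big].
\]
The middle bracket is $\le 0$. This leaves $2\max_{\mathcal S}|R(X,\mathcal S)-\hat R(X,\mathcal S)|$, so it suffices to prove a uniform-in-$\mathcal S$ bound
\[
|R(X,\mathcal S)-\hat R(X,\mathcal S)|\le R_{\max}\big(C^{\mathrm{pert}}_n(\delta)\|z(X)\|_2+\varepsilon_s(X)+\varepsilon_q(X)\big).
\]
The whole argument runs on a single high-probability event from Corollary~\ref{cor:p0_massart}. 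That corollary bounds $\|\hat\gamma-\gamma^*\|_2$, which does not depend on $\mathcal S$. Combined with the deterministic Lipschitz inequality \eqref{eq:p0-Lip-explicit}, this gives the $|\hat p_0-p_0|$ bound simultaneously for all $\mathcal S$ with probability $1-\delta$. This uniformity is exactly what the definition of $C^{\mathrm{pert}}_n(\delta)$ encodes, so no union bound over assortments is needed.

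For fixed $\mathcal S$, factor the choice probabilities as $p_i=(1-p_0)q_i$ and $\hat p_i=(1-\hat p_0)\hat q_i$ for $i\in\mathcal S$. This uses that the calibrated model is an MNL with utilities $(\hat\gamma^\top z,\hat u)$, so its inside probabilities are $(1-\hat p_0)$ times the within-set shares. Then
\[
\hat p_i-p_i=(1-\hat p_0)(\hat q_i-q_i)-(\hat p_0-p_0)q_i .
\]
Multiply by $r_i\le R_{\max}$ and sum over $i\in\mathcal S$. Using $0\le 1-\hat p_0\le1$ and $\sum_i q_i=1$, this gives
\[
|\hat R-R|\le R_{\max}\Big(\sum_{i\in\mathcal S}|\hat q_i-q_i|+|\hat p_0-p_0|\Big)\le R_{\max}\big(\varepsilon_q(X)+|\hat p_0-p_0|\big).
\]
Now insert the calibration bound
\[
|\hat p_0-p_0|\le\tfrac14\big(C^{\mathrm{pert}}_n(\delta)\|z(X)\|_2+\varepsilon_s(X)\big)\le C^{\mathrm{pert}}_n(\delta)\|z(X)\|_2+\varepsilon_s(X).
\]
Dropping the factor $\tfrac14$ only loosens the bound. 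Combining with the factor $2$ from the decomposition yields \eqref{eq:rev-gap-bound-pert}.

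The main point needing care is the uniformity over $\mathcal S$ rather than any computation. The probabilistic content lives entirely in $\hat\gamma$, and both $\hat s$ and $\hat q$ are deterministic given the learned utilities. I would therefore state explicitly that the event of Theorem~\ref{thm:finite} yields the bound for every $\mathcal S$ at once. I would also check that the maxima over the finite family of subsets of $\mathcal I$ are attained, so that $\varepsilon_s$ and $\varepsilon_q$ are finite.
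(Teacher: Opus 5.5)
Your proof is correct and takes essentially the same route as the paper. The paper also writes $R=(1-p_0)\bar r$ and $\hat R=(1-\hat p_0)\hat{\bar r}$, splits $|R-\hat R|$ into a calibration term and a within-set share term, and then applies the plug-in bound $R(X,\mathcal S^*)-R(X,\hat{\mathcal S})\le 2\max_{\mathcal S}|R-\hat R|$, dropping the factor $\tfrac14$ just as you do. Your remark that uniformity over $\mathcal S$ needs no union bound, because the only random quantity is $\hat\gamma$, makes explicit a step the paper leaves implicit.
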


The bound \eqref{eq:rev-gap-bound-pert} separates: (i) the calibration term $C^{\mathrm{pert}}_{n}(\delta)\|z(X)\|_2$ supplied by the perturbation-based MRC analysis, (ii) the aggregation error $\varepsilon_s(X)$ propagated from the learned utility, and (iii) the within-assortment share error $\varepsilon_q(X)$ also dependent on the learned utility. In particular, when the conditions in Corollary~\ref{cor:p0_massart} hold, the calibration term decays at the same rate governed by the corollary.

\subsection{Multiple Predictors}\label{sec:multi-sim}
We now consider the setting where $M>1$ external predictors produce synthetic outside-option probabilities for a common context-assortment pair $(X,\mathcal{S})$. For each predictor $m=1,\ldots,M$ and observation $k=1,\ldots,n$, define the synthetic unobserved choice logit
\[
y_k^{(m)}=\logit\!\big(\tilde p_0^{(m)}(X_k,\mathcal{S}_k)\big).
\]
In the single-predictor case (Algorithm~\ref{alg:mrc-calib}), the MRC step maximizes the sample rank correlation between $\{y_k\}$ and $\{\theta^\top w_k\}$, after which we recover $\hat\gamma=-\,\hat\theta_z/\hat\theta_s$ and set $\hat p_0(X,\mathcal{S})=\Logistic(\hat\gamma^\top z(X)-\hat s(X,\mathcal{S}))$. With multiple predictors we keep the same recovery step but replace the single ranking with an aggregation of the \emph{pairwise orderings} implied by $\{y_k^{(m)}\}_{m=1}^M$. Using multiple predictors can reduce variance by averaging weakly correlated errors and can enlarge coverage when predictors are locally strong in different regions of $(X,\mathcal{S})$. However, at the same time, aggregation does not undo a bias shared by all predictors, heavy probability clipping can compress margins and increase ties, and a globally anti-monotone predictor can confound naïve pooling. These considerations motivate sign-based, scale-free methods that (i) retain MRC's invariances, (ii) avoid any separate ``orientation'' preprocessing by handling global flips \emph{within} the objective, and (iii) provide either reliability weighting or robustness to outliers. Specifically, below we provide two methods to aggregate multiple predictors. 

\textbf{Weighted average.} Let $s_{k\ell}^{(m)}=\sign\!\big(y_k^{(m)}-y_\ell^{(m)}\big)\in\{-1,0,1\}$ for $k<\ell$. A pooled, weighted objective treats each predictor as a ``judge'' with nonnegative weight $\pi_m$ on the simplex and introduces a \emph{global orientation} variable $o_m\in\{\pm1\}$ that is learned jointly with $\theta$ so that globally anti-monotone predictors are flipped automatically. We then maximize
\begin{equation}
\label{eq:multi-pool}
\widehat{\mathrm{RC}}^{\mathrm{pool}}_n(\theta;\pi)
\;=\;
\max_{o\in\{\pm1\}^M}\;
\frac{2}{n(n-1)}\sum_{k<\ell}\ \sum_{m=1}^M \pi_m\;
\One\!\Big\{\,o_m\,s_{k\ell}^{(m)}\big(\theta^\top w_k-\theta^\top w_\ell\big)>0\Big\}.
\end{equation}
A strong default takes $\pi_m=1/M$, while validation-based choices can up-weight more reliable predictors (e.g., by self-consistency or agreement with held-out outcomes). When predictor quality is heterogeneous but most are informative, this pooled formulation sharpens margins by averaging idiosyncratic pairwise ``flips'' and eliminates the need for any standalone orientation step.

\textbf{Median.} When a minority of predictors may be erratic or adversarial, we instead aggregate \emph{before} optimization via a median (or majority) sign, which confers high-breakdown robustness. Define
\[
s_{k\ell}\;=\;\sign\!\Big(\operatorname{median}_{m=1,\ldots,M}\{\,y_k^{(m)}-y_\ell^{(m)}\,\}\Big)\in\{-1,0,1\},
\qquad
\mathcal P=\{(k,\ell):k<\ell,\ s_{k\ell}\neq 0\},
\]
and maximize the consensus objective
\begin{equation}
\label{eq:multi-cons}
\widehat{\mathrm{RC}}^{\mathrm{cons}}_n(\theta)
=
\frac{2}{|\mathcal P|}\sum_{(k,\ell)\in\mathcal P}
\One\!\Big\{ s_{k\ell}\,\big(\theta^\top w_k-\theta^\top w_\ell\big)>0\Big\}.
\end{equation}
This approach down-weights outlying predictors through the median and avoids scale choices, at the cost of discarding tied pairs when the median sign is zero (a situation exacerbated by extreme clipping of probabilities).

\begin{algorithm}[t]
\caption{Multi-Predictor MRC Calibration (choose \emph{one} of A or B)}
\label{alg:multi-mrc-calib}
\begin{algorithmic}[1]
\Require Data $\{(X_k,\mathcal{S}_k)\}_{k=1}^n$; predictors $\{\tilde p_0^{(m)}\}_{m=1}^M$; feature map $z(\cdot)$; learned utilities $\{\hat u_i(\cdot)\}$; (optional) weights $\pi$.
\For{$k=1,\ldots,n$}
  \State $y_k^{(m)} \gets \logit\!\big(\tilde p_0^{(m)}(X_k,\mathcal{S}_k)\big)$ for $m=1,\ldots,M$;
  \State $\hat{s}_k \gets \log\!\big(\sum_{i\in\mathcal{S}_k}\exp(\hat u_i(X_k))\big)$;\quad
        $w_k \gets [\,z(X_k)^\top,\ \hat{s}_k\,]^\top$.
\EndFor
\State Compute pairwise signs $s_{k\ell}^{(m)} \gets \sign\!\big(y_k^{(m)}-y_\ell^{(m)}\big)$ for all $k<\ell$, $m=1,\ldots,M$.
\State \textbf{Choice A (Weighted Average):}
\Statex \quad Solve $(\hat\theta,\hat o)\in\arg\max_{\|\theta\|_2=1,\;o\in\{\pm1\}^M}\ \widehat{\mathrm{RC}}^{\mathrm{pool}}_n(\theta;\pi)$ via \eqref{eq:multi-pool} with $\pi_m\equiv 1/M$ or validation-chosen weights.
\State \textbf{Choice B (Median):}
\Statex \quad Set $s_{k\ell}\gets \sign(\operatorname{median}_m\{y_k^{(m)}-y_\ell^{(m)}\})$ and $\mathcal P=\{(k,\ell):k<\ell,\ s_{k\ell}\neq 0\}$, then compute $\hat\theta\in\arg\max_{\|\theta\|_2=1}\ \widehat{\mathrm{RC}}^{\mathrm{cons}}_n(\theta)$ via \eqref{eq:multi-cons}.
\State Recover $\hat\gamma \gets -\,\hat\theta_z/\hat\theta_s$ and, for any $(X,\mathcal{S})$, set
\[
\hat s(X,\mathcal{S})=\log\!\Big(\sum_{i\in\mathcal{S}}\exp(\hat u_i(X))\Big),\quad
\hat\eta(X,\mathcal{S})=\hat\gamma^\top z(X)-\hat s(X,\mathcal{S}),\quad
\hat p_0(X,\mathcal{S})=\Logistic\!\big(\hat\eta(X,\mathcal{S})\big).
\]
\end{algorithmic}
\end{algorithm}

Both choices succeed under Assumption~\ref{as:mnbs-informal} when predictor-induced orderings align with the true index often enough. Choice~A \eqref{eq:multi-pool} is preferable when most predictors are informative but differ in quality: implicit global orientation removes the need for a separate alignment step, and reliability weights can concentrate mass on higher-quality judges. Choice~B \eqref{eq:multi-cons} is preferable when some predictors may be erratic or adversarial: median-sign aggregation offers robustness with minimal tuning. In either case, ties due to severe clipping simply contribute no pairwise information; converting probabilities to logits expands margins but cannot fully recover lost orderings near $0$ or $1$. Finally, neither aggregation corrects biases \emph{shared} by all predictors, so modest validation or ground-truth checks remain valuable for selecting $\pi$ and stress-testing the fitted model.
\section{Experiments}

We empirically evaluate the proposed calibration methods in two settings. 
First, we run controlled synthetic experiments where the ground-truth outside-option probability $p_0(X,\mathcal{S})$ is known and we can separately vary (i) the sample size $n$, (ii) the first-stage inclusive-value error, and (iii) predictor quality (bias and noise). 
Second, we study a real-world application on the Expedia Personalized Sort dataset \citep{expedia-personalized-sort}, where we emulate a biased predictor by training a historical prediction model and deploying it under covariate shift.

Throughout, $\eta(X,\mathcal{S})=\gamma^{*\top}z(X)-s(X,\mathcal{S})$ denotes the (unobserved) outside-option logit and $y=\operatorname{logit}(\tilde p_0)$ the predictor-provided logit. 
Complete data-generation details, hyperparameters, and implementation specifics are provided in Appendix~\ref{sec:appendix_exp_setup}.

\subsection{Synthetic Data Experiments}
\label{subsec:syn_experiments}

We first validate the theoretical predictions under synthetic designs where we control the predictor link $h(\cdot)$ and the utility-learning accuracy. 
A visual overview of the synthetic pipeline is given in Figure~\ref{fig:synthetic_flow} (Appendix~\ref{subsec:synthetic_generation}).

\subsubsection{Performance Metrics}
\label{subsubsec:syn_metrics}

We report two complementary metrics.

\paragraph{Empirical $p_0$ error.}
On an independent test set of size $N$, we evaluate absolute errors $\{|p_0^{(k)}-\hat p_0^{(k)}|\}_{k=1}^N$, where $p_0^{(k)}=p_0(X_k,\mathcal{S}_k)$ and $\hat p_0^{(k)}=\hat p_0(X_k,\mathcal{S}_k)$. 
To summarize performance robustly, we report the $0.7$-quantile:
\begin{equation}
\text{Error}_{0.7}
= 
\text{Quantile}_{0.7}\!\left(\left\{\,\big|p_0^{(k)}-\hat p_0^{(k)}\big|\,\right\}_{k=1}^N\right).
\end{equation}

\paragraph{Revenue suboptimality.}
To quantify downstream impact, we solve an unconstrained assortment problem using the plug-in model and measure the relative revenue gap:
\begin{equation}
\text{Suboptimality (\%)} 
=
\frac{1}{N_{\text{test}}}\sum_{j=1}^{N_{\text{test}}}
\frac{R\!\left(X_j,\mathcal{S}^*(X_j)\right)-R\!\left(X_j,\hat{\mathcal{S}}(X_j)\right)}{R\!\left(X_j,\mathcal{S}^*(X_j)\right)}\times 100\%,
\end{equation}
where $\mathcal{S}^*(X)$ and $\hat{\mathcal{S}}(X)$ are defined in Section~\ref{sec:assortment}. 
(See Appendix~\ref{subsec:appendix_synthetic_exp} for the exact decision-instance construction.)

\subsubsection{Benchmarks and Algorithms}
\label{subsubsec:syn_benchmarks}

We include ``oracle'' baselines that remove first-stage utility error:
\begin{itemize}
\item \textbf{Linear\_oracle / MRC\_oracle:} Algorithms~\ref{alg:linear-calib} and~\ref{alg:mrc-calib} run with the \emph{true/oracle} inclusive values $s_k$ in place of $\hat s_k$.
\end{itemize}
Implementation details (numerical stabilization for OLS and the smoothed MRC optimization) are deferred to Appendix~\ref{appendix_syn_bench_algo}.

\subsubsection{Experimental Protocols (Exp~1--6)}
\label{subsubsec:syn_protocols}

Each experiment draws i.i.d.\ samples $\{(z_k,\hat s_k,y_k)\}_{k=1}^n$ under the synthetic data-generation process described in Appendix~\ref{subsec:synthetic_generation}. 
We consider two simulator/predictor families, matching the assumptions that motivate our estimators:
\begin{equation}
y_k = h(\eta_k) + \epsilon_k,\qquad \epsilon_k\sim\mathcal{N}(0,\sigma_\epsilon^2),
\end{equation}
where $\eta_k=\gamma^{*\top}z_k-s_k$ is the true outside-option logit.
In plots we refer to these settings as:
\begin{itemize}
\item \textbf{Linear Simulator (Lin Sim):} The simulator/predictor satisfies $h(\eta)=a^*+b^*\eta$ (Assumption~\ref{as:linear});
\item \textbf{Monotone Simulator (Monotone Sim):} The simulator/predictor satisfies  $h(\eta)=a^*+b^*\,\big[\tfrac{1}{20}\log(1+\exp(20\eta))-\tfrac{1}{20}\log 2\big]$ (monotone but non-linear, satisfying Assumption~\ref{as:mnbs-formal}).
\end{itemize}
Across Exp~1--6 we vary $n$, the inclusive-value error, $\sigma_\epsilon$, and the predictor bias scale $b^*$, and we also evaluate multi-predictor aggregation (Algorithm~\ref{alg:multi-mrc-calib}). 
All curves report averages over multiple random seeds; the exact grids and seed counts are listed in Appendix~\ref{subsec:appendix_synthetic_exp}.

\subsubsection{Results}
\label{subsubsec:syn_results}

\paragraph{Exp~1: Sample complexity.}
Figures~\ref{fig:linear_convergence_rate}--\ref{fig:mrc_convergence_rate} report $\text{Error}_{0.7}$ as a function of $n$.
Under the correctly specified linear predictor, Algorithm~\ref{alg:linear-calib} exhibits a clear decrease in error with $n$ and tracks its oracle closely (the remaining gap is driven by utility learning through $\hat s$). 
Under the non-linear monotone predictor, linear calibration plateaus, reflecting misspecification of the affine link.
In contrast, Algorithm~\ref{alg:mrc-calib} improves with $n$ under \emph{both} predictors: it is consistent under the monotone setting (its target regime) and remains low error under the linear predictor.

\begin{figure}[htbp]
    \centering
    \begin{minipage}{0.48\textwidth}
        \centering
        \includegraphics[width=\linewidth]{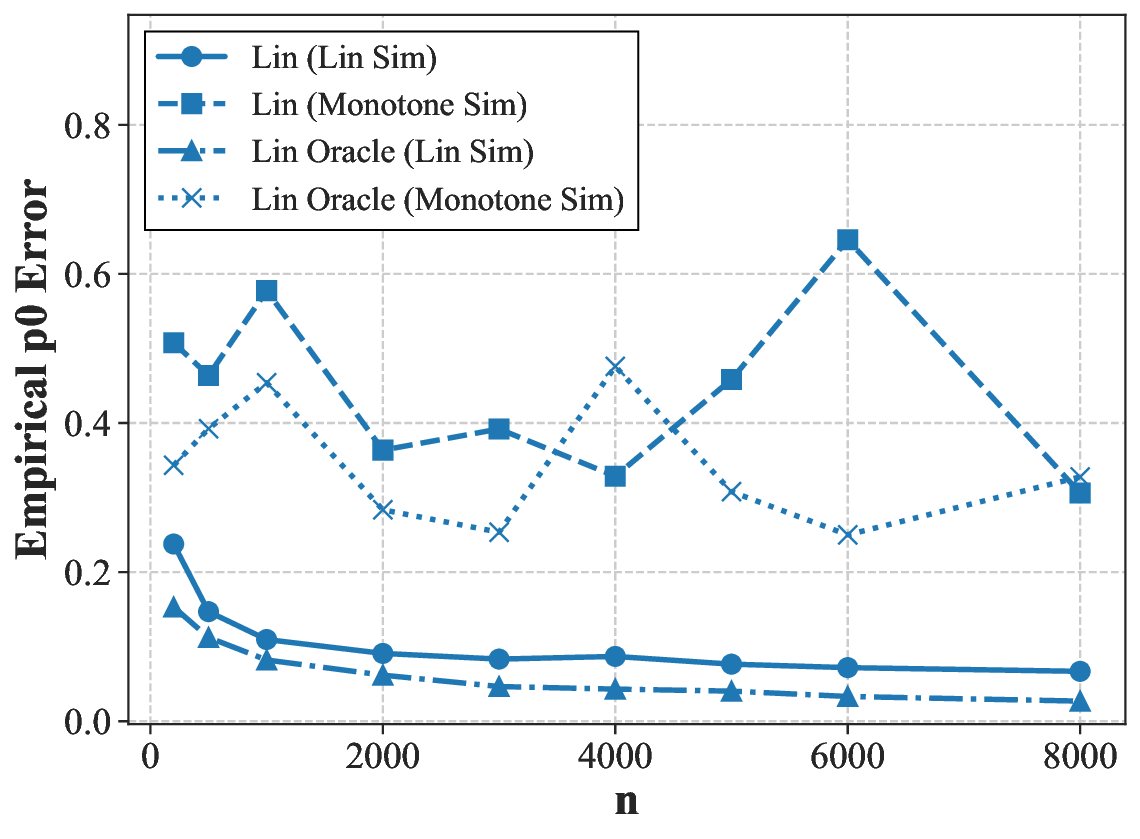}
        \caption{Exp 1, Convergence of Linear Calibration (Alg~\ref{alg:linear-calib}).}
        \label{fig:linear_convergence_rate}
    \end{minipage}
    \hfill
    \begin{minipage}{0.48\textwidth}
        \centering
        \includegraphics[width=\linewidth]{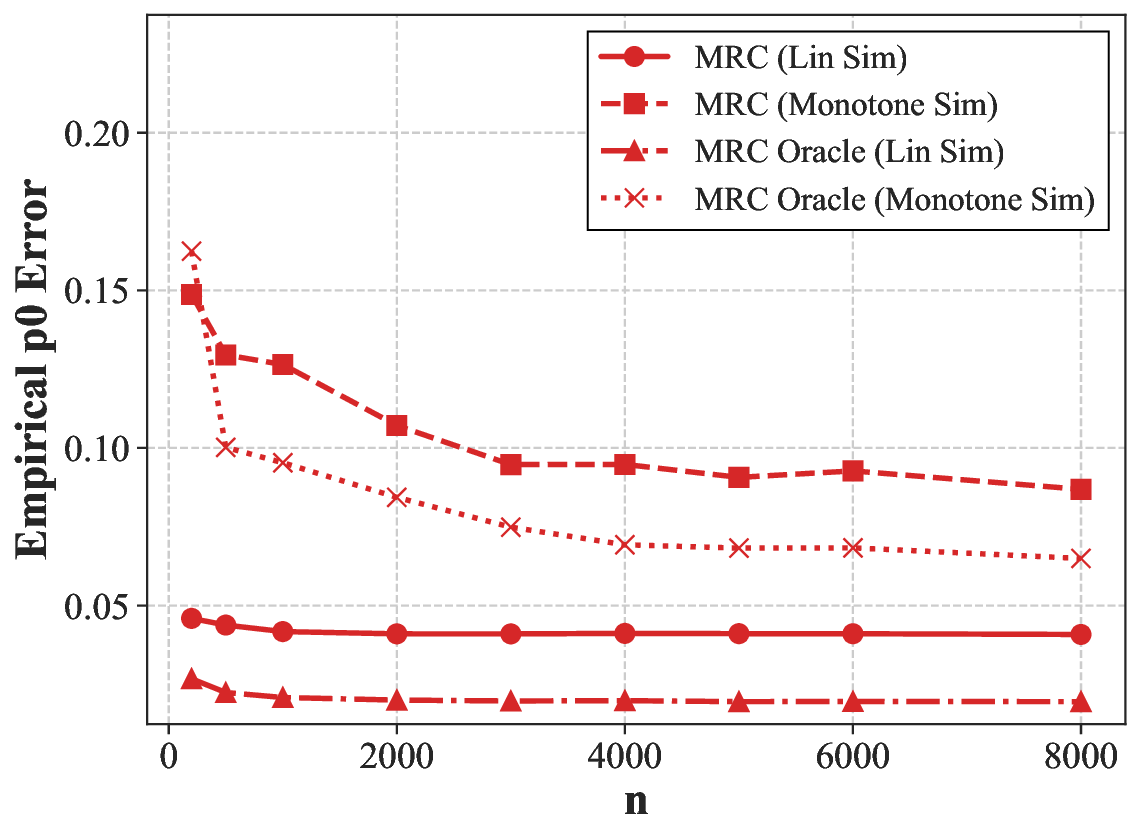}
        \caption{Exp 1, Convergence of MRC Calibration (Alg~\ref{alg:mrc-calib}).}
        \label{fig:mrc_convergence_rate}
    \end{minipage}
\end{figure}

\paragraph{Exp~2: Sensitivity to inclusive-value error.}
Figures~\ref{fig:linear_utility_noise_robustness}--\ref{fig:mrc_utility_noise_robustness} vary the first-stage utility noise and report performance against the error quantities that enter our theory.
For linear calibration we plot against $\sqrt{\bar\tau}$, where $\bar\tau=\frac{1}{n}\sum_{k=1}^n(\hat s_k-s_k)^2$; for MRC we plot against the sup-norm error $\tau_s=\max_k|\hat s_k-s_k|$ (cf.\ the MRC analysis).
In both cases the empirical curves are approximately monotone in these quantities, consistent with the plug-in error terms in our finite-sample upper bounds.

\begin{figure}[htbp]
    \centering
    \begin{minipage}{0.48\textwidth}
        \centering
        \includegraphics[width=\linewidth]{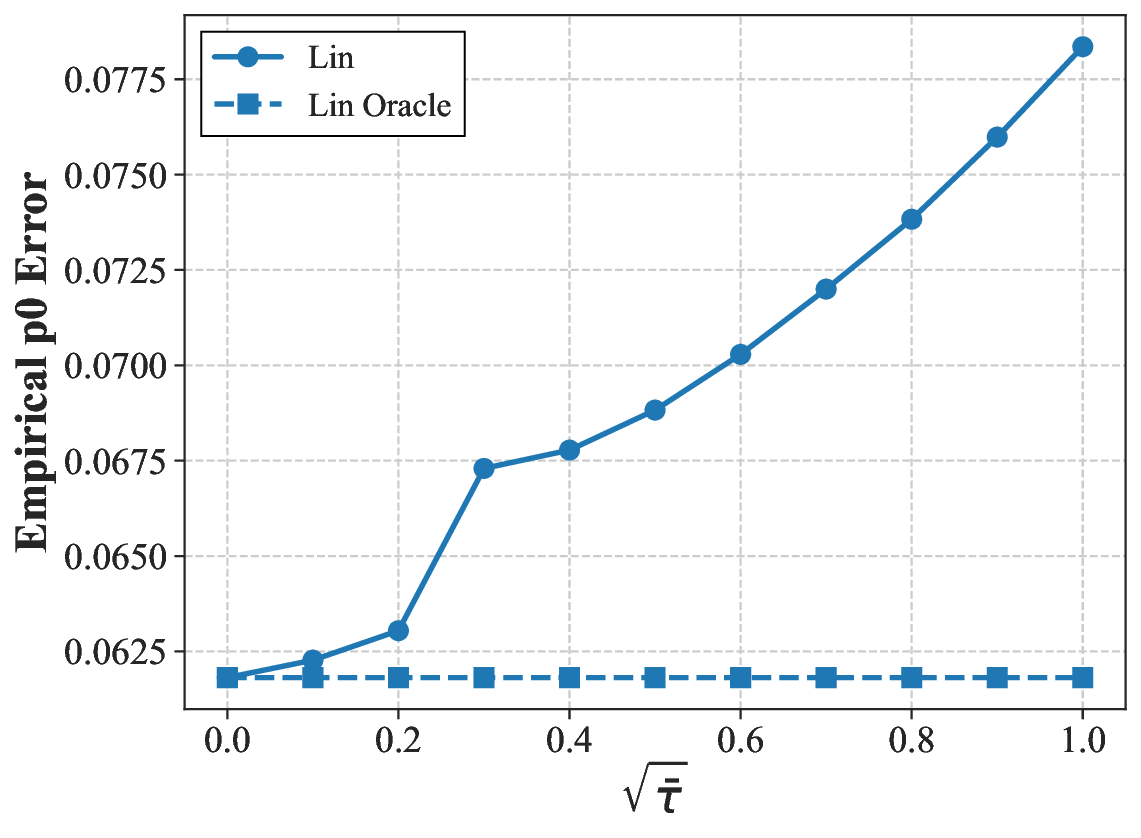}
        \caption{Exp 2, Linear Alg vs. Utility Estimation Error $\sqrt{\bar{\tau}}$.}
        \label{fig:linear_utility_noise_robustness}
    \end{minipage}
    \hfill
    \begin{minipage}{0.48\textwidth}
        \centering
        \includegraphics[width=\linewidth]{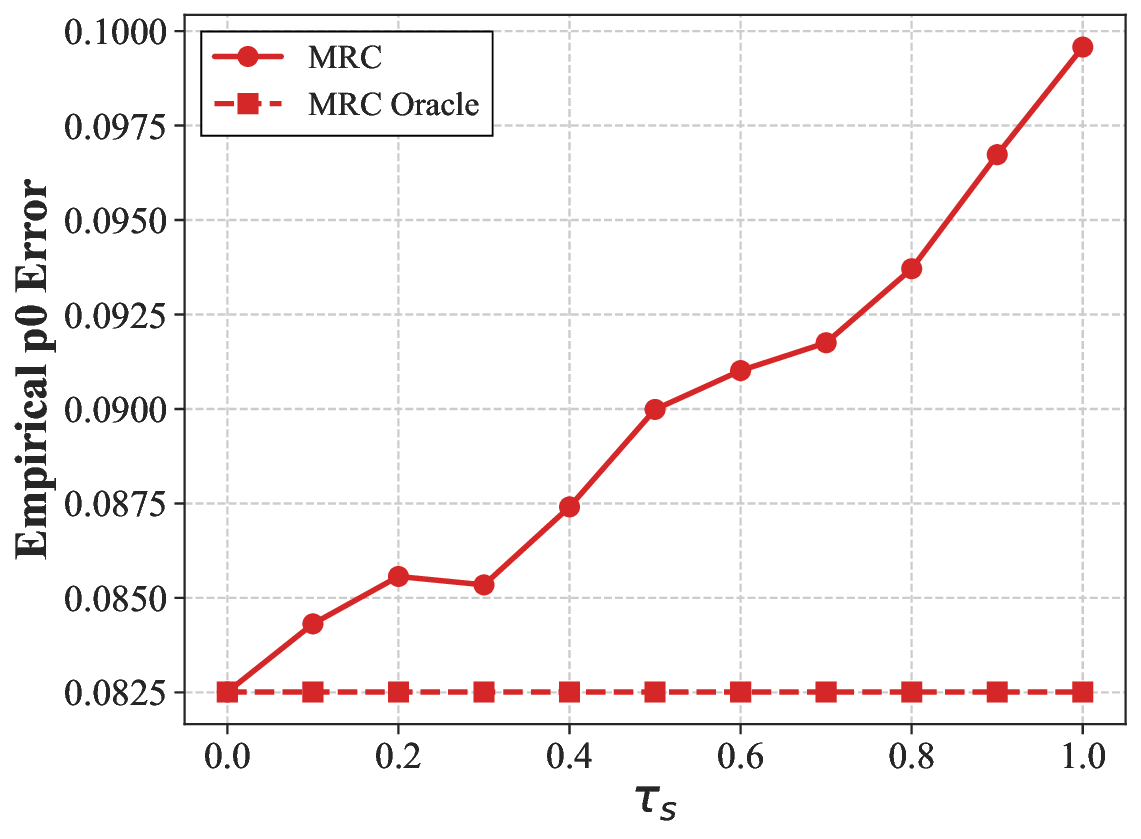}
        \caption{Exp 2, MRC Alg vs. Utility Estimation Error $\tau_s$.}
        \label{fig:mrc_utility_noise_robustness}
    \end{minipage}
\end{figure}

\paragraph{Exp~3: Robustness to predictor noise variance.}
Figures~\ref{fig:linear_robustness_to_simulator_noise}--\ref{fig:mrc_robustness_to_simulator_noise} vary $\sigma_\epsilon$.
Under the correctly specified linear predictor, linear calibration is stable as $\sigma_\epsilon$ increases. Under the non-linear monotone predictor, the same procedure is more sensitive and does not converge to the oracle because the affine model cannot represent $h(\cdot)$ (i.e., model misspecification).
MRC, which only requires approximate monotone ordering, degrades much more gently as $\sigma_\epsilon$ increases in the monotone setting; large $\sigma_\epsilon$ impacts performance primarily through pairwise rank flips.

\begin{figure}[htbp]
    \centering
    \begin{minipage}{0.48\textwidth}
        \centering
        \includegraphics[width=\linewidth]{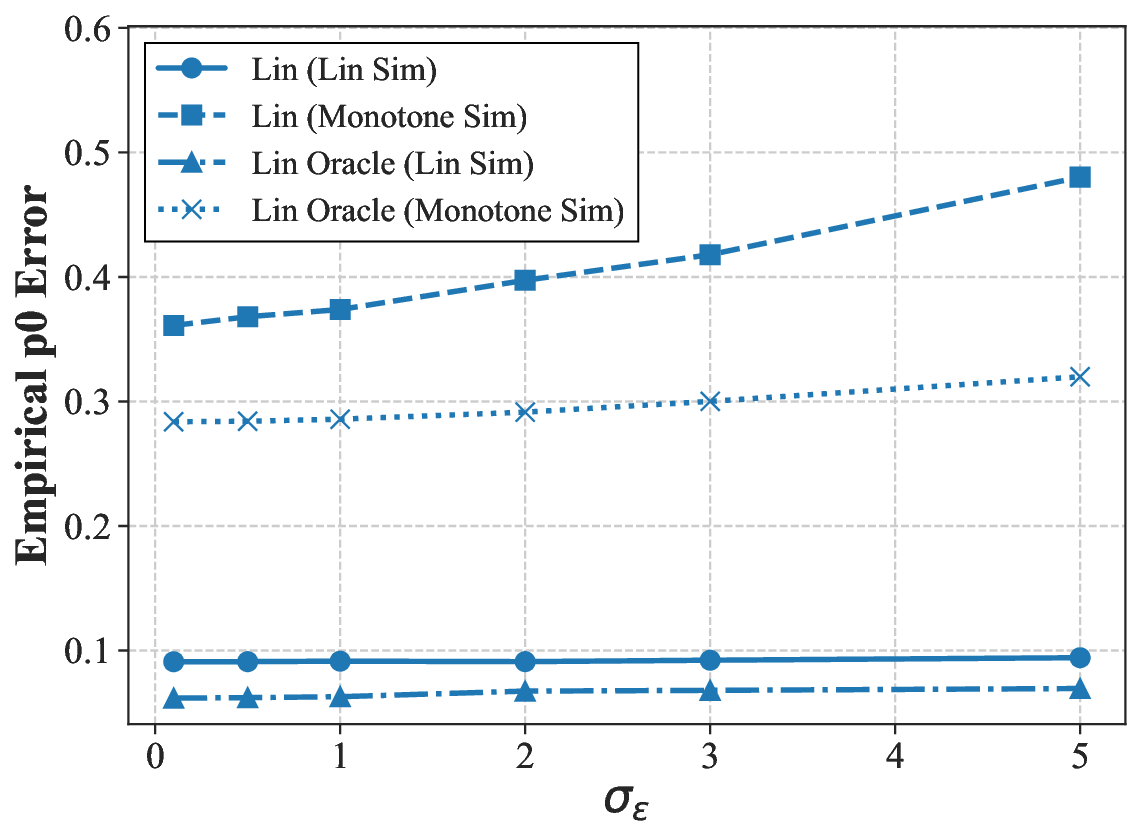}
        \caption{Exp 3, Linear Alg vs. Predictor Noise  $\sigma_{\epsilon}$.}
        \label{fig:linear_robustness_to_simulator_noise}
    \end{minipage}
    \hfill
    \begin{minipage}{0.48\textwidth}
        \centering
        \includegraphics[width=\linewidth]{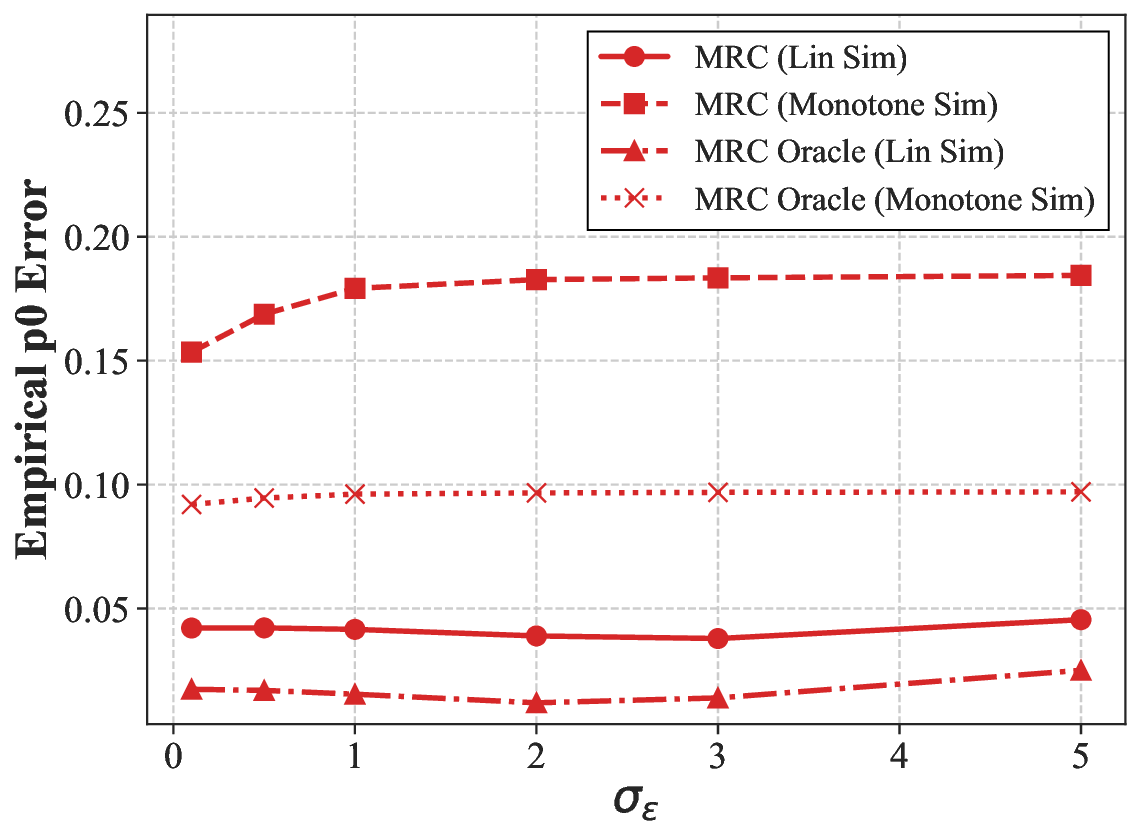}
        \caption{Exp 3, MRC Alg vs. Predictor Noise $\sigma_{\epsilon}$.}
        \label{fig:mrc_robustness_to_simulator_noise}
    \end{minipage}
\end{figure}

\paragraph{Exp~4: Effect of predictor scale $b^*$.}
Figures~\ref{fig:linear_b_sensitivity}--\ref{fig:mrc_b_sensitivity} vary the predictor scale $b^*$.
For OLS-based linear calibration under the linear predictor, increasing $b^*$ improves signal-to-noise and reduces estimation error. 
Under the monotone predictor, changing $b^*$ cannot remove misspecification bias, so only modest gains are observed.
For MRC, the linear-predictor curves are essentially scale-invariant (rank information is unchanged by affine rescaling), while under the monotone predictor larger $b^*$ reduces the effective flip probability and yields a mild improvement.

\begin{figure}[htbp]
    \centering
    \begin{minipage}{0.48\textwidth}
        \centering
        \includegraphics[width=\linewidth]{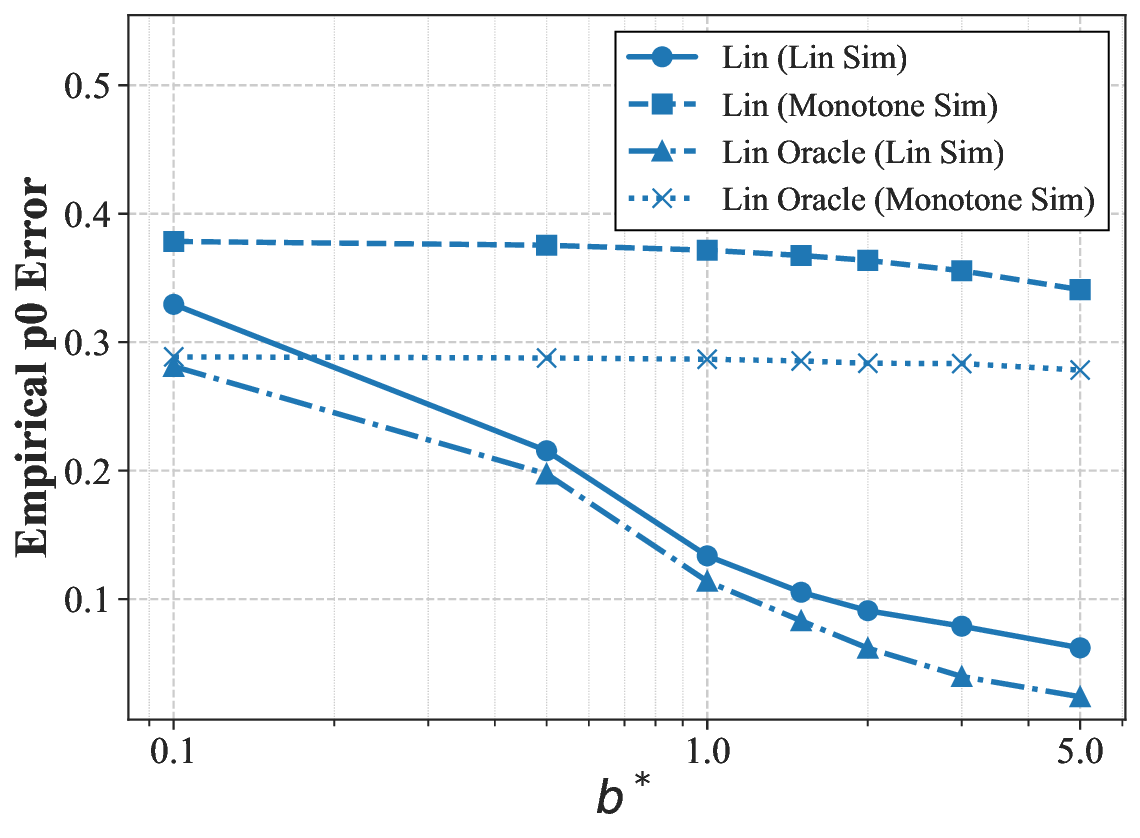}
        \caption{Exp 4: Linear Alg vs. Bias Scale $b^*$.}
        \label{fig:linear_b_sensitivity}
    \end{minipage}
    \hfill
    \begin{minipage}{0.48\textwidth}
        \centering
        \includegraphics[width=\linewidth]{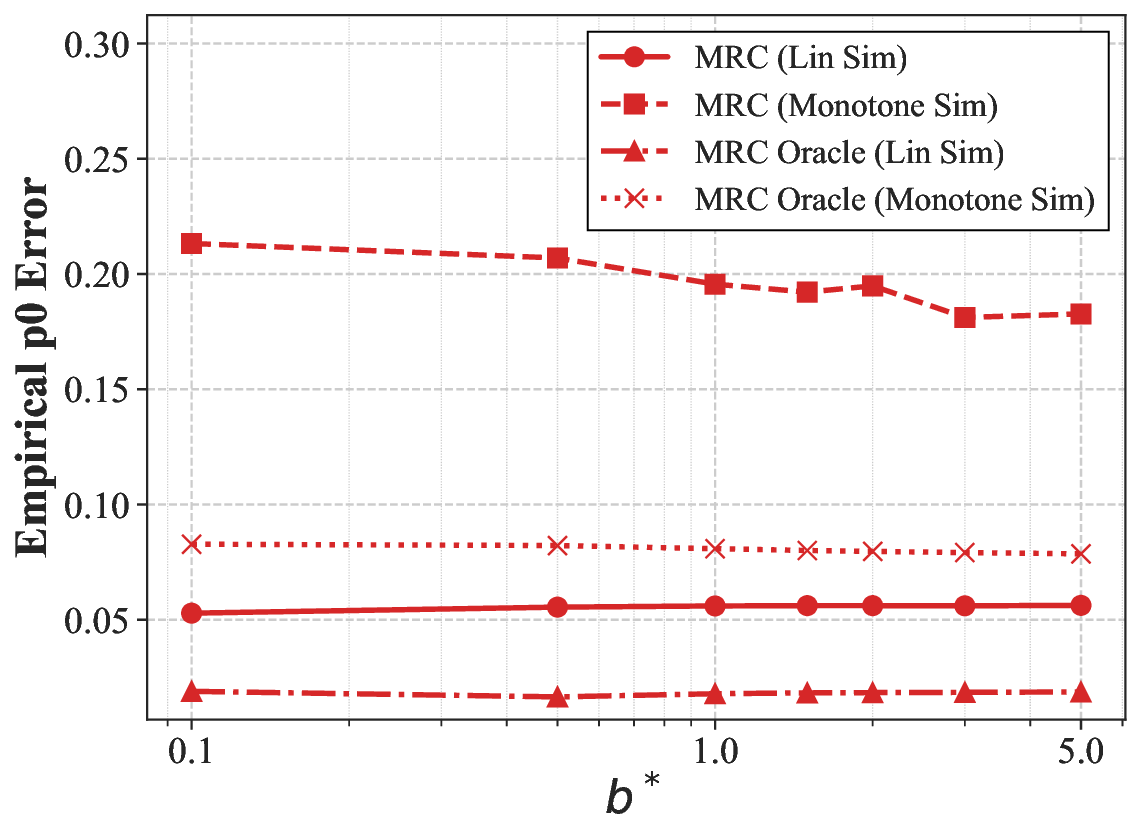}
        \caption{Exp 4: MRC Alg vs. Bias Scale $b^*$.}
        \label{fig:mrc_b_sensitivity}
    \end{minipage}
\end{figure}

\paragraph{Exp~5: Downstream revenue impact.}
Figures~\ref{fig:linear_revenue_regret}--\ref{fig:mrc_revenue_regret} report revenue suboptimality as $n$ grows.
When calibration is correctly specified (linear calibration under Lin Sim, or MRC under Monotone Sim), improved estimation accuracy translates into lower decision loss, consistent with the separation in Theorem~\ref{thm:assortment-regret}.
Under the monotone predictor, linear calibration yields persistently high suboptimality, mirroring its plateau in $\hat p_0$ accuracy.

\begin{figure}[htbp]
    \centering
    \begin{minipage}{0.48\textwidth}
        \centering
        \includegraphics[width=\linewidth]{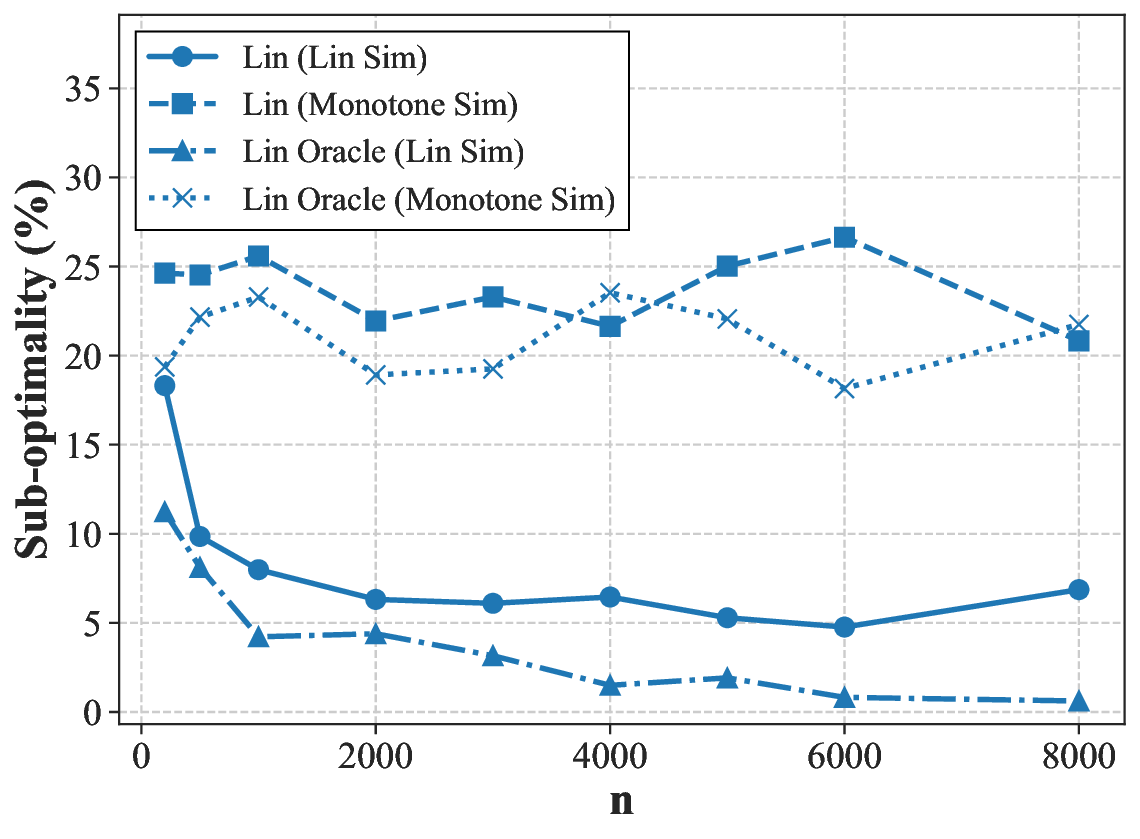}
        \caption{Exp 5, Revenue Suboptimality (Linear Alg).}
        \label{fig:linear_revenue_regret}
    \end{minipage}
    \hfill
    \begin{minipage}{0.48\textwidth}
        \centering
        \includegraphics[width=\linewidth]{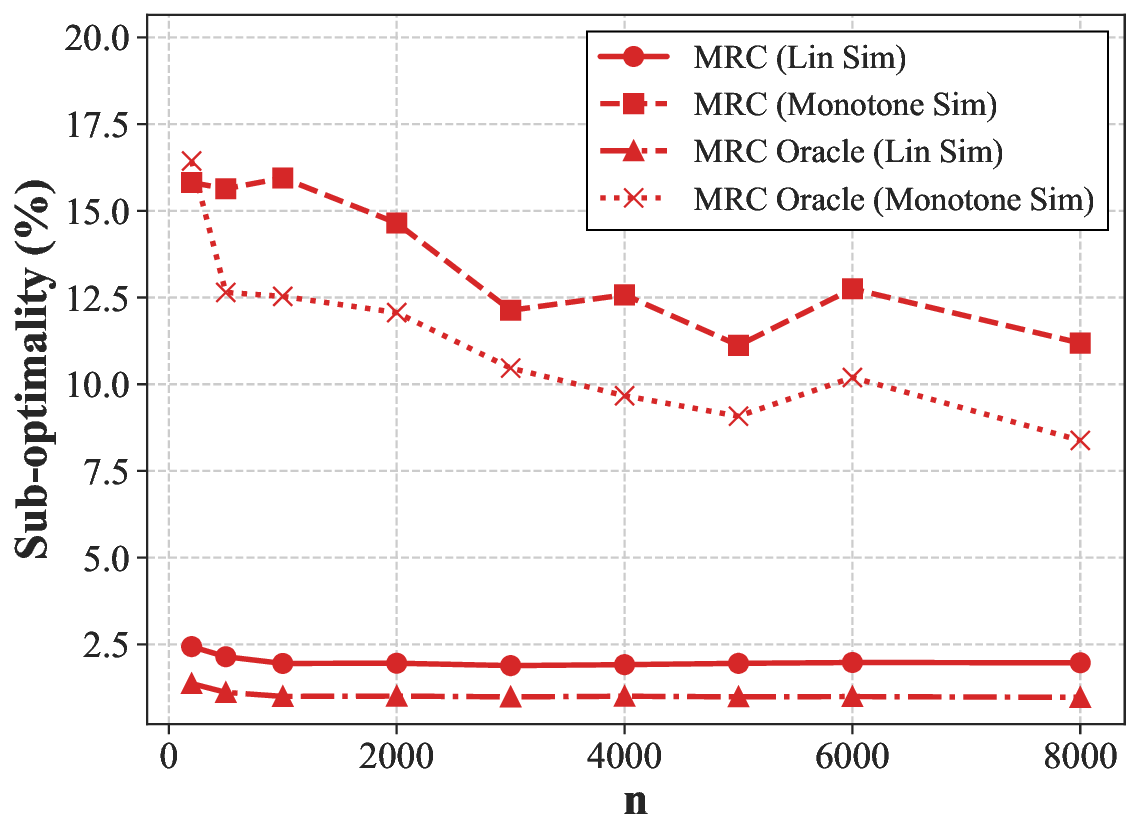}
        \caption{Exp 5, Revenue Suboptimality (MRC Alg).}
        \label{fig:mrc_revenue_regret}
    \end{minipage}
\end{figure}

\paragraph{Exp~6: Multi-Predictor Aggregation.}
Figure~\ref{fig:multi-sim_aggregation} evaluates Algorithm~\ref{alg:multi-mrc-calib} in a challenging environment with five heterogeneous predictors, including one adversarial agent (anti-monotone with high noise). We compare our methods against two baselines:
\begin{itemize}
    \item \textbf{Logit Mean:} A naive approach that computes the simple average of the predicted probabilities from all predictors, then applies Algorithm~\ref{alg:mrc-calib} using the estimated $\hat{s}$.
    \item \textbf{MRC Oracle:} Applies Algorithm~\ref{alg:mrc-calib} on the simple average of probabilities but uses the true inclusive values $s$.
\end{itemize}

\textbf{Results and Analysis:}
The naive Logit Mean performs worst across all sample sizes, confirming that simple averaging is easily corrupted by adversarial inputs. More interestingly, we observe a distinct \textbf{performance crossover} between our methods and the MRC Oracle:
\begin{enumerate}
    \item \textbf{Small Sample Regime ($n < 3000$):} Our aggregation methods (\textbf{Weighted Mean} and \textbf{Median}) significantly outperform the MRC Oracle. This indicates that in small samples, the dominant error source is the high-variance input noise from the adversarial predictor. Algorithms~\ref{alg:multi-mrc-calib}-A/B effectively filter this noise by learning to down-weight or ignore the bad predictor (robustness), whereas the Oracle, despite having perfect utility info, blindly averages the contaminated inputs.
    \item \textbf{Large Sample Regime ($n > 3000$):} The MRC Oracle eventually overtakes our methods. As $n$ grows, the predictor noise is smoothed out, and the error floor is determined by the utility estimation error $\tau$. Since the Oracle has access to the true $s$ ($\tau=0$), it is not bounded by this floor, achieving lower asymptotic error.
\end{enumerate}

Comparing our two strategies, \textbf{Median} (Alg~\ref{alg:multi-mrc-calib}-B) exhibits higher volatility than \textbf{Weighted Mean} (Alg~\ref{alg:multi-mrc-calib}-A) at small $n$. This is because the Median relies on a single order statistic, having lower statistical efficiency and discarding magnitude information. In contrast, Weighted Mean leverages ensemble variance reduction by averaging multiple reliable predictors, leading to faster initial convergence. However, both converge to similar performance at $n=4000$.

\begin{figure}[htbp]
    \centering
    \includegraphics[width=0.6\textwidth]{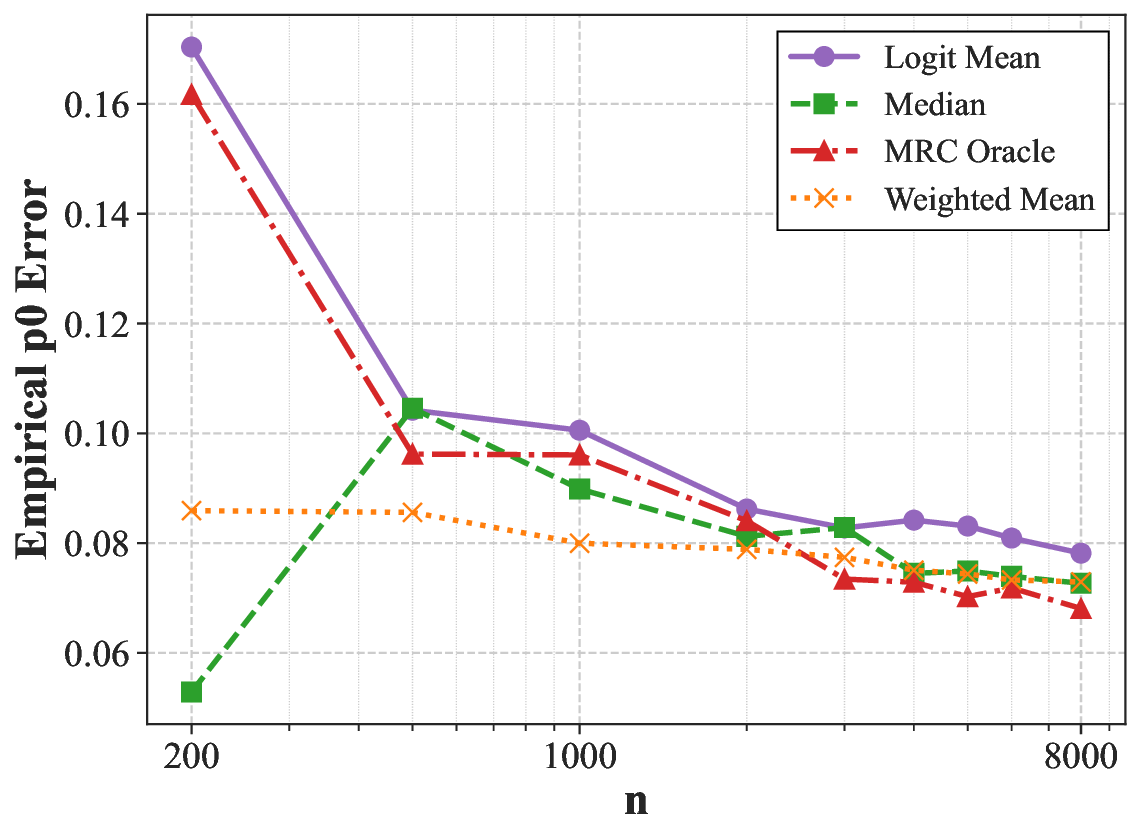}
    \caption{Exp 6, Multi-Predictor Aggregation Performance.}
    \label{fig:multi-sim_aggregation}
\end{figure}
\subsection{Real Data Application}
\label{subsec:real_data_app}

We next evaluate our framework on the \textbf{Expedia Personalized Sort Dataset} \citep{expedia-personalized-sort}. 
Each search session presents a set of hotels (the assortment $\mathcal{S}$) and results in either a booking of one hotel (inside option) or no booking (outside-option). 
The experimental design crosses two choices: (i) the utility model used to learn inside-item utilities (linear vs.\ neural network) and (ii) the calibration method used for the outside-option (Algorithm~\ref{alg:linear-calib} vs.\ Algorithm~\ref{alg:mrc-calib}).

\subsubsection{Data splitting and covariate shift}

We use a strict chronological split to induce realistic covariate shift and to mimic a common ``train-on-full-funnel / deploy-on-transactions'' pipeline:
\begin{itemize}
\item \textbf{Historical set (auxiliary, train):} the first 70\% of searches (Nov 2012--Apr 2013), used only to train a predictor that outputs a no-purchase probability $\tilde p_0(X,\mathcal S)$ from session features and the displayed set.
\item \textbf{Current set (deployment, test):} the remaining 30\% (Apr 2013--Jun 2013). To match our purchase-only setting, we keep \emph{only the booked sessions} from this period for (i) inside-utility learning and (ii) outside-option calibration via Algorithms~\ref{alg:linear-calib}--\ref{alg:mrc-calib}. Sessions with no booking are treated as unobserved during estimation and are used only for out-of-sample evaluation.
\end{itemize}

Although the public dataset records no-booking sessions, the above protocol emulates a practical situation in which detailed session-level logs are available only in an earlier instrumentation window (or via a partner channel) and are used to train a predictive model, while ongoing operation relies on a transaction database where only completed bookings are reliably recorded. 

Table~\ref{tab:data_stats} summarizes the two splits. 
While the no-purchase frequency is similar across periods, covariates shift substantially (e.g., the average listed price increases from \$225.95 to \$279.48), making the historical predictor become systematically miscalibrated on the current period, creating a realistic need for structural calibration. Intuitively, on average, the same input price leads to a lower no-purchase probability in the Historical Set (which trains the predictor) than the Current Set (which provides the real data without no-purchase). 

\begin{table}[htbp] 
\centering 
\caption{Summary Statistics of the Expedia Dataset Splits.} 
\label{tab:data_stats} 
\resizebox{0.9\textwidth}{!}{%
\begin{tabular}{lrr} 
\toprule 
\textbf{Statistic} & \textbf{Historical Set (Train)} & \textbf{Current Set (Test)} \\ 
\midrule 
Time Period & 2012-11-01 -- 2013-04-29 & 2013-04-29 -- 2013-06-30 \\ 
Duration & 179 Days & 62 Days \\ 
Total Impressions (Rows) & 6,984,474 & 2,933,056 \\ 
Unique Search Sessions & 279,540 & 119,804 \\ 
Unique Properties & 130,140 & 118,724 \\ 
\midrule 
\textit{Assortment Characteristics} & & \\ 
Avg. Assortment Size ($|\mathcal{S}_k|$) & 24.99 (Std: 9.08) & 24.48 (Std: 9.18) \\ 
Max Assortment Size & 36 & 38 \\ 
Avg. Price (USD) & \$225.95 & \$279.48 \\ 
\midrule 
\textit{Choice Outcomes} & & \\ 
Total Bookings (Inside Option) & 192,764 (68.96\%) & 83,829 (69.97\%) \\ 
No-Purchase Sessions (Outside Option) & 86,776 (31.04\%) & 35,975 (30.03\%) \\ 
\bottomrule 
\end{tabular}%
} 
\end{table}

\subsubsection{Features and models}

We partition covariates into (i) \emph{context} features $z(X)$ that drive the outside utility and (ii) \emph{item} features used to learn inside utilities $u_i(X)$. 
To avoid leakage, hotel-level historical aggregates are computed using only the Historical set and then merged into the Current set. 
Full preprocessing and feature engineering details are provided in Appendix~\ref{subsec:appendix_real_exp}.

For the biased predictor we train a CatBoost classifier \citep{prokhorenkova2018catboost} on the Historical set to predict item-level booking probabilities and then aggregate them within a session to obtain $\tilde p_0(X,\mathcal{S})$ (Appendix~\ref{subsubsec:appendix_real_workflow}). 
On the Current set we learn inside utilities $u_i(X)$ using only sessions with an observed booking (conditional MNL), and compute the inclusive value $\hat s(X,\mathcal{S})$. 
To match the purchase-only observation regime in our theory, we estimate the outside-utility parameter $\gamma$ using \emph{only the booked sessions} in the Current set when running Algorithm~\ref{alg:linear-calib} or Algorithm~\ref{alg:mrc-calib}. 
After obtaining $\hat\gamma$, we form predictions for \emph{all} Current-period sessions (booked and unbooked) via
\[
\hat p_0(X,\mathcal{S}) \;=\; \Logistic\!\big(\hat\gamma^\top z(X)-\hat s(X,\mathcal{S})\big).
\]
Hyperparameters for CatBoost and the utility learners are listed in Appendix~\ref{subsec:appendix_real_hyperparams}.

\subsubsection{Evaluation and baselines}

Calibration is performed \emph{without} using the current-period outside-option labels; those labels are used only for evaluation. 
We report Negative Log-Likelihood (NLL) and Expected Calibration Error (ECE) and also inspect reliability diagrams (definitions in Appendix~\ref{subsubsec:appendix_real_workflow}).

We compare five methods:
\begin{itemize}
\item \textbf{Predictor (baseline):} raw historical predictor predictions aggregated within-session;
\item \textbf{Linear (Lin-Util)} and \textbf{Linear (NN-Util):} Algorithm~\ref{alg:linear-calib} paired with a linear or neural inside-utility model;
\item \textbf{MRC (Lin-Util)} and \textbf{MRC (NN-Util):} Algorithm~\ref{alg:mrc-calib} paired with a linear or neural inside-utility model.
\end{itemize}
See Appendix~\ref{subsubsec:appendix_real_methods} for precise definitions.

\subsubsection{Results}
\label{subsubsec:real_results}

Figures~\ref{fig:real_nll}--\ref{fig:real_calibration} summarize performance on the Current set.
The uncalibrated predictor is highly miscalibrated under covariate shift (NLL $=1.7278$, ECE $=0.2730$). 
Linear calibration improves both metrics (NLL $=0.8330/0.8139$ for Lin-Util/NN-Util; ECE $\approx 0.08$), but leaves visible systematic deviations in the reliability diagram, consistent with non-linear bias that cannot be removed by an affine map.
MRC calibration provides a substantial additional improvement (NLL $=0.3850/0.3347$ and ECE $=0.0676/0.0464$ for Lin-Util/NN-Util), and the reliability curves in Figure \ref{fig:real_calibration} become close to the diagonal (perfect no-purchase prediction). 
Finally, within each calibrator, using a more accurate inside-utility model (NN-Util) consistently improves performance, in line with the synthetic findings that errors in $\hat s$ propagate into outside-option calibration.

\begin{figure}[htbp]
    \centering
    \begin{minipage}{0.48\textwidth}
        \centering
        \includegraphics[width=\linewidth]{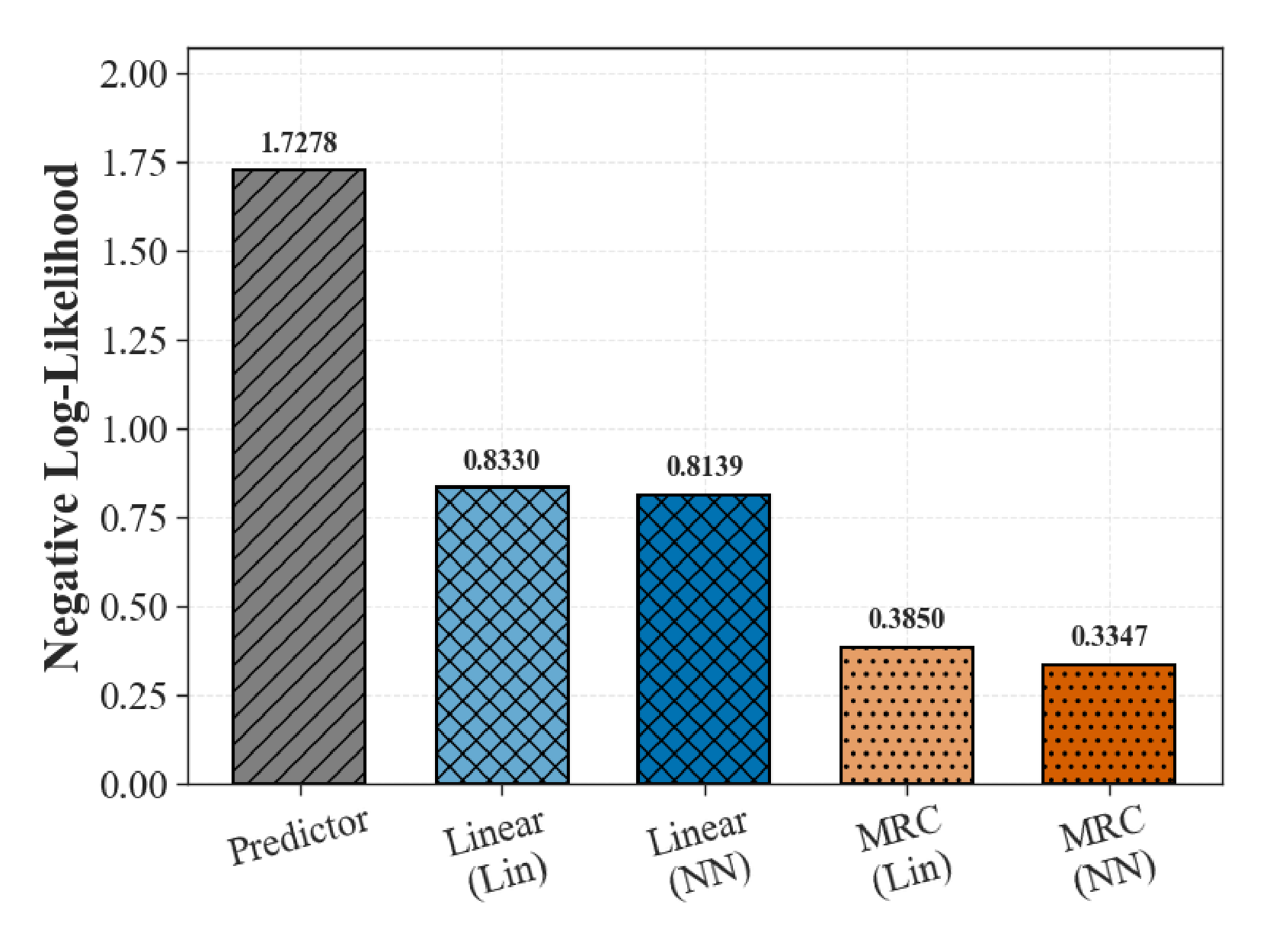}
        \caption{NLL Comparison (lower is better).}
        \label{fig:real_nll}
    \end{minipage}
    \hfill
    \begin{minipage}{0.48\textwidth}
        \centering
        \includegraphics[width=\linewidth]{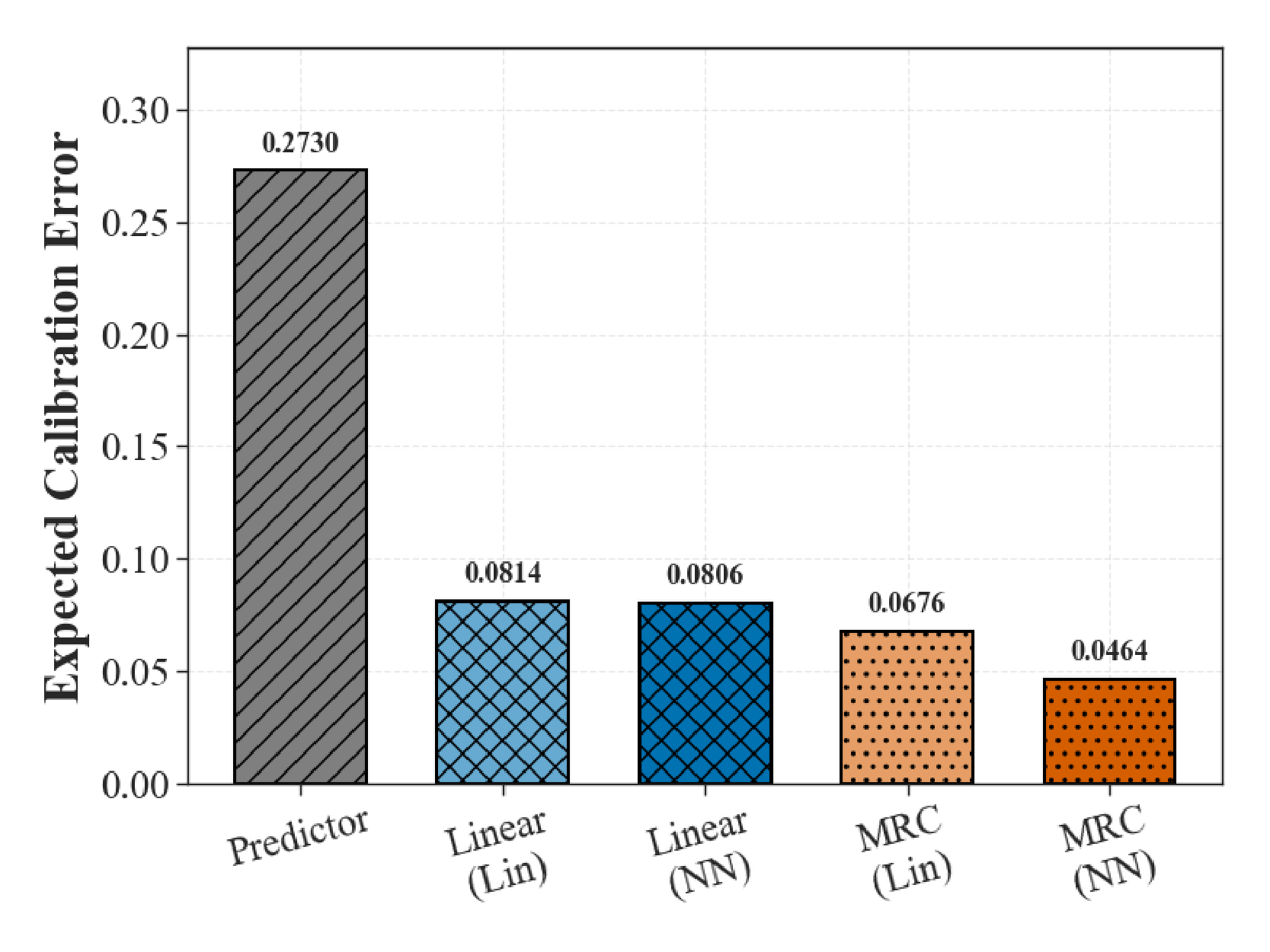}
        \caption{ECE Comparison (lower is better).}
        \label{fig:real_ece}
    \end{minipage}
\end{figure}

\begin{figure}[htbp]
    \centering
    \includegraphics[width=0.55\textwidth]{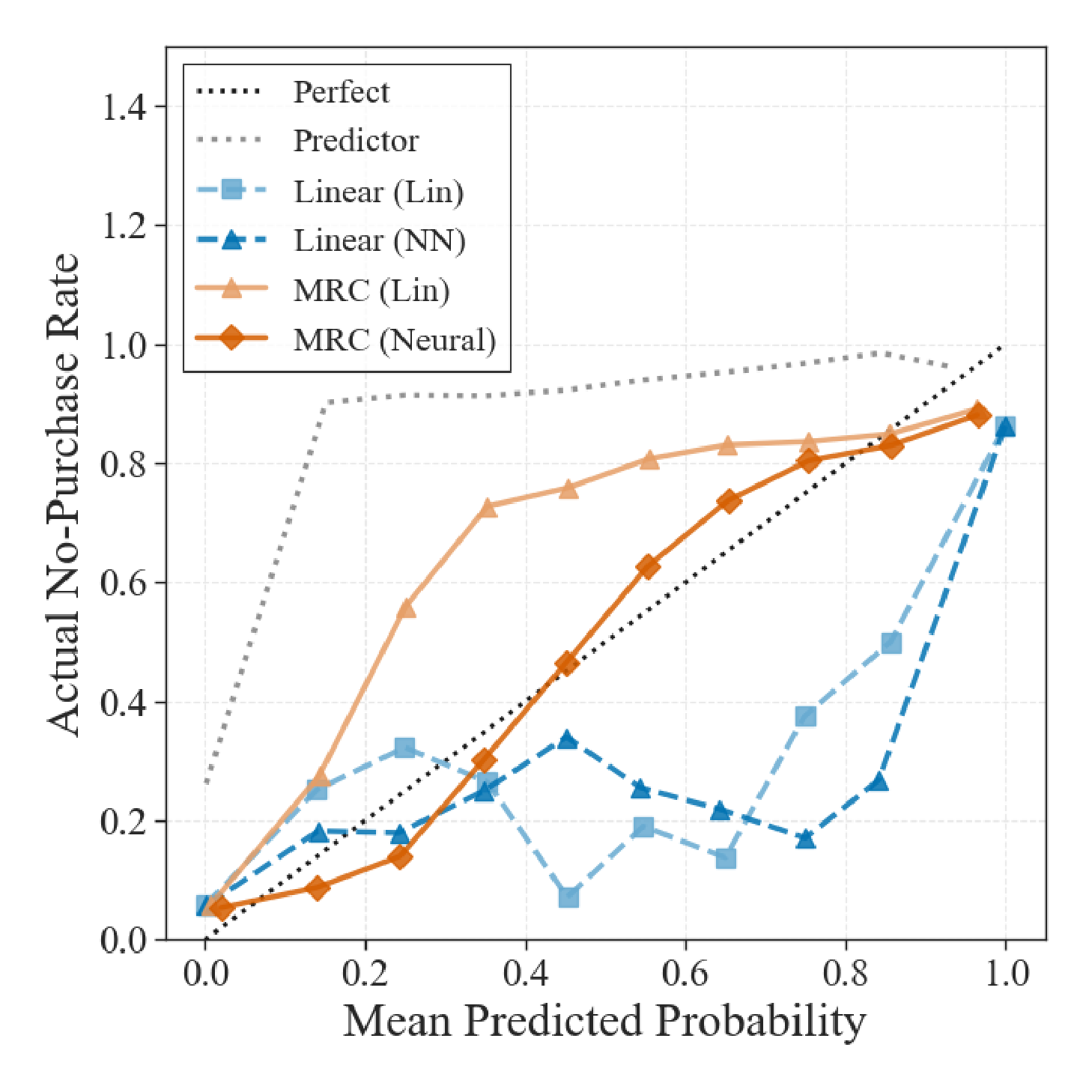}
    \caption{Reliability diagrams for no-purchase prediction on the Current set. The diagonal dashed line indicates perfect calibration; MRC (especially with NN utilities) is closest to the diagonal.}
    \label{fig:real_calibration}
\end{figure}

\section{Conclusion}
\label{sec:conclusion}

This paper studies a longstanding identification problem in choice modeling: firms often observe only realized purchases, while the ``no-purchase'' (outside-option) outcome is unrecorded, confounding market size with preference strength and obstructing demand estimation. We show that recent advances in simulation and generative prediction can be leveraged to resolve this challenge even when the available predictor for no-purchase behavior is biased or miscalibrated. The key is to use the predictor not as ground truth, but as an auxiliary signal that can be structurally corrected.

Our approach rests on a simple but powerful identity implied by general logit-style choice models: the outside-option logit equals the outside utility index minus the inclusive value of the offered items. This decomposition reveals that many common predictor distortions can be undone using only purchase data, provided we can learn the within-assortment utilities of observed items and query the predictor on the same contexts and assortments. Building on this insight, we develop two calibration procedures. In the linearly biased case, a regression-based calibration recovers the true outside-option parameters exactly (in population) and consistently (in sample) without observing any outside-option realizations. In the more general case of nearly monotone predictors, we propose a maximum-rank-correlation calibration that is invariant to unknown nonlinear transformations of the predictor output and remains valid as long as the predictor preserves the ordering of outside-option logits for most samples. For both methods, we provide finite-sample error guarantees that cleanly separate the effect of predictor quality from the effect of first-stage utility learning error, and we show how calibration accuracy translates into downstream decision quality through assortment-optimization suboptimality bounds. We also extend the framework to settings with multiple predictors and discuss robust, scale-free aggregation strategies that can mitigate idiosyncratic predictor noise and guard against adversarial or anti-monotone sources.

Our numerical results support the theory. In synthetic experiments, the proposed estimators exhibit the predicted convergence behavior under their respective bias regimes and remain robust to utility-learning error and predictor noise. In a real-data application using Expedia search logs with a deliberately biased historical predictor, structural calibration substantially improves probabilistic fit and calibration relative to using the predictor outputs directly, and the rank-based approach is especially effective when predictor bias is nonlinear.

Overall, the central message is that imperfect predictors can still enable \emph{accurate inference of unobserved choice} when used in a structure-aware way: by exploiting model identities and appropriate calibration, we can recover missing outside-option probabilities from purchase-only data, turning biased synthetic predictions into reliable demand estimates and decision inputs.

\bibliographystyle{plainnat}
\bibliography{references}

@article{manski1977structure,
  title={The structure of random utility models},
  author={Manski, Charles F},
  journal={Theory and decision},
  volume={8},
  number={3},
  pages={229},
  year={1977},
  publisher={Kluwer Academic Publishers}
}

@article{wang2023transformer,
  title={Transformer choice net: A transformer neural network for choice prediction},
  author={Wang, Hanzhao and Li, Xiaocheng and Talluri, Kalyan},
  journal={arXiv preprint arXiv:2310.08716},
  year={2023}
}

@article{wang2023neural,
  title={A neural network based choice model for assortment optimization},
  author={Wang, Hanzhao and Cai, Zhongze and Li, Xiaocheng and Talluri, Kalyan},
  journal={arXiv preprint arXiv:2308.05617},
  year={2023}
}

@article{li2025small,
  title={From Small to Large: A Graph Convolutional Network Approach for Solving Assortment Optimization Problems},
  author={Li, Guokai and Gao, Pin and Jasin, Stefanus and Wang, Zizhuo},
  journal={arXiv preprint arXiv:2507.10834},
  year={2025}
}

@inproceedings{
zhang2025deep,
title={Deep Context-Dependent Choice Model},
author={Shuhan Zhang and Zhi Wang and Rui Gao and Shuang Li},
booktitle={2nd Workshop on Models of Human Feedback for AI Alignment},
year={2025},
url={https://openreview.net/forum?id=bXTBtUjb0c}
}

@article{yang2025reproducing,
  title={Reproducing Kernel Hilbert Space Choice Model},
  author={Yang, Yiqi and Wang, Zhi and Gao, Rui and Li, Shuang},
  journal={Available at SSRN 5267975},
  year={2025}
}

@inproceedings{tomlinson2021learning,
  title={Learning interpretable feature context effects in discrete choice},
  author={Tomlinson, Kiran and Benson, Austin R},
  booktitle={Proceedings of the 27th ACM SIGKDD conference on knowledge discovery \& data mining},
  pages={1582--1592},
  year={2021}
}

@book{wainwright2019high,
  title={High-dimensional statistics: A non-asymptotic viewpoint},
  author={Wainwright, Martin J},
  volume={48},
  year={2019},
  publisher={Cambridge university press}
}

@book{vershynin2018,
  title={High-dimensional probability: An introduction with applications in data science},
  author={Vershynin, Roman},
  volume={47},
  year={2018},
  publisher={Cambridge university press}
}

@article{keskin2014dynamic,
  title={Dynamic pricing with an unknown demand model: Asymptotically optimal semi-myopic policies},
  author={Keskin, N Bora and Zeevi, Assaf},
  journal={Operations research},
  volume={62},
  number={5},
  pages={1142--1167},
  year={2014},
  publisher={INFORMS}
}

@article{ban2021personalized,
  title={Personalized dynamic pricing with machine learning: High-dimensional features and heterogeneous elasticity},
  author={Ban, Gah-Yi and Keskin, N Bora},
  journal={Management Science},
  volume={67},
  number={9},
  pages={5549--5568},
  year={2021},
  publisher={INFORMS}
}

@inproceedings{guo2017calibration,
  author    = {Guo, Chuan and Pleiss, Geoff and Sun, Yu and Weinberger, Kilian Q.},
  title     = {On calibration of modern neural networks},
  booktitle = {Proceedings of the 34th International Conference on Machine Learning},
  pages     = {1321--1330},
  year      = {2017},
  publisher = {PMLR}
}

@article{abdallah2021demand,
  title     = {Demand Estimation under the Multinomial Logit Model from Sales Transaction Data},
  author    = {Abdallah, Tarek and Vulcano, Gustavo},
  journal   = {Manufacturing \& Service Operations Management},
  year      = {2021},
  volume    = {23},
  number    = {5},
  pages     = {1196--1216},
  doi       = {10.1287/msom.2020.0925}
}

@article{Mackey_2014,
   title={Matrix concentration inequalities via the method of exchangeable pairs},
   volume={42},
   ISSN={0091-1798},
   url={http://dx.doi.org/10.1214/13-AOP892},
   DOI={10.1214/13-aop892},
   number={3},
   journal={The Annals of Probability},
   publisher={Institute of Mathematical Statistics},
   author={Mackey, Lester and Jordan, Michael I. and Chen, Richard Y. and Farrell, Brendan and Tropp, Joel A.},
   year={2014},
   month=may }

@article{han1987mrc,
  title={Non-parametric analysis of a generalized regression model: the maximum rank correlation estimator},
  author={Han, Aaron K},
  journal={Journal of Econometrics},
  volume={35},
  number={2-3},
  pages={303--316},
  year={1987},
  publisher={Elsevier}
}

@article{sherman1993mrc,
  title={The limiting distribution of the maximum rank correlation estimator},
  author={Sherman, Robert P},
  journal={Econometrica: Journal of the Econometric Society},
  pages={123--137},
  year={1993},
  publisher={JSTOR}
}

@inproceedings{park2023generative,
  title={Generative agents: Interactive simulacra of human behavior},
  author={Park, Joon Sung and O'Brien, Joseph and Cai, Carrie Jun and Morris, Meredith Ringel and Liang, Percy and Bernstein, Michael S},
  booktitle={Proceedings of the 36th annual acm symposium on user interface software and technology},
  pages={1--22},
  year={2023}
}

@article{lovasz2007geometry,
  title={The geometry of logconcave functions and sampling algorithms},
  author={Lov{\'a}sz, L{\'a}szl{\'o} and Vempala, Santosh},
  journal={Random Structures \& Algorithms},
  volume={30},
  number={3},
  pages={307--358},
  year={2007},
  publisher={Wiley Online Library}
}

@article{wang2007imo,
  author       = {Hao Wang},
  title        = {A note on iterative marginal optimization: a simple algorithm for maximum rank correlation estimation},
  journal      = {Computational Statistics and Data Analysis},
  year         = {2007},
  volume       = {51},
  number       = {5},
  pages        = {2803--2812}
}

@article{shin2021exact,
  author       = {Yongjin Shin and Valentin Todorov},
  title        = {Exact computation of maximum rank correlation estimator},
  journal      = {The Econometrics Journal},
  year         = {2021},
  volume       = {24},
  number       = {3},
  pages        = {589--607},
}

@article{shen2023lmrc,
  author       = {Guang Shen and Ke Chen and Jian Huang and Yao Lin},
  title        = {Linearized maximum rank correlation estimation},
  journal      = {Biometrika},
  year         = {2023},
  volume       = {110},
  number       = {1},
  pages        = {187--203}
}

@article{talluri_vr_2004_ms,
  title={Revenue management under a general discrete choice model of consumer behavior},
  author={Talluri, Kalyan and Van Ryzin, Garrett},
  journal={Management Science},
  volume={50},
  number={1},
  pages={15--33},
  year={2004},
  publisher={INFORMS}
}

@article{vulcano_vr_chaar_2010_ms,
  title={Om practice—choice-based revenue management: An empirical study of estimation and optimization},
  author={Vulcano, Gustavo and Van Ryzin, Garrett and Chaar, Wassim},
  journal={Manufacturing \& Service Operations Management},
  volume={12},
  number={3},
  pages={371--392},
  year={2010},
  publisher={INFORMS}
}

@article{newman_ferguson_garrow_jacobs_2014_msom,
  title={Estimation of choice-based models using sales data from a single firm},
  author={Newman, Jeffrey P and Ferguson, Mark E and Garrow, Laurie A and Jacobs, Timothy L},
  journal={Manufacturing \& Service Operations Management},
  volume={16},
  number={2},
  pages={184--197},
  year={2014},
  publisher={INFORMS}
}

@techreport{talluri_2009_upf,
  title={A finite-population revenue management model and a risk-ratio procedure for the joint estimation of population size and parameters},
  author={Talluri, Kalyan},
  journal={Available at SSRN 1374853},
  year={2009}
}

@article{vanryzin_vulcano_2015_mnsc,
  author       = {Garrett van Ryzin and Gustavo Vulcano},
  title        = {A Market Discovery Algorithm to Estimate a General Class of Nonparametric Choice Models},
  journal      = {Management Science},
  year         = {2015},
  volume       = {61},
  number       = {2},
  pages        = {281--300},
  doi          = {10.1287/mnsc.2014.2040}
}

@article{subramanian_harsha_2021_mnsc,
  title={Demand modeling in the presence of unobserved lost sales},
  author={Subramanian, Shivaram and Harsha, Pavithra},
  journal={Management Science},
  volume={67},
  number={6},
  pages={3803--3833},
  year={2021},
  publisher={INFORMS}
}

@article{li_talluri_tekin_2025_msom,
  author       = {Anran Li and Kalyan Talluri and M{\"u}ge Tekin},
  title        = {Estimating Demand with Unobserved No-Purchases on Revenue-Managed Data},
  journal      = {Manufacturing \& Service Operations Management},
  year         = {2025},
  volume       = {27},
  number       = {1},
  pages        = {161--180},
  doi          = {10.1287/msom.2021.0291}
}

@article{talluri_tekin_2025_joom,
  author       = {Kalyan Talluri and M{\"u}ge Tekin},
  title        = {Estimation Using Marginal Competitor Sales Information},
  journal      = {Journal of Operations Management},
  year         = {2025},
  note         = {Early View},
  doi          = {10.1002/joom.1359}
}

@article{buggineni2024enhancing,
  title={Enhancing manufacturing operations with synthetic data: a systematic framework for data generation, accuracy, and utility},
  author={Buggineni, Vishnupriya and Chen, Cheng and Camelio, Jaime},
  journal={Frontiers in Manufacturing Technology},
  volume={4},
  pages={1320166},
  year={2024},
  publisher={Frontiers Media SA}
}

@article{chan2022generation,
  title={Generation of synthetic manufacturing datasets for machine learning using discrete-event simulation},
  author={Chan, KC and Rabaev, Marsel and Pratama, Handy},
  journal={Production \& Manufacturing Research},
  volume={10},
  number={1},
  pages={337--353},
  year={2022},
  publisher={Taylor \& Francis}
}

@article{lopes2024digitaltwins,
  title={Synthetic data generation for digital twins: enabling production systems analysis in the absence of data},
  author={Lopes, Paulo Victor and Silveira, Leonardo and Guimaraes Aquino, Roberto Douglas and Ribeiro, Carlos Henrique and Skoogh, Anders and Verri, Filipe Alves Neto},
  journal={International Journal of Computer Integrated Manufacturing},
  volume={37},
  number={10-11},
  pages={1252--1269},
  year={2024},
  publisher={Taylor \& Francis}
}

@misc{pendyala2024semiconductor,
  title={A Benchmark Time Series Dataset for Semiconductor Fabrication Manufacturing Constructed using Component-based Discrete-Event Simulation Models},
  author={Pendyala, Vamsi Krishna and Sarjoughian, Hessam S and Potineni, Bala and Yellig, Edward J},
  journal={arXiv preprint arXiv:2408.09307},
  year={2024}
}

@article{long2025ijpr,
  title={Leveraging synthetic data to tackle machine learning challenges in supply chains: challenges, methods, applications, and research opportunities},
  author={Long, Yunbo and Kroeger, Sebastian and Zaeh, Michael F and Brintrup, Alexandra},
  journal={International Journal of Production Research},
  pages={1--22},
  year={2025},
  publisher={Taylor \& Francis}
}

@article{chatterjee2025ev,
  title={GAN-based synthetic time-series data generation for improving prediction of demand for electric vehicles},
  author={Chatterjee, Subhajit and Hazra, Debapriya and Byun, Yung-Cheol},
  journal={Expert Systems with Applications},
  volume={264},
  pages={125838},
  year={2025},
  publisher={Elsevier}
}

@inproceedings{xu2019ctgan,
    author = {Xu, Lei and Skoularidou, Maria and Cuesta{-}Infante, Alfredo and Veeramachaneni, Kalyan},
  title = {Modeling Tabular data using Conditional {GAN}},
  journal = {CoRR},
  volume = {abs/1907.00503},
  year = {2019},
  url = {http://arxiv.org/abs/1907.00503},
  archiveprefix = {arXiv},
  eprint = {1907.00503},
  bibsource = {dblp computer science bibliography, https://dblp.org}
}

@inproceedings{yoon2019timegan,
  title={Time-series generative adversarial networks},
  author={Yoon, Jinsung and Jarrett, Daniel and Van der Schaar, Mihaela},
  journal={Advances in neural information processing systems},
  volume={32},
  year={2019}
}

@misc{esteban2017rgan,
  title={Real-valued (medical) time series generation with recurrent conditional gans},
  author={Esteban, Crist{\'o}bal and Hyland, Stephanie L and R{\"a}tsch, Gunnar},
  journal={arXiv preprint arXiv:1706.02633},
  year={2017}
}

@inproceedings{patki2016sdv,
  title={The synthetic data vault},
  author={Patki, Neha and Wedge, Roy and Veeramachaneni, Kalyan},
  booktitle={2016 IEEE international conference on data science and advanced analytics (DSAA)},
  pages={399--410},
  year={2016},
  organization={IEEE}
}

@inproceedings{giomi2023anonymeter,
  title={A unified framework for quantifying privacy risk in synthetic data},
  author={Giomi, Matteo and Boenisch, Franziska and Wehmeyer, Christoph and Tasn{\'a}di, Borb{\'a}la},
  journal={arXiv preprint arXiv:2211.10459},
  year={2022}
}

@article{gallegos2024survey,
  author  = {Isabel O. Gallegos and Ryan A. Rossi and Joe Barrow and et al.},
  title   = {Bias and Fairness in Large Language Models: A Survey},
  journal = {Computational Linguistics},
  volume  = {50},
  number  = {3},
  pages   = {1097--1162},
  year    = {2024},
  url     = {https://aclanthology.org/2024.cl-3.8/}
}

@article{chen2025_msom_bias,
  title={A manager and an AI walk into a bar: does ChatGPT make biased decisions like we do?},
  author={Chen, Yang and Kirshner, Samuel N and Ovchinnikov, Anton and Andiappan, Meena and Jenkin, Tracy},
  journal={Manufacturing \& Service Operations Management},
  volume={27},
  number={2},
  pages={354--368},
  year={2025},
  publisher={INFORMS}
}

@article{hanson1971bound,
  title={A bound on tail probabilities for quadratic forms in independent random variables},
  author={Hanson, David Lee and Wright, Farroll Tim},
  journal={The Annals of Mathematical Statistics},
  volume={42},
  number={3},
  pages={1079--1083},
  year={1971},
  publisher={JSTOR}
}

@article{hoeffding1963probability,
  title={Probability inequalities for sums of bounded random variables},
  author={Hoeffding, Wassily},
  journal={Journal of the American statistical association},
  volume={58},
  number={301},
  pages={13--30},
  year={1963},
  publisher={Taylor \& Francis}
}

@misc{expedia-personalized-sort,
  title        = {Expedia Personalized Sort},
  author       = {Expedia / Kaggle},
  year         = {2013},
  howpublished = {\url{https://www.kaggle.com/competitions/expedia-personalized-sort/data}},
  note         = {Accessed: 2025-11-20}
}

@article{prokhorenkova2018catboost,
  title={CatBoost: unbiased boosting with categorical features},
  author={Prokhorenkova, Liudmila and Gusev, Gleb and Vorobev, Aleksandr and Dorogush, Anna Veronika and Gulin, Andrey},
  journal={Advances in neural information processing systems},
  volume={31},
  year={2018}
}

@article{kingma2014adam,
  title={Adam: A method for stochastic optimization},
  author={Kingma, Diederik P},
  journal={arXiv preprint arXiv:1412.6980},
  year={2014}
}

@article{liu1989limited,
  title={On the limited memory BFGS method for large scale optimization},
  author={Liu, Dong C and Nocedal, Jorge},
  journal={Mathematical programming},
  volume={45},
  number={1},
  pages={503--528},
  year={1989},
  publisher={Springer}
}

\appendix

\section{More Discussions}
\subsection{Discussion for Assumption~\ref{as:design-grouped}}
\label{appx:assp_discuss}
These conditions hold under broad sampling schemes for $(X,\mathcal{S})$:

\begin{itemize}[leftmargin=18pt,itemsep=2pt,topsep=2pt]
\item[(i)] \emph{Non-degenerate design} guarantees a nontrivial fraction of pairwise differences must be \emph{bounded away from zero} in every direction. It is implied by many common designs, including sub-Gaussian, log-concave, or bounded-density designs through the small-ball probability (e.g., see the small-ball method and related results in \citet{vershynin2018}). In practice, diverse assortment generation induces spread in both the $s$ and $z(X)$, further supporting this condition.

\item[(ii)] The \emph{lower-tail} bound controls how often pairs are near ties \emph{along the true direction}. It ensures that there is enough probability mass in a shrinking neighborhood of the separating hyperplane $\{w:\theta^{*\top}w=0\}$ to deliver curvature of the (population) rank-correlation risk around $\theta^*$; otherwise the criterion can be locally flat and identification becomes weak. A primitive condition that implies this is that the one-dimensional projection $V:=\theta^{*\top}D^*$ has a density $f_V$ that is continuous at $0$ with $f_V(0)\ge c_0>0$. Then there exists $t_0>0$ such that for all $t\in(0,t_0]$,
\[
\Pr\big(|\theta^{*\top}D^*|\le t\mid \mathcal O\big)
=\Pr(|V|\le t\mid \mathcal O)
\ \ge\ \int_{-t}^t \tfrac{c_0}{2}\,du \ =\ c_0\,t,
\]
so Assumption~\ref{as:design-grouped}(ii) holds with $\alpha_{\mathrm{down}}=1$ and $c_{\mathrm{lt}}=c_0$. This type of ``mass near zero'' requirement is standard in analyses of MRC (e.g., \cite{sherman1993mrc}).

\item[(iii)] \emph{Uniform anti-concentration} limits the fraction of pairs that are extremely close along \emph{any} direction. Intuitively, this condition prevents the design from placing too many pairwise differences almost on a single hyperplane.  It holds, for example, when $D^*$ has an absolutely continuous distribution whose one-dimensional marginals $f_{v^\top D^*}$ are uniformly bounded:
\[
\sup_{v\in\mathbb S^{d}}\ \big\|f_{v^\top D^*}\big\|_\infty \ \le\ M \ <\ \infty.
\]
In that case, for all unit $v$ and all $t\in(0,1]$,
\[
\Pr\big(|v^\top D^*|\le t\mid \mathcal O\big)
\ \le\ \int_{-t}^{t} M\,du \ =\ 2Mt,
\]
so Assumption~\ref{as:design-grouped}(iii) holds with $\alpha_{\mathrm{up}}=1$ and $L_{\mathrm{ac}}=2M$. This condition is also satisfied by several common distribution classes:
\begin{enumerate}[leftmargin=14pt,itemsep=1pt]
\item \textbf{Gaussian (or sub-Gaussian) designs:} If $D^*\sim\mathcal N(0,\Sigma)$ with $\lambda_{\min}(\Sigma)\ge \sigma_0^2>0$, then $v^\top D^*\sim\mathcal N(0, v^\top\Sigma v)$ has density bounded by $(\sqrt{2\pi}\,\sigma_0)^{-1}$ for all unit $v$, hence $L_{\mathrm{ac}}=\sqrt{\tfrac{2}{\pi}}\,\sigma_0^{-1}$ and $\alpha_{\mathrm{up}}=1$. The same linear anti-concentration (with different constants) extends to general sub-Gaussian designs with non-degenerate covariance \citep{vershynin2018}.
\item \textbf{Log-concave designs:} If $D^*$ has a log-concave density and $\lambda_{\min}(\Var(D^*))\ge \sigma_0^2>0$, then one-dimensional marginals are log-concave with uniformly bounded densities (after isotropic normalization), yielding the same linear bound with constants depending only on $\sigma_0$ \citep{lovasz2007geometry}.
\end{enumerate}

\item[(iv)] \emph{Association between design and truth.} This condition acts as a crucial link between the global signal from Assumption~\ref{as:design-grouped} (i) and the local curvature from Assumption~\ref{as:design-grouped} (ii). Intuitively, it prevents a pathological scenario where the set of ``near-tie'' pairs (event $B(t)$), which are critical for local identification of $\theta^*$, and the set of ``well-spread'' pairs (event $A(v)$), which are essential for global identifiability, are disjoint.

Notice that, by the symmetry of $D^*$, these two one-sided events $A'(v)=\{v^{\top}D^*\ge\tau_{nd}\}$ and $B'(t)=\{t\le\theta^{*\top}D^*<0\}$, which also satisfy the lower-bound property: 
$$\text{Pr}\big(A'(v)\cap B'(t)\mid \mathcal{O} \big)\ge c_{assoc}\cdot \text{Pr}\big(A'(v)\mid\mathcal{O}\big)\cdot \text{Pr}\big(B'(t)\mid\mathcal{O}\big)$$. 

Also notice that, we cannot induce that two-sided events $A(v)\cup A'(v)$ and $B(t)\cup B'(t)$ directly from Assumption ~\ref{as:design-grouped} (iv).

This assumption provides the essential link between the \textbf{global repulsive force} established by Assumption~\ref{as:design-grouped} (i) and the \textbf{local attractive force} from Assumption~\ref{as:design-grouped} (ii) on both $\{A(v),B(t)\}$ and $\{A'(v),B'(t)\}$ separately. Assumption~\ref{as:design-grouped} (ii) ensures that a pool of ``near-tie'' pairs exists to create a sharp optimum at $\theta^*$ (attraction). Assumption~\ref{as:design-grouped} (i) ensures that for any wrong direction $v$, a large set of ``well-spread'' pairs generates a steep loss landscape (repulsion). Assumption~\ref{as:design-grouped} (iv) guarantees that these two roles are not performed by mutually exclusive sets of data points. It requires that the critical ``near-tie'' pairs contributing to the local attraction at $\theta^*$ on the one side (positive or negative side) must also, on average, contribute effectively to the repulsive forces in other directions $v$ on the opposite side, rather than being silent or degenerate. This synergy, as formalized in Proposition~\ref{prop:margin}, is what ensures a quantifiable local margin and thus a robust convergence rate.

This type of association is mild and satisfied by many standard continuous distributions. For instance, if $D^*$ follows a non-degenerate multivariate Gaussian distribution $\mathcal{N}(0, \Sigma)$, then for any two unit vectors $v$ and $\theta^*$, the projections $W = v^\top D^*$ and $U = \theta^{*\top} D^*$ are jointly Gaussian. Unless $v$ and $\theta^*$ are perfectly collinear or $\Sigma$ is degenerate, their correlation is strictly less than 1. The joint probability can be analyzed via conditioning:
\[
\Pr\big(A(v) \cap B(t)\big) = \int_{0}^{t} \Pr\big(W \ge \tau_{\mathrm{nd}} \mid U=u\big) f_U(u) du.
\]
Since the conditional variance of $W$ given $U$, the conditional probability $\Pr(W \ge \tau_{\mathrm{nd}} \mid U=u)$ is uniformly bounded away from zero for $u$ in a small interval $[0, t]$, and the density $f_U(u)$ is positive around $u=0$ by a primitive for Assumption~\ref{as:design-grouped} (ii). Consequently, the integral is positive and of the same order as $\Pr(B(t))$. This ensures a valid $c_{\mathrm{assoc}} > 0$. The argument extends to other well-behaved distributions like log-concave densities where strong negative dependence between projections is excluded.

The necessity of this assumption is underscored by considering a counterexample where the data geometry decouples global and local signals. Consider a distribution for $D^*$ composed of two disjoint components in a high-dimensional space. Let the true direction be $\theta=e_1$ and a test direction be $v=e_2$.

The first component satisfies the global repulsion condition $A(v)$: it is a probability mass concentrated far along the negative $e_2$ axis (e.g., around $-Ke_2$ for large $K$), ensuring $v^\top D^*\le -\tau_{nd}$, but positioned far from the decision boundary such that $\theta^{*\top}D^*>t$.

The second component satisfies the local attraction condition $B(t)$: it is a ``flat'' distribution (e.g., a disk) located strictly within the slice $0 < x_1 \le t$, but lying entirely in the subspace orthogonal to $e_2$ (i.e., $x_2=0$). In this scenario, points in the second component satisfy the ``near-tie'' condition $B(t)$, but since $v^{\top}D^*=0>-\tau_{nd}$, they contribute zero probability to $A(v)$. Conversely, points in the first component satisfy $A(v)$ but lie outside $B(t)$. Consequently, the intersection $A(v)\cap B(t)$ is empty, and the left-hand side of the assumption is zero, while the right-hand side is positive, violating Assumption~\ref{as:design-grouped} (iv). By simply reversing the signs of the coordinates in this construction (placing mass on the positive $e_2$ axis and in the negative $x_1$ strip), the same pathological decoupling can be demonstrated for the symmetric events $A'(v)$ and $B'(t)$.
\end{itemize}

\subsection{Utility Learning from Observed Choices under a Linear Utility Model}
\label{appx: obs_utility_learn}

This subsection quantifies how utility-learning error contributes to the calibration error used in Theorems~\ref{thm:finite-sample} and~\ref{thm:assortment-regret}. Recall the quantity
\[
\bar{\tau} \;=\; \frac{1}{n}\sum_{k=1}^n \big(\hat s_k - s_k\big)^2,
\qquad
s_k \;=\; s(X_k,\mathcal{S}_k)\;=\;\log\!\sum_{i\in\mathcal{S}_k}\exp\!\big(u_i(X_k)\big),
\]
where $s_k$ is computed from the (unknown) true utilities $\{u_i(\cdot)\}$ and $\hat s_k$ from learned utilities $\{\hat u_i(\cdot)\}$. We study the case in which item utilities are linear in features and are learned from observed choices via conditional MNL.

We remark that the estimation rate derived in this subsection uses standard M-estimation arguments
(covering/union bounds, matrix concentration, and local strong convexity); see, e.g.,
\citet{wainwright2019high,vershynin2018} for textbook treatments of these tools. We include the derivation here only to connect purchase-only utility learning to the inclusive-value error
$\bar\tau=\frac1n\sum_{k=1}^n(\hat s_k-s_k)^2$ that appears in our calibration guarantees.

\paragraph{Setup.}
For each sample $k\in\{1,\dots,n\}$ we observe a choice set $\mathcal{S}_k$, an item-feature collection $X_k=\{x_{k,i}\in\R^p:i\in\mathcal{S}_k\}$, and the chosen item $i_k\in\mathcal{S}_k$. Assume item-level utilities are linear,
\begin{equation}\label{eq:lin-u}
u_i(X_k) \;=\; \beta^{*\top}x_{k,i},\qquad i\in\mathcal{S}_k,
\end{equation}
for an unknown parameter $\beta^*\in\R^p$. Let $\hat\beta$ be the MLE for the conditional MNL likelihood
\[
\cL_n(\beta)\;:=\;-\frac{1}{n}\sum_{k=1}^n\Big\{\beta^\top x_{k,i_k}-\log\!\sum_{j\in\mathcal{S}_k}\exp\big(\beta^\top x_{k,j}\big)\Big\},
\]
and define $\hat u_i(X_k)=\hat\beta^\top x_{k,i}$ and $\hat s_k=\log\!\sum_{i\in\mathcal{S}_k}\exp(\hat u_i(X_k))$.

\paragraph{Main result (bound on $\bar{\tau}$).}
Under standard regularity conditions for MNL (bounded features and set sizes, i.i.d.\ sampling, and positive curvature of the population Fisher information; see the auxiliary subsection below for details and proofs), the inclusive-value error satisfies the following high-probability bound:
\begin{equation}\label{eq:bartau-linear}
\bar{\tau}
\;\le\;
\frac{C\,B_x^4}{\lambda_x^2}\,\frac{p+\log(1/\delta)}{n}
\qquad \text{with probability at least } 1-\delta,
\end{equation}
for a universal constant $C>0$, where $B_x$ bounds feature norms and $\lambda_x$ lower-bounds the population curvature at $\beta^*$. In particular,
\[
\E[\bar{\tau}] \;\lesssim\; \frac{B_x^4}{\lambda_x^2}\,\frac{p}{n}
\quad\text{and}\quad
\bar{\tau}\;=\;O\!\left(\frac{p}{n}\right).
\]
Equation~\eqref{eq:bartau-linear} directly provides the scaling for the $\bar{\tau}$ term that enters Theorems~\ref{thm:finite-sample} and~\ref{thm:assortment-regret}: faster utility learning (larger $n$ or better-conditioned design, i.e., larger $\lambda_x$ and smaller $B_x$) yields sharper calibration. The formal result is below:

\begin{theorem}[Inclusive-value error bound]\label{thm:bartau}
Under Assumptions~\ref{as:design}--\ref{as:sub-gaussian-gradient}, for any $\delta\in(0,1)$, with probability at least $1-\delta$,
\[
\bar{\tau}
\;=\;\frac{1}{n}\sum_{k=1}^n(\hat s_k-s_k)^2
\;\le\;
B_x^2\,\|\hat\beta-\beta^*\|_2^2
\;\le\;
\frac{C\,B_x^4}{\lambda_x^2}\,\frac{p+\log(1/\delta)}{n},
\]
for a universal constant $C>0$.
\end{theorem}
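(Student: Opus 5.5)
The bound has two links, and I will prove them in order. First the deterministic Lipschitz step $\bar\tau\le B_x^2\|\hat\beta-\beta^*\|_2^2$, then a standard M-estimation rate for the conditional MNL maximum-likelihood estimator.

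\textbf{Step 1 (Lipschitz transfer).} For fixed $k$ define $g_k(\beta)=\log\sum_{j\in\mathcal S_k}\exp(\beta^\top x_{k,j})$. Its gradient is
\[
\nabla g_k(\beta)=\sum_{j\in\mathcal S_k} q_j(\beta)\,x_{k,j},
\]
a convex combination of the feature vectors, so $\|\nabla g_k(\beta)\|_2\le B_x$. By the mean value theorem, $|\hat s_k-s_k|=|g_k(\hat\beta)-g_k(\beta^*)|\le B_x\|\hat\beta-\beta^*\|_2$. Squaring and averaging over $k$ gives the first inequality.

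\textbf{Step 2 (parameter rate).} Write $\Delta=\hat\beta-\beta^*$. Since $\hat\beta$ minimizes $\cL_n$,
\[
\cL_n(\hat\beta)-\cL_n(\beta^*)-\nabla\cL_n(\beta^*)^\top\Delta\le -\nabla\cL_n(\beta^*)^\top\Delta\le\|\nabla\cL_n(\beta^*)\|_2\|\Delta\|_2 .
\]
The proof then has three parts.

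\emph{(a) Gradient concentration.} We have $\nabla\cL_n(\beta^*)=-\frac1n\sum_k\big(x_{k,i_k}-\sum_j q_j(\beta^*)x_{k,j}\big)$. This is an average of i.i.d.\ mean-zero vectors (mean zero because it is the conditional-likelihood score), each of norm at most $2B_x$. A vector Hoeffding bound, or a $1/2$-net over the sphere plus a union bound (this is Assumption~\ref{as:sub-gaussian-gradient}), gives
\[
\|\nabla\cL_n(\beta^*)\|_2\lesssim B_x\sqrt{(p+\log(1/\delta))/n}.
\]

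\emph{(b) Local strong convexity.} The Hessian $\nabla^2\cL_n(\beta)=\frac1n\sum_k\mathrm{Cov}_{q(\beta)}(x_{k,\cdot})$ is positive semidefinite, so $\cL_n$ is convex. At $\beta^*$ its population version has $\lambda_{\min}\ge\lambda_x$ (Assumption~\ref{as:design}). Matrix Bernstein, with summands bounded by $B_x^2$, gives $\lambda_{\min}(\nabla^2\cL_n(\beta^*))\ge\lambda_x/2$ once $n\gtrsim (B_x^4/\lambda_x^2)\log(p/\delta)$. The choice probabilities $q_j(\beta)$ change by at most a factor $e^{\pm 2B_x\|\beta-\beta^*\|}$, so on a ball of radius $r\asymp 1/B_x$ the Hessian stays $\succeq \lambda_x/4$ (up to a constant).

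\emph{(c) Localization.} Convexity lets me restrict to that ball. If $\|\Delta\|>r$, consider the point $\bar\beta$ on the segment from $\beta^*$ toward $\hat\beta$ at distance $r$. By convexity $\cL_n(\bar\beta)\le\cL_n(\beta^*)$, and this contradicts the curvature lower bound whenever the gradient bound is small. Inside the ball, combining with the display above,
\[
\tfrac{\lambda_x}{8}\|\Delta\|^2\le\|\nabla\cL_n(\beta^*)\|\,\|\Delta\|,
\]
hence $\|\Delta\|\lesssim (B_x/\lambda_x)\sqrt{(p+\log(1/\delta))/n}$.

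\textbf{Combining and the main obstacle.} Plugging Step 2 into Step 1 gives $\bar\tau\le C B_x^4/\lambda_x^2\cdot(p+\log(1/\delta))/n$. The hardest step is (b)–(c): keeping local strong convexity uniform on a neighborhood while avoiding an exponential blow-up in $B_x$ that would spoil the stated constant. I would take the curvature radius proportional to $1/B_x$, so that exponential factor is $O(1)$. Any extra sample-size requirement, and the dependence on assortment size, would be absorbed into the stated regularity assumptions and into $C$.
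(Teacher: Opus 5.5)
Your proof is correct, and it takes a genuinely different route from the paper's.

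The paper follows a generic M-estimation template:
\begin{itemize}
\item a crude $O((p\log n/n)^{1/4})$ rate, from uniform convergence of $\cL_n$ over $\mathcal{B}$ (Theorem~\ref{thm:crude-beta-error}), localizes $\hat\beta$;
\item the Hessian-Lipschitz bound (Lemma~\ref{lem:hessian-lip}), a covering of that ball and matrix Bernstein give uniform control of $\nabla^2\cL_n$ on it (Lemmas~\ref{lem:local-hessian}--\ref{lem:hessian-min-eig});
\item the first-order condition $\nabla\cL_n(\hat\beta)=0$ with a mean-value expansion gives $\|\hat\beta-\beta^*\|_2\le (2/\lambda_x)\|\nabla\cL_n(\beta^*)\|_2$ (Lemma~\ref{lem:beta-rate}), and a net bound on the score finishes.
\end{itemize}
You instead concentrate the Hessian only at $\beta^*$ and transfer curvature deterministically to a ball of radius $\asymp 1/B_x$ via multiplicative stability of the softmax weights. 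You localize by convexity of $\cL_n$ rather than through a crude rate, and you use only $\cL_n(\hat\beta)\le\cL_n(\beta^*)$.

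The transfer step deserves one explicit line, because the mean $\mu(\beta)$ moves as well. Writing $H_k(\beta)=\mathrm{Cov}_{q(\beta)}(x_{k,\cdot})$, use $v^\top H_k(\beta)v=\min_c\sum_j q_j(\beta)(v^\top x_{k,j}-c)^2\ge e^{-2B_x\|\beta-\beta^*\|_2}\,v^\top H_k(\beta^*)v$.

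Your route buys more than brevity:
\begin{itemize}
\item It needs curvature only at $\beta^*$, which is exactly what Assumption~\ref{as:curv} supplies. The paper's Lemma~\ref{lem:hessian-min-eig} tacitly uses $\lambda_{\min}(H(\beta))\ge\lambda_x$ on the whole ball, and Theorem~\ref{thm:crude-beta-error} uses global strong convexity of the population risk; neither is assumed.
\item It stays valid for the constrained MLE over $\mathcal{B}$ even if $\hat\beta$ lands on the boundary, where $\nabla\cL_n(\hat\beta)=0$ can fail.
\end{itemize}
In exchange, the paper's template uses neither convexity nor the log-sum-exp structure, so it would extend to non-convex losses with Lipschitz Hessians.

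Two corrections.
\begin{itemize}
\item The curvature at $\beta^*$ comes from Assumption~\ref{as:curv}, not Assumption~\ref{as:design}.
\item More importantly, your localization step needs $\|\nabla\cL_n(\beta^*)\|_2<\lambda_x r/8$, i.e.\ $n\gtrsim B_x^4(p+\log(1/\delta))/\lambda_x^2$. This cannot be absorbed into a universal $C$: below the threshold you only know $\|\hat\beta-\beta^*\|_2\le 2R_B$, which forces $C$ to depend on $B_xR_B$. State it as an explicit hypothesis. The paper's proof carries the same unstated requirement through the sample-size condition of Lemma~\ref{lem:local-hessian}.
\end{itemize}
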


\paragraph{Auxiliary assumptions and proofs.}
For convenience, define 
$$l(\beta;(X,Y))=-\beta^\top x_Y + \log\!\Big(\sum_{k\in S}\exp(\beta^\top x_k)\Big)$$ be the loss of a data point.
\begin{assumption}[Bounded design and finite set size]\label{as:design}
There exists $B_x<\infty$ such that $\|x_{k,i}\|_2\le B_x$ for all $k$ and $i\in\mathcal{S}_k$, and $\sup_k|\mathcal{S}_k|=m_{\max}<\infty$ (MNL is normalized in the usual way).
\end{assumption}

\begin{assumption}[Population curvature]\label{as:curv}
Let $H(\beta):=\E[\nabla^2\cL_n(\beta)]$ denote the population Hessian. There exists $\lambda_x>0$ such that $\lambda_{\min}\!\big(H(\beta^*)\big)\ge \lambda_x$.
\end{assumption}

\begin{assumption}[i.i.d.\,and correct specification]\label{as:iid}
The tuples $\{(X_k,\mathcal{S}_k,i_k)\}_{k=1}^n$ are i.i.d., and~\eqref{eq:lin-u} with MNL choice probabilities is correctly specified.
\end{assumption}

\begin{assumption}[Parameter domain]\label{as:domain}
The estimate $\hat\beta$ is computed over a compact, convex set $\mathcal{B}\subset\R^p$ that contains $\beta^*$, such that $\|\beta\|_2 \le R_B$ for all $\beta \in \mathcal{B}$.
\end{assumption}

\begin{assumption}[Sub-Gaussianity of gradient]\label{as:sub-gaussian-gradient}
Let $g(\beta;Z) = \nabla_{\beta}l(\beta;Z)$ be the single-sample gradient of the conditional negative log-likelihood $l(\beta;Z)$. For any unit vector $u\in\R^p$, the random variable $u^\top\big(g(\beta^*;Z) - \mathbb{E}\,g(\beta^*;Z)\big)$ is $\sigma^2$-sub-Gaussian.
\end{assumption}

\begin{lemma}[Softmax Lipschitz property]\label{lem:lipschitz-s}
For any fixed $(X_k,\mathcal{S}_k)$ and any $\beta_1,\beta_2\in\R^p$,
\[
\Big|\,\log\!\sum_{i\in\mathcal{S}_k}\!\exp(\beta_1^\top x_{k,i})
-\log\!\sum_{i\in\mathcal{S}_k}\!\exp(\beta_2^\top x_{k,i})\,\Big|
\,\le\,
B_x\,\|\beta_1-\beta_2\|_2.
\]
\end{lemma}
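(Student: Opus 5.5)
The plan is to view the log-sum-exp map as a smooth function of the utility vector and apply the mean value theorem along the segment joining $\beta_2$ to $\beta_1$. Fix $(X_k,\mathcal{S}_k)$ and define
\[
f(\beta):=\log\!\sum_{i\in\mathcal{S}_k}\exp(\beta^\top x_{k,i}).
\]
This function is differentiable on all of $\R^p$, with gradient
\[
\nabla f(\beta)=\sum_{i\in\mathcal{S}_k}\pi_i(\beta)\,x_{k,i},\qquad
\pi_i(\beta)=\frac{\exp(\beta^\top x_{k,i})}{\sum_{j\in\mathcal{S}_k}\exp(\beta^\top x_{k,j})}.
\]
The weights $\pi_i(\beta)$ are the softmax probabilities, so they are nonnegative and sum to one.

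First I would note that $\nabla f(\beta)$ is a convex combination of the feature vectors $\{x_{k,i}\}_{i\in\mathcal{S}_k}$. Under Assumption~\ref{as:design}, each of these satisfies $\|x_{k,i}\|_2\le B_x$. The triangle inequality then gives $\|\nabla f(\beta)\|_2\le \sum_i \pi_i(\beta)\|x_{k,i}\|_2\le B_x$, and this holds uniformly in $\beta$.

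Next I would set $g(t)=f(\beta_2+t(\beta_1-\beta_2))$ for $t\in[0,1]$. The mean value theorem gives some $\bar t\in(0,1)$ with $f(\beta_1)-f(\beta_2)=\nabla f(\bar\beta)^\top(\beta_1-\beta_2)$, where $\bar\beta=\beta_2+\bar t(\beta_1-\beta_2)$. Cauchy--Schwarz combined with the uniform gradient bound then yields $|f(\beta_1)-f(\beta_2)|\le B_x\|\beta_1-\beta_2\|_2$, which is the claim.

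No step here is a real obstacle. The only point that needs care is to state explicitly that the bound on $\|x_{k,i}\|_2$ comes from Assumption~\ref{as:design}. An alternative route avoids calculus altogether: log-sum-exp is $1$-Lipschitz with respect to the sup-norm on the utility vector, and each utility difference satisfies $|(\beta_1-\beta_2)^\top x_{k,i}|\le B_x\|\beta_1-\beta_2\|_2$, so the same bound follows immediately.
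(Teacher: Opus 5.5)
Your proposal is correct and matches the paper's proof: the paper likewise writes the gradient as the softmax-weighted combination $\sum_{i\in\mathcal{S}_k}\pi_{k,i}(\beta)\,x_{k,i}$, bounds its norm by $B_x$ using Assumption~\ref{as:design}, and then applies the mean value theorem. Your sup-norm alternative is also valid, though the paper does not use it.
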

\emph{Proof.} The gradient of $\beta\mapsto \log\sum_{i\in\mathcal{S}_k}\exp(\beta^\top x_{k,i})$ is $\sum_{i\in\mathcal{S}_k}\pi_{k,i}(\beta)\,x_{k,i}$ with $\pi_{k,i}$ the softmax weights. Thus, its norm is at most $B_x$ by Assumption~\ref{as:design}. Then apply the mean-value theorem. 

\begin{lemma}[Gradient boundedness and Lipschitz continuity]\label{lem:grad-lip}
Under Assumption~\ref{as:design}, for the single-sample loss 
\[
l(\beta;(S,Y)) = -\beta^\top x_Y + \log\!\Big(\sum_{k\in S}\exp(\beta^\top x_k)\Big),
\]
its gradient $g(\beta;(S,Y)) := \nabla_\beta l(\beta;(S,Y))$ satisfies:
\begin{enumerate}[label=(\roman*)]
    \item \textbf{Bounded gradient:} $\|g(\beta;(S,Y))\|_2 \le 2B_x$ for all $\beta \in \mathcal{B}$.
    \item \textbf{Lipschitz gradient:} $\|H(\beta;(S,Y))\|_{\mathrm{op}} \le B_x^2$, 
    where $H(\beta;(S,Y)) := \nabla_\beta g(\beta;(S,Y))$ denotes the Hessian.
\end{enumerate}
Consequently, the gradient mapping $\beta \mapsto g(\beta;(S,Y))$ is $B_x^2$-Lipschitz.
\end{lemma}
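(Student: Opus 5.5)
The plan is to write the single-sample loss as $l(\beta;(S,Y)) = -\beta^\top x_Y + \phi_S(\beta)$ with $\phi_S(\beta):=\log\sum_{k\in S}\exp(\beta^\top x_k)$, and to compute the first two derivatives of $\phi_S$ explicitly in terms of the softmax weights $\pi_k(\beta)=\exp(\beta^\top x_k)/\sum_{j\in S}\exp(\beta^\top x_j)$. These weights are nonnegative and sum to one. Both claims then follow from Assumption~\ref{as:design} ($\|x_k\|_2\le B_x$) together with elementary facts about convex combinations, exactly as in the proof of Lemma~\ref{lem:lipschitz-s}.

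For (i), we have $g(\beta;(S,Y)) = -x_Y + \sum_{k\in S}\pi_k(\beta)x_k$. The second term is a convex combination of vectors of norm at most $B_x$, so its norm is at most $B_x$. The triangle inequality then gives $\|g\|_2\le 2B_x$. This holds for every $\beta\in\R^p$, so in particular on $\mathcal B$.

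For (ii), differentiating once more gives
\[
H(\beta;(S,Y))=\sum_{k\in S}\pi_k x_kx_k^\top-\Big(\sum_{k\in S}\pi_k x_k\Big)\Big(\sum_{k\in S}\pi_k x_k\Big)^\top .
\]
This is the covariance matrix of a random vector $\xi$ taking value $x_k$ with probability $\pi_k$, so it is positive semidefinite. Hence $\|H\|_{\op}=\sup_{\|u\|_2=1}u^\top H u=\sup_u \Var(u^\top\xi)\le \sup_u\E[(u^\top\xi)^2]\le B_x^2$.

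The Lipschitz consequence uses the mean-value theorem in integral form: $g(\beta_1)-g(\beta_2)=\int_0^1 H(\beta_2+t(\beta_1-\beta_2))(\beta_1-\beta_2)\,dt$. Its norm is bounded by $B_x^2\|\beta_1-\beta_2\|_2$.

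The only step needing care is the operator-norm bound in (ii). A naive estimate, bounding each of the two matrix terms separately by $B_x^2$, would give $2B_x^2$. Recognizing the Hessian as a PSD covariance, and dropping the subtracted mean term, is what yields the sharp constant $B_x^2$. Everything else is routine.
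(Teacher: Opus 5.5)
Your proposal is correct and follows essentially the same route as the paper: the explicit softmax gradient with the triangle inequality for (i), and recognizing the Hessian as a PSD covariance matrix dominated by $\sum_{k\in S}\pi_k x_k x_k^\top$ (equivalently, variance $\le$ second moment) for (ii). The Lipschitz consequence is then obtained, as in the paper, from the mean-value (integral) form along the segment.
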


\begin{proof}
From Assumption~\ref{as:design}, all feature vectors are bounded by $\|x_k\|_2 \le B_x$.

\textbf{Step 1: Gradient expression and boundedness.}
Differentiating the loss gives
\[
\nabla_\beta l(\beta;(S,Y))
= -\!\left[x_Y - \frac{\sum_{k\in S}\exp(\beta^\top x_k)\,x_k}
{\sum_{k'\in S}\exp(\beta^\top x_{k'})}\right]
= -\!\left(x_Y - \sum_{k\in S}\pi_k(S,\beta)x_k\right),
\]
where $\pi_k(S,\beta) := 
\frac{\exp(\beta^\top x_k)}{\sum_{k'\in S}\exp(\beta^\top x_{k'})}$ satisfies $\sum_{k\in S}\pi_k(S,\beta)=1$.
Hence,
\[
\|g(\beta;(S,Y))\|_2 
= \big\|x_Y - \sum_{k\in S}\pi_k(S,\beta)x_k\big\|_2
\le \|x_Y\|_2 + \sum_{k\in S}\pi_k(S,\beta)\|x_k\|_2
\le 2B_x.
\]

\textbf{Step 2: Hessian expression and operator-norm bound.}
Taking one more derivative,
\[
H(\beta;(S,Y))
= \nabla_\beta g(\beta;(S,Y))
= \nabla_\beta\!\big(-x_Y + \mathbb{E}_{\pi(S,\beta)}[x]\big)
= \nabla_\beta \mathbb{E}_{\pi(S,\beta)}[x].
\]
The derivative of the softmax expectation $\mu(\beta):=\mathbb{E}_{\pi(S,\beta)}[x]$
satisfies
\[
D(\mu(\beta))[h]
= \sum_{k\in S} D\pi_k(\beta)[h]\,x_k
= \sum_{k\in S} \pi_k(\beta)\,h^\top(x_k - \mu(\beta))\,x_k
= h^\top\!\Big(\sum_{k\in S}\pi_k(\beta)x_kx_k^\top - \mu(\beta)\mu(\beta)^\top\Big)
\]
By the formula of Hessian, we get
\[
D(\mu(\beta))[h] = h^\top\! H(\beta;(S,Y)).
\]
Thus,
\[
H(\beta;(S,Y))
= \sum_{k\in S}\pi_k(S,\beta)x_kx_k^\top
- \mu(\beta)\mu(\beta)^\top,
\quad \text{where } \mu(\beta)=\sum_{k\in S}\pi_k(S,\beta)x_k.
\]
Both terms are positive semi-definite, hence $H(\beta;(S,Y))$ is the covariance matrix of $x_k$ under $\pi_k(S,\beta)$.  
Therefore,
\[
\|H(\beta;(S,Y))\|_{\mathrm{op}}
\le \Big\|\sum_{k\in S}\pi_k(S,\beta)x_kx_k^\top\Big\|_{\mathrm{op}}
\le \sum_{k\in S}\pi_k(S,\beta)\|x_kx_k^\top\|_{\mathrm{op}}
\le B_x^2,
\]
since $\|x_kx_k^\top\|_{\mathrm{op}}=\|x_k\|_2^2\le B_x^2$.

\textbf{Step 3: Lipschitz continuity.}
By the mean-value form of the gradient difference,
\[
\|g(\beta)-g(\beta')\|_2 
\le \sup_{\tilde\beta\in[\beta,\beta']}\|H(\tilde\beta)\|_{\mathrm{op}}\,\|\beta-\beta'\|_2
\le B_x^2\|\beta-\beta'\|_2,
\]
proving $B_x^2$-Lipschitz continuity.

\end{proof}

\begin{lemma}[Hessian Lipschitz continuity]\label{lem:hessian-lip}
Under Assumption~\ref{as:design}, the single-sample Hessian
\[
H(\beta;(S,Y)) \;=\; \nabla^2_\beta l(\beta;(S,Y))
\]
is Lipschitz in $\beta$ with operator-norm constant $8B_x^3$.  That is, for all $\beta,\beta'\in\mathcal{B}$,
\[
\big\|H(\beta;(S,Y)) - H(\beta';(S,Y))\big\|_{\mathrm{op}}
\le 8 B_x^3 \,\|\beta-\beta'\|_2.
\]
\end{lemma}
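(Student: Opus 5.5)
The plan is to use the covariance representation of the single-sample Hessian from the proof of Lemma~\ref{lem:grad-lip}, differentiate it once more in $\beta$, and bound the resulting third-order tensor in operator norm. Concretely, write $\pi_k=\pi_k(S,\beta)$, $\mu(\beta)=\sum_{k\in S}\pi_k x_k$, and
\[
H(\beta;(S,Y))=\sum_{k\in S}\pi_k x_kx_k^\top-\mu(\beta)\mu(\beta)^\top=\sum_{k\in S}\pi_k\,(x_k-\mu)(x_k-\mu)^\top ,
\]
which does not depend on $Y$.

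\textbf{Step 1: derivative of softmax expectations.} For any $\beta$-independent $f$ and any direction $h$, the identity $D\pi_k(\beta)[h]=\pi_k\,h^\top(x_k-\mu)$ used in Lemma~\ref{lem:grad-lip} gives
\[
D\,\E_{\pi(\beta)}[f(x)][h]=\E_\pi\!\big[f(x)\,h^\top(x-\mu)\big].
\]
Apply this to $f(x)=xx^\top$. For the second term use $D\mu[h]=\Sigma h$, where $\Sigma=H(\beta;(S,Y))$, so that $D(\mu\mu^\top)[h]=(\Sigma h)\mu^\top+\mu(\Sigma h)^\top$.

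\textbf{Step 2: identify the third central moment.} Set $a=h^\top(x-\mu)$, which has $\E_\pi[a]=0$. Expanding $\E_\pi[(x-\mu)(x-\mu)^\top a]$ gives
\[
\E_\pi[xx^\top a]-\E_\pi[xa]\mu^\top-\mu\,\E_\pi[x^\top a]+\mu\mu^\top\E_\pi[a].
\]
Since $\E_\pi[xa]=\Sigma h$ and the last term vanishes, this equals exactly the derivative from Step 1. Hence
\[
DH(\beta)[h]=\sum_{k\in S}\pi_k\,\big(h^\top(x_k-\mu)\big)\,(x_k-\mu)(x_k-\mu)^\top .
\]

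\textbf{Step 3: bound and integrate.} Each rank-one term has operator norm $\|x_k-\mu\|_2^2$. By Assumption~\ref{as:design}, $\|x_k-\mu\|_2\le 2B_x$, so
\[
\|DH(\beta)[h]\|_{\op}\le \sum_{k}\pi_k\,|h^\top(x_k-\mu)|\,\|x_k-\mu\|_2^2\le (2B_x)^3\|h\|_2=8B_x^3\|h\|_2 .
\]
Finally, write
\[
H(\beta)-H(\beta')=\int_0^1 DH\big(\beta'+t(\beta-\beta')\big)[\beta-\beta']\,dt
\]
(the segment lies in $\mathcal B$ by convexity, though the bound in Step 3 holds for every $\beta\in\R^p$ anyway). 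The triangle inequality for integrals in operator norm then gives the claimed constant $8B_x^3$.

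The only step needing care is Step 2: correctly differentiating the $\mu\mu^\top$ term and recognizing the combination as the centered third moment. Once that is in place, the remaining bounds are immediate.
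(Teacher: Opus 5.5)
Your proposal is correct and follows essentially the same route as the paper: both derive $DH(\beta)[h]=\sum_{k\in S}\pi_k\,\big(h^\top(x_k-\mu)\big)\,(x_k-\mu)(x_k-\mu)^\top$, bound its operator norm by $(2B_x)^3\|h\|_2$, and integrate along the segment from $\beta'$ to $\beta$. The only cosmetic difference is in obtaining that formula. The paper differentiates the centered form $\sum_k\pi_k(x_k-\mu)(x_k-\mu)^\top$ directly by the product rule, and the term coming from $D\mu$ vanishes because $\sum_k\pi_k(x_k-\mu)=0$. You instead differentiate $\E_\pi[xx^\top]-\mu\mu^\top$ and regroup the result into the centered third moment.
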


\begin{proof}
We follow a directional-derivative (mean-value) approach.  Fix a sample $(S,Y)$ and write
\[
H(\beta) := H(\beta;(S,Y)),\qquad \pi_k(\beta):=\pi_k(S,\beta),\qquad \mu(\beta):=\sum_{k\in S}\pi_k(\beta)x_k.
\]

By the softmax identities,
\[
H(\beta)=\sum_{k\in S}\pi_k(\beta)\,(x_k-\mu(\beta))(x_k-\mu(\beta))^\top.
\]

For any direction $h\in\R^p$, compute the directional derivative $D(H(\beta))[h]$.
Using the product rule and the derivative of $\pi_k(\beta)$ (softmax),
\[
D\pi_k(\beta)[h] = \pi_k(\beta)\,h^\top(x_k-\mu(\beta)),
\]
and
\[
D\mu(\beta)[h] \;=\; \sum_{k\in S} D\pi_k(\beta)[h]\,x_k
= \sum_{k\in S}\pi_k(\beta)\,h^\top(x_k-\mu(\beta))\,x_k.
\]
One can verify (by rearrangement) that
\[
D\mu(\beta)[h] \;=\; \big[h^\top(\sum_{k\in S}\pi_k(\beta)x_kx_k^\top - \mu(\beta)\mu(\beta)^\top)\big]^\top
= H(\beta)\,h.
\]

Now apply the product rule to $H(\beta)=\sum_k\pi_k(\beta)(x_k-\mu)(x_k-\mu)^\top$.
We obtain two contributions,
\[
D(H(\beta))[h]
= \underbrace{\sum_{k\in S} D\pi_k(\beta)[h]\,(x_k-\mu)(x_k-\mu)^\top}_{=:T_1}
+ \underbrace{\sum_{k\in S}\pi_k(\beta)\,D\big((x_k-\mu)(x_k-\mu)^\top\big)[h]}_{=:T_2}.
\]

Compute each term. First,
\[
T_1 = \sum_{k\in S} \pi_k(\beta)\,(h^\top(x_k-\mu))\,(x_k-\mu)(x_k-\mu)^\top.
\]

Second, since $D(x_k-\mu)[h] = -D\mu(\beta)[h] = -H(\beta)h$, we have
\[
D\big((x_k-\mu)(x_k-\mu)^\top\big)[h]
= -(H(\beta)h)(x_k-\mu)^\top - (x_k-\mu)(H(\beta)h)^\top.
\]
Therefore
\[
T_2 = -\sum_{k\in S}\pi_k(\beta)\big( H(\beta)h\,(x_k-\mu)^\top + (x_k-\mu)\,(H(\beta)h)^\top\big).
\]
But by the definition of $\mu(\beta)$,
\[
\sum_{k\in S}\pi_k(\beta)(x_k-\mu(\beta)) = 0,
\]
so both summands vanish and hence $T_2=0$.

Thus the directional derivative reduces to
\[
D(H(\beta))[h] \;=\; T_1
= \sum_{k\in S}\pi_k(\beta)\,(h^\top(x_k-\mu))\,(x_k-\mu)(x_k-\mu)^\top.
\]

Now bound the operator norm of $D(H(\beta))[h]$. Using the operator norm bound
$\|uv^\top\|_{\mathrm{op}}=\|u\|_2\|v\|_2$ for rank-one matrices and Cauchy--Schwarz,
\begin{align*}
\big\|D(H(\beta))[h]\big\|_{\mathrm{op}}
&\le \sum_{k\in S}\pi_k(\beta)\,|h^\top(x_k-\mu)|\,\|(x_k-\mu)(x_k-\mu)^\top\|_{\mathrm{op}} \\
&= \sum_{k\in S}\pi_k(\beta)\,|h^\top(x_k-\mu)|\,\|x_k-\mu\|_2^2 \\
&\le \|h\|_2 \sum_{k\in S}\pi_k(\beta)\,\|x_k-\mu\|_2^3 \\
&\le \|h\|_2\; \big(\max_{k\in S}\|x_k-\mu\|_2\big)\; \sum_{k\in S}\pi_k(\beta)\,\|x_k-\mu\|_2^2.
\end{align*}
Using $\|x_k\|_2\le B_x$ and $\|\mu\|_2\le B_x$, we have $\|x_k-\mu\|_2\le 2B_x$, so
\[
\big\|D(H(\beta))[h]\big\|_{\mathrm{op}} \le \|h\|_2 \cdot (2B_x)\cdot(2B_x)^2 = 8 B_x^3 \|h\|_2.
\]
Hence for every $\beta$ and direction $h$,
\[
\|D(H(\beta))[h]\|_{\mathrm{op}} \le 8 B_x^3 \|h\|_2.
\]

Finally, represent the Hessian difference by integrating directional derivatives along the line segment from $\beta'$ to $\beta$:
\[
H(\beta)-H(\beta') \;=\; \int_0^1 D\big(H(\beta' + t(\beta-\beta'))\big)[\beta-\beta']\,dt,
\]
and take operator norms to obtain
\[
\|H(\beta)-H(\beta')\|_{\mathrm{op}}
\le \int_0^1 \big\|D(H(\beta' + t(\beta-\beta')))[\beta-\beta']\big\|_{\mathrm{op}} \,dt
\le \int_0^1 8B_x^3 \|\beta-\beta'\|_2 \,dt
= 8B_x^3 \|\beta-\beta'\|_2,
\]
which proves the claimed Lipschitz bound.
\end{proof}

\begin{lemma}[Uniform concentration of empirical risk]\label{lem:uniform-empirical-risk}
Suppose Assumptions~\ref{as:design}--\ref{as:sub-gaussian-gradient} hold.
Let \( \mathcal{L}(\beta)=\E l(\beta;(S,Y)) \) and \( \mathcal{L}_n(\beta)=\frac1n\sum_{i=1}^n l(\beta;(S_i,Y_i)) \).
Then with probability at least \(1-\delta/3\),
\[
\sup_{\beta\in \mathcal{B}} |\mathcal{L}_n(\beta)-\mathcal{L}(\beta)|
\;\le\;
M_l\sqrt{\frac{2\!\left(p\log\!\frac{3R_B}{\varepsilon_1}+\log\frac{6}{\delta}\right)}{n}}
	+ 4B_x\varepsilon_1,
\]
where \( M_l=2R_BB_x+\log m_{\max} \).
\end{lemma}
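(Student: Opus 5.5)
The plan is the standard discretization-plus-concentration argument: a finite net on $\mathcal{B}$, a bounded-differences (Hoeffding) bound at each net point, a union bound, and a Lipschitz argument to pass from the net to all of $\mathcal{B}$.

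\textbf{Step 1 (bounded, Lipschitz loss).} For every $\beta\in\mathcal{B}$ the single-sample loss satisfies $|l(\beta;(S,Y))|\le M_l$. Write
\[
l(\beta;(S,Y))=\log\sum_{k\in S}\exp\big(\beta^\top(x_k-x_Y)\big).
\]
Each exponent satisfies $|\beta^\top(x_k-x_Y)|\le 2R_BB_x$ by Cauchy--Schwarz, using Assumptions~\ref{as:design} and~\ref{as:domain}. The sum has at most $m_{\max}$ terms and contains the term $k=Y$, which equals $1$. Hence $0\le l\le 2R_BB_x+\log m_{\max}=M_l$. By Lemma~\ref{lem:grad-lip}(i), $\|g\|_2\le 2B_x$, so $l$ is $2B_x$-Lipschitz in $\beta$. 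The same holds for $\mathcal{L}_n$ and $\mathcal{L}$, and therefore $\mathcal{L}_n-\mathcal{L}$ is $4B_x$-Lipschitz.

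\textbf{Step 2 (net and pointwise concentration).} Let $\mathcal{N}$ be an $\varepsilon_1$-net of $\mathcal{B}\subseteq\{\|\beta\|_2\le R_B\}$. A standard volumetric bound gives $|\mathcal{N}|\le(3R_B/\varepsilon_1)^p$, assuming $\varepsilon_1\le R_B$. Fix $\beta\in\mathcal{N}$. By Assumption~\ref{as:iid} the losses are i.i.d.\ and lie in $[0,M_l]$, so Hoeffding's inequality gives
\[
\Pr\big(|\mathcal{L}_n(\beta)-\mathcal{L}(\beta)|>t\big)\le 2\exp\!\big(-2nt^2/M_l^2\big).
\]
Take a union bound over $\mathcal{N}$ and set the total failure probability to $\delta/3$. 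Solving for $t$ gives
\[
t=M_l\sqrt{\big(p\log(3R_B/\varepsilon_1)+\log(6/\delta)\big)/(2n)},
\]
which is at most the stated term $M_l\sqrt{2(\cdots)/n}$. The stated constant is therefore looser than needed, which leaves room if one prefers the form $\exp(-nt^2/(2M_l^2))$ from range-based sub-Gaussianity.

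\textbf{Step 3 (extension to all of $\mathcal{B}$).} For an arbitrary $\beta\in\mathcal{B}$, pick $\beta'\in\mathcal{N}$ with $\|\beta-\beta'\|_2\le\varepsilon_1$. The Lipschitz bound from Step 1 gives
\[
|(\mathcal{L}_n-\mathcal{L})(\beta)|\le|(\mathcal{L}_n-\mathcal{L})(\beta')|+4B_x\varepsilon_1.
\]
Taking the supremum over $\beta$ yields the claim.

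\textbf{Main obstacle.} I expect the only delicate point to be Step 1: the uniform bound $M_l$ requires centering at the chosen item so that the log-sum-exp is nonnegative and bounded. The net-cardinality constant is the other detail to check; if the net is taken inside $\mathcal{B}$ rather than the full ball, the covering number is still at most $(3R_B/\varepsilon_1)^p$, up to the usual factor-of-two adjustment.
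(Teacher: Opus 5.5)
Your proposal is correct and follows the paper's proof essentially step for step: an $\varepsilon_1$-net of cardinality $(3R_B/\varepsilon_1)^p$, Hoeffding at each net point using $l\in[0,M_l]$, a union bound at level $\delta/3$, and the extension to all of $\mathcal{B}$ via the $2B_x$-Lipschitz property from Lemma~\ref{lem:grad-lip}, which produces the $4B_x\varepsilon_1$ term. You also supply details the paper only asserts or leaves implicit: the explicit check that $0\le l\le M_l$ via $l=\log\sum_{k\in S}\exp(\beta^\top(x_k-x_Y))$, the caveat $\varepsilon_1\le R_B$ for the covering bound, and the observation that Hoeffding actually gives the sharper $t=M_l\sqrt{(\cdots)/(2n)}$, so the stated bound holds a fortiori.
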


\begin{proof}

\textbf{Step 1: Construct an \(\varepsilon_1\)-net.}  
Let \(\mathcal{B} \subset \R^p\) with radius \(R_B\). There exists an \(\varepsilon_1\)-net \(\mathcal{N}_{\varepsilon_1}\) of \(\mathcal{B}\) with cardinality
\[
|\mathcal{N}_{\varepsilon_1}| \le \Big(\frac{3 R_B}{\varepsilon_1}\Big)^p.
\]

\textbf{Step 2: Hoeffding bound on the net points.}  
For any \(\beta \in \mathcal{N}_{\varepsilon_1}\), by boundedness of the loss \(l(\beta;(S,Y))\in [0, M_l]\) (Assumption~\ref{as:design} + Lipschitz),
\[
\Pr\Big(|\mathcal{L}_n(\beta)-\mathcal{L}(\beta)| \ge t \Big) \le 2 \exp\Big(- \frac{2 n t^2}{M_l^2}\Big).
\]

\textbf{Step 3: Union bound over the net.}  
Applying the union bound over \(|\mathcal{N}_{\varepsilon_1}|\) points,
\[
\Pr\Big(\exists \beta\in \mathcal{N}_{\varepsilon_1}: |\mathcal{L}_n(\beta)-\mathcal{L}(\beta)| \ge t \Big) 
\le 2 \Big(\frac{3 R_B}{\varepsilon_1}\Big)^p \exp\Big(- \frac{2 n t^2}{M_l^2}\Big).
\]
Choose \(t = M_l \sqrt{\frac{2\big(p \log \frac{3 R_B}{\varepsilon_1} + \log \frac{6}{\delta}\big)}{n}}\) to make the right-hand side at most \(\delta/3\).

\textbf{Step 4: Extend to all \(\beta \in \mathcal{B}\).}  
For any \(\beta \in \mathcal{B}\), let \(\tilde\beta \in \mathcal{N}_{\varepsilon_1}\) be the nearest net point. By the Lipschitz property of the loss,
\[
|\mathcal{L}_n(\beta)-\mathcal{L}_n(\tilde\beta)| \le L \|\beta - \tilde\beta\|_2 \le L \varepsilon_1,
\quad
|\mathcal{L}(\beta)-\mathcal{L}(\tilde\beta)| \le L \varepsilon_1.
\]
Hence,
\[
|\mathcal{L}_n(\beta)-\mathcal{L}(\beta)| \le |\mathcal{L}_n(\tilde\beta)-\mathcal{L}(\tilde\beta)| + 2 L \varepsilon_1 \le t + 2 L \varepsilon_1.
\]

\textbf{Step 5: Combine constants.}  
Set \(L = 2 B_x\) (from Lemma~\ref{lem:grad-lip}) and absorb into \(4B_x \varepsilon_1\) term. Then with probability at least \(1-\delta/3\),
\[
\sup_{\beta\in \mathcal{B}} |\mathcal{L}_n(\beta)-\mathcal{L}(\beta)| 
\le t + 4 B_x \varepsilon_1,
\]
which proves the claimed bound.
\end{proof}

\begin{theorem}[Crude estimation error bound for $\hat\beta$]\label{thm:crude-beta-error}
Under Assumptions~\ref{as:design}--\ref{as:sub-gaussian-gradient}, with probability at least $1-\delta/3$,
\[
\|\hat\beta-\beta^*\|_2
\;\le\;
\sqrt{\frac{4}{\lambda_x}}\,
\Bigg(
M_l\sqrt{\frac{2\Big(p\log\frac{3R_B}{\varepsilon_1}+\log\frac{6}{\delta}\Big)}{n}}
	+ 4B_x\varepsilon_1
\Bigg)^{1/2}.
\]
In particular, 
\[
\|\hat\beta-\beta^*\|_2 = O_p\Big((p \log n / n)^{1/4}\Big).
\]
\end{theorem}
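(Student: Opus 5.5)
The argument is a standard ``uniform convergence plus curvature'' bound for the M-estimator $\hat\beta$. It needs no local expansion of the gradient, which explains the crude $n^{-1/4}$ rate.

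\textbf{Step 1: basic inequality.} Since $\hat\beta$ minimizes $\mathcal{L}_n$ over $\mathcal{B}$ and $\beta^*\in\mathcal{B}$ (Assumption~\ref{as:domain}), we have $\mathcal{L}_n(\hat\beta)\le\mathcal{L}_n(\beta^*)$. Hence
\[
\mathcal{L}(\hat\beta)-\mathcal{L}(\beta^*)\le 2\sup_{\beta\in\mathcal{B}}|\mathcal{L}_n(\beta)-\mathcal{L}(\beta)|.
\]
By Lemma~\ref{lem:uniform-empirical-risk}, on an event of probability at least $1-\delta/3$ the right-hand side is at most $2\Delta_n$, where $\Delta_n$ denotes the displayed bound $M_l\sqrt{2(p\log(3R_B/\varepsilon_1)+\log(6/\delta))/n}+4B_x\varepsilon_1$.

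\textbf{Step 2: quadratic growth of the population risk.} Correct specification (Assumption~\ref{as:iid}) makes $\beta^*$ the population minimizer, so $\nabla\mathcal{L}(\beta^*)=0$. The risk $\mathcal{L}$ is convex because each single-sample Hessian is a covariance matrix (Lemma~\ref{lem:grad-lip}). Its Hessian at $\beta^*$ has minimum eigenvalue at least $\lambda_x$ (Assumption~\ref{as:curv}), and it is $8B_x^3$-Lipschitz (Lemma~\ref{lem:hessian-lip}, after taking expectations). The aim is to show
\[
\mathcal{L}(\beta)-\mathcal{L}(\beta^*)\ge \tfrac{\lambda_x}{4}\|\beta-\beta^*\|_2^2 .
\]
Combining this with Step 1 gives $\|\hat\beta-\beta^*\|_2^2\le (8/\lambda_x)\Delta_n$. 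The exact numerical prefactor in the statement ($4/\lambda_x$ rather than $8/\lambda_x$) is a matter of bookkeeping and may need the curvature constant to be tracked more tightly.

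\textbf{Main obstacle: quadratic growth globally.} The inequality in Step 2 follows easily only on the ball $\|\beta-\beta^*\|_2\le r:=\lambda_x/(16B_x^3)$. On that ball, $\lambda_{\min}(\nabla^2\mathcal{L})\ge\lambda_x/2$, and Taylor's theorem with integral remainder gives the bound. Outside the ball I would use convexity. Restrict $\mathcal{L}$ to the segment from $\beta^*$ to $\beta$: the one-dimensional function is convex, so its increment grows at least linearly beyond radius $r$. This yields
\[
\mathcal{L}(\beta)-\mathcal{L}(\beta^*)\ge \tfrac{\lambda_x}{4} r\,\|\beta-\beta^*\|_2 .
\]
Because the domain is bounded ($\|\beta-\beta^*\|\le 2R_B$), this linear bound can be turned into a quadratic one with a constant that depends on $r/R_B$. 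Alternatively, one can take $n$ large enough that $2\Delta_n<\lambda_x r^2/4$; convexity then forces $\hat\beta$ into the ball, and the local bound applies. I would state the result under this second (localization) reading.

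\textbf{Rate.} Take $\varepsilon_1=1/n$. Then $\Delta_n=O(\sqrt{p\log n/n})$, and the square root of this gives $\|\hat\beta-\beta^*\|_2=O_p((p\log n/n)^{1/4})$.
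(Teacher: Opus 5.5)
Your skeleton is the paper's: the basic inequality $\mathcal{L}(\hat\beta)-\mathcal{L}(\beta^*)\le 2\sup_{\beta\in\mathcal{B}}|\mathcal{L}_n(\beta)-\mathcal{L}(\beta)|$, then Lemma~\ref{lem:uniform-empirical-risk} on an event of probability at least $1-\delta/3$, then quadratic growth of the population risk. The only difference is your Step 2.

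\textbf{How the paper handles Step 2.} The paper simply declares $\mathcal{L}$ to be $\lambda_x$-strongly convex. This gives $\mathcal{L}(\hat\beta)-\mathcal{L}(\beta^*)\ge\frac{\lambda_x}{2}\|\hat\beta-\beta^*\|_2^2$, hence exactly $\|\hat\beta-\beta^*\|_2^2\le(4/\lambda_x)\Delta_n$ in your notation, with no restriction on $n$.

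\textbf{Your objection is valid.} Assumption~\ref{as:curv} as written only bounds $\lambda_{\min}(H(\beta^*))$. The paper's step therefore tacitly upgrades a pointwise hypothesis to a uniform one on $\mathcal{B}$; it does the same in Lemma~\ref{lem:hessian-min-eig}.

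\textbf{Your repair is sound.} You take a ball of radius $r=\lambda_x/(16B_x^3)$ from Lemma~\ref{lem:hessian-lip} and use the convex secant argument outside it. This gives either of the following:
\begin{itemize}
\item $\|\hat\beta-\beta^*\|_2^2\le(8/\lambda_x)\Delta_n$ once $2\Delta_n<\lambda_x r^2/4$;
\item an unconditional bound with constant $\max\{8/\lambda_x,\,256R_BB_x^3/\lambda_x^2\}$ in place of $4/\lambda_x$, using $\|\beta-\beta^*\|_2\le 2R_B$.
\end{itemize}
Either version establishes the $O_p((p\log n/n)^{1/4})$ rate. In short, the paper's route gets the clean displayed constant at the price of an unstated uniform-curvature hypothesis. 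Yours uses only the stated assumptions, at the price of a worse constant and a localization condition.

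\textbf{One correction.} The gap between $4/\lambda_x$ and $8/\lambda_x$ is not bookkeeping. The Hessian is only known to be $L_H$-Lipschitz, with $L_H=8B_x^3$, and its curvature is only known at $\beta^*$. Along a unit direction $u$ the hypotheses then guarantee only
\[
\mathcal{L}(\beta^*+tu)-\mathcal{L}(\beta^*)\ \ge\ \tfrac{\lambda_x}{2}t^2-\tfrac{L_H}{6}t^3,
\]
which is strictly below $\frac{\lambda_x}{2}t^2$ for every $t>0$. Shrinking the ball moves your constant toward $4/\lambda_x$ but never reaches it.

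To get the displayed bound verbatim, you must adopt the paper's reading that $\lambda_{\min}(H(\beta))\ge\lambda_x$ for all $\beta\in\mathcal{B}$. Then your Step 2 is one line and the localization is unnecessary. Under the literal assumption, what you actually prove is the weaker, localized statement, and you should present it that way rather than as the theorem as stated.
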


\begin{proof}
By the definition of the MLE $\hat\beta$, we have $\mathcal{L}_n(\hat\beta) \le \mathcal{L}_n(\beta^*)$, where $\mathcal{L}_n(\beta) = \frac{1}{n} \sum_{k=1}^n l(\beta; Z_k)$.  
Using the second-order Taylor expansion and the strong convexity of the population risk (Assumption~\ref{as:curv}), for some $\tilde\beta$ on the line segment between $\hat\beta$ and $\beta^*$,
\[
\mathcal{L}_n(\hat\beta) - \mathcal{L}_n(\beta^*)
= (\hat\beta - \beta^*)^\top \nabla \mathcal{L}_n(\beta^*)
+ \frac{1}{2} (\hat\beta - \beta^*)^\top \nabla^2 \mathcal{L}_n(\tilde\beta) (\hat\beta - \beta^*).
\]

By Lemma~\ref{lem:uniform-empirical-risk}, with probability at least $1-\delta/3$,
\[
\sup_{\beta \in \mathcal{B}} \big| \mathcal{L}_n(\beta) - \mathcal{L}(\beta) \big|
\le
M_l \sqrt{\frac{2(p \log \frac{3R_B}{\varepsilon_1} + \log \frac{6}{\delta})}{n}}
+ 4 B_x \varepsilon_1.
\]

Since $\mathcal{L}(\beta)$ is $\lambda_x$-strongly convex, we have
\[
\mathcal{L}(\hat\beta) - \mathcal{L}(\beta^*) \ge \frac{\lambda_x}{2} \|\hat\beta - \beta^*\|_2^2.
\]

Combining the above with the concentration bound, we obtain
\[
\frac{\lambda_x}{2} \|\hat\beta - \beta^*\|_2^2
\le 2 \sup_{\beta \in \mathcal{B}} \big| \mathcal{L}_n(\beta) - \mathcal{L}(\beta) \big|.
\]

Taking square roots gives
\[
\|\hat\beta - \beta^*\|_2
\le \sqrt{\frac{4}{\lambda_x}} \,
\Bigg(
M_l \sqrt{\frac{2(p \log \frac{3R_B}{\varepsilon_1} + \log \frac{6}{\delta})}{n}}
+ 4 B_x \varepsilon_1
\Bigg)^{1/2},
\]
which establishes the crude estimation error bound.
\end{proof}

\begin{lemma}[Local uniform control of empirical Hessian]\label{lem:local-hessian}
Let 
\[
H(\beta;Z_i) = \nabla_\beta^2 l(\beta; Z_i), \quad 
H(\beta) = \mathbb{E}[H(\beta; Z)], \quad 
H_n(\beta) = \frac{1}{n} \sum_{i=1}^n H(\beta; Z_i).
\]
Under Assumptions~\ref{as:design}--\ref{as:iid}, 
define the radius 
\[
r = r_n(\varepsilon_1) = \sqrt{\frac{4}{\lambda_x} \left( M_l \sqrt{\frac{2\big(p \log (3 R_B / \varepsilon_1) + \log (6/\delta)\big)}{n}} + 4 B_x \varepsilon_1 \right)}.
\]
Let $B(\beta_\theta, r)$ be the Euclidean ball centered at $\beta_\theta$ with radius $r$.  
Then there exists $\varepsilon_2 = \frac{\lambda_x}{32 B_x^3}$ and $c_1 = \frac{3 B_x^2}{8(12 B_x^2 + \lambda_x)} \in (0, 1/32)$ such that if 
\[
n \ge \frac{B_x^4}{c_1 \lambda_x^2} \left( p \log \frac{96 r B_x^3}{\lambda_x} + \log \frac{6p}{\delta} \right),
\]
we have with probability at least $1 - \delta/3$,
\[
\sup_{\beta \in B(\beta_\theta, r)} \|H_n(\beta) - H(\beta)\|_{\mathrm{op}} \le \varepsilon_H = \frac{\lambda_x}{2}.
\]
\end{lemma}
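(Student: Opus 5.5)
The plan is the standard $\varepsilon$-net argument combined with matrix concentration at the net points. The Hessian Lipschitz bound of Lemma~\ref{lem:hessian-lip} transfers control from the net to the whole ball.

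\textbf{Step 1 (net).} Fix an $\varepsilon_2$-net $\mathcal N_{\varepsilon_2}$ of $B(\beta_\theta,r)$ with $|\mathcal N_{\varepsilon_2}|\le (3r/\varepsilon_2)^p$. With $\varepsilon_2=\lambda_x/(32B_x^3)$ this gives $(96rB_x^3/\lambda_x)^p$, which is exactly the quantity inside the logarithm of the sample-size condition.

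\textbf{Step 2 (pointwise concentration).} Fix a net point $\beta$. The matrices $H(\beta;Z_i)-H(\beta)$ are i.i.d., mean zero and symmetric. By Lemma~\ref{lem:grad-lip}(ii) each $H(\beta;Z_i)$ is PSD with operator norm at most $B_x^2$, so the centered summands have operator norm at most $2B_x^2$. Their matrix variance is at most $B_x^4$ in operator norm, since $\E[(H-\E H)^2]\preceq \E H^2\preceq B_x^2\,\E H$.

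Apply matrix Bernstein (e.g.\ via \citet{Mackey_2014}) in dimension $p$ to get
\[
\Pr\big(\|H_n(\beta)-H(\beta)\|_{\op}\ge t\big)\le 2p\exp\!\Big(-\frac{nt^2/2}{B_x^4+2B_x^2t/3}\Big).
\]
Take $t=\lambda_x/4$. The exponent is then $-n\lambda_x^2/\big(32B_x^4+ \tfrac{16}{3}B_x^2\lambda_x\big)$, which is of the form $c_1 n\lambda_x^2/B_x^4$. The specific $c_1=\frac{3B_x^2}{8(12B_x^2+\lambda_x)}$ should come out of this algebra, possibly with slightly different bookkeeping constants.

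Union bound over the net. The failure probability is at most $2p\,(96rB_x^3/\lambda_x)^p\exp(-c_1n\lambda_x^2/B_x^4)$. The stated lower bound on $n$ makes this at most $\delta/3$; the factor $6p/\delta$ absorbs both the $2p$ and the $3$.

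\textbf{Step 3 (extension to the ball).} For arbitrary $\beta$ in the ball, pick the nearest net point $\tilde\beta$, so $\|\beta-\tilde\beta\|_2\le\varepsilon_2$. Lemma~\ref{lem:hessian-lip} applies both to each sample Hessian, hence to their average $H_n$, and, by Jensen, to the population Hessian $H$. This gives
\[
\|H_n(\beta)-H(\beta)\|_{\op}\le \|H_n(\tilde\beta)-H(\tilde\beta)\|_{\op}+16B_x^3\varepsilon_2\le \frac{\lambda_x}{4}+\frac{\lambda_x}{2}.
\]
That overshoots the target $\lambda_x/2$. So I will rebalance the split: use $8B_x^3\varepsilon_2=\lambda_x/4$ for each of the two transfer terms, for a total of $\lambda_x/4$ from discretization, and take $t=\lambda_x/4$ in Step 2, for a total of $\lambda_x/2$. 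This requires either $\varepsilon_2=\lambda_x/(64B_x^3)$, or keeping $\varepsilon_2$ as stated and using $t=\lambda_x/4$ with a sharper transfer bound.

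\textbf{Main obstacle.} The main obstacle is this constant bookkeeping. The net radius, the Bernstein threshold, and $c_1$ must be matched so that the displayed $\varepsilon_2$, $c_1$ and sample-size condition come out exactly. If they do not match, I will adjust $t$ (e.g.\ $t=\lambda_x/4$ with a tighter Lipschitz transfer for $H$) and verify that the resulting exponent constant is at least the stated $c_1$.

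Measurability, and the fact that $\beta_\theta$ and $r$ are deterministic, are routine. The radius $r$ does not depend on the sample, so the net is fixed before the data are drawn.
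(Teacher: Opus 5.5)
Your plan is the same as the paper's: an $\varepsilon_2$-net of $B(\beta_\theta,r)$, matrix Bernstein at the net points, a union bound, and a Lipschitz transfer. You also correctly spot the defect that breaks it. With $\varepsilon_2=\lambda_x/(32B_x^3)$ and the constant $8B_x^3$ from Lemma~\ref{lem:hessian-lip}, the two transfer terms alone cost $16B_x^3\varepsilon_2=\lambda_x/2$, so no threshold $t>0$ at the net points can give $\lambda_x/2$ overall. The paper's own proof makes exactly this error: it uses $s=\varepsilon_H=\lambda_x/2$ at the net points and then asserts $\varepsilon_H+16B_x^3\varepsilon_2\le\varepsilon_H$, although the left side equals $\lambda_x$.

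Your proposal does not close the gap, however.
\begin{itemize}
\item Fix (a), $\varepsilon_2=\lambda_x/(64B_x^3)$, changes the net size to $(192rB_x^3/\lambda_x)^p$ and hence the sample-size condition, so it proves a different statement. (Also, with that $\varepsilon_2$ each transfer term is $\lambda_x/8$, not $\lambda_x/4$ as you wrote.)
\item Fix (b) is never specified.
\item Your Step~2 with $t=\lambda_x/4$, $\|Y_i\|_{\op}\le 2B_x^2$ and variance proxy $B_x^4$ gives the exponent constant $3B_x^2/(16(6B_x^2+\lambda_x))$. This is strictly smaller than the stated $c_1=3B_x^2/(8(12B_x^2+\lambda_x))$, so the claim that the stated $n$ makes the union bound at most $\delta/3$ does not follow.
\end{itemize}

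The missing idea is that the constant $8B_x^3$ in Lemma~\ref{lem:hessian-lip} is loose. Start from that lemma's own formula $DH(\beta;Z)[h]=\sum_k\pi_k\,h^\top(x_k-\mu)\,(x_k-\mu)(x_k-\mu)^\top$. For every unit $v$,
\[
\big|v^\top DH(\beta;Z)[h]\,v\big|\;\le\;\max_k\big|h^\top(x_k-\mu)\big|\cdot v^\top H(\beta;Z)\,v\;\le\;2B_x\|h\|_2\cdot B_x^2 .
\]
So each sample Hessian, hence $H_n$, and by Jensen also $H$, is $2B_x^3$-Lipschitz.

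With the stated $\varepsilon_2$ the two transfer terms now cost only $4B_x^3\varepsilon_2=\lambda_x/8$. This leaves room for $t=3\lambda_x/8$ at the net points. Your Bernstein bound then has exponent $9n\lambda_x^2/(128B_x^4+32B_x^2\lambda_x)$. This is at least $c_1n\lambda_x^2/B_x^4$, because that comparison reduces to $\lambda_x\le 20B_x^2$, and in fact $\lambda_x\le\|H(\beta^*)\|_{\op}\le B_x^2$.

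Alternatively, keep $t=\lambda_x/4$ and use the sharper uniform bound $\|H(\beta;Z)-H(\beta)\|_{\op}\le B_x^2$. This holds because both matrices are PSD with spectra in $[0,B_x^2]$, and it reproduces $c_1$ exactly.

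With either choice, the union bound over $(96rB_x^3/\lambda_x)^p$ net points plus the stated sample-size condition gives failure probability at most $2p\cdot\delta/(6p)=\delta/3$. The lemma then holds with all constants as stated.
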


\begin{proof}
Let $\mathcal{N}_2$ be an $\varepsilon_2$-covering of the ball $B(\beta_\theta, r)$ with
\[
\log |\mathcal{N}_2| \le p \log \frac{3r}{\varepsilon_2}.
\]

For each grid point $\beta_j \in \mathcal{N}_2$, define
\[
Y_i^{(j)} = H(\beta_j; Z_i) - H(\beta_j), \quad i = 1, \dots, N.
\]
Then $\{Y_i^{(j)}\}_i$ are independent, mean-zero, self-adjoint matrices, and
\[
\|Y_i^{(j)}\|_{\mathrm{op}} \le 2 B_x^2, \quad
V_j := \Big\| \sum_{i=1}^n \mathbb{E}[(Y_i^{(j)})^2] \Big\|_{\mathrm{op}} \le 4 n B_x^4.
\]

By the matrix Bernstein inequality \citep{Mackey_2014}, for any $t>0$,
\[
\mathbb{P}\Big( \big\| \sum_{i=1}^n Y_i^{(j)} \big\|_{\mathrm{op}} \ge t \Big)
\le 2p \cdot \exp\Bigg( 
  - \frac{t^2/2}{V_j + (2 B_x^2 t)/3} 
\Bigg)
\le 2p \cdot \exp\Bigg( 
  - \frac{t^2}{8 n B_x^4 + \frac{4}{3} B_x^2 t} 
\Bigg).
\]

Dividing by $n$, for $s = t/n$,
\[
\mathbb{P}\Big( \| H_n(\beta_j) - H(\beta_j) \|_{\mathrm{op}} \ge s \Big)
\le 2p \cdot \exp\Bigg( 
  - \frac{N s^2}{8 B_x^4 + \frac{4}{3} B_x^2 s} 
\Bigg).
\]

Set $s = \varepsilon_H = \lambda_x / 2$, then for a fixed $\beta_j$,
\[
\mathbb{P}\Big( \| H_n(\beta_j) - H(\beta_j) \|_{\mathrm{op}} \ge \varepsilon_H \Big)
\le 2p \cdot \exp\Big( - \frac{c_1  n \lambda_x^2}{B_x^4} \Big),
\]
with 
\[
c_1 = \frac{3 B_x^2}{8(12 B_x^2 + \lambda_x)} \in (0, 1/32).
\]

Applying the union bound over all points in $\mathcal{N}_2$ and requiring the probability $\le \delta/3$ yields
\[
n \ge \frac{B_x^4}{c_1 \lambda_x^2} 
      \Big( p \log \frac{3r}{\varepsilon_2} + \log \frac{6p}{\delta} \Big).
\]

Finally, for any $\beta \in B(\beta_\theta, r)$, there exists $\beta_j \in \mathcal{N}_2$ with 
$\|\beta - \beta_j\|_2 \le \varepsilon_2$. Using the Hessian Lipschitz property (Lemma~\ref{lem:hessian-lip}),
\begin{align*}
\|H_n(\beta) - H(\beta)\|_{\mathrm{op}} 
&\le \|H_n(\beta) - H_n(\beta_j)\|_{\mathrm{op}} 
   + \|H_n(\beta_j) - H(\beta_j)\|_{\mathrm{op}} 
   + \|H(\beta_j) - H(\beta)\|_{\mathrm{op}} \\
&\le 8 B_x^3 \varepsilon_2 
   + \varepsilon_H 
   + 8 B_x^3 \varepsilon_2 \\
&= \varepsilon_H + 16 B_x^3 \varepsilon_2.
\end{align*}

Choosing $\varepsilon_2 \le \lambda_x / (32 B_x^3)$ ensures
\[
\sup_{\beta \in B(\beta_\theta, r)} \|H_n(\beta) - H(\beta)\|_{\mathrm{op}} \le \varepsilon_H
\]
with probability at least $1 - \delta/3$.
\end{proof}

\begin{lemma}[Empirical Hessian minimal eigenvalue lower bound]
\label{lem:hessian-min-eig}
Under the conditions of Lemma~\ref{lem:local-hessian}, for all $\beta \in B(\beta_\theta, r)$ with $r = r_n(\varepsilon_1)$, we have
\[
\lambda_{\min}(H_n(\beta)) 
\ge \lambda_{\min}(H(\beta)) - \|H_n(\beta) - H(\beta)\|_{\mathrm{op}}
\ge \lambda_x - \varepsilon_H
= \frac{\lambda_x}{2} > 0.
\]
Hence $H_N(\beta)$ is invertible in $B(\beta_\theta, r)$, and
\[
\|H_n^{-1}(\beta)\|_{\mathrm{op}} \le \frac{2}{\lambda_x}.
\]
\end{lemma}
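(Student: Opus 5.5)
The plan is to obtain the statement as a direct consequence of Lemma~\ref{lem:local-hessian} and Weyl's inequality for symmetric matrices. I will work on the event of probability at least $1-\delta/3$ from Lemma~\ref{lem:local-hessian}, on which $\sup_{\beta\in B(\beta_\theta,r)}\|H_n(\beta)-H(\beta)\|_{\op}\le \varepsilon_H=\lambda_x/2$. All statements below hold simultaneously for every $\beta$ in the ball on this event.

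\textbf{Step 1 (Weyl).} The matrices $H_n(\beta)$ and $H(\beta)$ are symmetric; in fact they are averages of softmax covariance matrices, as computed in Lemma~\ref{lem:grad-lip}. Weyl's inequality, $\lambda_{\min}(A)\ge\lambda_{\min}(B)-\|A-B\|_{\op}$, then gives the first inequality of the statement directly. I would verify it via the variational form: for unit $v$,
\[
v^\top H_n(\beta)v \;\ge\; v^\top H(\beta)v-\|H_n(\beta)-H(\beta)\|_{\op}.
\]
Minimizing over $v$ yields the claim.

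\textbf{Step 2 (population curvature on the ball).} This is the main obstacle. Assumption~\ref{as:curv} only gives $\lambda_{\min}(H(\beta^*))\ge\lambda_x$ at the single point $\beta^*$. The lemma, however, asserts $\lambda_{\min}(H(\beta))\ge\lambda_x$ for every $\beta$ in the ball; I read the center $\beta_\theta$ as $\beta^*$. I would try two routes, in this order.

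\emph{Route (a): Lipschitz continuity of the Hessian.} Taking expectations in Lemma~\ref{lem:hessian-lip} shows the population Hessian $H$ is $8B_x^3$-Lipschitz in operator norm. Hence
\[
\lambda_{\min}(H(\beta))\;\ge\;\lambda_x-8B_x^3 r \quad\text{on } B(\beta^*,r).
\]
For $n$ large enough that $8B_x^3r$ is negligible, this gives the required curvature up to a slack term. With honest bookkeeping the final constant would become, for instance, $\lambda_x/4$ instead of $\lambda_x/2$, obtained by asking $8B_x^3 r\le\lambda_x/4$.

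\emph{Route (b): reading the assumption uniformly.} Alternatively, interpret Assumption~\ref{as:curv} as holding uniformly over the relevant neighborhood, which is what the displayed chain implicitly uses. I would state explicitly that this reading is the one used.

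\textbf{Step 3 (inverse bound).} Combining Steps 1 and 2 gives $\lambda_{\min}(H_n(\beta))\ge\lambda_x-\lambda_x/2=\lambda_x/2>0$. So $H_n(\beta)$ is symmetric positive definite, hence invertible. Its inverse has eigenvalues equal to the reciprocals of those of $H_n(\beta)$, so
\[
\|H_n^{-1}(\beta)\|_{\op}=\frac{1}{\lambda_{\min}(H_n(\beta))}\le\frac{2}{\lambda_x}.
\]
Finally, I would note that the event is that of Lemma~\ref{lem:local-hessian}, so no additional probability is lost.
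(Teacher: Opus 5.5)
Your Steps 1 and 3 are what the paper does. The displayed chain in the lemma is its entire proof: Weyl's inequality, the uniform deviation $\sup_{\beta}\|H_n(\beta)-H(\beta)\|_{\mathrm{op}}\le\varepsilon_H=\lambda_x/2$ from Lemma~\ref{lem:local-hessian}, and $\|H_n^{-1}(\beta)\|_{\mathrm{op}}=1/\lambda_{\min}(H_n(\beta))$.

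The only substantive difference is Step 2. The paper's middle inequality uses $\lambda_{\min}(H(\beta))\ge\lambda_x$ at every $\beta$ in the ball without comment, but Assumption~\ref{as:curv} gives this only at $\beta^*$. So the paper is in effect taking your route (b), and you are right to flag it.

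Your route (a) is a sound repair that stays within the stated assumptions. By convexity of the operator norm, Lemma~\ref{lem:hessian-lip} passes to the population Hessian. Its proof uses only $\|x_k-\mu\|_2\le 2B_x$, so it holds for every $\beta$, not just $\beta\in\mathcal B$. This gives $\lambda_{\min}(H(\beta))\ge\lambda_x-8B_x^3 r$ on $B(\beta^*,r)$.

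Route (a) has two costs. First, you need an extra smallness condition on $r=r_n(\varepsilon_1)$, and this requires both $n$ large and $\varepsilon_1$ small, since $r\ge\sqrt{16B_x\varepsilon_1/\lambda_x}$. Second, the constants weaken to $\lambda_x/4$ and $4/\lambda_x$, which only changes constants downstream in Lemma~\ref{lem:beta-rate}. If you want the stated $\lambda_x/2$ and $2/\lambda_x$ literally, re-run Lemma~\ref{lem:local-hessian} with $\varepsilon_H=\lambda_x/4$ (at the price of a larger sample-size constant) and require $8B_x^3 r\le\lambda_x/4$.

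Finally, your Step 3 as written ($\lambda_x-\lambda_x/2$) is the route (b) computation. Under route (a) it should read $\lambda_{\min}(H_n(\beta))\ge\lambda_x/2-8B_x^3 r$.
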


\begin{lemma}[Parameter error for MNL MLE]\label{lem:beta-rate}
Under Assumptions~\ref{as:design}--\ref{as:sub-gaussian-gradient}, there exists a universal constant $C_1>0$ such that, for any $\delta\in(0,1)$, with probability at least $1-\delta$,
\[
\|\hat\beta-\beta^*\|_2
\;\le\;
\frac{C_1\,B_x}{\lambda_x}\,
\sqrt{\frac{p+\log(1/\delta)}{n}}.
\]
\end{lemma}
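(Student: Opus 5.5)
We localize the MLE and then linearize its first-order condition. The argument runs in three stages, using the crude bound (Theorem~\ref{thm:crude-beta-error}) to place $\hat\beta$ inside a ball where the empirical Hessian is well conditioned.

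\textbf{Stage 1: localization.} We intersect three events, each of probability at least $1-\delta/3$:
\begin{itemize}
\item the event of Theorem~\ref{thm:crude-beta-error}, which gives $\|\hat\beta-\beta^*\|_2\le r=r_n(\varepsilon_1)$;
\item the event of Lemma~\ref{lem:local-hessian}, centered at $\beta^*$, which gives $\sup_{\beta\in B(\beta^*,r)}\|H_n(\beta)-H(\beta)\|_{\op}\le \lambda_x/2$;
\item a gradient concentration event, constructed in Stage 2.
\end{itemize}
We choose $\varepsilon_1\asymp 1/n$, so the sample-size condition of Lemma~\ref{lem:local-hessian} only costs a logarithmic factor that is absorbed into constants for $n$ large. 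A small caveat: Lemma~\ref{lem:hessian-min-eig} uses $\lambda_{\min}(H(\beta))\ge\lambda_x$ on the ball, whereas Assumption~\ref{as:curv} gives this only at $\beta^*$. We bridge the gap with Lemma~\ref{lem:hessian-lip}: $\lambda_{\min}(H(\beta))\ge \lambda_x-8B_x^3 r$. This is at least $3\lambda_x/4$ once $r\le \lambda_x/(32B_x^3)$, which holds for $n$ large. Shrinking $\varepsilon_H$ to $\lambda_x/4$ then gives $\lambda_{\min}(H_n(\beta))\ge\lambda_x/2$ uniformly on the ball.

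\textbf{Stage 2: gradient concentration.} Correct specification (Assumption~\ref{as:iid}) gives $\E\, g(\beta^*;Z)=0$, since the score has mean zero at the truth. Each $g(\beta^*;Z_i)$ is bounded by $2B_x$ (Lemma~\ref{lem:grad-lip}), and hence sub-Gaussian by Assumption~\ref{as:sub-gaussian-gradient}, with proxy of order $B_x^2$. A $1/2$-net argument on the sphere (at most $5^p$ points) combined with a union bound then yields, with probability at least $1-\delta/3$,
\[
\|\nabla\cL_n(\beta^*)\|_2\le C B_x\sqrt{(p+\log(1/\delta))/n}.
\]

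\textbf{Stage 3: linearization.} Since $\hat\beta$ is an interior minimizer, or by convexity via the variational inequality on the convex set $\mathcal B$, the first-order condition holds. Taylor expansion with the integral form gives
\[
0=\nabla\cL_n(\hat\beta)=\nabla\cL_n(\beta^*)+\bar H_n(\hat\beta-\beta^*),
\qquad \bar H_n=\int_0^1 H_n\big(\beta^*+t(\hat\beta-\beta^*)\big)\,dt .
\]
The whole segment lies in $B(\beta^*,r)$, so $\bar H_n\succeq(\lambda_x/2)I$. Inverting,
\[
\|\hat\beta-\beta^*\|_2\le (2/\lambda_x)\,\|\nabla\cL_n(\beta^*)\|_2 .
\]
Inserting the Stage 2 bound gives the claim with $C_1=2C$. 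If $\hat\beta$ lies on the boundary of $\mathcal B$, we use instead $\langle\nabla\cL_n(\hat\beta),\beta^*-\hat\beta\rangle\ge0$, which yields the same inequality after taking inner products.

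The main obstacle is Stage 1. It must guarantee that the crude $(p\log n/n)^{1/4}$ radius is small enough for the Hessian control to hold on the whole ball, while the condition in Lemma~\ref{lem:local-hessian} is still met. This also involves the bookkeeping of replacing population curvature at $\beta^*$ by curvature on the ball. For small $n$ we would simply inflate $C_1$, since $\hat\beta,\beta^*\in\mathcal B$ already gives $\|\hat\beta-\beta^*\|_2\le 2R_B$.
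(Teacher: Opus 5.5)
Your argument is the same as the paper's. You localize $\hat\beta$ in $B(\beta^*,r)$ via Theorem~\ref{thm:crude-beta-error} (the paper's event $E_1$), get a uniform lower bound on $\lambda_{\min}(H_n)$ over that ball from Lemma~\ref{lem:local-hessian} ($E_2$), control $\|\nabla\cL_n(\beta^*)\|_2$ with a $1/2$-net of size $5^p$ ($E_3$), and invert the linearized first-order condition to get $\|\hat\beta-\beta^*\|_2\le(2/\lambda_x)\|\nabla\cL_n(\beta^*)\|_2$. Your extra bookkeeping is correct and tidies up steps the paper glosses over: transferring curvature from $\beta^*$ to the ball via Lemma~\ref{lem:hessian-lip}, the integral form of Taylor's theorem instead of a single intermediate point, $\E\,g(\beta^*;Z)=0$ from correct specification, and the variational-inequality version when $\hat\beta\in\partial\mathcal B$.

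The step that does not work is your small-$n$ patch. The threshold $n_0$ comes from the condition of Lemma~\ref{lem:local-hessian} together with $r\le\lambda_x/(32B_x^3)$, where $r^2\asymp(M_l/\lambda_x)\sqrt{(p\log(R_Bn)+\log(1/\delta))/n}$. So $n_0$ depends on $B_x,\lambda_x,R_B,M_l,\delta$. Covering $n<n_0$ by $\|\hat\beta-\beta^*\|_2\le 2R_B$ then forces $C_1\gtrsim(R_B\lambda_x/B_x)\sqrt{n_0/(p+\log(1/\delta))}$, which is not universal. For the same reason, a sample-size requirement cannot be ``absorbed into constants.''

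No argument can remove this requirement. Take $p=1$, $\mathcal S=\{1,2\}$, $x_1=-x_2=B_x$, $\beta^*=0$ (so $\lambda_x=B_x^2$), and $n=1$. The single-observation negative log-likelihood is strictly monotone in $\beta$, so $\hat\beta=\pm R_B$ with probability one. This violates the bound for any fixed $C_1$ and $\delta$ once $R_BB_x$ is large. So, exactly like the paper (whose proof ends by imposing the sample size from Lemma~\ref{lem:local-hessian}), you establish the bound only for $n\ge n_0$, which is the correct form of the statement.

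Relatedly, your localization inherits Theorem~\ref{thm:crude-beta-error}. Its proof uses $\lambda_x$-strong convexity of $\cL$ on all of $\mathcal B$, which is the same over-reach you caught in Lemma~\ref{lem:hessian-min-eig}. Under Assumption~\ref{as:curv} alone, you would instead localize via quadratic growth on a ball of radius of order $\lambda_x/B_x^3$, plus convexity of $\cL$ (giving linear growth outside it). That again costs a condition $n\ge n_0$.
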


\noindent\emph{Remark.} Lemma~\ref{lem:beta-rate} provides a more explicit bound,
\(
\|\hat\beta-\beta^*\|_2
\le
\frac{8\sqrt{2}\,B_x}{\lambda_x}
\sqrt{\frac{p\log 5+\log(3/\delta)}{n}},
\)

\begin{proof}
Let $E_1 = \{\hat\beta \in B(\beta_\theta, r)\}$, $E_2 = \{\inf_{\beta\in B(\beta_\theta, r)} \lambda_{\min}(H_n(\beta)) \ge \lambda_x - \varepsilon_H \}$, and
\[
S_n := \frac{1}{n}\sum_{i=1}^n (g_i - \mathbb{E} g_i), \quad g_i = g(\beta_\theta; Z_i), 
\quad E_3 = \{ \|S_n\|_2 \le 4\sqrt{2} B_x \sqrt{\frac{p \log 5 + \log (3/\delta)}{n}} \}.
\]

On $E_1 \cap E_2$, the first-order optimality condition gives
\[
0 = \nabla_\beta \mathcal{L}_n(\hat\beta) 
= \nabla_\beta \mathcal{L}_n(\beta_\theta) + H_n(\tilde\beta)(\hat\beta - \beta_\theta),
\]
for some $\tilde\beta$ on the line segment $[\beta_\theta, \hat\beta] \subset B(\beta_\theta, r)$. Hence
\[
\hat\beta - \beta_\theta = - H_n^{-1}(\tilde\beta) \, \nabla_\beta \mathcal{L}_n(\beta_\theta),
\]
so that
\[
\|\hat\beta - \beta_\theta\|_2 \le \|H_n^{-1}(\tilde\beta)\|_{\mathrm{op}} \, \|\nabla_\beta \mathcal{L}_n(\beta_\theta)\|_2
\le \frac{2}{\lambda_x} \, \|\nabla_\beta \mathcal{L}_n(\beta_\theta)\|_2.
\]

To bound $\|\nabla_\beta \mathcal{L}_n(\beta_\theta)\|_2$, let $u \in S^{p-1}$ and consider the $\varepsilon$-net $\mathcal{N}_3$ with $\varepsilon = 1/2$, $|\mathcal{N}_3| \le 5^p$. Using sub-Gaussianity (Assumption~\ref{as:sub-gaussian-gradient}) and union bound, we get with probability at least $1-\delta/3$:
\[
\sup_{u \in \mathcal{N}_3} u^\top S_n \le \sqrt{\frac{8 \sigma^2 (p \log 5 + \log(3/\delta))}{n}}.
\]

By the standard $\varepsilon$-net argument,
\[
\|S_n\|_2 \le \frac{1}{1-\varepsilon} \sup_{v \in \mathcal{N}_3} v^\top S_n \le 2 \sup_{v \in \mathcal{N}_3} v^\top S_n
\le 4 \sqrt{2} B_x \sqrt{\frac{p \log 5 + \log(3/\delta)}{n}}.
\]

Finally, on $E_1 \cap E_2 \cap E_3$, we obtain the stated $\ell_2$ bound:
\[
\|\hat\beta - \beta_\theta\|_2 \le \frac{8 \sqrt{2} B_x}{\lambda_x} \sqrt{\frac{p \log 5 + \log(3/\delta)}{n}}.
\]

The sample size $n$ should satisfy the lower bound inherited from Lemma~\ref{lem:local-hessian} to ensure $E_1 \cap E_2$ holds.
\end{proof}

\paragraph{Proof for Theorem \ref{thm:bartau}}
\begin{proof}
Recall that for any offered set $S$, we define
\[
s(S;\beta)
=\log\!\left(\sum_{k\in S}\exp(\beta^{\top}x_k)\right).
\]
The gradient of $s(S;\beta)$ with respect to $\beta$ is
\[
\nabla_{\beta}s(S;\beta)
=\sum_{k\in S}\pi_k(S,\beta)\,x_k,
\quad
\pi_k(S,\beta)
:=\frac{\exp(\beta^{\top}x_k)}{\sum_{j\in S}\exp(\beta^{\top}x_j)}.
\]
Under Assumption~\ref{as:design}, we have
$\|\nabla_{\beta}s(S;\beta)\|_2
\le \max_{k\in S}\|x_k\|_2\le B_x$,
which implies that $s(S;\beta)$ is $B_x$-Lipschitz in~$\beta$.
Therefore, for any $S$,
\begin{equation}\label{eq:s-lip}
|s(S;\hat\beta)-s(S;\beta^{*})|
\le B_x\|\hat\beta-\beta^{*}\|_2.
\end{equation}

Squaring both sides of~\eqref{eq:s-lip} and averaging over $k=1,\ldots,n$
gives the deterministic inequality
\[
\tau
=\frac{1}{n}\sum_{k=1}^{n}(\hat s_k-s_k)^2
\le B_x^2\|\hat\beta-\beta^{*}\|_2^2.
\]
Applying Theorem~\ref{thm:crude-beta-error}, which provides that with
probability at least $1-\delta$,
\[
\|\hat\beta-\beta^{*}\|_2
\le
\frac{8\sqrt{2}\,B_x}{\lambda_x}
\sqrt{\frac{p\log5+\log(3/\delta)}{n}},
\]
and substituting this bound into the previous display yields
\[
\tau
\le
B_x^2
\left(\frac{8\sqrt{2}\,B_x}{\lambda_x}\right)^2
\frac{p\log5+\log(3/\delta)}{n}
=
\frac{128\,B_x^4}{\lambda_x^2}
\frac{p\log5+\log(3/\delta)}{n}.
\]
This establishes the desired finite-sample bound.
\end{proof}

\section{Proof of Lemma~\ref{lemma:real_index}}
\label{appx:proof_real_index}

\begin{proof}
Fix any $(X,\mathcal S)$. Under the choice model \eqref{eq:prob} with $i=0$,
\[
p_0(X,\mathcal S)
=\PP(I=0\mid X,\mathcal S)
=
\frac{\exp(u_0(X))}{\sum_{i\in\mathcal S}\exp(u_i(X))+\exp(u_0(X))}.
\]
Hence,
\[
1-p_0(X,\mathcal S)
=
\frac{\sum_{i\in\mathcal S}\exp(u_i(X))}{\sum_{i\in\mathcal S}\exp(u_i(X))+\exp(u_0(X))},
\]
so the odds ratio satisfies
\[
\frac{p_0(X,\mathcal S)}{1-p_0(X,\mathcal S)}
=
\frac{\exp(u_0(X))}{\sum_{i\in\mathcal S}\exp(u_i(X))}.
\]
Taking logarithms yields
\[
\eta(X,\mathcal S)
=\log\!\left(\frac{p_0(X,\mathcal S)}{1-p_0(X,\mathcal S)}\right)
=
u_0(X)-\log\!\left(\sum_{i\in\mathcal S}\exp(u_i(X))\right).
\]
Using the outside-option specification $u_0(X)=\gamma^{*\top}z(X)$ and the definition
$s(X,\mathcal S)=\log\!\left(\sum_{i\in\mathcal S}\exp(u_i(X))\right)$ proves
\[
\eta(X,\mathcal S)=\gamma^{*\top}z(X)-s(X,\mathcal S).
\]
\end{proof}

\section{Proofs for Section \ref{sec:lin_pred}}
\label{appx:proofs-lin-pred}

The concentration steps in this section follow standard finite-sample regression arguments for bounded/sub-Gaussian designs
(e.g., \citealp{vershynin2018,wainwright2019high}).
The paper-specific component is the explicit decomposition that separates the oracle OLS fluctuation from the perturbation
introduced by learning the inclusive value (i.e., using $\hat s_k$ in place of $s_k$) and the bias of the predictor, and then propagates these errors
through the coefficient-ratio map $\hat\gamma=\hat\theta_z/(-\hat\theta_s)$.

\paragraph{Notation.}
For each observation $(X_k,\mathcal{S}_k)$ recall
\[
y_k \;=\; \logit\!\big(\tilde p_0(X_k,\mathcal{S}_k)\big),\qquad
z_k \;=\; z(X_k)\in\R^d,\qquad
s_k \;=\; s(X_k,\mathcal{S}_k),\qquad
\hat s_k \;=\; \hat s(X_k,\mathcal{S}_k).
\]
We write the oracle regressor $w^*_k = \begin{bmatrix} z_k \\ s_k \end{bmatrix}\!\in\R^{d+1}$ and the working regressor
$w_k = \begin{bmatrix} z_k \\ \hat s_k \end{bmatrix}\!\in\R^{d+1}$.  Denote sample means
$\bar y=\frac1n\sum_{k=1}^n y_k$, $\bar w=\frac1n\sum_{k=1}^n w_k$, and centered variables
$\tilde y_k := y_k-\bar y$, $\tilde w_k := w_k-\bar w$.  And  $\tilde w_k^* := w_k^*-\bar w^*$ Let
\[
\Sigma_n \;=\; \frac1n\sum_{k=1}^n \tilde w_k\tilde w_k^\top
\quad\text{and}\quad
c_n \;:=\; \frac1n\sum_{k=1}^n \tilde w_k\tilde y_k,
\qquad
\hat\theta \;=\; \Sigma^{-1}c_n.
\]
Under Assumption~\ref{as:linear} the population model satisfies
\[
y_k \;=\; a^* \;+\; (b^*\gamma^{*})^\top z_k \;-\; b^* s_k \;+\; \epsilon_k
\;=\; a^* \;+\; (\theta_z^*)^\top z_k \;+\; \theta_s^* s_k \;+\; \epsilon_k,
\]
with $\theta_z^*:=b^*\gamma^*\in\R^d$ and $\theta_s^*:=-\,b^*\in\R$.  Throughout, we use
Assumption~\ref{as:w_k}(i) (bounded regressors, denoted by $B_w$) and
Assumption~\ref{as:w_k}(ii) (nondegenerate covariance, with $\lambda_{\min}(\Sigma)\ge \lambda_0>0$), as well as
Assumptions~\ref{as:non_zero} ($b^*\neq 0$) and~\ref{as:noise_sub} (sub-Gaussian noise with proxy $\sigma^2$).

\subsection{Proof of Theorem \ref{thm:ident-consistency}}
\begin{proof}
\emph{(i) Identification (oracle population problem).}
Consider the population least-squares objective
$L(\theta)=\E[(y-w^{*\top}\theta)^2]$, where $(w^*,y)$ has the joint distribution of
$(w^*_k,y_k)$. The normal equation is
$\E[w^* y]=\E[w^*w^{*\top}]\theta$. Using
$y=a^*+(\theta^*)^\top w^*+\epsilon$ and $\E[\epsilon\mid w^*]=0$,
\[
\E[w^* y] \;=\; \E[w^*w^{*\top}]\theta^*.
\]
Let $\Sigma_{\mathrm{pop}}:=\E[(w^*-\E w^*)(w^*-\E w^*)^\top]$.
By Assumption~\ref{as:w_k}(ii) (applied in the population limit), $\Sigma_{\mathrm{pop}}$ is positive definite, hence
$\theta^*$ is the unique minimizer. Since $(\theta_z^*,\theta_s^*)=(b^*\gamma^*,-\,b^*)$ with $b^*\neq 0$,
\[
\gamma^* \;=\; \frac{\theta_z^*}{-\theta_s^*}.
\]

\smallskip
\emph{(ii) Consistency with estimated $\hat s_k$.}
        Let $\hat{\bm{\beta}} = [\hat{a}, \hat{\theta}_z, \hat{\theta}_s]^\top$ denote the OLS estimator derived from the feasible regression. It can be written in matrix form as:
    \begin{equation}
        \hat{\bm{\beta}} = (\hat{\mathbf{X}}^\top \hat{\mathbf{X}})^{-1} \hat{\mathbf{X}}^\top \mathbf{y}
    \end{equation}
    Here, $\mathbf{y} = [y_1, \dots, y_n]^\top$ is the $n \times 1$ vector of observed outcomes. The matrix $\hat{\mathbf{X}}$ is the $n \times 3$ feasible design matrix, constructed by stacking the row vectors $\hat{\mathbf{x}}_k^\top$:
    \begin{equation}
        \hat{\mathbf{X}} = \begin{bmatrix} \hat{\mathbf{x}}_1^\top \\ \vdots \\ \hat{\mathbf{x}}_n^\top \end{bmatrix}, \quad \text{where } \hat{\mathbf{x}}_k = \begin{bmatrix} 1 \\ z_k \\ \hat{s}_k \end{bmatrix}
    \end{equation}
    is the regressor vector for the $k$-th observation containing the estimated term $\hat{s}_k$.

    \paragraph{Step 1: Convergence of the Design Matrix Moments.}
    Let $\mathbf{X}$ denote the \textit{true} design matrix, composed similarly of true regressor vectors $\mathbf{x}_k = [1, z_k, s_k]^\top$. Since we are given that $\hat{s}_k \xrightarrow{p} s_k$, it follows that $\hat{\mathbf{x}}_k \xrightarrow{p} \mathbf{x}_k$. Consequently, the difference between the feasible and true design matrices vanishes asymptotically:
    \begin{equation}
        \hat{\mathbf{X}} - \mathbf{X} \xrightarrow{p} \mathbf{0}
    \end{equation}
    Applying the Continuous Mapping Theorem (CMT) to the sample moment matrix, we obtain:
    \begin{equation}
        \frac{1}{n} \hat{\mathbf{X}}^\top \hat{\mathbf{X}} = \frac{1}{n} \sum_{k=1}^n \hat{\mathbf{x}}_k \hat{\mathbf{x}}_k^\top \xrightarrow{p}\mathbf{Q}\ :=\  \mathbb{E}[\mathbf{x}_k \mathbf{x}_k^\top]  
    \end{equation}
    where $\mathbf{Q}$ is the population second moment matrix of the true regressors. Note that the variance of the measurement error $\nu_k = \hat{s}_k - s_k$ implicitly vanishes in this limit because $\nu_k \xrightarrow{p} 0$. Similarly, for the cross-product term:
    \begin{equation}
        \frac{1}{n} \hat{\mathbf{X}}^\top \mathbf{y} = \frac{1}{n} \sum_{k=1}^n \hat{\mathbf{x}}_k y_k \xrightarrow{p} \mathbb{E}[\mathbf{x}_k y_k]
    \end{equation}
    
    \paragraph{Step 2: Consistency of Coefficients $\hat{\bm{\beta}}$.}
    By the Slutsky's Theorem, the OLS estimator converges to the product of the probability limits of its components:
    \begin{equation}
        \hat{\bm{\beta}} = \left( \frac{1}{n} \hat{\mathbf{X}}^\top \hat{\mathbf{X}} \right)^{-1} \left( \frac{1}{n} \hat{\mathbf{X}}^\top \mathbf{y} \right) \xrightarrow{p} \mathbf{Q}^{-1} \mathbb{E}[\mathbf{x}_k y_k]
    \end{equation}
    Substituting the true model $y_k = \mathbf{x}_k^\top \bm{\beta}_{true} + \epsilon_k$ (where $\bm{\beta}_{true} = [\alpha, \theta_z, \theta_s]^\top$) into the expectation:
    \begin{equation}
        \mathbf{Q}^{-1} \mathbb{E}[\mathbf{x}_k (\mathbf{x}_k^\top \bm{\beta}_{true} + \epsilon_k)] = \mathbf{Q}^{-1} (\mathbf{Q}\bm{\beta}_{true} + \mathbf{0}) = \bm{\beta}_{true}
    \end{equation}
    Thus, we establish the joint convergence of the coefficients:
    \begin{equation}
        \begin{pmatrix} \hat{\theta}_z \\ \hat{\theta}_s \end{pmatrix} \xrightarrow{p} \begin{pmatrix} \theta_z \\ \theta_s \end{pmatrix}
    \end{equation}
    
    \paragraph{Step 3: Consistency of $\hat{\gamma}$ via CMT.}
    The estimator of interest, $\hat{\gamma}$, is defined as a continuous function $g(\cdot)$ of the estimated coefficients:
    \begin{equation}
        \hat{\gamma} = g(\hat{\theta}_z, \hat{\theta}_s) = -\frac{\hat{\theta}_z}{\hat{\theta}_s}
    \end{equation}
    The function $g(a, b) = -b/a$ is continuous at all points where $a \neq 0$. Since $\theta_z \neq 0$ by assumption, and we have established that $\hat{\theta}_z \xrightarrow{p} \theta_z$ and $\hat{\theta}_s \xrightarrow{p} \theta_s$, the Continuous Mapping Theorem implies:
    \begin{equation}
        \hat{\gamma} = -\frac{\hat{\theta}_z}{\hat{\theta}_s} \xrightarrow{p} -\frac{\theta_z}{\theta_s} = \gamma
    \end{equation}
    This completes the proof.
\end{proof}

\subsection{Proof of Theorem \ref{thm:finite-sample}}
\begin{proof}
Start from the centered normal equation $c_n=\Sigma\hat\theta$ and decompose $c_n$ using the oracle model:
\[
\tilde y_k\;=\;(\theta^*)^\top\tilde w^*_k+\tilde\epsilon_k
\quad\Longrightarrow\quad
c_n \;=\; \frac1n\sum_{k=1}^n \tilde w_k\tilde y_k
\;=\; \frac1n\sum_{k=1}^n \tilde w_k\tilde w_k^{*\top}\theta^* \;+\; \underbrace{\frac1n\sum_{k=1}^n \tilde w_k\tilde\epsilon_k}_{=:~A}.
\]
Since $\tilde w^*_k-\tilde w_k=\bigl[0;\,s_k-\hat s_k\bigr]-\bigl(\bar w^*-\bar w\bigr)$ and
$\sum_k \tilde w_k=0$, we have
\[
\frac1n\sum_{k=1}^n \tilde w_k\tilde w_k^{*\top}\theta^*
\;=\; \frac1n\sum_{k=1}^n \tilde w_k\tilde w_k^\top\theta^* \;+\; \theta_s^* \underbrace{\frac1n\sum_{k=1}^n \tilde w_k\,(s_k-\hat s_k)}_{=:~B}.
\]
Hence
\begin{equation}\label{eq:basic-decomp}
\hat\theta-\theta^* \;=\; \Sigma^{-1}\!\Big(A + \theta_s^* B\Big).
\end{equation}

\textbf{Concentration for \(A\) and deterministic bound for \(B\).}
Conditional on $\{w_k\}$, the vector summands in $A$ are mean-zero and $\sigma^2$-sub-Gaussian with
$\psi_2$-norm at most $\sigma\|\tilde w_k\|_2/n$.  A standard vector sub-Gaussian tail bound (e.g., \citet{vershynin2018}, Thm.~6.21)
gives, for any $\delta\in(0,1)$, with probability at least $1-\delta$,
\begin{equation}\label{eq:A-bound}
\|A\|_2
\;\le\;
\sqrt{2}\,\sigma\,
\Big(\frac1n\sum_{k=1}^n \|\tilde w_k\|_2^2\Big)^{\!1/2}\,
\sqrt{\frac{\log\!\big(2(d+1)/\delta\big)}{n}}.
\end{equation}
By the variance-reduction identity
$\sum_k\|\tilde w_k\|_2^2=\sum_k\|w_k\|_2^2-n\|\bar w\|_2^2\le \sum_k\|w_k\|_2^2$
and Assumption~\ref{as:w_k}(i), we obtain
\begin{equation}\label{eq:A-bound-Bw}
\|A\|_2
\;\le\;
\sqrt{2}\,\sigma\,B_w\,\sqrt{\frac{\log\!\big(2(d+1)/\delta\big)}{n}}.
\end{equation}
For $B$, Cauchy-Schwarz and the same identity yield the deterministic bound
\begin{equation}\label{eq:B-bound}
\|B\|_2
\;\le\;
\Big(\frac1n\sum_{k=1}^n \|\tilde w_k\|_2^2\Big)^{\!1/2}
\Big(\frac1n\sum_{k=1}^n (s_k-\hat s_k)^2\Big)^{\!1/2}
\;\le\; B_w\,\sqrt{\bar\tau},
\end{equation}
where $\bar\tau:=\frac1n\sum_{k=1}^n (s_k-\hat s_k)^2$ (see the main text).

\textbf{Lower-bounding the denominator and controlling \(\|\hat\theta-\theta^*\|_2\).}
Assumption~\ref{as:w_k}(ii) gives $\|\Sigma^{-1}\|_{\op}\le 1/\lambda_0$.  Combining
\eqref{eq:basic-decomp}-\eqref{eq:B-bound} we have
\[
\|\hat\theta-\theta^*\|_2
\;\le\;
\frac1{\lambda_0}\Big(\|A\|_2 + |b^*|\,\|B\|_2\Big)
\;\le\;
\frac{B_w}{\lambda_0}\left(
\sqrt{2}\,\sigma\sqrt{\frac{\log\!\big(2(d+1)/\delta\big)}{n}}
\;+\;
|b^*|\,\sqrt{\bar\tau}
\right)
\]
with probability at least $1-\delta$.
Moreover, if
\begin{equation}
\sqrt{\bar\tau}\ \le\ \frac{\lambda_0}{8B_w}
\quad\text{and}\quad
n \ \ge\ \frac{128\,B_w^2\,\sigma^2}{\lambda_0^2\,{b^*}^2}\,
\log\!\Big(\frac{2(d+1)}{\delta}\Big),
\end{equation}
then $\|\hat\theta-\theta^*\|_2 \le \tfrac12 |\,\theta_s^*\,|$ and hence $|\hat\theta_s|\ge \tfrac12|b^*|$.

\textbf{From \(\hat\theta\) to \(\hat\gamma\).}
Since $\hat\gamma=\hat\theta_z/(-\hat\theta_s)$ and
$\gamma^*=\theta_z^*/(-\theta_s^*)$, the standard ratio bound yields (on $|\hat\theta_s|\ge \frac12|b^*|$)
\[
\|\hat\gamma-\gamma^*\|_2
\ \le\ \frac{2}{|b^*|}\Big(\|\hat\theta_z-\theta_z^*\|_2 + \|\gamma^*\|_2\,|\hat\theta_s-\theta_s^*|\Big)
\ \le\ \frac{2(1+\|\gamma^*\|_2)}{|b^*|}\,\|\hat\theta-\theta^*\|_2.
\]
Combining with the previous display and \eqref{eq:A-bound-Bw}-\eqref{eq:B-bound} proves
\begin{equation}
\|\hat\gamma-\gamma^*\|_2
\;\le\;
\frac{2\sqrt{2}\,\big(1+\|\gamma^*\|_2\big)\,B_w}{|b^*|\,\lambda_0}\,
\sigma\sqrt{\frac{\log\!\big(2(d+1)/\delta\big)}{n}}
\;+\;
\frac{2\big(1+\|\gamma^*\|_2\big)\,B_w}{\lambda_0}\,\sqrt{\bar\tau},
\end{equation}
as stated.
\end{proof}

\subsection{Proof of Corollary \ref{cor:p0_error}}
We first recall a basic fact.

\begin{lemma}[Logistic is $1/4$-Lipschitz]\label{lem:logistic-lip}
Let $\sigma(t):=(1+e^{-t})^{-1}$.  Then $|\sigma(u)-\sigma(v)|\le \tfrac14|u-v|$ for all $u,v\in\R$.
\end{lemma}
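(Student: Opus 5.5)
The statement to prove is Lemma~\ref{lem:logistic-lip}, that the logistic function $\sigma(t)=(1+e^{-t})^{-1}$ satisfies $|\sigma(u)-\sigma(v)|\le\tfrac14|u-v|$. I would prove it with the mean value theorem, by bounding the derivative of $\sigma$ uniformly.

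\textbf{Step 1: compute the derivative.} Differentiating gives
\[
\sigma'(t)=\frac{e^{-t}}{(1+e^{-t})^2}=\sigma(t)\bigl(1-\sigma(t)\bigr).
\]
This follows from $1-\sigma(t)=e^{-t}/(1+e^{-t})$.

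\textbf{Step 2: bound the derivative uniformly.} Since $\sigma(t)\in(0,1)$, write $p=\sigma(t)$. Then $p(1-p)\le\tfrac14$ by AM--GM, or equivalently from $\tfrac14-p(1-p)=(p-\tfrac12)^2\ge0$. Hence $0<\sigma'(t)\le\tfrac14$ for all $t\in\R$, with equality at $t=0$.

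\textbf{Step 3: conclude.} Fix $u\neq v$; the case $u=v$ is trivial. Since $\sigma$ is differentiable, the mean value theorem gives some $\xi$ between $u$ and $v$ with $\sigma(u)-\sigma(v)=\sigma'(\xi)(u-v)$. Taking absolute values and applying Step~2 yields $|\sigma(u)-\sigma(v)|\le\tfrac14|u-v|$.

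There is no real obstacle in this argument. The only step that needs care is the identity $\sigma'=\sigma(1-\sigma)$, which is elementary algebra.

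This lemma then feeds directly into Corollary~\ref{cor:p0_error}. By Algorithm~\ref{alg:linear-calib} and Lemma~\ref{lemma:real_index}, $\hat p_0-p_0=\sigma(\hat\eta)-\sigma(\eta)$. The index difference is $\hat\eta-\eta=(\hat\gamma-\gamma^*)^\top z(X)-(\hat s-s)$. Cauchy--Schwarz and the triangle inequality then give the $\tfrac14(\|\hat\gamma-\gamma^*\|_2\|z(X)\|_2+|\hat s-s|)$ bound. Substituting Theorem~\ref{thm:finite-sample} gives \eqref{eq:p0-final-bound}.
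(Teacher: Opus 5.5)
Your proof is correct and matches the paper's argument: both compute $\sigma'(t)=\sigma(t)\bigl(1-\sigma(t)\bigr)\in(0,\tfrac14]$ and conclude via the mean value theorem. Your closing remark on how the lemma feeds into Corollary~\ref{cor:p0_error} also mirrors the paper's proof of that corollary.
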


\begin{proof}
$\sigma'(t)=\sigma(t)(1-\sigma(t))\in(0,\tfrac14]$ with maximum $\tfrac14$ at $t=0$, so the mean-value theorem yields the claim.
\end{proof}

\begin{proof}[Proof of Corollary \ref{cor:p0_error}]
Write $t(X,\mathcal{S}):=\gamma^{*\top}z(X)-s(X,\mathcal{S})$ and
$\hat t(X,\mathcal{S}):=\hat\gamma^{\top}z(X)-\hat s(X,\mathcal{S})$.  Then
\[
\big|\hat p_0(X,\mathcal{S})-p_0(X,\mathcal{S})\big|
\;\le\; \tfrac14\,\big|\hat t(X,\mathcal{S})-t(X,\mathcal{S})\big|
\;=\; \tfrac14\,\big|(\hat\gamma-\gamma^*)^\top z(X)-(\hat s-s)\big|.
\]
The triangle and Cauchy-Schwarz inequalities give
\[
\big|\hat p_0(X,\mathcal{S})-p_0(X,\mathcal{S})\big|
\;\le\; \frac14\Big(\|\hat\gamma-\gamma^*\|_2\,\|z(X)\|_2+|\hat s(X,\mathcal{S})-s(X,\mathcal{S})|\Big),
\]
which is \eqref{eq:p0-Lip-explicit}; invoking \eqref{eq:gamma-fs-Bw} yields the stated high-probability bound in the corollary.
\end{proof}

\subsection{Refined Bound for Theorem \ref{thm:finite-sample} }
\begin{theorem}[High-Probability Bound for $\hat{\theta}$]
\label{thm:theta_hat_error}
Consider the regression model
\[
y_k = a + (\theta^*)^T \omega_k + \varepsilon_k, \quad k=1,\dots,n.
\]

With probability at least $1-\delta$,
\begin{equation}
\begin{aligned}
\|\hat{\theta} - \theta^*\|_2
\le\;&
\underbrace{\frac{2 B_w}{\lambda_0} \sigma 
\sqrt{\frac{1 + C \log (4/\delta) + C \sqrt{\log (4/\delta)}}{n}}}_{\text{oracle OLS term}} \\
&+
\underbrace{\frac{(4B_w \sqrt{\tau} + \tau)(\sqrt{\tau}+2B_w) + \lambda_0 \sqrt{\tau}}
{\lambda_0 (\lambda_0 - 4B_w \sqrt{\tau} - \tau) \sqrt{n}}
\, \|\tilde{y}\|_2}_{\text{perturbation due to }\hat{s}_k}.
\end{aligned}
\end{equation}
where $\tilde{y} = (y_1 - \bar{y}, \dots, y_n - \bar{y})^T$, and
\begin{equation}\label{eq:ytilde_bound}
\begin{aligned}
\|\tilde{y}\|_2^2 \le\;&
4 n B_w^2 \|\theta^*\|_2^2
+ \frac{4 \sqrt{n} B_w \|\theta^*\|_2}{\lambda_0} \sigma
\sqrt{1 + C \log(4/\delta) + C \sqrt{\log(4/\delta)}} \\
&+ \sigma^2 \bigl( n + C' \sqrt{n \log(4/\delta)} + C' \log(4/\delta) \bigr).
\end{aligned}
\end{equation}
holds with probability at least $1-\delta/2$.
\end{theorem}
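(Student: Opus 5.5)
Writing $\hat\theta=\hat\Sigma^{-1}\hat c$ with the working quantities $\hat\Sigma=\frac1n\sum_k\tilde w_k\tilde w_k^\top$ and $\hat c=\frac1n\sum_k\tilde w_k\tilde y_k$, and writing $\Sigma^*,c^*$ for the same objects built from the oracle regressors $\tilde w_k^*$, I split the error as
\[
\hat\theta-\theta^*=\underbrace{\big(\Sigma^{*-1}c^*-\theta^*\big)}_{\text{oracle OLS}}+\underbrace{\big(\hat\Sigma^{-1}\hat c-\Sigma^{*-1}c^*\big)}_{\text{perturbation}}.
\]
For the oracle term, $\Sigma^{*-1}c^*-\theta^*=\Sigma^{*-1}\frac1n\sum_k\tilde w_k^*\tilde\epsilon_k$. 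Taking $\lambda_{\min}(\Sigma^*)\ge\lambda_0$, the term is bounded by $\lambda_0^{-1}\|\frac1n\tilde W^{*\top}\epsilon\|_2$, using that the centering projection $P$ satisfies $\tilde W^{*\top}\tilde\epsilon=\tilde W^{*\top}\epsilon$. The squared norm $\|\frac1n\tilde W^{*\top}\epsilon\|_2^2=n^{-2}\epsilon^\top M\epsilon$ is a quadratic form in the sub-Gaussian vector $\epsilon$, where $M=\tilde W^*\tilde W^{*\top}$. Its trace satisfies $\mathrm{tr}(M)\le nB_w^2$, while $\|M\|_F$ and $\|M\|_{\op}$ are also controlled by $nB_w^2$. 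Conditionally on the design, the Hanson--Wright inequality \citep{hanson1971bound} then gives
\[
\epsilon^\top M\epsilon\le \sigma^2 nB_w^2\big(1+C\sqrt{\log(4/\delta)}+C\log(4/\delta)\big)
\]
with probability $1-\delta/4$. Taking square roots yields the oracle term; the factor $2$ absorbs the centering slack.

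For the perturbation, write $E=\tilde W-\tilde W^*$, whose only nonzero column is the centered $\hat s-s$. Then $\|E\|_{\op}/\sqrt n\le\|E\|_F/\sqrt n\le\sqrt{\bar\tau}$, writing $\tau$ for $\bar\tau$. Next,
\[
\hat\Sigma-\Sigma^*=\tfrac1n\big(E^\top\tilde W^*+\tilde W^{*\top}E+E^\top E\big),
\]
so $\|\hat\Sigma-\Sigma^*\|_{\op}\le 2B_w\sqrt\tau+\tau$, and I allow slack up to $4B_w\sqrt\tau+\tau$ to pass between the $\Sigma$ of Assumption~\ref{as:w_k} and $\Sigma^*$. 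Weyl's inequality then gives $\lambda_{\min}\ge\lambda_0-4B_w\sqrt\tau-\tau$, which is the denominator in the statement. For the difference of inverses I use
\[
\hat\Sigma^{-1}-\Sigma^{*-1}=\hat\Sigma^{-1}(\Sigma^*-\hat\Sigma)\Sigma^{*-1}.
\]
I also use $\hat c-c^*=\frac1n E^\top\tilde y$ with $\|\hat c-c^*\|\le\sqrt\tau\,\|\tilde y\|/\sqrt n$, and $\|\hat c\|\le(\sqrt\tau+2B_w)\|\tilde y\|/\sqrt n$. The perturbation then splits as
\[
\hat\Sigma^{-1}\hat c-\Sigma^{*-1}c^*=\big(\hat\Sigma^{-1}-\Sigma^{*-1}\big)\hat c+\Sigma^{*-1}(\hat c-c^*).
\]
This yields the second term, with numerator $(4B_w\sqrt\tau+\tau)(\sqrt\tau+2B_w)+\lambda_0\sqrt\tau$. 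Note that this part of the argument is deterministic given $\tilde y$.

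Finally, I bound $\|\tilde y\|_2^2$ as stated in \eqref{eq:ytilde_bound}. Since $\tilde y=\tilde W^*\theta^*+P\epsilon$, we have
\[
\|\tilde y\|^2\le\|\tilde W^*\theta^*\|^2+2\langle\tilde W^*\theta^*,\epsilon\rangle+\|\epsilon\|^2.
\]
The first piece is at most $4nB_w^2\|\theta^*\|^2$. The cross term is bounded by $\|\theta^*\|\,\|\tilde W^{*\top}\epsilon\|$, and by the oracle step this gives the $\sqrt n B_w\|\theta^*\|\sigma/\lambda_0$-type term; the extra $1/\lambda_0$ is harmless because $\lambda_0\le B_w^2$. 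For $\|\epsilon\|^2$, a second Hanson--Wright application with $M=I$ gives the piece $n\sigma^2+C'\sigma^2(\sqrt{n\log}+\log)$ with probability $1-\delta/4$. A union bound over the two events gives the stated probabilities.

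The main obstacle is consistent bookkeeping of the constants. In particular, I need to reconcile the $\lambda_0$ of Assumption~\ref{as:w_k} (stated for the working $\Sigma$) with the oracle $\Sigma^*$, and this reconciliation is where the $4B_w\sqrt\tau$ slack comes from. I would first prove both bounds with generic constants and then tune them to match the stated forms.
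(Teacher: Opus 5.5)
Your overall architecture is the paper's. The oracle OLS error is handled by Hanson--Wright applied to $\epsilon^\top M\epsilon$. The plug-in perturbation is handled by a resolvent identity plus Weyl, and is deterministic given $\tilde y$. Finally, $\|\tilde y\|_2^2$ is expanded and a second Hanson--Wright bounds $\|\epsilon\|_2^2$.

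Your split $(\hat\Sigma^{-1}-\Sigma^{*-1})\hat c+\Sigma^{*-1}(\hat c-c^*)$ slightly reorders the paper's, which applies the inverse difference to the oracle $c_n$ and the working inverse to $\Delta_c$. Yours is the version that naturally produces the factor $\sqrt\tau+2B_w$ through $\|\hat c\|$. Its second piece, $\sqrt\tau\,\|\tilde y\|_2/(\lambda_0\sqrt n)$, is even smaller than the stated one, so the $\hat\theta$ part is fine.

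\textbf{The cross term in $\|\tilde y\|_2^2$.} This step does not go through as written. Your oracle step gives $2\langle\tilde W^*\theta^*,\epsilon\rangle\le 2\sqrt n\,\sigma B_w\|\theta^*\|_2\sqrt L$, with $L=1+C\sqrt{\log(4/\delta)}+C\log(4/\delta)$. This is bounded by the stated middle term $4\sqrt n\,\sigma B_w\|\theta^*\|_2\sqrt L/\lambda_0$ only if $\lambda_0\le 2$.

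The fact $\lambda_0\le B_w^2$ does not give this. Under the rescaling $w\mapsto cw$, $\theta^*\mapsto\theta^*/c$, the stated middle term scales like $c^{-2}$ while $\tilde y$ is unchanged, so no comparison of this kind can produce it. The paper's proof has the same slip: it bounds the term by $2n\|\theta^*\|_2\|S\|_2$, and with its own bound on $\|S\|_2$ this carries no $1/\lambda_0$.

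The repair is to absorb the cross term elsewhere. By the variance-reduction identity, your first piece is actually at most $nB_w^2\|\theta^*\|_2^2$, so
\[
2\sqrt n\,\sigma B_w\|\theta^*\|_2\sqrt L\;\le\;3nB_w^2\|\theta^*\|_2^2+\tfrac13\sigma^2 L .
\]
Since $\log(4/\delta)>1$, we have $L\le(1+2C)\log(4/\delta)$, and this is absorbed by enlarging $C'$. On the same event this yields $\|\tilde y\|_2^2\le 4nB_w^2\|\theta^*\|_2^2+\sigma^2\bigl(n+C'\sqrt{n\log(4/\delta)}+C'\log(4/\delta)\bigr)$. That implies the stated bound whatever the value of the nonnegative middle term.

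\textbf{The constant $4$.} Drop the ``reconciliation'' explanation. The paper, like your oracle step, simply takes $\lambda_0$ as a lower bound for the oracle covariance. Weyl then gives $\lambda_0-2B_w\sqrt\tau-\tau$, and the $4$ comes only from the crude bound $\|\tilde w_k\|_2\le 2B_w$ used in $\|\Delta\Sigma\|_{\mathrm{op}}$.

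If you instead started from Assumption~\ref{as:w_k} on the working design, the oracle term would carry $1/(\lambda_0-2B_w\sqrt\tau-\tau)$. Your spare factor $2$ absorbs this only when $2B_w\sqrt\tau+\tau\le\lambda_0/2$, which the hypotheses do not ensure.
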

The argument refines the decomposition in \eqref{eq:basic-decomp} and tracks the dependence on the
centering step and the perturbation $s_k-\hat s_k$ more sharply.

\begin{lemma}[Oracle OLS error]\label{lemma:oracle_OLS_appx}
Let $\theta^\circ := \big(\frac1n\sum_{k=1}^n \tilde w^*_k\tilde w_k^{*\top}\big)^{-1}\!
\big(\frac1n\sum_{k=1}^n \tilde w^*_k\tilde y_k\big)$ be the infeasible OLS estimator that uses the oracle regressors $w^*_k$.
There exist absolute constants $C,C'>0$ such that, for any $\delta\in(0,1)$, with probability at least $1-\delta/2$,
\[
\|\theta^\circ-\theta^*\|_2
\;\le\; \frac{2 B_w}{\lambda_0}\,\sigma\,
\sqrt{\frac{1 + C \log(4/\delta) + C \sqrt{\log(4/\delta)}}{n}}.
\]
\end{lemma}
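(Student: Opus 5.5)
Throughout, write $\hat\Sigma^*:=\frac1n\sum_{k=1}^n \tilde w^*_k\tilde w^{*\top}_k$ for the oracle centered Gram matrix. The plan is to use the standard OLS error representation for the oracle regressors and then bound a sub-Gaussian quadratic form. Since $y_k=a^*+(\theta^*)^\top w^*_k+\epsilon_k$, centering gives
\[
\tilde y_k=(\theta^*)^\top\tilde w^*_k+\tilde\epsilon_k,\qquad \tilde\epsilon_k:=\epsilon_k-\bar\epsilon .
\]
Because $\sum_k\tilde w^*_k=0$, the mean $\bar\epsilon$ drops out of the cross term. This yields the exact identity
\[
\theta^\circ-\theta^*=(\hat\Sigma^*)^{-1}\,\frac1n\sum_{k=1}^n\tilde w^*_k\epsilon_k .
\]
We invoke Assumption~\ref{as:w_k}(ii) in its oracle form, $\lambda_{\min}(\hat\Sigma^*)\ge\lambda_0$, so that $\|(\hat\Sigma^*)^{-1}\|_{\op}\le 1/\lambda_0$. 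It therefore suffices to show that, with probability at least $1-\delta/2$,
\[
\Big\|\tfrac1n\textstyle\sum_k\tilde w^*_k\epsilon_k\Big\|_2\le 2B_w\sigma\sqrt{\tfrac{1+C\log(4/\delta)+C\sqrt{\log(4/\delta)}}{n}} .
\]

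Next, condition on $\{w^*_k\}$ and set $M:=\frac1n[\tilde w^*_1,\dots,\tilde w^*_n]\in\R^{(d+1)\times n}$. The target is then $\|M\epsilon\|_2$, and its square $\epsilon^\top M^\top M\epsilon$ is a quadratic form in independent, mean-zero, $\sigma^2$-sub-Gaussian variables (Assumption~\ref{as:noise_sub}). We apply the Hanson--Wright inequality \citep{hanson1971bound} with $A=M^\top M$. First, the mean term satisfies $\E[\epsilon^\top A\epsilon]\le\sigma^2\operatorname{tr}(A)=\sigma^2\frac1{n^2}\sum_k\|\tilde w^*_k\|_2^2\le \sigma^2 B_w'^2/n$. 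Here $B_w'$ is a bound on $\|\tilde w^*_k\|_2$; $\|w^*_k\|\le B_w$ gives $\|\tilde w^*_k\|\le 2B_w$, which explains the factor $2B_w$ in the statement. Second, the fluctuation terms involve $\|A\|_F\le\operatorname{tr}(A)$ and $\|A\|_{\op}\le\operatorname{tr}(A)$, both of order $B_w^2/n$. Hanson--Wright then gives, with probability $1-\delta/2$,
\[
\epsilon^\top A\epsilon\le \sigma^2\operatorname{tr}(A)\Big(1+C\sqrt{\log(4/\delta)}+C\log(4/\delta)\Big),
\]
where the $\sqrt{\log}$ term comes from the Frobenius part and the $\log$ term from the operator-norm part, and the constant is absorbed into $C$. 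Taking square roots and multiplying by $1/\lambda_0$ gives the claimed bound. Since the bound holds conditionally for every realization of $\{w^*_k\}$, it also holds unconditionally.

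The main obstacle is the Hanson--Wright step, which requires independence of the $\epsilon_k$ given the design together with a uniform sub-Gaussian proxy. Both hold by Assumption~\ref{as:noise_sub}, and heteroskedasticity is harmless because only the upper proxy $\sigma^2$ enters. A second point needs care: Assumption~\ref{as:w_k}(ii) is stated for the working regressors $w_k$, not the oracle ones. If it cannot be taken for the oracle regressors directly, I would transfer it by perturbation, using $\|\hat\Sigma^*-\Sigma\|_{\op}\le 4B_w\sqrt{\bar\tau}+\bar\tau$. The resulting smaller eigenvalue would then appear in the perturbation term of Theorem~\ref{thm:theta_hat_error}, which is consistent with the denominator $\lambda_0-4B_w\sqrt{\tau}-\tau$ there.
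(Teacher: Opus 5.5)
Your argument is correct and matches the paper's own proof: the centered identity $\theta^\circ-\theta^*=(\hat\Sigma^*)^{-1}\frac1n\sum_k\tilde w^*_k\epsilon_k$, the bound $\|(\hat\Sigma^*)^{-1}\|_{\op}\le 1/\lambda_0$, and Hanson--Wright applied to $\epsilon^\top A\epsilon$. The paper's $A=\frac{1}{n^2}PWW^\top P$ is the same matrix as your $M^\top M$, because the rows of $W$ are already centered. Like the paper, you need Assumption~\ref{as:w_k}(ii) for the oracle Gram matrix. Your perturbative fallback would only give $\lambda_{\min}(\hat\Sigma^*)\ge\lambda_0-4B_w\sqrt{\bar\tau}-\bar\tau$, which puts a smaller denominator into this lemma's own bound rather than recovering it as stated.
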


\begin{proof}
We have $\tilde{y}_k = \theta^{*T} \tilde{\omega}_k + \tilde{\varepsilon}_k$, so
\[
c_n = \sum_{k=1}^n \tilde{\omega}_k \tilde{y}_k = \Sigma_n \theta^* + \sum_{k=1}^n \tilde{\omega}_k \tilde{\varepsilon}_k,
\]
and thus
\[
\theta^\circ - \theta^* = \Sigma_n^{-1} \sum_{k=1}^n \tilde{\omega}_k \tilde{\varepsilon}_k = \left( \frac{1}{n} \Sigma_n \right)^{-1} \left( \frac{1}{n} \sum_{k=1}^n \tilde{\omega}_k \tilde{\varepsilon}_k \right).
\]

Let $S = \frac{1}{n} \sum_{k=1}^n \tilde{\omega}_k \tilde{\varepsilon}_k$. Then
\[
\|\theta^\circ - \theta^*\|_2 \le \|\Sigma^{-1}\|_{op} \|S\|_2 \le \frac{1}{\lambda_0} \|S\|_2.
\]

To bound $S$, write $S = \frac{1}{n} W^T \tilde{\varepsilon}$, where $W$ has rows $\tilde{\omega}_k^T$ and $\tilde{\varepsilon} = (\tilde{\varepsilon}_1, \dots, \tilde{\varepsilon}_n)^T$. Let $P = I_n - \frac{1}{n} \mathbf{1}\mathbf{1}^T$ be the projection matrix, then $\tilde{\varepsilon} = P \varepsilon$ and
\[
\|S\|_2^2 = \frac{1}{n^2} \varepsilon^T (P W W^T P) \varepsilon.
\]

Apply Hanson-Wright inequality \citep{hanson1971bound} with $A = \frac{1}{n^2} P W W^T P$, using $\|W\|_F^2 \le 4 n B_w^2$ and $\|A\|_F \le \frac{4 B_w^2}{n}$, $\|A\|_{op} \le \frac{4 B_w^2}{n}$. The expected value $E[\varepsilon^T A \varepsilon] \le \frac{4 \sigma^2 B_w^2}{n}$. Then there exists a constant $C$ such that with probability at least $1-\delta/2$,
\[
\|S\|_2 \le \sigma \frac{2 B_w}{\sqrt{n}} \sqrt{1 + C \log(4/\delta) + C \sqrt{\log(4/\delta)}}.
\]

Combining gives the stated bound.
\end{proof}

\begin{lemma}[Decomposition of Perturbation Error]
\label{lemma:perturbation_decomp}
Let $\Delta \Sigma = \sum_{k=1}^n (\tilde{\omega}_k \tilde{\delta}_k^T + \tilde{\delta}_k \tilde{\omega}_k^T) + \sum_{k=1}^n \tilde{\delta}_k \tilde{\delta}_k^T$ and $\Delta_c = \sum_{k=1}^n \tilde{\delta}_k \tilde{y}_k$. Then the perturbation error satisfies
\[
\|\hat{\theta} - \theta\|_2 \le \|(\Sigma_n + \Delta\Sigma)^{-1}\|_{op} \|\Delta_c\|_2 + \|\Sigma_n^{-1}\|_{op} \|\Delta\Sigma\|_{op} \|(\Sigma_n + \Delta\Sigma)^{-1}\|_{op} \|c_n\|_2.
\]
\end{lemma}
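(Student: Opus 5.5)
The plan is to treat this as a purely algebraic perturbation identity for the two normal equations, followed by submultiplicativity of the operator norm. No probabilistic argument is needed.

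First I fix notation consistent with the lemma. Write the feasible regressor as $\omega_k + \delta_k$, where $\delta_k := [0;\,\hat s_k - s_k]$. Its centered version is $\tilde\omega_k + \tilde\delta_k$. Let $\Sigma_n := \sum_k \tilde\omega_k\tilde\omega_k^\top$ and $c_n := \sum_k \tilde\omega_k \tilde y_k$ be the (unnormalized) oracle Gram matrix and cross-moment, so the oracle estimator is $\theta = \Sigma_n^{-1}c_n$; this is $\theta^\circ$ of Lemma~\ref{lemma:oracle_OLS_appx}.

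Expanding $\sum_k(\tilde\omega_k+\tilde\delta_k)(\tilde\omega_k+\tilde\delta_k)^\top$ gives exactly $\Sigma_n+\Delta\Sigma$ with $\Delta\Sigma$ as defined in the statement. Likewise $\sum_k(\tilde\omega_k+\tilde\delta_k)\tilde y_k = c_n+\Delta_c$. Hence the feasible OLS estimator is
\[
\hat\theta = (\Sigma_n+\Delta\Sigma)^{-1}(c_n+\Delta_c).
\]
Both inverses exist: $\Sigma_n$ by Assumption~\ref{as:w_k}(ii) applied to the oracle design, and $\Sigma_n+\Delta\Sigma$ because it is the feasible Gram matrix, also covered by Assumption~\ref{as:w_k}(ii). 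I take both invertibilities as standing hypotheses of the lemma.

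Next I write the difference as
\[
\hat\theta-\theta = (\Sigma_n+\Delta\Sigma)^{-1}\Delta_c + \big[(\Sigma_n+\Delta\Sigma)^{-1}-\Sigma_n^{-1}\big]c_n,
\]
and apply the resolvent identity $A^{-1}-B^{-1} = B^{-1}(B-A)A^{-1}$ with $A=\Sigma_n+\Delta\Sigma$ and $B=\Sigma_n$. This gives
\[
(\Sigma_n+\Delta\Sigma)^{-1}-\Sigma_n^{-1} = -\Sigma_n^{-1}\,\Delta\Sigma\,(\Sigma_n+\Delta\Sigma)^{-1}.
\]
The triangle inequality together with $\|Mv\|_2\le\|M\|_{\op}\|v\|_2$ and submultiplicativity of $\|\cdot\|_{\op}$ then yields the stated bound term by term.

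The only point needing care, which is the main (mild) obstacle, is the ordering of factors in the resolvent identity. It must be chosen so that the product matches the stated form $\|\Sigma_n^{-1}\|_{\op}\|\Delta\Sigma\|_{\op}\|(\Sigma_n+\Delta\Sigma)^{-1}\|_{\op}$. Since operator norms of products are bounded by products of norms in any order, either version of the identity in fact suffices. One must also check that the centering of $\delta_k$ is carried consistently into $\Delta\Sigma$ and $\Delta_c$, which holds because centering is linear.
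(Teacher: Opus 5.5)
Your proposal is correct and follows essentially the same route as the paper's proof. Both write $\hat\theta-\theta=(\Sigma_n+\Delta\Sigma)^{-1}\Delta_c-\Sigma_n^{-1}\Delta\Sigma(\Sigma_n+\Delta\Sigma)^{-1}c_n$ via the resolvent identity, then conclude by the triangle inequality and submultiplicativity. You also fill in details the paper leaves implicit: the expansion of the centered feasible Gram matrix and cross-moment into $\Sigma_n+\Delta\Sigma$ and $c_n+\Delta_c$, and the fact that invertibility of both matrices must be taken as a standing hypothesis.
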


\begin{proof}
Using the matrix identity $(\Sigma_n + \Delta\Sigma)^{-1} = \Sigma_n^{-1} - \Sigma_n^{-1} \Delta \Sigma (\Sigma_n + \Delta\Sigma)^{-1}$, we have
\[
\hat{\theta} - \theta = (\Sigma_n + \Delta\Sigma)^{-1} \Delta_c - \Sigma_n^{-1} \Delta \Sigma (\Sigma_n + \Delta\Sigma)^{-1} c_n,
\]
and the stated bound follows from the triangle inequality and sub-multiplicativity of norms.
\end{proof}

\begin{lemma}[High-Probability Bound for $\|\tilde{y}\|_2$]
\label{lemma:y_tilde_appx}
With probability at least $1-\delta/2$,
\begin{equation}\tag{\ref{eq:ytilde_bound}}
\begin{aligned}
\|\tilde{y}\|_2^2 \le\;&
4 n B_w^2 \|\theta^*\|_2^2
+ \frac{4 \sqrt{n} B_w \|\theta^*\|_2}{\lambda_0} \sigma
\sqrt{1 + C \log(4/\delta) + C \sqrt{\log(4/\delta)}} \\
&+ \sigma^2 \bigl( n + C' \sqrt{n \log(4/\delta)} + C' \log(4/\delta) \bigr).
\end{aligned}
\end{equation}
\end{lemma}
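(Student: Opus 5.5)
We bound $\|\tilde y\|_2^2$ by splitting the centered response into its signal and noise parts. Under the oracle model in the notation above, $\tilde y_k = \theta^{*\top}\tilde w^*_k + \tilde\varepsilon_k$. Writing $\tilde W^*$ for the matrix with rows $\tilde w_k^{*\top}$ and $\tilde\varepsilon = P\varepsilon$ with $P = I_n - \frac1n\mathbf 1\mathbf 1^\top$, we expand
\[
\|\tilde y\|_2^2 = \|\tilde W^*\theta^*\|_2^2 + 2\,\theta^{*\top}\tilde W^{*\top}\tilde\varepsilon + \|P\varepsilon\|_2^2 .
\]
We treat the three terms separately. Each of the two random terms gets probability budget $\delta/4$, and the final union bound gives total failure probability $\delta/2$.

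\emph{Signal term.} This term is deterministic. Since $\|w_k^*\|_2\le B_w$ and $\|\bar w^*\|_2\le B_w$, we have $\|\tilde w^*_k\|_2\le 2B_w$, and so $(\theta^{*\top}\tilde w^*_k)^2\le 4B_w^2\|\theta^*\|_2^2$. Summing over $k$ yields the first term $4nB_w^2\|\theta^*\|_2^2$. Alternatively, we could use the variance-reduction identity $\sum_k\|\tilde w^*_k\|_2^2\le\sum_k\|w^*_k\|_2^2\le nB_w^2$, which gives a tighter constant, so the stated bound certainly holds.

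\emph{Cross term.} We write $2\,\theta^{*\top}\tilde W^{*\top}\tilde\varepsilon = 2n\,\theta^{*\top}S$ with $S=\frac1n\sum_k\tilde w^*_k\tilde\varepsilon_k$. By Cauchy--Schwarz this is at most $2n\|\theta^*\|_2\|S\|_2$. We then reuse the Hanson--Wright bound on $\|S\|_2$ from the proof of Lemma~\ref{lemma:oracle_OLS_appx}, namely $\|S\|_2\le \frac{2B_w\sigma}{\sqrt n}\sqrt{1+C\log(4/\delta)+C\sqrt{\log(4/\delta)}}$. Substituting gives a bound of order $\sqrt n B_w\|\theta^*\|_2\sigma\sqrt{\cdots}$.

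The displayed statement carries an extra factor $1/\lambda_0$. This indicates the authors are bounding via $\theta^\circ-\theta^*=\Sigma^{-1}S$, i.e. inserting $\|\Sigma^{-1}\|_{\op}$. To match the statement literally, we would note that their form is weaker than ours whenever $\lambda_0\le$ const; with bounded regressors this holds automatically, since $\lambda_0\le \lambda_{\min}(\Sigma)\le \mathrm{tr}(\Sigma)/(d+1)\le B_w^2$. Otherwise we simply state the constant absorbed into $C$. Pinning down this constant is the only delicate bookkeeping point.

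\emph{Noise term.} We have $\|P\varepsilon\|_2^2\le\|\varepsilon\|_2^2$. Hanson--Wright with $A=I_n$ (so $\|A\|_F=\sqrt n$ and $\|A\|_{\op}=1$), together with $\E\|\varepsilon\|_2^2\le n\sigma^2$ under conditional sub-Gaussianity with proxy $\sigma^2$, gives
\[
\|\varepsilon\|_2^2\le \sigma^2\bigl(n+C'\sqrt{n\log(4/\delta)}+C'\log(4/\delta)\bigr)
\]
with probability at least $1-\delta/4$. This application needs the independence of the $\varepsilon_k$ given $\{w_k\}$, which Assumption~\ref{as:noise_sub} provides. We apply Hanson--Wright conditionally on the design and then integrate out.

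Adding the three bounds on the intersection of the two events, which has probability at least $1-\delta/2$, gives \eqref{eq:ytilde_bound}. The main obstacle is matching the cross-term constant exactly; the other two terms are routine.
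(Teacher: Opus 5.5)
Your three-term decomposition and the bound on each term follow the paper's proof exactly: a deterministic signal bound, Cauchy--Schwarz plus the Hanson--Wright bound on $\|S\|_2$ from Lemma~\ref{lemma:oracle_OLS_appx} for the cross term, and Hanson--Wright with $A=I_n$ for the noise. The step that fails as written is your reconciliation of the $1/\lambda_0$ factor. Write $L:=1+C\log(4/\delta)+C\sqrt{\log(4/\delta)}$. Your cross-term bound $4\sqrt n\,B_w\|\theta^*\|_2\,\sigma\sqrt L$ is dominated by the stated term only if $\lambda_0\le 1$. The chain $\lambda_0\le\operatorname{tr}(\Sigma)/(d+1)\le B_w^2$ does not give that: rescaling every regressor by $10$ multiplies the admissible $\lambda_0$ by $100$. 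A $\lambda_0$-dependent ratio also cannot be absorbed into the absolute constant $C$. The paper's proof carries the same unexplained factor; it appears the bound on $\|\theta^\circ-\theta^*\|_2$ was substituted for the bound on $\|S\|_2$. So you inherited this defect rather than introduced it, but your argument does not close it.

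It closes using the sharper signal bound you already mention. By variance reduction, $\|\tilde W^*\theta^*\|_2^2\le nB_w^2\|\theta^*\|_2^2$. By AM--GM,
\[
4\sqrt n\,B_w\|\theta^*\|_2\,\sigma\sqrt L\;\le\;3nB_w^2\|\theta^*\|_2^2+\tfrac43\,\sigma^2L .
\]
So signal plus cross term is at most $4nB_w^2\|\theta^*\|_2^2+\tfrac43\sigma^2L$. Since $\log(4/\delta)\ge\log 4>1$, we have $L\le(1+2C)\log(4/\delta)$, and the remainder is absorbed by enlarging $C'$ in the noise term. The stated inequality then holds for every $\lambda_0>0$, with the $1/\lambda_0$ cross term serving only as nonnegative slack.

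Separately, your $\delta/4+\delta/4$ probability budget is more careful than the paper's, which unions two $1-\delta/2$ events yet claims $1-\delta/2$. However, the reused $\|S\|_2$ bound was proved at level $\delta/2$. At level $\delta/4$ it carries $\log(8/\delta)\le\tfrac32\log(4/\delta)$, which is again absorbed into the constants. The same remark applies to your noise-term bound.
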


\begin{proof}
We have $\tilde{y}_k = \theta^{*T} \tilde{\omega}_k + \tilde{\varepsilon}_k$, so
\[
\|\tilde{y}\|_2^2 = \sum_{k=1}^n (\theta^{*T} \tilde{\omega}_k)^2 + 2 \sum_{k=1}^n \theta^{*T} \tilde{\omega}_k \tilde{\varepsilon}_k + \sum_{k=1}^n \tilde{\varepsilon}_k^2.
\]

First term: $\sum (\theta^{*T} \tilde{\omega}_k)^2 \le 4 n B_w^2 \|\theta^*\|_2^2$.  

Second term: $2 \sum \theta^{*T} \tilde{\omega}_k \tilde{\varepsilon}_k \le 2 n \|\theta^*\|_2 \|S\|_2$, bounded using Lemma~\ref{lemma:oracle_OLS_appx}.  

Third term: 
Since $\tilde{\varepsilon}_k = \varepsilon_k - \bar{\varepsilon}$ and $\mathbb{E}[\varepsilon_k]=0$ with i.i.d.\ sub-Gaussian noise,
applying the Hanson--Wright inequality with $A = I_n$ yields that, with probability at least $1-\delta/2$,
\begin{equation}
\sum_{k=1}^n \varepsilon_k^2 
\;\le\;
n\sigma^2 
+ C \sigma^2 \left( \sqrt{n \log(4/\delta)} + \log(4/\delta) \right).
\end{equation}

Combining the three terms gives the stated bound.
\end{proof}

\begin{lemma}[Norm Bounds for Perturbation Terms]
\label{lemma:norm_bounds}
The following bounds hold:
\[
\|\Sigma_n^{-1}\|_{op} \le \frac{1}{n \lambda_0}, \quad
\|(\Sigma_n + \Delta\Sigma)^{-1}\|_{op} \le \frac{1}{n(\lambda_0 - 4 B_w \sqrt{\tau} - \tau)},
\]
\[
\|\Delta_c\|_2 \le \sqrt{n\tau} \, \|\tilde{y}\|_2, \quad
\|\Delta\Sigma\|_{op} \le n (4 B_w \sqrt{\tau} + \tau), \quad
\|c_n\|_2 \le \sqrt{n} (\|\tilde{y}\|_2 + 2 B_w).
\]
\end{lemma}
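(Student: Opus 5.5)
The plan is to treat the five inequalities one at a time, working with the unnormalized sums used in Lemma~\ref{lemma:perturbation_decomp}. Throughout, $\Sigma_n=\sum_k\tilde\omega_k\tilde\omega_k^\top$, $c_n=\sum_k\tilde\omega_k\tilde y_k$, and $\tilde\delta_k$ is the centered perturbation of the regressor, $\tilde\delta_k=[0;\hat s_k-s_k]-\tfrac1n\sum_l[0;\hat s_l-s_l]$. I will use three elementary facts.
\begin{itemize}
\item The variance-reduction identity gives $\sum_k\|\tilde\delta_k\|_2^2\le\sum_k(\hat s_k-s_k)^2=n\tau$. Here $\tau$ is the same quantity as $\bar\tau$ in Theorem~\ref{thm:finite-sample}.
\item Since $\|\omega_k\|_2\le B_w$ and $\|\bar\omega\|_2\le B_w$, each centered regressor satisfies $\|\tilde\omega_k\|_2\le 2B_w$, so $\sum_k\|\tilde\omega_k\|_2^2\le 4nB_w^2$.
\item Assumption~\ref{as:w_k}(ii), read for the unnormalized Gram matrix, gives $\lambda_{\min}(\Sigma_n)\ge n\lambda_0$.
\end{itemize}

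\textbf{Bounds on $\Sigma_n^{-1}$, $\Delta\Sigma$ and $\Delta_c$.}
\begin{itemize}
\item The first bound, $\|\Sigma_n^{-1}\|_{\op}\le 1/(n\lambda_0)$, is immediate from $\lambda_{\min}(\Sigma_n)\ge n\lambda_0$.
\item For $\Delta\Sigma$, bound the cross term by $\|\sum_k\tilde\omega_k\tilde\delta_k^\top\|_{\op}\le\sum_k\|\tilde\omega_k\|_2\|\tilde\delta_k\|_2$. Cauchy--Schwarz over $k$ then gives at most $\sqrt{4nB_w^2}\sqrt{n\tau}=2nB_w\sqrt\tau$, and the cross term appears twice.
\item The quadratic term satisfies $\|\sum_k\tilde\delta_k\tilde\delta_k^\top\|_{\op}\le\sum_k\|\tilde\delta_k\|_2^2\le n\tau$. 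Adding the pieces gives $\|\Delta\Sigma\|_{\op}\le n(4B_w\sqrt\tau+\tau)$.
\item Weyl's inequality then gives $\lambda_{\min}(\Sigma_n+\Delta\Sigma)\ge n(\lambda_0-4B_w\sqrt\tau-\tau)$. This yields the second bound, in the regime $\lambda_0>4B_w\sqrt\tau+\tau$ where it is meaningful. That regime is guaranteed, for example, by the condition $\sqrt{\bar\tau}\le\lambda_0/(8B_w)$ of Theorem~\ref{thm:finite-sample} when $\tau$ is small.
\item For $\Delta_c=\sum_k\tilde\delta_k\tilde y_k$, apply Cauchy--Schwarz over $k$: $\|\Delta_c\|_2\le(\sum_k\|\tilde\delta_k\|_2^2)^{1/2}\|\tilde y\|_2\le\sqrt{n\tau}\,\|\tilde y\|_2$.
\end{itemize}

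\textbf{Bound on $c_n$.} The last inequality is the main obstacle, because the natural estimate is multiplicative. Cauchy--Schwarz gives
\[
\|c_n\|_2\le\Big(\sum_k\|\tilde\omega_k\|_2^2\Big)^{1/2}\|\tilde y\|_2\le 2B_w\sqrt n\,\|\tilde y\|_2 .
\]
This does not by itself give the stated additive form $\sqrt n(\|\tilde y\|_2+2B_w)$. I would close the gap using the standardization the paper recommends after Corollary~\ref{cor:p0_error}, namely features rescaled so that $2B_w\le1$. Under that normalization,
\[
2B_w\sqrt n\,\|\tilde y\|_2\le\sqrt n\,\|\tilde y\|_2\le\sqrt n(\|\tilde y\|_2+2B_w),
\]
which is exactly the stated bound.

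Without that normalization, the additive form holds only in the regime $(2B_w-1)\|\tilde y\|_2\le 2B_w$. In the unconstrained case, the safe fallback is to carry the multiplicative bound $2B_w\sqrt n\|\tilde y\|_2$ into Theorem~\ref{thm:theta_hat_error}, which changes only constants. I expect this reconciliation, rather than any of the operator-norm estimates, to be the delicate point.
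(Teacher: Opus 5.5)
Your treatment of the first four bounds follows the paper's argument. You use $\lambda_{\min}(\Sigma_n)\ge n\lambda_0$, Weyl's inequality under the same proviso $4B_w\sqrt\tau+\tau<\lambda_0$, and the triangle inequality plus Cauchy--Schwarz with $\|\tilde\omega_k\|_2\le 2B_w$ and $\sum_k\|\tilde\delta_k\|_2^2\le n\tau$. Your Cauchy--Schwarz step for $\Delta_c$ is actually cleaner than the paper's. The paper writes $\sqrt{n\sum_k\|\tilde\delta_k\|_2^2}$, which carries a stray factor $\sqrt n$, before equating it to $\sqrt{n\tau}$.

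On the fifth bound your suspicion is correct, and the paper does not resolve it. Its proof simply asserts $\|c_n\|_2\le\sqrt n(\|\tilde y\|_2+2B_w)$ ``using $\|\tilde\omega_k\|_2\le 2B_w$''. That ingredient only yields the multiplicative estimate you derived, and the additive form is in fact false under the paper's assumptions.

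Here is a counterexample. Take $d=1$, $n$ divisible by $4$, and let $\omega_k$ run through $(\pm a,\pm a)$ in equal proportions. Then $\bar\omega=0$, $\Sigma_n=na^2 I$ (so $\lambda_0=a^2$), and $B_w=\sqrt2\,a$. Set $\hat s_k=s_k$, $\theta^*=(0,-1)$ and $\epsilon\equiv 0$. Then $\tilde y_k=-\omega_{k,2}$, $\|\tilde y\|_2=a\sqrt n$, and $c_n=(0,-na^2)$. So $\|c_n\|_2=na^2$, while $\sqrt n(\|\tilde y\|_2+2B_w)=na+2\sqrt{2n}\,a$, and the inequality fails once $a>1+2\sqrt{2/n}$.

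What you propose is therefore a repair of the statement, not a proof of it, and you should say so explicitly. The normalization you invoke is an added hypothesis. It is also not the one the paper recommends after Corollary~\ref{cor:p0_error}: that recommendation is $\|z(X)\|_2\le 1$, which does not control the $\hat s$-coordinate of $w_k$.

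The variance-reduction identity you already use gives the sharper $\|c_n\|_2\le\big(\sum_k\|\tilde\omega_k\|_2^2\big)^{1/2}\|\tilde y\|_2\le B_w\sqrt n\,\|\tilde y\|_2$. So $B_w\le 1$ already suffices for the additive form. Absent such a scale restriction, your fallback is the right fix: carry the multiplicative bound into Theorem~\ref{thm:theta_hat_error}, which changes only the perturbation coefficient there.
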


\begin{proof}
All bounds follow from standard norm inequalities:

1. \(\|\Sigma_n^{-1}\|_{op} \le 1/(n \lambda_0)\) since \(\lambda_{\min}(\Sigma_n) \ge n \lambda_0\).  

2. \(\|(\Sigma_n + \Delta\Sigma)^{-1}\|_{op} \le 1/(n(\lambda_0 - 4 B_w \sqrt{\tau} - \tau))\) by Weyl's inequality, assuming \(4 B_w \sqrt{\tau} + \tau < \lambda_0\).  

3. \(\|\Delta_c\|_2 = \|\sum_{k=1}^n \tilde{\delta}_k \tilde{y}_k\|_2 \le \sum_{k=1}^n \|\tilde{\delta}_k\|_2 |\tilde{y}_k| \le \sqrt{n \sum \|\tilde{\delta}_k\|_2^2} \, \|\tilde{y}\|_2 = \sqrt{n \tau} \, \|\tilde{y}\|_2\).  

4. \(\|\Delta\Sigma\|_{op} \le \sum_{k=1}^n 2 \|\tilde{\omega}_k\|_2 \|\tilde{\delta}_k\|_2 + \sum_{k=1}^n \|\tilde{\delta}_k\|_2^2 \le n (4 B_w \sqrt{\tau} + \tau)\).  

5. \(\|c_n\|_2 = \|\sum_{k=1}^n \tilde{\omega}_k \tilde{y}_k\|_2 \le \sqrt{n} (\|\tilde{y}\|_2 + 2 B_w)\), using \(\|\tilde{\omega}_k\|_2 \le 2 B_w\).
\end{proof}

By combining the auxiliary results established in Lemmas~\ref{lemma:oracle_OLS_appx}, 
\ref{lemma:perturbation_decomp}, \ref{lemma:y_tilde_appx}, 
and \ref{lemma:norm_bounds}, we can now prove the high-probability bound for 
$\hat{\theta}$.

\begin{proof}[Proof of Theorem~\ref{thm:theta_hat_error}]

\textbf{Step 1: Decomposition.}
Decompose the total estimation error as
\[
\hat{\theta} - \theta^* = (\hat{\theta} - \theta) + (\theta - \theta^*),
\]
where $\theta$ is the oracle OLS estimator using true covariates.
Applying Lemma~\ref{lemma:oracle_OLS_appx} gives a high-probability bound on 
$\|\theta - \theta^*\|_2$.

\textbf{Step 2: Bounding the Perturbation Term.}
From Lemma~\ref{lemma:perturbation_decomp} together with 
Lemma~\ref{lemma:norm_bounds}, we can control
\[
\|\hat{\theta} - \theta\|_2 
\le 
\|(\Sigma_n + \Delta\Sigma)^{-1}\|_{op} \|\Delta_c\|_2
+ \|\Sigma_n^{-1}\|_{op} \|\Delta\Sigma\|_{op} 
\|(\Sigma_n + \Delta\Sigma)^{-1}\|_{op} \|c_n\|_2.
\]

\textbf{Step 3: Bounding $\|\tilde{y}\|_2$.}
Lemma~\ref{lemma:y_tilde_appx} provides a high-probability upper bound for $\|\tilde{y}\|_2$:
\begin{equation}\label{eq:ytilde_bound_repeat}
\begin{aligned}
\|\tilde{y}\|_2^2 \le\;&
4 n B_w^2 \|\theta^*\|_2^2
+ \frac{4 \sqrt{n} B_w \|\theta^*\|_2}{\lambda_0} \sigma
\sqrt{1 + C \log(4/\delta) + C \sqrt{\log(4/\delta)}} \\
&+ \sigma^2 \bigl( n + C' \sqrt{n \log(4/\delta)} + C' \log(4/\delta) \bigr).
\end{aligned}
\end{equation}
which holds with probability at least $1-\delta/2$.

\textbf{Step 4: Combining Bounds.}
From Lemmas~\ref{lemma:oracle_OLS_appx} and~\ref{lemma:norm_bounds}, we have
\[
\|\hat{\theta} - \theta^*\|_2 
\le 
\underbrace{\frac{2 B_w}{\lambda_0} \sigma 
\sqrt{\frac{ 1+ C \log(4/\delta) + C \sqrt{\log(4/\delta)}}{n}}}_{\text{Oracle OLS term}}
+ 
\underbrace{\frac{(4B_w \sqrt{\tau} + \tau)(\sqrt{\tau}+2B_w) + \lambda_0 \sqrt{\tau}}
{\lambda_0 (\lambda_0 - 4B_w\sqrt{\tau}-\tau) \sqrt{n}}}_{\text{Perturbation coeff.}}
\|\tilde{y}\|_2.
\]
Substituting~\eqref{eq:ytilde_bound_repeat} yields the final high-probability bound
stated in Theorem~\ref{thm:theta_hat_error}.

\end{proof}

\section{Proofs for Section~\ref{sec:nmbs}}
\label{app:nmbs}

The rank-based estimator analyzed in Section~\ref{sec:nmbs} is in the spirit of maximum rank correlation:
see \citet{han1987mrc} and subsequent analyses such as \citet{sherman1993mrc}.
In the proofs below, the uniform U-process deviation bound is a standard VC/U-process control
(\citealp{sherman1993mrc}; see also \citealp{hoeffding1963probability,vershynin2018} for the underlying concentration tools).
Our novelties are the population comparison inequality that incorporates the auxiliary predictor's mis-ordering
parameters $(\rho_0,\delta_0)$ (Theorem~\ref{thm:pop-approx}) and the plug-in stability bound that quantifies the effect of
replacing $s$ by $\hat s$ learned from purchase-only data (Proposition~\ref{prop:plugin}) and their further influence on the unobserved choice estimation.

\paragraph{Notation.}
Recall from Assumption~\ref{as:mnbs-formal} that
\[
W^*(X,\mathcal{S})=\big[z(X)^\top,\ s(X,\mathcal{S})\big]^\top\in\R^{d+1},\qquad
\theta^*=\big[\gamma^{*\top},-1\big]^\top,\qquad
\eta(X,\mathcal{S})=\theta^{*\top}W^*(X,\mathcal{S}).
\]
Let \((Y,W^*)\) and \((Y',W^{*\prime})\) be i.i.d.\ copies, and set
\[
D^*:=W^*-W^{*\prime},\qquad
\mathcal O:=\big\{\theta^{*\top}D^*\neq 0\big\},\qquad
S^*:=\operatorname{sign}\!\big(\theta^{*\top}D^*\big).
\]
For any \(\theta=[\gamma^\top,-1]^\top\), write \(S(\theta):=\operatorname{sign}(\theta^\top D^*)\).
Assumption~\ref{as:mnbs-formal} yields a measurable set of pairs \(\mathcal G\subseteq\mathcal O\) with
\(\Pr\big((W^*,W^{*\prime})\in\mathcal G\mid \mathcal O\big)\ge 1-\delta_0\) such that, for all \((W^*,W^{*\prime})\in\mathcal G\),
\[
\Pr\!\Big(\operatorname{sign}(Y-Y')\neq S^*\ \Big|\ W^*,W^{*\prime}\Big)\ \le\ \rho_0,
\qquad \rho_0\in\big[0,\tfrac12\big).
\]
When comparing \(\theta\) to \(\theta^*\), we also use \(h=[(\gamma-\gamma^*)^\top,0]^\top\) and abbreviate
\[
U:=\theta^{*\top}D^*,\qquad V_h:=h^\top D^*.
\]

\begin{lemma}[Sign flip implies small margin]\label{lem:key}
For any \(h\in\R^{d+1}\),
\[
\Big\{\operatorname{sign}(U)\neq \operatorname{sign}(U+V_h)\Big\}\ \subseteq\ \Big\{|U|\le |V_h|\Big\}.
\]
\end{lemma}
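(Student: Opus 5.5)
The plan is to prove the contrapositive by a direct case analysis on the signs of $U$ and $U+V_h$. Suppose $|U|>|V_h|$; we show $\operatorname{sign}(U)=\operatorname{sign}(U+V_h)$. Since $|U|>|V_h|\ge 0$, we have $U\neq 0$, so $\operatorname{sign}(U)\in\{-1,+1\}$, and we treat the two cases separately.

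\emph{Case $U>0$.} Then $U>|V_h|\ge -V_h$, so $U+V_h>0$. Hence $\operatorname{sign}(U+V_h)=+1=\operatorname{sign}(U)$.

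\emph{Case $U<0$.} Then $-U>|V_h|\ge V_h$, so $U+V_h<0$. Hence $\operatorname{sign}(U+V_h)=-1=\operatorname{sign}(U)$.

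In both cases the signs agree. Taking the contrapositive, any outcome on which $\operatorname{sign}(U)\neq\operatorname{sign}(U+V_h)$ must satisfy $|U|\le|V_h|$, which is the claimed set inclusion. This holds pointwise for every realization of $D^*$, so it holds as an inclusion of events.

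It remains to note where the lemma is applied. For $\theta=[\gamma^\top,-1]^\top$ and $h=[(\gamma-\gamma^*)^\top,0]^\top$, we have $\theta=\theta^*+h$. Therefore $\theta^\top D^*=U+V_h$, and the lemma controls the event $\{\operatorname{sign}(\theta^{*\top}D^*)\neq\operatorname{sign}(\theta^\top D^*)\}$ appearing in Theorem~\ref{thm:pop-approx} and Proposition~\ref{prop:margin}.

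The same pointwise inclusion also drives Proposition~\ref{prop:plugin}. There one takes the perturbation to be $\theta_s\big((\hat s_k-s_k)-(\hat s_\ell-s_\ell)\big)$, whose magnitude is at most $2\tau_s$ when $\|\theta\|_2=1$.

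The argument is elementary, and the only point needing care is the tie convention $\operatorname{sign}(0)=0$. If $U=0$, the right-hand event $\{0\le|V_h|\}$ always holds, so the inclusion is trivially true. If $U\neq0$ but $U+V_h=0$, then $|U|=|V_h|$, which again lies in the right-hand event. Neither situation breaks the inclusion, so there is no genuine obstacle.
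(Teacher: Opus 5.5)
Your proof is correct and is essentially the paper's argument. The paper observes that a sign disagreement forces $U(U+V_h)\le 0$, hence $U^2\le |U|\,|V_h|$ and so $|U|\le |V_h|$; you establish the same elementary fact via the contrapositive case split on $\operatorname{sign}(U)$, and your explicit handling of the convention $\operatorname{sign}(0)=0$ spells out what the paper leaves implicit.

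Your closing remarks on applications are not needed for the lemma, and one of them is slightly off. The inclusion only upper-bounds flip events, so it underlies the plug-in step of Proposition~\ref{prop:plugin}. It is not used in the proofs of Theorem~\ref{thm:pop-approx} or Proposition~\ref{prop:margin}; the latter needs a lower bound on the flip probability, which this inclusion cannot supply.
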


\subsection{Proof of Lemma~\ref{lem:key}}
\begin{proof}
If \(\operatorname{sign}(U)\neq \operatorname{sign}(U+V_h)\), then \(U(U+V_h)\le 0\), which implies \(U^2\le |U|\,|V_h|\) and hence \(|U|\le |V_h|\).
\end{proof}

\subsection{Proof of Theorem~\ref{thm:pop-approx}}
\begin{proof}[Proof of Theorem~\ref{thm:pop-approx}]
Let \(L=\operatorname{sign}(Y-Y')\). Conditioning on \((W^*,W^{*\prime})\),
\[
\mathrm{RC}(\theta)=\Pr\!\big(L=S(\theta)\big)=\E\!\Big[\Pr\!\big(L=S(\theta)\mid W^*,W^{*\prime}\big)\Big].
\]
Therefore
\[
\mathrm{RC}(\theta^*)-\mathrm{RC}(\theta)
=\E\!\Big[\underbrace{\Pr(L=S^*\mid W^*,W^{*\prime})-\Pr(L=S(\theta)\mid W^*,W^{*\prime})}_{\Delta(W^*,W^{*\prime})}\Big].
\]
If \(S(\theta)=S^*\), then \(\Delta=0\); if \(S(\theta)=-S^*\), then \(\Delta=1-2\,\Pr(L\neq S^*\mid W^*,W^{*\prime})\).
Split the expectation over \( (W^*,W^{*'})\in \mathcal G\) and its complement \(\mathcal G^c\):
on \(\mathcal G\), \(\Delta\ge (1-2\rho_0)\,\One\{S(\theta)\neq S^*\}\);
on \(\mathcal G^c\), \(\Delta\ge -\,\One\{S(\theta)\neq S^*\}\ge -1\).
Hence
\[
\mathrm{RC}(\theta^*)-\mathrm{RC}(\theta)
\ \ge\ (1-2\rho_0)\,\Pr\!\big(S(\theta)\neq S^* \mid \mathcal G\big)\ -\ \Pr(\mathcal G^c).
\]
By Assumption~\ref{as:mnbs-formal}, \(\Pr(\mathcal G^c)\le \delta_0\), which gives the claim.
\end{proof}

\subsection{Proof of Proposition~\ref{prop:margin}}

\begin{proof}
    \textbf{1. Definitions and Event Decomposition.}
    Let $U = \theta^{*\top}D^*$ and $V_h = (\theta - \theta^*)^\top D^*$. The event of sign change is $E_{\text{change}} = \{\operatorname{sign}(U) \neq \operatorname{sign}(U + V_h)\}$. By the central symmetry of $D^*$ given $\mathcal O$, the probability of sign change is:
    $$
    \Pr(E_{\text{change}} \mid \mathcal O) = 2 \Pr(U > 0, V_h < -U \mid \mathcal O).
    $$
    
    \textbf{2. Lower Bounding via Subset Inclusion.}
    We define the marginal slack $t_v$:
    $$
    t_v := \tau_{\mathrm{nd}} \|\gamma - \gamma^*\|_2.
    $$
    We define the subset events:
    $$
    A = \{0 < U \le t_v\}, \quad B = \{V_h \le -t_v\}.
    $$
    
    Since $\lVert \theta-\theta^*\rVert_2=\lVert[(\gamma-\gamma^*)^\top ,0]\rVert_2=\lVert\gamma-\gamma^*\rVert_2$, event $B= \{\frac{(\theta-\theta^*)^\top}{\lVert \theta-\theta^*\rVert_2}D^*\le -\frac{t_v}{\lVert \theta-\theta^*\rVert_2}\}=\{\frac{(\theta-\theta^*)^\top}{\lVert \theta-\theta^*\rVert_2}D^*\le -\tau_{nd}\}$. This leads to the lower bound:
    $$
    \Pr(E_{\text{change}} \mid \mathcal O) \ge 2 \Pr(A \cap B \mid \mathcal O).
    $$
    
    \textbf{3. Application of Association between design and truth (Assumption ~\ref{as:design-grouped}(iv)).}
    
    By assumption ~\ref{as:design-grouped} (iv), $\frac{\theta-\theta^*}{\lVert \theta-\theta^*\rVert_2}$ is a unit vector and $t_v=\tau_{nd}\lVert\gamma-\gamma^*\rVert_2\le \tau_{nd}\cdot r_0 \le t_0$, so the joint probability of the slab event $A$ and the tail event $B$ is bounded below by the product of their marginal probabilities:
    $$
    \Pr(A \cap B \mid \mathcal O) \ge c_{\text{assoc}} \cdot \Pr(A \mid \mathcal O) \cdot \Pr(B \mid \mathcal O).
    $$
    Substituting this into the inequality from Step 2:
    $$
    \Pr(E_{\text{change}} \mid \mathcal O) \ge 2 c_{\text{assoc}} \cdot \Pr(A \mid \mathcal O) \cdot \Pr(B \mid \mathcal O). \quad (*).
    $$
    
    \textbf{4. Bounding the Marginal Probabilities.}
    
    \textbf{Bounding $\Pr(A)$ (using Assumption ~\ref{as:design-grouped}(ii)):}
    By symmetry and since $t_v\le t_0$, Assumption 1(ii) yields:
    $$
    \Pr(A \mid \mathcal O) = \frac{1}{2}\Pr\left(\left\lvert\theta^{*\top}D^*\right\rvert \le t_v \mid \mathcal O\right) \ge \frac{1}{2} c_{\mathrm{lt}} \, t_v^{\alpha_{\mathrm{down}}}.
    $$
    
    \textbf{Bounding $\Pr(B)$ (using Assumption ~\ref{as:design-grouped}(i)):}
    Let $v_h$ be the unit vector in the direction of $\theta - \theta^*$. By symmetry and Assumption ~\ref{as:design-grouped}(i):
    $$
    \Pr(B \mid \mathcal O) = \frac{1}{2} \Pr\left(\left\lvert\frac{(\theta-\theta^*)^\top}{\lVert \theta-\theta^*\rVert_2}D^*\right\rvert \ge \tau_{\mathrm{nd}} \mid \mathcal O\right) \ge \frac{1}{2} p_{\mathrm{nd}}.
    $$
    
    \textbf{5. Final Assembly and Conclusion.}
    Substituting the bounds for $\Pr(A')$ and $\Pr(B'')$ back into equation $(*)$:
    $$
    \Pr(E_{\text{change}} \mid \mathcal O) \ge 2 c_{\text{assoc}} \cdot \left(\frac{1}{2} c_{\mathrm{lt}} \, t_v^{\alpha_{\mathrm{down}}}\right) \cdot \left(\frac{1}{2} p_{\mathrm{nd}}\right) = \frac{c_{\text{assoc}}}{2} c_{\mathrm{lt}} p_{\mathrm{nd}} \, t_v^{\alpha_{\mathrm{down}}}.
    $$
    Substituting the definition of $t_v$ and defining the constant $\kappa = \frac{c_{\text{assoc}}}{2} c_{\mathrm{lt}} p_{\mathrm{nd}} \tau_{nd}^{\alpha_{\mathrm{down}}} $:
    $$
    \Pr(E_{\text{change}} \mid \mathcal O) \ge \frac{c_{\text{assoc}}}{2} \cdot \left(p_{\mathrm{nd}} c_{\mathrm{lt}} \tau_{\mathrm{nd}}^{\alpha_{\mathrm{down}}}\right) \cdot \|\gamma - \gamma^*\|_2^{\alpha_{\mathrm{down}}} = \kappa \|\gamma-\gamma^*\|_2^{\,\alpha_{\mathrm{down}}}.
    $$
\end{proof}

\subsection{Proof of Proposition~\ref{prop:plugin}}

Below we first introduce a U-process bound
Theorem~\ref{thm:uprocess}, which is a standard uniform deviation bound for a bounded VC-type U-process. Related bounds appear in analyses of MRC and other U-statistic M-estimators (e.g., \citealp{sherman1993mrc}),
and can be obtained from Hoeffding-type concentration for U-statistics \citep{hoeffding1963probability}
combined with VC/Sauer counting arguments (see also \citealp{vershynin2018}).
We provide a self-contained proof for completeness.

\begin{theorem}[Uniform U-process control]\label{thm:uprocess}
There exists an absolute constant \(C_U>0\) such that, for any \(\delta\in(0,1)\), with probability at least \(1-\delta\),
\[
\sup_{\|\theta\|_2=1}\ \big|\widehat{\mathrm{RC}}_n(\theta)-\mathrm{RC}(\theta)\big|
\ \le\ C_U\,\sqrt{\frac{d+d\log(n/d)+\log\!\big(2/\delta\big)}{n}}.
\]
\end{theorem}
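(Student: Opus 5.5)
The argument has three parts: view $\widehat{\mathrm{RC}}_n(\theta)$ as a second-order U-statistic, concentrate it at a fixed $\theta$ with a Hoeffding-type bound, and make that bound uniform over the unit sphere by counting the distinct sign patterns the kernel class can produce on the sample.

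\emph{Kernel and mean.} Write $Z_k=(y_k,w_k)$ and $f_\theta(Z,Z')=\One\{(Y-Y')\,\theta^\top(W-W')>0\}$. This kernel is symmetric, takes values in $[0,1]$, and satisfies $\widehat{\mathrm{RC}}_n(\theta)=\binom{n}{2}^{-1}\sum_{k<\ell}f_\theta(Z_k,Z_\ell)$ with $\E f_\theta=\mathrm{RC}(\theta)$, because the samples are i.i.d.

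\emph{Fixed $\theta$.} Hoeffding's decomposition of a U-statistic into an average of $\lfloor n/2\rfloor$ independent blocks gives
\[
\Pr\big(|\widehat{\mathrm{RC}}_n(\theta)-\mathrm{RC}(\theta)|\ge t\big)\le 2\exp(-\lfloor n/2\rfloor t^2).
\]
This is the only probabilistic input; everything after is combinatorial.

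\emph{Uniformity via symmetrization.} I would use a ghost sample, or equivalently symmetrize Hoeffding's block averages. After conditioning on the $2n$ points, the supremum over the sphere is a maximum over the finitely many distinct vectors
\[
\big(\One\{(y_k-y_\ell)\theta^\top(w_k-w_\ell)>0\}\big)_{k<\ell}.
\]
For each pair this indicator equals $\One\{\theta^\top v_{k\ell}>0\}$ or $\One\{\theta^\top v_{k\ell}<0\}$, where $v_{k\ell}=w_k-w_\ell$ and the choice depends only on the data through $\sign(y_k-y_\ell)$. So the pattern is determined by the sign vector of the homogeneous halfspaces $\{\theta^\top v>0\}$ in $\R^{d+1}$ evaluated at at most $N=\binom{2n}{2}$ points. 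These halfspaces have VC dimension $d+1$, so Sauer's lemma bounds the number of patterns by $(eN/(d+1))^{d+1}\le (c\,n/d)^{2(d+1)}$.

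\emph{Union bound.} Taking a union bound over the patterns and setting the tail equal to $\delta$ gives
\[
t\asymp\sqrt{\frac{(d+1)\log(n/d)+\log(2/\delta)}{n}},
\]
which is at most $C_U\sqrt{(d+d\log(n/d)+\log(2/\delta))/n}$ once $d\ge1$ and $n\ge d$ (for smaller $n$ the bound exceeds $1$ and holds trivially). The symmetrization step costs only absolute constant factors, including the condition $nt^2\gtrsim 1$, and these are absorbed into $C_U$.

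\emph{Main obstacle.} The delicate step is symmetrization for a U-process rather than an ordinary empirical process, because the terms of $\widehat{\mathrm{RC}}_n$ are dependent. I would avoid the full U-process decoupling machinery as follows. Write $\widehat{\mathrm{RC}}_n(\theta)$ as the average, over permutations $\pi$, of block means
\[
V_\pi(\theta)=\frac{1}{\lfloor n/2\rfloor}\sum_j f_\theta\big(Z_{\pi(2j-1)},Z_{\pi(2j)}\big).
\]
The supremum of the average deviation is at most the average of the suprema. Each $V_\pi$ is an ordinary empirical process over $\lfloor n/2\rfloor$ i.i.d.\ pairs, so standard Vapnik--Chervonenkis symmetrization and the same growth-function bound apply to it. This controls the expectation of the supremum. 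Since changing one $Z_k$ moves the supremum by at most $2/n$, McDiarmid's inequality then upgrades this to the stated high-probability form with an absolute constant.
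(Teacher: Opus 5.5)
Your argument has the same skeleton as the paper's proof: a fixed-$\theta$ Hoeffding bound for a U-statistic, Sauer's lemma for the sign patterns of linear thresholds, a union bound, and solving for $t$. The difference is that you supply two ingredients the paper's write-up skips.

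\begin{itemize}
\item The paper's Step~2 puts $2\binom{n}{2}t^2$ in the Hoeffding exponent, as if the $\binom{n}{2}$ dependent pairs were independent.
\item The paper's Step~3 union-bounds directly over the sample-dependent labelings, with no symmetrization.
\end{itemize}

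The paper also runs its argument on the tie-band statistic $\binom{n}{2}^{-1}\sum_{k<l}\One\{|\theta^\top D^*_{kl}|\le 2\tau_s\}$, not on the rank-correlation kernel the statement is about. Your route fixes this. You write $\widehat{\mathrm{RC}}_n=\frac{1}{n!}\sum_\pi V_\pi$ (Hoeffding's permutation average) and bound the sup of the average by the average of the sups. Each $V_\pi$ is an ordinary empirical process over $\lfloor n/2\rfloor$ i.i.d.\ pairs, so VC symmetrization controls the expectation and McDiarmid with bounded differences $2/n$ gives the high-probability form. This is a complete and standard way to get uniformity, and in it the fixed-$\theta$ Hoeffding bound is not even needed.

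One step fails as written, and the paper's Step~4 has the same defect: the pattern count. With $N=\binom{2n}{2}$ points, the logarithm of Sauer's bound is about $(d+1)\log(n^2/d)=(d+1)\big(\log(n/d)+\log n\big)$. Your inequality $(eN/(d+1))^{d+1}\le (cn/d)^{2(d+1)}$ would therefore require $c^2\gtrsim d$. The surplus $(d+1)\log d$ is not $O(d+d\log(n/d))$ when $n$ is comparable to $d$; it is absorbed only once $n\gtrsim d^2$.

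The fix is already inside your blockwise route. For a single $V_\pi$, the only vectors that matter are the $\lfloor n/2\rfloor$ block differences $\sign(y-y')(w-w')$, doubled if you use a ghost block. So the growth function is at most $(en/(d+1))^{d+1}$, and the exponent is $(d+1)\log(en/(d+1))\le 2(d+d\log(n/d))$ for $n\ge d$.

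Two reading points remain.
\begin{itemize}
\item $\E f_\theta=\mathrm{RC}(\theta)$ and the i.i.d.\ structure hold only if the kernel is built from the oracle regressors $w_k^*=[z_k^\top,s_k]^\top$, i.e.\ for $\widehat{\mathrm{RC}}^*_n$. That is how the theorem is combined with Proposition~\ref{prop:plugin} in the proof of Theorem~\ref{thm:finite}. With the learned $\hat s_k$ the centering is wrong.
\item The statement should be read for $n\gtrsim d$. For $n$ somewhat below $d$, the radicand $d+d\log(n/d)+\log(2/\delta)$ can be small or even negative, so your remark that the bound holds trivially for smaller $n$ is not right there.
\end{itemize}
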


\begin{proof}[Proof of Theorem~\ref{thm:uprocess}]
Let us define the U-statistic
\[
U_n(\theta) := \frac{2}{n(n-1)} \sum_{1 \le k < l \le n} \mathbbm{1}_{\{ |\theta^T D_{kl}^*| \le 2\tau_s \}}, \quad D_{kl}^* = W_k^* - W_l^*,
\]
so that
\[
\sup_{\|\theta\|_2 = 1} \big| \widehat{\mathrm{RC}}_n(\theta) - \mathrm{RC}(\theta) \big| 
\le \sup_{\|\theta\|_2 = 1} \big| U_n(\theta) - \mathbb{E}[U_n(\theta)] \big| + \sup_{\|\theta\|_2 = 1} \mathbb{E}[U_n(\theta)].
\]

\textbf{Step 1: Control of the expectation.}  
By the model assumption (Assumption~\ref{as:design-grouped}(iii)), for any unit vector $\theta$,  
\[
\mathbb{E}[U_n(\theta)] = \frac{2}{n(n-1)} \sum_{k<l} P\Big( |\theta^T D_{kl}^*| \le 2\tau_s \,\big|\, \mathcal{O} \Big) \le \frac{2}{n(n-1)} \cdot \frac{n(n-1)}{2} \cdot L_{\mathrm{ac}} (2\tau_s)^{\alpha_{\mathrm{up}}} = L_{\mathrm{ac}} (2\tau_s)^{\alpha_{\mathrm{up}}}.
\]

\textbf{Step 2: Control of the centered U-process.}  
Consider the centered U-statistic
\[
X_\theta := U_n(\theta) - \mathbb{E}[U_n(\theta)].
\]

Each summand $\mathbbm{1}_{\{ |\theta^T D_{kl}^*| \le 2\tau_s \}}$ is bounded in $[0,1]$, and by the classical Hoeffding inequality for U-process \citep{hoeffding1963probability}, for any fixed $\theta$ and any $t>0$,
\[
\mathbb{P}\big( |X_\theta| \ge t \big) \le 2 \exp\Big( - 2 n_\mathrm{pairs} t^2 \Big),
\]
where $n_\mathrm{pairs} = \binom{n}{2}$ is the number of pairs $(k,l)$.  

\textbf{Step 3: Uniform control over $\theta$.}  
Let $\mathcal{F} := \{ \theta \mapsto \mathbbm{1}_{\{ |\theta^T D_{kl}^*| \le 2\tau_s \}} : \|\theta\|_2 = 1 \}$ denote the function class.  
The VC-dimension of $\mathcal{F}$ is at most $V \le 2(d+1)$, since these are thresholded linear functions in $\mathbb{R}^{d+1}$.  
By Sauer's lemma, the shattering number is
\[
N \le \left( \frac{e n(n-1)}{4(d+1)} \right)^{2(d+1)}.
\]

Applying a union bound over all possible labelings of the $n(n-1)/2$ pairs, we have for any $t>0$,
\[
\mathbb{P}\Big( \sup_{\theta} |X_\theta| \ge t \Big) \le 2 \left( \frac{e n(n-1)}{4(d+1)} \right)^{2(d+1)} \exp\Big( -2 n_\mathrm{pairs} t^2 \Big).
\]

\textbf{Step 4: Solve for $t$ given confidence level $\delta$.}  
Set the right-hand side equal to $\delta$ and solve for $t$:
\[
t \ge 2 \sqrt{ \frac{ (d+1) \log\left( \frac{e n(n-1)}{4(d+1)} \right) + \frac12 \log(2/\delta) }{ n } }.
\]

Hence, there exists an absolute constant $C_U > 0$ such that with probability at least $1-\delta$,
\[
\sup_{\|\theta\|_2 = 1} |U_n(\theta) - \mathbb{E}[U_n(\theta)]| \le C_U \sqrt{\frac{d+d\log(n/d) + \log(2/\delta)}{n}}.
\]

\end{proof}
\paragraph{Proof of Proposition~\ref{prop:plugin}}
\begin{proof}

We conduct the proof with 3 steps:

\textbf{Step 1: Control of plug-in bias.}  
Define the pairwise difference
\[
\delta_W := \theta^T (W_k - W_l) - \theta^T (W_k^* - W_l^*) = \theta_{d+1} (\hat{s}_k - s_k - \hat{s}_l + s_l),
\]
so that $|\delta_W| \le 2\tau_s$.  
Then
\[
|\mathbbm{1}_{\{(y_k-y_l)(\theta^T W_k - \theta^T W_l) > 0\}} - \mathbbm{1}_{\{(y_k-y_l)(\theta^T W_k^* - \theta^T W_l^*) > 0\}}|
\le \mathbbm{1}_{\{ |\theta^T (W_k^* - W_l^*)| \le 2 \tau_s \}}.
\]
Averaging over all pairs and taking the supremum over $\theta$, we obtain
\[
\sup_{\|\theta\|_2 = 1} |\hat{RC}_n(\theta) - \hat{RC}_n^*(\theta)| 
\le \sup_{\|\theta\|_2 = 1} \frac{2}{n(n-1)} \sum_{k<l} \mathbbm{1}_{\{ |\theta^T D_{kl}^*| \le 2\tau_s \}}.
\]

By Assumption~\ref{as:design-grouped}(iii),
\[
\sup_{\|\theta\|_2 = 1} \mathbb{E} \Big[ \frac{2}{n(n-1)} \sum_{k<l} \mathbbm{1}_{\{ |\theta^T D_{kl}^*| \le 2\tau_s \}} \Big] 
\le L_{\mathrm{ac}} (2\tau_s)^{\alpha_{\mathrm{up}}}.
\]

\textbf{Step 2: Control of the centered U-process.}  
Define the U-statistic
\[
U_n(\theta) := \frac{2}{n(n-1)} \sum_{k<l} \mathbbm{1}_{\{ |\theta^T D_{kl}^*| \le 2\tau_s \}}.
\]
Then
\[
\sup_{\|\theta\|_2=1} \Big| U_n(\theta) - \mathbb{E}[U_n(\theta)] \Big|
\]
is exactly the quantity controlled by Theorem~\ref{thm:uprocess}. Therefore, with probability at least $1-\delta$,
\[
\sup_{\|\theta\|_2=1} \Big| U_n(\theta) - \mathbb{E}[U_n(\theta)] \Big| \le C_U \sqrt{\frac{d + d\log(n/d) + \log(2/\delta)}{n}}.
\]

\textbf{Step 3: Combine the two bounds.}  
Combining Step 1 and Step 2, we have with probability at least $1-\delta$,
\[
\sup_{\|\theta\|_2=1} |\hat{RC}_n(\theta) - \hat{RC}_n^*(\theta)| 
\le L_{\mathrm{ac}} (2\tau_s)^{\alpha_{\mathrm{up}}} + C_U \sqrt{\frac{d + d\log(n/d) + \log(2/\delta)}{n}}.
\]
This completes the proof.
\end{proof}

\subsection{Proof of Theorem~\ref{thm:finite}}
\begin{proof}
We proceed this proof with Theorem~\ref{thm:uprocess} and Proposition~\ref{prop:plugin}:

\textbf{Step 1: Uniform empirical-to-population deviation.}
By Theorem~\ref{thm:uprocess} and Proposition~\ref{prop:plugin},
with probability at least $1-\delta$,
\[
\sup_{\|\theta\|_2=1}
\big|
\widehat{\mathrm{RC}}^{(\hat s)}_n(\theta)
- \mathrm{RC}(\theta)
\big|
\;\le\;
\varepsilon_n,
\]
where
\[
\varepsilon_n
:=
C_U\sqrt{\frac{
d + d\log(n/d) + \log(2/\delta)
}{n}}
\;+\;
L_{\mathrm{ac}}\,(2\tau_s)^{\alpha_{\mathrm{up}}}.
\]

\textbf{Step 2: Optimality of $\hat\theta$.}
By definition of $\hat\theta$,
\[
\widehat{\mathrm{RC}}^{(\hat s)}_n(\hat\theta)
\;\ge\;
\widehat{\mathrm{RC}}^{(\hat s)}_n(\theta^*).
\]
Using the uniform deviation bound on both sides,
\[
\mathrm{RC}(\hat\theta)
\;\ge\;
\widehat{\mathrm{RC}}^{(\hat s)}_n(\hat\theta)
-\varepsilon_n
\;\ge\;
\widehat{\mathrm{RC}}^{(\hat s)}_n(\theta^*)
-\varepsilon_n
\;\ge\;
\mathrm{RC}(\theta^*)-2\varepsilon_n.
\]
Equivalently,
\[
\mathrm{RC}(\theta^*)-\mathrm{RC}(\hat\theta)
\;\le\;
2\varepsilon_n.
\]

\textbf{Step 3: Apply population approximation + margin curvature.}
By Theorem~\ref{thm:pop-approx},
\[
\mathrm{RC}(\theta^*)-\mathrm{RC}(\theta)
\;\ge\;
(1-2\rho_0)\,
\mathrm{PC}(\theta^*,\theta)-\delta_0.
\]

By the margin condition (Proposition~\ref{prop:margin}), 
when $\|\hat\gamma-\gamma^*\|_2\le r_0$,
\[
\mathrm{PC}(\theta^*,\hat\theta)
\;\ge\;
\kappa\,\|\hat\gamma-\gamma^*\|_2^{\alpha_{\mathrm{down}}}.
\]

Combine the last two displays:
\[
\mathrm{RC}(\theta^*)-\mathrm{RC}(\hat\theta)
\;\ge\;
(1-2\rho_0)\,
\kappa\|\hat\gamma-\gamma^*\|_2^{\alpha_{\mathrm{down}}}
\;-\;\delta_0.
\]

Together with $\mathrm{RC}(\theta^*)-\mathrm{RC}(\hat\theta)\le 2\varepsilon_n$,
we obtain
\[
(1-2\rho_0)\,
\kappa\|\hat\gamma-\gamma^*\|_2^{\alpha_{\mathrm{down}}}
\;-\;
\delta_0
\;\le\;
2\varepsilon_n.
\]

Thus
\[
\|\hat\gamma-\gamma^*\|_2^{\alpha_{\mathrm{down}}}
\;\le\;
\frac{
2\varepsilon_n+\delta_0
}{
(1-2\rho_0)\,\kappa
}.
\]

Take the $\alpha_{\mathrm{down}}$-root to obtain
\[
\|\hat\gamma-\gamma^*\|_2
\;\le\;
\left(
\frac{
2\varepsilon_n+\delta_0
}{
(1-2\rho_0)\,\kappa
}
\right)^{\!1/\alpha_{\mathrm{down}}}.
\]

Substitute $\varepsilon_n$ to obtain \eqref{eq:gamma-fs}.

\textbf{Step 4: Ensuring $\|\hat\gamma-\gamma^*\|_2\le r_0$.}
We now make explicit the two feasibility conditions appearing implicitly above.

\textbf{(i) Nonnegativity of the RHS.}
We require
\[
2\varepsilon_n+\delta_0
<
(1-2\rho_0)\,\kappa r_0^{\alpha_{\mathrm{down}}},
\]
which after substituting $\varepsilon_n$ yields exactly condition
\eqref{eq:feasibility-gap} (first line).

\textbf{(ii) Enforcing RHS $\le r_0^{\alpha_{\mathrm{down}}}$.}
That is,
\[
\left(
\frac{
2\varepsilon_n+\delta_0
}{
(1-2\rho_0)\,\kappa
}
\right)
\;\le\;
r_0^{\alpha_{\mathrm{down}}}.
\]
Solving this gives exactly the same first line of \eqref{eq:feasibility-gap}.  
The remaining requirement is that the term
\[
2C_U\sqrt{\frac{d+d\log(n/d)+\log(2/\delta)}{n}}
\]
is small enough; explicitly:
\[
2C_U \sqrt{ \frac{d+d\log(n/d)+\log(2/\delta)}{n} }
\;\le\;
(1-2\rho_0)\,\kappa r_0^{\alpha_{\mathrm{down}}}
-
2L_{\mathrm{ac}}(2\tau_s)^{\alpha_{\mathrm{up}}}
-
\delta_0.
\]

Solving for $n$ produces the second inequality in 
\eqref{eq:feasibility-gap}.

If feasibility fails, the above argument still gives the bound 
\eqref{eq:gamma-fs}, conditional on the event 
$\{\|\hat\gamma-\gamma^*\|_2\le r_0\}$,
because the margin lower bound only requires this local condition.

\end{proof}

\subsection{Proof of Corollary~\ref{cor:p0_massart}}
\begin{proof}[Proof of Corollary~\ref{cor:p0_massart}]
By Corollary~\ref{cor:p0_error} (from the linear-bias section), the logistic map is \(1/4\)-Lipschitz, so
\[
\big|\hat p_0(X,\mathcal{S})-p_0(X,\mathcal{S})\big|
\ \le\ \frac14\Big(\|z(X)\|_2\,\|\hat\gamma-\gamma^*\|_2+|\hat s(X,\mathcal{S})-s(X,\mathcal{S})|\Big).
\]
Substitute the bound on \(\|\hat\gamma-\gamma^*\|_2\) furnished by Theorem~\ref{thm:finite}; the high-probability statement follows immediately (and likewise on the event \(\{\|\hat\gamma-\gamma^*\|_2\le r_0\}\) without feasibility).
\end{proof}

\subsection{Proof of Proposition~\ref{prop:perturb-epsmassart}}

\begin{proof}[Proof of Proposition~\ref{prop:perturb-epsmassart}]
For the notations,
\[
Y(X,\mathcal{S}) \;=\; (h_0+\Delta h)\big(\eta(X,\mathcal{S})\big)
   \;=\; \logit\big(\tilde p_0(X,\mathcal{S})\big),
\]
and note that
\[
\eta(X,\mathcal{S})
=\logit\big(p_0(X,\mathcal{S})\big)
   =\theta^{*\top}W^*.
\]
Thus for an independent pair \((X,\mathcal{S}),(X',\mathcal{S}')\),
\[
Y-Y'
   = (h_0+\Delta h)\big(\eta(X,\mathcal{S})\big)
     - (h_0+\Delta h)\big(\eta(X',\mathcal{S}')\big).
\]
Our goal is to compare the sign of \(Y-Y'\) with the sign of
\(\eta(X,\mathcal{S})-\eta(X',\mathcal{S}')=\theta^{*\top}(W^*-W^{*\prime})\).

\medskip\noindent
\textbf{Step 1: Control of the perturbation.}
Since \(\|\Delta h\|_\infty\le \Delta\), we have
\[
\big|\Delta h\big(\eta(X,\mathcal{S})\big)
 \;-\;
\Delta h\big(\eta(X',\mathcal{S}')\big)
\big|\;\le\; 2\Delta.
\]

\medskip\noindent
\textbf{Step 2: Using the definition of \(t_\Delta\).}
Assume now that
\[
\big|\theta^{*\top}D^*\big|
   = \big|\eta(X,\mathcal{S})-\eta(X',\mathcal{S}')\big|
   > t_\Delta.
\]
By the definition of the modulus of continuity and of \(t_\Delta\),
\[
\omega_{h_0}\big(|\theta^{*\top}D^*|\big)
   \;\ge\; \omega_{h_0}(t_\Delta)
   \;\ge\; 2\Delta
   \;\ge\; \Delta h(\eta)-\Delta h(\eta').
\]

\medskip\noindent
\textbf{Step 3: Comparing signs.}
We can rewrite
\[
Y-Y'
   = \big(h_0(\eta)-h_0(\eta')\big)
     \;-\;
     \big(\Delta h(\eta)-\Delta h(\eta')\big).
\]
Since the difference \(\Delta h(\eta)-\Delta h(\eta')\) is bounded in absolute value by \(2\Delta\), while
\[
h_0(\eta)-h_0(\eta')
   = h_0\big(\eta(X,\mathcal{S})\big)
     - h_0\big(\eta(X',\mathcal{S}')\big)
\]
has magnitude at least \(2\Delta\), the difference
\[
\big(h_0(\eta)-h_0(\eta')\big)
 -
\big(\Delta h(\eta)-\Delta h(\eta')\big)
\]
must have the same sign as \(h_0(\eta)-h_0(\eta')\), i.e., the perturbation is too small to flip the sign.

Since \(h_0\) is strictly increasing,  
\[
\operatorname{sign}\!\big(h_0(\eta)-h_0(\eta')\big)
   = \operatorname{sign}(\eta-\eta')
   = \operatorname{sign}\big(\theta^{*\top}(W^*-W^{*\prime})\big).
\]
By transitivity of sign equality, we conclude that
\[
\operatorname{sign}(Y-Y')
   = \operatorname{sign}\big(\theta^{*\top}(W^*-W^{*\prime})\big)
\quad\text{whenever}\quad
|\theta^{*\top}D^*|>t_\Delta.
\]
Hence,
\[
\Pr\!\Big( \operatorname{sign}(Y-Y')
      \neq \operatorname{sign}(\theta^{*\top}D^*)
      \,\Big|\, W^*,W^{*\prime}
   \Big)=0
\]
for all pairs satisfying \(|\theta^{*\top}D^*|>t_\Delta\).

\medskip\noindent
\textbf{Step 4: Mass bound.}
Let
\[
\mathcal G \;=\;
\big\{(W^*,W^{*\prime})\in\mathcal O:
      |\theta^{*\top}D^*|>t_\Delta
 \big\}.
\]
Then Assumption~\ref{as:mnbs-formal} holds with \(\rho_0=0\) and
\[
\delta_0
   = \Pr\big(|\theta^{*\top}D^*|\le t_\Delta\mid\mathcal O\big).
\]
By Assumption~\ref{as:design-grouped}(iii),
\[
\delta_0
 \;\le\;
 L_{\mathrm{ac}}\,
 t_\Delta^{\alpha_{\mathrm{up}}}.
\]

\medskip\noindent
\textbf{Step 5: Linear modulus case.}
If \(h_0'(\cdot)\ge m_0>0\) almost everywhere, then
\(
\omega_{h_0}(t)\ge m_0 t
\).
Thus from the definition of \(t_\Delta\),
\[
m_0 t \ge 2\Delta
\quad\Longrightarrow\quad
t_\Delta \le \frac{2\Delta}{m_0}.
\]

This completes the proof.
\end{proof}

\subsection{Proof of Corollary~\ref{cor:perturb-fs}}
\begin{proof}[Proof of Corollary~\ref{cor:perturb-fs}]
Combine Proposition~\ref{prop:perturb-epsmassart} with Theorem~\ref{thm:finite} (which applies with \(\rho_0=0\) and the above \(\delta_0\)).
\end{proof}

\section{Proof of Theorem \ref{thm:assortment-regret}}
\begin{proof}
Fix $X$ and an assortment $\mathcal{S}$.  Let
$\bar r(X,\mathcal{S}):=\sum_{i\in\mathcal{S}} r_i\,q_i(X,\mathcal{S})$ and
$\hat{\bar r}(X,\mathcal{S}):=\sum_{i\in\mathcal{S}} r_i\,\hat q_i(X,\mathcal{S})$.
By definition,
$R(X,\mathcal{S})=(1-p_0(X,\mathcal{S}))\,\bar r(X,\mathcal{S})$ and
$\hat R(X,\mathcal{S})=(1-\hat p_0(X,\mathcal{S}))\,\hat{\bar r}(X,\mathcal{S})$.
Thus
\[
|R-\hat R|
\;=\; \big|(1-p_0)\bar r-(1-\hat p_0)\hat{\bar r}\big|
\;\le\; \underbrace{|p_0-\hat p_0|\,\bar r}_{\text{calibration}}
\;+\; \underbrace{(1-\hat p_0)\,|\bar r-\hat{\bar r}|}_{\text{within-set shares}}
.
\]
Using $\bar r\le R_{\max}$ and $1-\hat p_0\le 1$, we obtain
\[
|R-\hat R| \;\le\; R_{\max}\,|p_0-\hat p_0| \;+\; \sum_{i\in\mathcal{S}} r_i\,|q_i-\hat q_i|
\;\le\; R_{\max}\,|p_0-\hat p_0| \;+\; R_{\max}\sum_{i\in\mathcal{S}} |q_i-\hat q_i|.
\]
Taking the maximum over $\mathcal{S}$ and applying the definition of $\varepsilon_s(X)$ and $\varepsilon_q(X)$ together with
Corollary~\ref{cor:p0_massart} (via its high-probability constant $C^{\mathrm{pert}}_n(\delta)$ from the main text) yields
\[
\max_{\mathcal{S}}\,|R(X,\mathcal{S})-\hat R(X,\mathcal{S})|
\ \le\
\frac{R_{\max}}{4}\Big(C^{\mathrm{pert}}_{n}(\delta)\,\|z(X)\|_2+\varepsilon_s(X)\Big) \;+\;  R_{\max}\,\varepsilon_q(X).
\]
Finally, optimality of $\mathcal{S}^*(X)\in\arg\max_{\mathcal{S}}R(X,\mathcal{S})$ and $\hat{\mathcal{S}}(X)\in\arg\max_{\mathcal{S}}\hat R(X,\mathcal{S})$ implies
\[
R\big(X,\mathcal{S}^*(X)\big)-R\big(X,\hat{\mathcal{S}}(X)\big)
\ \le\ 2\,\max_{\mathcal{S}}|R(X,\mathcal{S})-\hat R(X,\mathcal{S})|,
\]
which is bounded by \eqref{eq:rev-gap-bound-pert}.
\end{proof}
\section{Experiments' Details}
\label{sec:appendix_exp_setup}

This appendix provides the full experimental specifications for both the synthetic studies (Section~\ref{subsec:syn_experiments}) and the Expedia case study (Section~\ref{subsec:real_data_app}). 
We report default hyperparameters, exact data-generation steps, implementation details for the optimization routines, and the evaluation protocols.

\subsection{Synthetic Data Experiments}
\label{subsec:appendix_synthetic_exp}

\subsubsection{Synthetic Data Generation}
\label{subsec:synthetic_generation}

We generate i.i.d.\ samples indexed by $k=1,\dots,n$. Each sample consists of a context $X_k$, an assortment $\mathcal{S}_k$, the true inclusive value $s_k=s(X_k,\mathcal{S}_k)$, an estimated inclusive value $\hat s_k$, and a predictor-provided outside-option logit $y_k$. The full pipeline is illustrated in Figure~\ref{fig:synthetic_flow}.

\paragraph{Default parameters.}
Unless otherwise stated, we use the defaults in Table~\ref{tab:synthetic_params}. In particular, we set the dimension of the outside-option features to $d=24$, the item-feature dimension to $p=3$, and the default sample size to $n=2000$.

\begin{table}[htbp]
    \centering
    \caption{Default hyperparameters for synthetic data generation.}
    \label{tab:synthetic_params}
    \renewcommand{\arraystretch}{1.1}
    \small
    \begin{tabular}{@{}llp{8cm}@{}}
        \toprule
        \textbf{Category} & \textbf{Parameter} & \textbf{Value / Description} \\ 
        \midrule
        \textbf{Dimensions}     
            & $n$ (samples) & 2000 \\
            & Assortment size & $|\mathcal{S}_k|\sim\text{Unif}\{5,\ldots,15\}$ \\
            & Item pool size & 1000 \\
            & Context dim. & $d_{\text{ctx}}=24$ \\
            & Outside-feature dim. & $d=24$ \\
            & Item-feature dim. & $p=3$ \\
            & Item correlation & equicorrelated Gaussian with $\rho=0.5$ \\
        \midrule
        \textbf{Utility learning} 
            & Error mode & \texttt{structural} (perturb $\beta$) or \texttt{additive} (perturb $s$) \\
            & Noise scale & $\sigma_{\text{est}}=1.5$ \\
            & Noise distribution & Gaussian (default) or Uniform (bounded) \\
        \midrule
        \textbf{Predictor} 
            & Bias shift & $a^*=1.0$ \\
            & Bias scale & $b^*=2.0$ \\
            & Logit noise s.d. & $\sigma_{\epsilon}=0.2$ \\
        \bottomrule
    \end{tabular}
\end{table}

\paragraph{Step 1: Contexts, items, and assortments.}
We draw contexts and item covariates as follows.
\begin{itemize}
\item \textbf{Item pool.} We first generate a pool of 1000 item-feature vectors $\{x_i\in\mathbb{R}^p\}$ from a multivariate Gaussian with zero mean and equicorrelated covariance (pairwise correlation $\rho=0.5$). 
\item \textbf{Contexts.} For each sample $k$, we draw a context vector $X_k\in\mathbb{R}^{d_{\text{ctx}}}$ i.i.d.\ from a standard normal distribution.
\item \textbf{Assortments.} For each sample $k$, we draw an assortment size $|\mathcal{S}_k|\sim \text{Unif}\{5,\dots,15\}$ and sample $\mathcal{S}_k$ uniformly without replacement from the item pool.
\item \textbf{Truncation (boundedness).} To enforce bounded covariates consistent with our assumptions, we truncate the generated Gaussian features coordinate-wise to $[-3,3]$.
\end{itemize}

\paragraph{Step 2: Ground-truth utilities and logits.}
We next construct the true inside utilities, inclusive values, and outside logits.
\begin{itemize}
\item \textbf{Inside utilities and inclusive value.} We draw $\beta^*\sim\mathcal{N}(0,I_p)$ and set
\[
u_i(X_k) = \beta^{*\top}x_i,\qquad 
s_k = s(X_k,\mathcal{S}_k)=\log\!\sum_{i\in\mathcal{S}_k}\exp\!\big(u_i(X_k)\big).
\]
(Thus, inside utilities depend on item attributes but not on $X_k$; this choice isolates outside-option effects in the experiments.)
\item \textbf{Outside-option features.} We define $z(X_k)=W X_k\in\mathbb{R}^{d}$ where $W\in\mathbb{R}^{d\times d_{\text{ctx}}}$ is a random orthogonal matrix (rows orthonormal). 
\item \textbf{Outside parameters and true outside logit.} We draw $\gamma^{\mathrm{raw}}\sim\mathcal{N}(0,I_d)$ and scale
\[
\gamma^*=\gamma^{\mathrm{raw}}/\sqrt{d},
\qquad 
\eta_k=\gamma^{*\top}z(X_k)-s_k.
\]
The $\sqrt{d}$ scaling keeps $\mathrm{Var}(\gamma^{*\top}z(X_k))=O(1)$, avoiding degenerate logits concentrated near $\pm\infty$.
\end{itemize}

\paragraph{Step 3: Utility-estimation error and predictor output.}

\subparagraph{Inclusive-value estimation error.}
We construct $\hat s_k$ from $s_k$ using one of the following mechanisms.
\begin{itemize}
\item \textbf{Structural error (perturb $\beta$).} We draw $\Delta\beta\in\mathbb{R}^p$ with i.i.d.\ entries either $\mathcal{N}(0,\sigma_{\text{est}}^2)$ (Gaussian) or $\text{Unif}[-\sigma_{\text{est}}/\sqrt{p},\,\sigma_{\text{est}}/\sqrt{p}]$ (bounded). 
We set $\hat u_i(X_k)=(\beta^*+\Delta\beta)^\top x_i$ and compute
\[
\hat s_k=\log\!\sum_{i\in\mathcal{S}_k}\exp\!\big(\hat u_i(X_k)\big).
\]
\item \textbf{Additive error (perturb $s$).} We draw $\Delta s_k$ either from $\mathcal{N}(0,\sigma_{\text{est}}^2)$ or from $\text{Unif}[-\sigma_{\text{est}},\,\sigma_{\text{est}}]$ and set $\hat s_k=s_k+\Delta s_k$.
\end{itemize}
For reporting we compute both $\sqrt{\bar\tau}=\sqrt{\frac{1}{n}\sum_{k=1}^n(\hat s_k-s_k)^2}$ and $\tau_s=\max_k|\hat s_k-s_k|$; Figures~\ref{fig:linear_utility_noise_robustness}--\ref{fig:mrc_utility_noise_robustness} plot against the quantity relevant for the corresponding theory.

\subparagraph{Predictor logit.}
We generate the predictor-provided logit via
\[
y_k = h(\eta_k)+\epsilon_k,\qquad \epsilon_k\sim\mathcal{N}(0,\sigma_\epsilon^2),
\]
with one of the following links:
\begin{itemize}
\item \textbf{Linear predictor (Lin Sim):} $h(\eta)=a^*+b^*\eta$.
\item \textbf{Monotone non-linear predictor (Monotone Sim):}
\[
h(\eta)=a^*+b^*\Big[\tfrac{1}{20}\log\!\big(1+\exp(20\eta)\big)-\tfrac{1}{20}\log 2\Big],
\]
a centered Softplus map that is strictly increasing but saturates for negative $\eta$.
\end{itemize}
Finally, we convert back to predictor probabilities via $\tilde p_{0,k}=\Logistic(y_k)$ when needed (e.g., for multi-predictor probability averaging baselines).

\subsubsection{Performance Metrics}
\label{subsubsec:appendix_syn_metrics}

\paragraph{Empirical $p_0$ error.}
On a test set of size $N$, let $e_k=\big|p_0(X_k,\mathcal{S}_k)-\hat p_0(X_k,\mathcal{S}_k)\big|$. We report
\[
\text{Error}_{0.7} = \text{Quantile}_{0.7}\!\left(\{e_k\}_{k=1}^N\right).
\]

\paragraph{Revenue suboptimality.}
We generate $N_{\text{test}}=100$ independent decision instances. In each instance $j$, we draw a context and a pool of $m_{\text{cand}}=50$ items with revenues $\{r_i\}$ and solve (exactly, by revenue ordering for MNL; \citealp{talluri_vr_2004_ms}) the optimal assortment under the true model and under the plug-in estimate. We then compute the average relative revenue gap as in Section~\ref{subsubsec:syn_metrics}.

\begin{figure}[H] 
    \centering
    \includegraphics[width=0.95\textwidth, height=1.00\textheight, keepaspectratio]{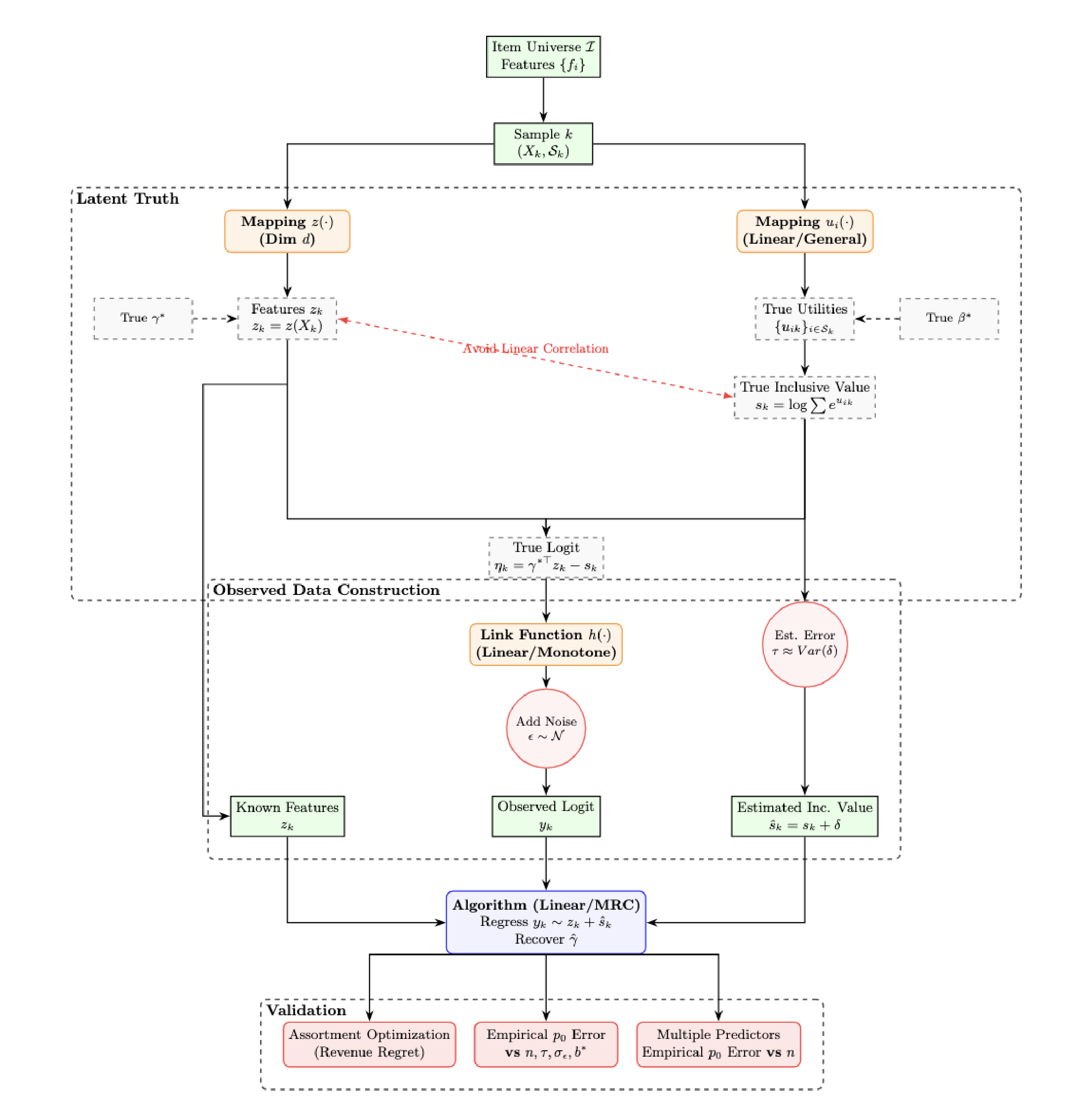}
    \caption{Synthetic data generation and evaluation protocol. We generate $(X_k,\mathcal{S}_k)$, construct $(s_k,\hat s_k)$, simulate biased logits $y_k$, and evaluate outside-option estimation accuracy and downstream decision impact.}
    \label{fig:synthetic_flow}
\end{figure}

\subsubsection{Benchmarks and Algorithms}
\label{appendix_syn_bench_algo}

\paragraph{Oracle baselines.}
\textbf{Linear\_oracle} and \textbf{MRC\_oracle} run Algorithms~\ref{alg:linear-calib} and~\ref{alg:mrc-calib} using the true inclusive values $\{s_k\}$ (i.e., $\hat s_k=s_k$). This isolates error due to finite samples and predictor noise.

\paragraph{Naïve multi-predictor baseline (Exp~6).}
\textbf{Logit Mean} averages predictor probabilities across predictors, $\bar p_{0,k}=\frac{1}{M}\sum_{m=1}^M \tilde p_{0,k}^{(m)}$, converts to logits $\bar y_k=\operatorname{logit}(\bar p_{0,k})$, and then applies Algorithm~\ref{alg:mrc-calib} using $\hat s_k$.

\subsubsection{Algorithm Implementation Details}

\paragraph{Algorithm~\ref{alg:linear-calib} (Linear calibration).}
We form the design matrix $A\in\mathbb{R}^{n\times(d+2)}$ with rows $[\,1,\ z_k^\top,\ \hat s_k\,]$ and solve the (ridge-stabilized) least-squares problem in double precision:
\[
\hat\theta \in \arg\min_\theta \|A\theta-y\|_2^2 + \lambda \|\theta_{1:}\|_2^2,
\]
where $\theta_{1:}$ excludes the intercept and $\lambda=10^{-9}$. We solve the normal equations using a Cholesky factorization of $A^\top A+\lambda I$.
We then recover $\hat\gamma=-\hat\theta_z/\hat\theta_s$ and apply a small threshold on $|\hat\theta_s|$ to avoid division by numerical zero.

\paragraph{Algorithm~\ref{alg:mrc-calib} (MRC calibration).}
Let $w_k=[\,z_k^\top,\ \hat s_k\,]^\top$ and define pairwise signs $\sigma_{ij}=\mathrm{sign}(y_i-y_j)$. We maximize sample rank correlation via a smooth surrogate objective:
\[
\min_{\theta\in\mathbb{R}^{d+1}:\ \|\theta\|_2=1}\ 
\mathcal{L}(\theta)
=
\frac{2}{n(n-1)}\sum_{i<j} \xi\!\Big(-\sigma_{ij}\,\theta^\top(w_i-w_j)\Big),
\qquad 
\xi(x)=\log(1+e^x),
\]
optimized using L-BFGS \citep{liu1989limited}. For large $n$ we subsample $2\times 10^5$ random pairs per gradient evaluation. We warm-start L-BFGS at the normalized OLS solution from Algorithm~\ref{alg:linear-calib}. Finally, we recover $\hat\gamma=-\hat\theta_z/\hat\theta_s$ with the same numerical safeguard on $\hat\theta_s$.

\subsubsection{Detailed Experimental Configurations (Exp~1--6)}

Table~\ref{tab:synthetic_exp_config} summarizes the experimental grid and the mapping to figures. Full parameter sequences are listed below the table.

\begin{table}[htbp]
    \centering
    \caption{Configuration of synthetic experiments.}
    \label{tab:synthetic_exp_config}
    \resizebox{\textwidth}{!}{
    \begin{tabular}{@{}clllll@{}}
        \toprule
        \textbf{Exp} & \textbf{Goal} & \textbf{Varied quantity} & \textbf{Fixed (default)} & \textbf{Seeds} & \textbf{Figures} \\ 
        \midrule
        \textbf{1} & Sample complexity & $n$ & $\sigma_{\text{est}}=1.5,\ \sigma_{\epsilon}=0.2$ & 10 (OLS), 30 (MRC) 
            & Fig.~\ref{fig:linear_convergence_rate}, Fig.~\ref{fig:mrc_convergence_rate} \\
        \textbf{2} & Utility-error sensitivity & $\sigma_{\text{est}}$ (reported as $\sqrt{\bar\tau}$ or $\tau_s$) & $n=2000,\ \sigma_{\epsilon}=0.2$ & 10 (OLS), 50 (MRC)
            & Fig.~\ref{fig:linear_utility_noise_robustness}, Fig.~\ref{fig:mrc_utility_noise_robustness} \\
        \textbf{3} & Predictor-noise robustness & $\sigma_{\epsilon}$ & $n=2000,\ \sigma_{\text{est}}=1.5$ & 10 
            & Fig.~\ref{fig:linear_robustness_to_simulator_noise}, Fig.~\ref{fig:mrc_robustness_to_simulator_noise} \\
        \textbf{4} & Bias-scale sensitivity & $b^*$ & $n=2000,\ \sigma_{\text{est}}=1.5,\ \sigma_{\epsilon}=0.2$ & 10 (OLS), 50 (MRC)
            & Fig.~\ref{fig:linear_b_sensitivity}, Fig.~\ref{fig:mrc_b_sensitivity} \\
        \textbf{5} & Revenue Suboptimality & $n$ & $\sigma_{\text{est}}=1.5,\ \sigma_{\epsilon}=0.2$ & 10 (OLS), 30 (MRC)
            & Fig.~\ref{fig:linear_revenue_regret}, Fig.~\ref{fig:mrc_revenue_regret} \\
        \textbf{6} & Multi-predictor aggregation & $n$ & five heterogeneous predictors & 30
            & Fig.~\ref{fig:multi-sim_aggregation} \\
        \bottomrule
    \end{tabular}
    }
\end{table}

\paragraph{Parameter grids.}
\begin{itemize}
\item \textbf{Exp~1/5/6:} $n\in\{200, 500, 1000, 2000, 3000, 4000, 5000, 6000, 8000\}$.
\item \textbf{Exp~2:} $\sigma_{\text{est}}\in\{0.0,0.1,0.2,\ldots,1.0\}$.
\item \textbf{Exp~3:} $\sigma_{\epsilon}\in\{0.1,0.5,1.0,2.0,3.0,5.0\}$.
\item \textbf{Exp~4:} $b^*\in\{0.1,0.5,1.0,1.5,2.0,3.0,5.0\}$.
\end{itemize}

\paragraph{Multi-predictor specification (Exp~6).}
We generate five predictor logits using the centered Softplus $h(\eta)=\tfrac{1}{20}\log(1+e^{20\eta})-\tfrac{1}{20}\log 2$:
\begin{align}
    y^{(1)} &= b^* h(\eta) + a^* + \epsilon^{(1)}, & \epsilon^{(1)} &\sim \mathcal{N}(0, 0.5^2) \\
    y^{(2)} &= \tfrac{1}{2} b^* h(\eta) + a^* + 0.5 + \epsilon^{(2)}, & \epsilon^{(2)} &\sim \mathcal{N}(0, 0.5^2) \\
    y^{(3)} &= \tfrac{1}{2} b^* h(\eta) + a^* - 0.5 + \epsilon^{(3)}, & \epsilon^{(3)} &\sim \mathcal{N}(0, 0.5^2) \\
    y^{(4)} &= b^* h(\eta) + a^* + 1.0 + \epsilon^{(4)}, & \epsilon^{(4)} &\sim \mathcal{N}(0, 1.0^2) \\
    y^{(5)} &= -2.5\, h(\eta) + \epsilon^{(5)}, & \epsilon^{(5)} &\sim \mathcal{N}(0, 2.5^2)\quad\text{(adversarial)}
\end{align}
Predictor (5) is anti-monotone and high-noise, serving as a stress test for aggregation.

\subsubsection{Justification for Increased Seed Count ($n_{\text{seeds}}=30, 50$)}
\label{subsubsec:seeds_justification}

We use $n_{\text{seeds}}=10$ for OLS-based experiments, but increase to 30--50 for MRC-based curves and discrete downstream objectives. The primary reason is that rank-based losses and assortment decisions can introduce higher Monte Carlo variance:
\begin{itemize}
\item \textbf{Rank sensitivity.} When $\hat s$ is noisy (Exp~2) or the signal-to-noise ratio is small (small $b^*$ in Exp~4), small perturbations can flip pairwise orderings, causing noticeable variability across seeds.
\item \textbf{Discrete decision effects.} Revenue suboptimality (Exp~5) involves an argmax over assortments; small parameter perturbations may change the selected set and induce non-smooth changes in regret.
\item \textbf{Non-convex aggregation.} In Exp~6, jointly learning orientations/weights across predictors can occasionally converge to poorer local solutions at small $n$.
\end{itemize}
Using more seeds yields smoother estimates of mean trends.

\subsection{Real Data Experiments}
\label{subsec:appendix_real_exp}

\subsubsection{Preprocessing and Feature Engineering}

We use the Expedia Personalized Sort dataset \citep{expedia-personalized-sort}. 
Each \emph{search session} (identified by \texttt{srch\_id}) presents a set of hotel impressions. 
We define the \emph{outside-option label} for a session as $b_k=1$ if the session has no booking and $b_k=0$ otherwise.

\paragraph{Temporal split and leakage prevention.}
We split the data chronologically into a Historical set (first 70\%) and a Current set (last 30\%). 
All feature transformations that use global statistics (e.g., hotel-level aggregates and scaling parameters) are fit on the Historical set and then applied to the Current set.

\paragraph{Missing values and transformations.}
We impute missing numerical values using Historical-set statistics (column means), except for \texttt{prop\_location\_score2} where missing values are set to $0$ to reflect ``no observed advantage''.
We log-transform price via $x\leftarrow \log(1+\max\{0,x\})$ to reduce skew.

\paragraph{Hotel-level aggregate features (Historical only).}
For each \texttt{prop\_id} we compute mean, standard deviation, and count over five attributes (\texttt{price\_usd}, \texttt{prop\_starrating}, \texttt{prop\_review\_score}, \texttt{prop\_location\_score1}, \texttt{prop\_location\_score2}) on the Historical set, producing 15 aggregate features. 
These aggregates are merged into the Current set; properties missing in history are imputed using Historical global means (and count $0$).

\paragraph{Scaling and encoding.}
We z-score all numerical features using the Historical-set mean and variance. 
Categorical/high-cardinality identifiers are integer-encoded for CatBoost.

\paragraph{Context vs.\ item features.}
We partition covariates into:
\begin{itemize}
\item \textbf{Context features $z(X)$} (session-level): user and search intent variables such as \texttt{site\_id}, \texttt{visitor\_location\_country\_id}, \texttt{length\_of\_stay}, \texttt{booking\_window}, \texttt{adults\_count}, \texttt{children\_count}, \texttt{room\_count}, \texttt{saturday\_night\_bool}, and \texttt{random\_bool}.
\item \textbf{Item features} (hotel-level): raw hotel attributes (e.g., \texttt{prop\_starrating}, \texttt{prop\_review\_score}, location scores, (log) price, promotion flags) together with the 15 Historical aggregates above.
\end{itemize}

\subsubsection{Hyperparameter Configuration and Training Protocols}
\label{subsec:appendix_real_hyperparams}

\paragraph{Biased predictor (CatBoost).}
We train a CatBoost classifier \citep{prokhorenkova2018catboost} on the Historical set to predict item-level bookings. We use:
\begin{itemize}
\item 500 trees, max depth 6, learning rate 0.1;
\item early stopping with a 20\% Historical validation split and patience 50.
\end{itemize}

\paragraph{Utility learning (Current set).}
We fit inside-utility models on the Current set using only sessions with an observed booking (conditional MNL likelihood). 
We use the Adam optimizer \citep{kingma2014adam} with batch size 4096.
\begin{itemize}
\item \textbf{Linear utility:} learning rate 0.05, 30 epochs.
\item \textbf{Neural utility (MLP):} 2-layer MLP with hidden size 64 and ReLU activations, learning rate 0.001, 50 epochs.
\end{itemize}

\begin{table}[htbp]
    \centering
    \caption{Hyperparameter settings for the Expedia experiments.}
    \label{tab:hyperparams}
    \begin{tabular}{l l r}
        \toprule
        \textbf{Model Component} & \textbf{Parameter} & \textbf{Value} \\
        \midrule
        \textit{Biased Predictor (CatBoost)} & Iterations & 500 \\
                                             & Tree Depth & 6 \\
                                             & Learning Rate & 0.1 \\
                                             & Early Stopping Rounds & 50 \\
        \midrule
        \textit{Utility Learner (Common)}    & Optimizer & Adam \\
                                             & Batch Size & 4096 \\
        \midrule
        \textit{Linear Utility Variant}      & Learning Rate & 0.05 \\
                                             & Training Epochs & 30 \\
        \midrule
        \textit{Neural Utility Variant}      & Architecture & 2-Layer MLP (Hidden=64) \\
                                             & Learning Rate & 0.001 \\
                                             & Training Epochs & 50 \\
        \bottomrule
    \end{tabular}
\end{table}

\subsubsection{Detailed Experimental Workflow}
\label{subsubsec:appendix_real_workflow}

We follow a three-phase pipeline that mirrors a realistic deployment under covariate shift.

\paragraph{Phase 1: Predictor construction (Historical).}
Train CatBoost on Historical data to obtain item-level booking predictions $\tilde p_i(X_k)$ for items $i\in\mathcal{S}_k$. 
We form a session-level prediction for the no-purchase probability by naïve aggregation:
\[
\tilde p_0(X_k,\mathcal{S}_k)
=
\mathrm{clip}\!\left(1-\sum_{i\in\mathcal{S}_k}\tilde p_i(X_k),\ \varepsilon,\ 1-\varepsilon\right),
\qquad \varepsilon=10^{-7},
\]
and set $y_k=\operatorname{logit}(\tilde p_0(X_k,\mathcal{S}_k))$.

\paragraph{Phase 2: Utility learning and calibration (Current).}
On the Current set, we emulate a transaction-only environment by restricting estimation to \emph{booked sessions}:
\begin{enumerate}
\item \textbf{Inside utilities.} Fit a linear model or MLP to maximize the conditional likelihood of the booked item given the session, using only booked sessions. Using the fitted model, compute $\hat s_k=\log\sum_{i\in\mathcal{S}_k}\exp(\hat u_i(X_k))$ (we compute $\hat s_k$ for all sessions so that we can later produce $\hat p_{0,k}$ for every session).
\item \textbf{Outside calibration.} Run Algorithm~\ref{alg:linear-calib} or Algorithm~\ref{alg:mrc-calib} on $\{(z_k,\hat s_k,y_k)\}$ \emph{restricted to booked sessions} to recover $\hat\gamma$, and then form $\hat p_0(X,\mathcal{S})=\Logistic(\hat\gamma^\top z(X)-\hat s(X,\mathcal{S}))$ for all sessions.
\end{enumerate}
Importantly, Phase 2 does \emph{not} use the Current-period outside labels; sessions with no booking are treated as unobserved during estimation and are used only in Phase 3 for evaluation.

\paragraph{Phase 3: Evaluation (Current).}
Let $b_k=1$ indicate that session $k$ has no booking (outside-option chosen). We evaluate predictions $\hat p_{0,k}$ using:

\begin{itemize}
\item \textbf{Negative log-likelihood (NLL).} With clipping to $[\varepsilon,1-\varepsilon]$,
\[
\mathrm{NLL}
=
-\frac{1}{n}\sum_{k=1}^n\Big[b_k\log(\hat p_{0,k})+(1-b_k)\log(1-\hat p_{0,k})\Big].
\]
\item \textbf{Expected calibration error (ECE).} Following \citet{guo2017calibration}, we partition $[0,1]$ into $M_{\text{bin}}=10$ equal-width bins $B_1,\dots,B_{M_{\text{bin}}}$ and compute
\[
\mathrm{ECE}
=
\sum_{m=1}^{M_{\text{bin}}} \frac{|B_m|}{n}\,\Big|\mathrm{acc}(B_m)-\mathrm{conf}(B_m)\Big|,
\]
where $\mathrm{conf}(B_m)=\frac{1}{|B_m|}\sum_{k\in B_m}\hat p_{0,k}$ and $\mathrm{acc}(B_m)=\frac{1}{|B_m|}\sum_{k\in B_m} b_k$.
\item \textbf{Reliability diagram.} We plot $\mathrm{conf}(B_m)$ vs.\ $\mathrm{acc}(B_m)$; the diagonal corresponds to perfect calibration.
\end{itemize}

\subsubsection{Detailed Comparison Methods}
\label{subsubsec:appendix_real_methods}

We compare the following five methods.

\begin{itemize}
\item \textbf{Predictor (baseline).} Use the Historical CatBoost predictor and naïve aggregation to obtain $\tilde p_0(X,\mathcal{S})$ as above. This ignores substitution and is vulnerable to covariate shift.

\item \textbf{Linear (Lin-Util).} Learn inside utilities with a linear model on the Current set; calibrate $\gamma$ with Algorithm~\ref{alg:linear-calib}.

\item \textbf{Linear (NN-Util).} Learn inside utilities with an MLP; calibrate with Algorithm~\ref{alg:linear-calib}.

\item \textbf{MRC (Lin-Util).} Learn inside utilities with a linear model; calibrate with Algorithm~\ref{alg:mrc-calib}.

\item \textbf{MRC (NN-Util) (Ours).} Learn inside utilities with an MLP; calibrate with Algorithm~\ref{alg:mrc-calib}.
\end{itemize}

\end{document}